\theoremstyle{plain}
\newtheorem{theorem}{Theorem}
\newtheorem{lemma}{Lemma}
\theoremstyle{definition}
\newtheorem{definition}{Definition}
\newtheorem{assumption}{Assumption}
\theoremstyle{remark}
\newtheorem{remark}{Remark}
\def\eqref#1{(\ref{#1})}
\def\1{\bm{1}}
\def\va{{\bm{a}}}
\def\vb{{\bm{b}}}
\def\vh{{\bm{h}}}
\def\vv{{\bm{v}}}
\def\vx{{\bm{x}}}
\def\vy{{\bm{y}}}
\def\mI{{\bm{I}}}
\def\mW{{\bm{W}}}
\def\mX{{\bm{X}}}
\DeclareMathAlphabet{\mathsfit}{\encodingdefault}{\sfdefault}{m}{sl}
\SetMathAlphabet{\mathsfit}{bold}{\encodingdefault}{\sfdefault}{bx}{n}
\DeclareMathOperator*{\argmin}{arg\,min}
\newcommand{\proposed}{SPDO}
\newcommand{\proposedAcc}{Accelerated-SPDO}
\newcommand{\highlight}[1]{\colorbox{cyan}{$\displaystyle #1$}}
\icmltitlerunning{Exploiting Similarity for Computation and Communication-Efficient Decentralized Optimization}
\begin{document}

\twocolumn[
\icmltitle{Exploiting Similarity for Computation and Communication-Efficient Decentralized Optimization}




\begin{icmlauthorlist}
\icmlauthor{Yuki Takezawa}{kyoto,oist}
\icmlauthor{Xiaowen Jiang}{cispa,saarland}
\icmlauthor{Anton Rodomanov}{cispa}
\icmlauthor{Sebastian U. Stich}{cispa}
\end{icmlauthorlist}

\icmlaffiliation{kyoto}{Kyoto University}
\icmlaffiliation{oist}{OIST}
\icmlaffiliation{cispa}{CISPA Helmholtz Center for Information Security}
\icmlaffiliation{saarland}{Saarland University}

\icmlcorrespondingauthor{Yuki Takezawa}{yuki-takezawa@ml.ist.i.kyoto-u.ac.jp}

\icmlkeywords{Machine Learning, ICML}

\vskip 0.3in
]



\printAffiliationsAndNotice{}  

\begin{abstract}
Reducing communication complexity is critical for efficient decentralized optimization. The proximal decentralized optimization (PDO) framework is particularly appealing, as methods within this framework can exploit functional similarity among nodes to reduce communication rounds. Specifically, when local functions at different nodes are similar, these methods achieve faster convergence with fewer communication steps. However, existing PDO methods often require highly accurate solutions to subproblems associated with the proximal operator, resulting in significant computational overhead.\\
In this work, we propose the Stabilized Proximal Decentralized Optimization (SPDO) method, which achieves state-of-the-art communication and computational complexities within the PDO framework. Additionally, we refine the analysis of existing PDO methods by relaxing subproblem accuracy requirements and leveraging average functional similarity. Experimental results demonstrate that SPDO significantly outperforms existing methods.
\end{abstract}

\section{Introduction}
\label{sec:introduction}

Decentralized optimization is coming to be a pivotal paradigm due to its use for privacy preservation and efficient training on large-scale datasets \citep{nedic2009distributed,lian2017can,koloskova2020unified}.
In decentralized optimization, nodes have different loss functions and are connected by an underlying network topology.
Then, the nodes exchange parameters with their neighbors every round and minimize the average loss function.

\begin{table*}[t]
    \vskip - 0.05 in
    {\centering
    \caption{Summary of communication communication and computational complexities in the \textbf{strongly-convex setting} (our additional results for the \textbf{convex case} are summarized in Appendix~\ref{sec:comparison_in_convex_case}). ``\# communication'' represents the number of times that each node exchanges parameters with neighboring nodes to reach the target accuracy $\epsilon$, and ``\# computation'' represents the number of gradient oracle queries required by each node. Since \citet{tian2022acceleration} only showed the convergence rate of Accelerated Exact/Inexact-PDO with $\mu \leq \delta_\text{max}$, we show the convergence rates under the assumption that $\mu \leq \delta_\text{max}$ and $\mu \leq \delta$ for simplicity.
    }
    \vskip 0.05 in
    \label{tab:comparison}
    \resizebox{\linewidth}{!}{
    \begin{tabular}{llccc}
       \toprule[1pt]
       \textbf{Algorithm} & \textbf{Reference} & \textbf{\# Communication} & \textbf{\# Computation} & \textbf{Assumptions} \\
       \midrule
       Gradient Tracking  & \citet{koloskova2021an}  & $\mathcal{O} \left( \tfrac{L}{\mu (1 - \rho)^2} \log (\tfrac{1}{\epsilon}) \right)^{(a)}$ & $\mathcal{O} \left( \tfrac{L}{\mu (1 - \rho)^2} \log (\tfrac{1}{\epsilon}) \right)^{(a)}$ & \ref{assumption:strongly_convex}, \ref{assumption:smooth}, \ref{assumption:spectral_gap} \\
       \midrule
       Exact-PDO (SONATA) & \citet{sun2022distributed}, \, Th.~\ref{theorem:sonata}$^{(b)}$ & $\mathcal{O} \left( \tfrac{\delta}{\mu (1 - \rho)} \log (\tfrac{L}{\delta}) \log (\tfrac{1}{\epsilon}) \right)^{(b)}$ & n.a.$^{(c)}$ & \ref{assumption:strongly_convex}, \ref{assumption:smooth}, \ref{assumption:spectral_gap} \\
       \textbf{Inexact-PDO} & Th.~\ref{theorem:sonata}, \ref{theorem:compucational_complexity_of_sonata} &  $\mathcal{O} \left( \tfrac{\delta}{\mu (1 - \rho)} \log (\tfrac{L}{\delta}) \log (\tfrac{1}{\epsilon}) \right)$ & $\mathcal{O} \left( \tfrac{\sqrt{\delta L}}{\mu} \log \left( \tfrac{1}{\epsilon} \right) \log \log \left( \frac{1}{\epsilon} \right) \right)$ & \ref{assumption:strongly_convex}, \ref{assumption:smooth}, \ref{assumption:spectral_gap} \\
       \textbf{Stabilized-PDO [new]} & Th. \ref{theorem:stabilized_sonata}, \ref{theorem:compucational_complexity_of_stabilized_sonata}& $\mathcal{O} \left( \tfrac{\delta}{\mu (1 - \rho)} \log (\tfrac{L}{\delta}) \log (\tfrac{1}{\epsilon}) \right)$  & $\mathcal{O} \left( \frac{\sqrt{\delta L}}{\mu} \log \left( \tfrac{1}{\epsilon} \right) \right)$ & \ref{assumption:strongly_convex}, \ref{assumption:smooth}, \ref{assumption:spectral_gap} \\
       \midrule
       Accelerated SONATA &\citet{tian2022acceleration} & $\mathcal{O} \left( \sqrt{\tfrac{\delta_\text{max}}{\mu (1 - \rho)}} \log ( \tfrac{L}{\delta} ) \log ( \tfrac{\delta}{\mu} ) \log (\tfrac{1}{\epsilon}) \right)^{(d)}$ & n.a.$^{(c)}$ & \ref{assumption:strongly_convex}, \ref{assumption:smooth}, \ref{assumption:max_similarity}, \ref{assumption:spectral_gap} \\
       Inexact Accelerated SONATA &\citet{tian2022acceleration} & $\mathcal{O} \left( \sqrt{\tfrac{\delta_\text{max}}{\mu (1 - \rho)}} \log ( \tfrac{L}{\delta} ) \log ( \tfrac{\delta}{\mu} ) \log (\tfrac{1}{\epsilon}) \right)^{(d)}$ & $\mathcal{O} \left( \tfrac{\sqrt{\delta_\text{max} L}}{\mu} \log (\tfrac{\delta}{\mu} ) \left( \log (\tfrac{1}{\epsilon}) \right)^2 \right)^{(d)}$ & \ref{assumption:strongly_convex}, \ref{assumption:smooth}, \ref{assumption:max_similarity}, \ref{assumption:spectral_gap} \\
       \textbf{Accelerated Stabilized-PDO [new]} & Th. \ref{theorem:acc_stabilized_sonata}, \ref{theorem:computational_complexity_of_acc_stabilized_sonata} &  $\mathcal{O} \left( \sqrt{\tfrac{\delta}{\mu (1 - \rho)}} \log (\tfrac{L}{\mu}) \log (\tfrac{1}{\epsilon}) \right)$ & $\mathcal{O} \left( \sqrt{\tfrac{L}{\mu}}  \log (\tfrac{1}{\epsilon}) \right)$ & \ref{assumption:strongly_convex}, \ref{assumption:smooth}, \ref{assumption:spectral_gap} \\
       \bottomrule[1pt]
    \end{tabular}}}
  \begin{tablenotes}
      {\footnotesize
        \item (a) It holds that $\delta \leq L$, and $\delta \ll L$ holds when nodes have similar datasets.
        \item (b) \citet{sun2022distributed} showed that the communication complexity of SONATA is $\tilde{\mathcal{O}} (\tfrac{\delta_\text{max}}{\mu (1 - \rho)} \log (\tfrac{1}{\epsilon}))$, while our new theorem, Theorem~\ref{theorem:sonata}, shows that the dependence on $\delta_\text{max}$ can be reduced to $\delta$.
        \item (c) ``n.a.'' represents that we need to solve the subproblem exactly.
        \item (d) $\delta$ is smaller than $\delta_\text{max}$ and can be up to $\sqrt{n}$ times smaller than $\delta_\text{max}$.
    }
    \end{tablenotes}
    \vskip - 0.2 in
\end{table*}

One of the most important challenges in decentralized optimization is reducing communication costs since parameter exchanges between nodes are often more expensive than local computation at each node due to high latency and limited bandwidth of the network \citep{lian2017can,wang2019matcha}.
A key technique to reduce the communication complexity is for each node to perform multiple local updates before the communication. The concept of multiple local updates was initially applied for Decentralized Gradient Descent \citep{nedic2009distributed,lian2017can} and Gradient Tracking \citep{lorenzo2016next,pu2018distributed}. 
However, increasing the number of local steps requires a smaller learning rate, which ultimately requires the same communication cost to achieve the desired accuracy level \citep{koloskova2020unified,liu12024decentralized}.
This is because, in Decentralized Gradient Descent, each node tries to minimize its own local loss function by performing gradient descent multiple times, but fully minimizing its local loss function is not desirable. Gradient Tracking also has the same issue, and this inconsistency hinders the use of multiple local updates to reduce communication complexity. Thus, to take advantage of multiple local updates, we need to resolve this inconsistency.

An effective approach to address this challenge is to develop decentralized optimization methods based on the proximal-point method, where nodes solve the subproblem associated with the proximal operator in their local updates (termed PDO framework in this work).
\citet{scutari2019distributed} demonstrated that combining proximal-point updates with gradient tracking yields an efficient decentralized optimization scheme, known as SONATA. This method effectively exploits the similarity of local loss functions across nodes to significantly reduce communication complexity~\citep{sun2022distributed,tian2022acceleration}. However, achieving such low communication complexity requires solving the proximal subproblem with high accuracy at each node, leading to suboptimal computational overhead during local updates.

In this work, we pose the following question:
\textit{Can we develop a proximal decentralized optimization (PDO) method that achieves low communication complexity without sacrificing computational efficiency?} We provide an affirmative answer by introducing the \textbf{Stabilized Proximal Decentralized Optimization (SPDO)} method. SPDO improves upon existing approaches in several key aspects: (i) it requires less accurate solutions to local subproblems, resulting in better computational efficiency, and (ii) it leverages the \emph{average} functional similarity across nodes, leading to more effective optimization. Furthermore, (iii) the accelerated variant of our method, \textbf{Accelerated-SPDO}, achieves state-of-the-art communication and computational complexities among decentralized optimization methods, as summarized in Table~\ref{tab:comparison}.
More specifically, we make the following contributions.
\begin{itemize}[itemsep=1pt,topsep=1pt,parsep=1pt,leftmargin=12pt]
    \item We propose \proposedAcc{}, which can attain the best communication and computational complexities among the existing decentralized optimization methods.
    \item We also propose non-accelerated methods, called \proposed{}, which can attain the best communication and computational complexities among the existing non-accelerated decentralized optimization methods.
    \item To clarify the difference between (Accelerated-)\proposed{} and the existing methods, we provide a refined convergence analysis of various methods in the PDO framework. Our analysis recovers Exact-PDO, also known as SONATA \citep{sun2022distributed}, as a special instance. While SONATA assumed that the subproblem was exactly solved, our approach allows us to solve the subproblems approximately (Inexact-PDO). We then show that \proposed{} can solve the subproblem even more coarsely than Inexact-PDO.
    \item The existing analysis of SONATA showed that the communication complexity depends on the maximum dissimilarity of local functions $\delta_\text{max}$ \citep{sun2022distributed}, while we prove that it depends only on the average dissimilarity $\delta$, which can significantly reduce communication costs.
    \item Moreover, the prior papers on PDO methods \citep[][etc]{sun2022distributed,tian2022acceleration} analyzed the convergence rate only in the strongly-convex case, while we analyze all methods in both convex and strongly-convex cases.
\end{itemize}

\paragraph{Notation:}
We write $\| \vx \|$ for Euclidean norm of $\vx \in \mathbb{R}^d$ and use $\tilde{\mathcal{O}} ( \cdot )$ and $\tilde{\Omega} ( \cdot )$ to hide polylogarithmic factors.

\section{Related Work}

\paragraph{Decentralized Optimization:}
The most basic decentralized optimization method is Decentralized Gradient Descent \citep{nedic2009distributed,lian2017can,koloskova2020unified}, while due to the data heterogeneity, Decentralized Gradient Descent cannot achieve the linear convergence rate in the strongly-convex case.
Many researchers have tried to achieve the linear convergence rate, proposing EXTRA \citep{shi2015extra}, NIDS \citep{li2019decentralized}, Gradient Tracking \citep{lorenzo2016next,pu2018distributed,yuan2019exact,takezawa2023momentum, liu2024decentralized}, and so on \citep{scaman2017optimal,kovalev2021linearly}.
These methods can achieve the linear convergence rate, while they still involve a certain amount of communication cost even though nodes have very similar loss functions. The concept of second-order similarity was introduced to capture this phenomenon, in which $\delta$ and $\delta_\text{max}$ measure the similarity among the loss functions held by nodes (see Definition \ref{assumption:similarity} and Assumption \ref{assumption:max_similarity}) \citep{karimireddy2021mime,murata2021bias,kovalev2022optimal}.
The quantities $\delta$ and $\delta_\text{max}$ approach zero when the local loss functions come to be similar.
To exploit this similarity, many variants of the proximal decentralized optimization method have been proposed \citep{li2020communication,sun2022distributed,tian2022acceleration,cao2025dcatalyst,jian2024federated,jiang2024stabilized}.
Second-order similarity also plays a crucial role in governing communication complexity in distributed learning \cite{karimireddy2021mime,khaled2022faster,zindari2023convergence,patel24limits}.

\paragraph{Proximal-point Methods:}
In this section, we briefly introduce Proximal-point Method.
Applying Proximal-point Method to minimize $f$ yields the following update rule:
\begin{align*}
    \vx^{(r+1)} = \argmin_{\vx \in \mathbb{R}^d} \left\{ F_r (\vx) \coloneqq f (\vx) + \frac{\lambda}{2} \left\| \vx - \vx^{(r)} \right\|^2 \right\},
\end{align*}
where $\lambda > 0$ is hyperparameter.
The vanilla Proximal-point Method needs to solve the above subproblem exactly, which is infeasible in general.
Many researchers have tried to avoid the exact proximal evaluations \citep{rockafellar1976monotone,solodov1999hybrid,solodov2001unified,monteiro2013accelerated}.
\citet{rockafellar1976monotone} showed that Proximal-point Method can attain the same convergence rate when $\| \nabla F_r (\vx^{(r+1)}) \| \leq \mathcal{O}(\frac{\lambda}{r} \| \vx^{(r+1)} - \vx^{(r)} \|)$ is satisfied.
It implies that a more accurate subproblem solution is required as the number of iterations increases.
To mitigate this issue, \citet{solodov1999hybrid} proposed Hybrid Projection Proximal-point Method,
which can achieve the same convergence rate if $\| \nabla F_r (\vx^{(r+1)}) \| \leq \mathcal{O}(\lambda \| \vx^{(r+1)} - \vx^{(r)} \|)$ is satisfied.
This condition is significantly weaker than the condition for the vanilla Proximal-point Method.

The accelerated Proximal-point Method \citep{guler1992new} also has the same issue, requiring a more accurate subproblem solution as the number of iterations increases.
Based on Hybrid Projection Proximal-point Method, \citet{monteiro2013accelerated} proposed its accelerated method and relaxed the condition for an approximate solution of the subproblem. 

A recent line of work developed distributed versions of the Proximal-point Method  \cite{shamir2014distributed,li2020communication,sun2022distributed,jian2024federated,jiang2024stabilized}. We will further discuss these approaches in Sec.~\ref{sec:acc_spdo} below.

\begin{algorithm*}[!t]
\caption{Proximal Decentralized Optimization Method (PDO)}
\label{algorithm:sonata}
\begin{algorithmic}[1]
   \State \textbf{Input:} The number of rounds $R$, and hyperparameters $M$ and $\lambda$.
   \For{$r = 0, 1, \ldots, R-1$}
   \For{$i = 1, \ldots, n$ in parallel}
    \State $\vx_i^{(r+\frac{1}{2})} \approx \argmin_{\vx} F_{i, r} (\vx)$ where $ F_{i,r} (\vx) \coloneqq f_i (\vx) + \langle \vh_i^{(r)}, \vx \rangle + \frac{\lambda}{2} \| \vx - \vx_i^{(r)} \|^2$.
    \State $\vx_i^{(r+1)} = \textsc{MultiGossip} \left( \vx_i^{(r+\frac{1}{2})}, \mW, M, i \right)$.
    \State $\vh_i^{(r+1)} = \textsc{MultiGossip} \left( \vh_i^{(r)} + \nabla f_i (\vx_i^{(r+1)}), \mW, M, i \right) - \nabla f_i (\vx_i^{(r+1)})$.
   \EndFor
   \EndFor
\end{algorithmic}
\end{algorithm*}
\begin{algorithm}[!t]
\caption{Multiple Gossip Averaging}
\label{algorithm:gossip}
\begin{algorithmic}[1]
\Function{MultiGossip}{$\{ \va_i \}_{i=1}^n$, $\mW$, $M$, $i$}
\State $\va_i^{(0)} = \va_i$ for all $i$.
\For{$m = 0, \ldots, M-1$}
\State $\va_i^{(m+1)} = \sum_{j=1}^n W_{ij} \va_j^{(r)}$.
\EndFor
\State \textbf{return} $\va_i^{(M)}$
\EndFunction
\end{algorithmic}
\end{algorithm}

\section{Problem Setup}
In this work, we consider the following problem where the loss functions are distributed among $n$ nodes:
\begin{align*}
    \min_{\vx \in \mathbb{R}^d} f (\vx), \;\; f (\vx) \coloneqq \frac{1}{n} \sum_{i=1}^n f_i (\vx),
\end{align*}
where $\vx$ is the model parameter and $f_i$ is the loss function of $i$-th node.
The nodes are connected by an undirected graph, in which $W_{ij} \in [0,1]$ represents the edge weight between nodes $i$ and $j$, and $W_{ij} > 0$ if and only if nodes $i$ and $j$ are connected.
The Metropolis-Hastings weight \citep{xiao2005scheme} is a commonly used choice for $\{ W_{ij} \}_{ij}$.
Following the previous work \citep{li2020communication,tian2022acceleration,sun2022distributed,lin2023stochastic,khaled2023faster,jiang2024stabilized}, we assume that the loss functions satisfy the following assumptions.

\begin{assumption}[Strong Convexity]
\label{assumption:strongly_convex}
There exists $\mu \geq 0$ such that for any $\vx, \vy \in \mathbb{R}^d$ and $i$, it holds that
\begin{align}
\label{eq:strongly_convex}
    f_i (\vx) \geq f_i (\vy) + \langle f_i (\vy), \vx - \vy \rangle + \frac{\mu}{2} \| \vx - \vy \|^2.
\end{align}
\end{assumption}

\begin{assumption}[Smoothness]
\label{assumption:smooth}
There exists $L \geq 0$ such that for any $\vx, \vy \in \mathbb{R}^d$ and $i$, it holds that
\begin{align}
\label{eq:smooth}
    \left\| \nabla f_i (\vx) - \nabla f_i (\vy) \right\| \leq L \| \vx - \vy \|.
\end{align}
\end{assumption}

\begin{definition}[Similarity]
\label{assumption:similarity}
Under Assumption \ref{assumption:smooth}, let $\delta \geq 0$ the the smallest number such that it holds that
\begin{align}
\label{eq:similarity}
    \frac{1}{n} \sum_{i=1}^n \left\| \nabla h_i (\vx) - \nabla h_i (\vy) \right\|^2 \leq \delta^2 \| \vx - \vy \|^2,
\end{align}
for any $\vx, \vy \in \mathbb{R}^d$ where $h_i (\vx) \coloneqq f (\vx) - f_i (\vx)$.
\end{definition}

\begin{restatable}{lemma}{RemarkRelationshipBetweenDeltaAndL}
\label{remark:relationship_between_delta_and_l}
If Assumption \ref{assumption:smooth} holds, there exists $\delta$ such that $\delta \leq L$ and Eq.~\eqref{eq:similarity} holds.
\end{restatable}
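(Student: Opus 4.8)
The plan is to rewrite the left-hand side of Eq.~\eqref{eq:similarity} as a variance-type quantity of the per-node gradient increments and then bound it directly by smoothness. First I would unfold the definition $h_i = f - f_i$ to get, for any $\vx, \vy \in \mathbb{R}^d$,
\begin{align*}
\nabla h_i (\vx) - \nabla h_i (\vy) = \left( \nabla f(\vx) - \nabla f(\vy) \right) - \left( \nabla f_i (\vx) - \nabla f_i (\vy) \right).
\end{align*}
Introducing the shorthand $\vg_i \coloneqq \nabla f_i (\vx) - \nabla f_i (\vy)$ and noting that $\frac{1}{n} \sum_{j=1}^n \vg_j = \nabla f(\vx) - \nabla f(\vy) \eqqcolon \bar{\vg}$, the identity above becomes $\nabla h_i(\vx) - \nabla h_i(\vy) = \bar{\vg} - \vg_i$. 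Hence the quantity on the left of Eq.~\eqref{eq:similarity} is exactly the empirical variance $\frac{1}{n}\sum_{i=1}^n \| \vg_i - \bar{\vg} \|^2$ of the increments $\vg_i$ around their mean.

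The key (and essentially the only nontrivial) step is the variance decomposition
\begin{align*}
\frac{1}{n} \sum_{i=1}^n \left\| \vg_i - \bar{\vg} \right\|^2 = \frac{1}{n} \sum_{i=1}^n \| \vg_i \|^2 - \| \bar{\vg} \|^2 \leq \frac{1}{n} \sum_{i=1}^n \| \vg_i \|^2,
\end{align*}
which says that centering at the mean can only decrease the averaged squared norms. This removes the coupling between nodes and reduces the claim to a per-node bound.

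Finally, I would apply Assumption~\ref{assumption:smooth} to each term: since every $f_i$ is $L$-smooth, $\| \vg_i \| = \| \nabla f_i(\vx) - \nabla f_i(\vy) \| \leq L \| \vx - \vy \|$, so $\frac{1}{n}\sum_{i=1}^n \| \vg_i \|^2 \leq L^2 \| \vx - \vy \|^2$. Chaining the two inequalities gives $\frac{1}{n}\sum_{i=1}^n \| \nabla h_i(\vx) - \nabla h_i(\vy) \|^2 \leq L^2 \| \vx - \vy \|^2$ for all $\vx, \vy$, i.e., Eq.~\eqref{eq:similarity} holds with the value $L$. Since $\delta$ in Definition~\ref{assumption:similarity} is the \emph{smallest} admissible constant, it follows that $\delta \leq L$, which is the claim. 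I do not anticipate a genuine obstacle here; the lemma is a sanity check, and the proof hinges entirely on recognizing the left-hand side as a variance so that the mean term can be dropped.
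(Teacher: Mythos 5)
Your proof is correct and follows essentially the same route as the paper: both identify $\nabla h_i(\vx)-\nabla h_i(\vy)$ with the centered increment $\vg_i-\bar{\vg}$, drop the mean via the variance identity $\frac{1}{n}\sum_{i=1}^n\|\vg_i-\bar{\vg}\|^2=\frac{1}{n}\sum_{i=1}^n\|\vg_i\|^2-\|\bar{\vg}\|^2$ (the paper writes this out by expanding the square and collapsing the cross term), and then apply $L$-smoothness of each $f_i$ termwise. In fact your final bound $L^2\|\vx-\vy\|^2$ states the constant correctly, whereas the paper's last display has a small typo writing $L\|\vx-\vy\|^2$.
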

Apart from Definition \ref{assumption:similarity}, Assumption \ref{assumption:max_similarity} is also commonly used in the existing literature \citep{tian2022acceleration,sun2022distributed}. 
However, Assumption \ref{assumption:max_similarity} needs to assume that $f_i$ is twice differentiable, and $\delta$ can be much smaller than $\delta_\text{max}$.
We use Definition \ref{assumption:similarity} instead of Assumption \ref{assumption:max_similarity} in this study.

\begin{assumption}
\label{assumption:max_similarity}
There exists $\delta_\text{max} \geq 0$ such that for any $\vx \in \mathbb{R}^d$ and $i$, it holds that
\begin{align}
    \left\| \nabla^2 f (\vx) - \nabla^2  f_i (\vx) \right\| \leq \delta_\text{max}.
\end{align}
\end{assumption}

\begin{remark}
\label{remark:similarity}
It holds that $\delta \leq \delta_\text{max}$.
Moreover, $\delta$ can be at most $\sqrt{n}$ times smaller than $\delta_\text{max}$.
\end{remark}

Then, we assume that edge weights $\{ W_{ij} \}_{ij}$ satisfy the following assumption.

\begin{assumption}[Spectral Gap]
\label{assumption:spectral_gap}
For any $i$ and $j$, $W_{ij} = W_{ji}$ and $\sum_{i=1}^n W_{ij} = 1$.
Then, there exists $\rho \in [0, 1)$ such that for any $\vx_1, \vx_2, \ldots, \vx_n \in \mathbb{R}^d$, it holds that
\begin{align}
\label{eq:spectral_gap}
    \sum_{i=1}^n \Bigl\| \sum_{j=1}^n W_{ij} \vx_j - \bar{\vx} \Bigr\|^2 \leq \rho^2 \sum_{i=1}^n \Bigl\| \vx_i - \bar{\vx} \Bigr\|^2,
\end{align}
where $\bar{\vx} \coloneqq \frac{1}{n} \sum_{i=1}^n \vx_i$.
\end{assumption}

\paragraph{Evaluation Metric:}
As communication is often more expensive than local computation, the primary goal in developing decentralized optimization methods is to reduce communication complexity. If methods can achieve the same communication complexity, the method with less computational complexity is superior.

\section{Refined Analysis of Proximal Decentralized Optimization Method}
\label{sec:refined_analysis_of_sonata}

In this section, we briefly explain the framework of Proximal Decentralized Optimization (PDO) methods and provide the convergence analysis. PDO contains SONATA \citep{sun2022distributed} as a special instance when the proximal subproblem is solved exactly.
The proofs are deferred to Sec.~\ref{sec:proof_of_communication_complexity_of_sonata} and \ref{sec:proof_of_computational_complexity_of_sonata}.

In decentralized optimization, each node would like to minimize $f$, but node $i$ can access only the local loss function $f_i$. For PDO, each node approximates $f$ as follows:
\begin{align}
    \label{eq:linear_approximation}
    f (\vx) 
    &= f_i (\vx) + h_i (\vx) \\
    &\approx f_i (\vx)
    + \langle \nabla h_i (\vx_i^{(r)}), \vx \rangle 
    + \frac{\lambda}{2} \left\| \vx - \vx_i^{(r)} \right\|^2 \nonumber,
\end{align}
where $\lambda \geq 0$ is the hyperparameter and we approximate $h_i (\vx)$ by linear approximation.
Furthermore, PDO cannot compute $\nabla h_i (\vx_i^{(r)})$ since node $i$ can exchange parameters only with its neighbors. 
Thus, PDO estimates $\nabla h_i (\vx_i^{(r)})$ by gradient tracking.
We show the pseudo-code in Alg.~\ref{algorithm:sonata}.
In line 4, PDO solves the subproblem associated with Eq.~(\ref{eq:linear_approximation}), and in lines 5 and 6, PDO estimates $\tfrac{1}{n} \sum_{i=1}^n \vx_i^{(r+\frac{1}{2})}$ and $\nabla h_i (\vx_i^{(r+1)})$ by gossip averaging and gradient tracking.

\paragraph{Arbitrary Approximate Averaging Operators:}
While Alg.~\ref{algorithm:sonata} is demonstrated using Alg.~\ref{algorithm:gossip} as a subroutine, it is important to emphasize that the PDO framework is not limited to this specific mechanism. Instead, PDO can be combined with any arbitrary averaging operator that achieves a sufficient level of accuracy.

For instance, it is sufficient to use a single gossip averaging step (i.e., $M=1$) if the spectral gap parameter $\rho$ satisfies the condition
\begin{align}
\label{eq:condition_of_rho}
    \rho \leq \frac{\delta}{6 L} \,,
\end{align}
as we prove in Lemma~\ref{lemma:sonata_with_general_subproblem_error} in the appendix. This flexibility extends to other averaging mechanisms, provided they meet the same error guarantees.

In many practical scenarios, the choice $M=1$ may not suffice, as it requires the network topology to be sufficiently dense to ensure adequate communication accuracy. When the network is sparse, a common approach is to perform multiple iterations of gossip averaging (i.e., Alg.~\ref{algorithm:gossip}) to improve the averaging accuracy. Specifically, if gossip averaging is performed $M$ times, it holds that:
\begin{align*}
    \sum_{i=1}^n \left\|  \tilde{\vx}_i - \bar{\vx} \right\|^2 \leq \rho^{2 M} \sum_{i=1}^n \left\| \vx_i - \bar{\vx} \right\|^2,
\end{align*}
where $\{ \vx_i \}_{i=1}^n$ and $\{ \tilde{\vx}_i \}_{i=1}^n$ are the input and output of Alg.~\ref{algorithm:gossip}, and $\bar{\vx} \coloneqq \tfrac{1}{n} \sum_{i=1}^n \vx_i$.
Thus, if we perform gossip averaging more than $M = \tfrac{1}{1 - \rho} \log (\tfrac{6 L}{\delta})$ times, $\rho^M$ is less than or equal to $\tfrac{\delta}{6 L}$, and then PDO works for arbitrary $\rho$.

\begin{algorithm*}[!b]
\caption{Stabilized Proximal Decentralized Optimization Method (SPDO)}
\label{algorithm:stabilized_sonata}
\begin{minipage}{\linewidth}
\begin{algorithmic}[1]
    \State \textbf{Input:} The number of rounds $R$, and hyperparameters $M$ and $\lambda$.
   \For{$r = 0, 1, \ldots, R-1$}
   \For{$i = 1, \ldots, n$ in parallel}
    \State $\vx_i^{(r+1)} \approx \argmin_{\vx \in \mathbb{R}^d} F_{i, r} (\vx)$ where $ F_{i,r} (\vx) \coloneqq f_i (\vx) + \langle \vh_i^{(r)}, \vx \rangle + \frac{\lambda}{2} \| \vx - \vv_i^{(r)} \|^2$.
    \State $\vv_i^{(r + \frac{1}{2})} = \argmin_{\vv \in \mathbb{R}^d} \left\{ \langle \nabla f_i (\vx_i^{(r+1)}) \highlight{+ \vh_i^{(r)}}, \vv \rangle + \frac{\mu}{2} \| \vv - \vx_i^{(r+1)} \|^2 + \frac{\lambda}{2} \| \vv - \vv_i^{(r)} \|^2 \right\}$.\footnote{This update rule can be rewritten as $\vv_i^{(r+\frac{1}{2})} = \tfrac{1}{\mu + \lambda} \bigl( \mu \vx_i^{(r+1)} + \lambda \vv_i^{(r)} - \nabla f_i (\vx_i^{(r+1)}) - \vh_i^{(r)} \bigr)$.}
    \State $\vv_i^{(r+1)} = \textsc{MultiGossip} \left( \vv_i^{(r+\frac{1}{2})}, \mW, M, i \right)$.
    \State $\vh_i^{(r+1)} = \textsc{MultiGossip} \left( \vh_i^{(r)} + \nabla f_i (\vv_i^{(r+1)}), \mW, M, i \right) - \nabla f_i (\vv_i^{(r+1)})$.
   \EndFor
   \EndFor
\end{algorithmic}
\end{minipage}
\end{algorithm*}

\subsection{Convergence Analysis}
We now present the convergence analysis of PDO.
The proofs are deferred to Sec.~\ref{sec:proof_of_communication_complexity_of_sonata} and \ref{sec:proof_of_computational_complexity_of_sonata}.

\begin{theorem}
\label{theorem:sonata}
Consider Alg.~\ref{algorithm:sonata}.
Suppose that Assumptions \ref{assumption:strongly_convex}, \ref{assumption:smooth}, and \ref{assumption:spectral_gap} hold, and $\vh_i^{(0)} = \nabla h_i (\vx_i^{(0)})$, $\vx_i^{(0)} = \bar{\vx}^{(0)}$, and 
\begin{align}
    \label{eq:condition_of_subproblem_for_sonata}
    &\sum_{i=1}^n \left\| \nabla F_{i,r} (\vx_i^{(r+\frac{1}{2})}) \right\|^2 \nonumber \\
    &\quad \leq \frac{\delta (4 \delta + \mu)}{4 (r+1) (r+2)} \sum_{i=1}^n \left\| \vx_i^{(r+\frac{1}{2})} - \vx_i^{(r)} \right\|^2.
\end{align}

\textbf{Strongly-convex Case:} Suppose that $\mu  > 0$. When $\lambda \! = \! 4\delta$ and $M \geq \tfrac{1}{1 - \rho} \log (\tfrac{6 L}{\delta})$, it holds that $\tfrac{1}{W_R} \sum_{r=0}^{R-1} w^{(r)} \left(f (\bar{\vx}^{(r+1)}) - f (\vx^\star)\right) \leq \epsilon$ after 
$R = \mathcal{O} ( \tfrac{\mu + \delta}{\mu} \log ( \tfrac{\mu \| \bar{\vx}^{(0)} - \vx^\star \|^2}{\epsilon} ))$
rounds where $w^{(r)} \coloneqq (1 + \tfrac{\mu}{4 \delta})^{r}$ and $W_R \coloneqq \sum_{r=0}^{R-1} w^{(r)}$.
Thus, it requires at most
\begin{align*}
    \mathcal{O} \left( \frac{\mu + \delta}{\mu ( 1 - \rho)} \log \left( \frac{L}{\delta} \right) \log \left( \frac{\mu \| \bar{\vx}^{(0)} - \vx^\star \|^2}{\epsilon} \right) \right)
\end{align*}
communication where $\bar{\vx}^{(r)} \coloneqq \tfrac{1}{n} \sum_{i=1}^n \vx_i^{(r)}$.

\textbf{Convex Case:} Suppose that $\mu = 0$. When $\lambda=4 \delta$ and $M \geq \tfrac{1}{1 - \rho} \log (\tfrac{6 L}{\delta})$, it holds that $\frac{1}{R} \sum_{r=0}^{R-1} f (\bar{\vx}^{(r)}) - f (\vx^\star) \leq \epsilon$ after $R = \mathcal{O} ( \tfrac{\delta \| \bar{\vx}^{(0)} - \vx^\star \|^2}{\epsilon} )$ rounds. Thus, it requires at most 
\begin{align*}
    \mathcal{O} \left( \frac{\delta \| \bar{\vx}^{(0)} - \vx^\star \|^2}{(1 - \rho) \epsilon} \log \left( \frac{L}{\delta} \right) \right)
\end{align*}
communication.
\end{theorem}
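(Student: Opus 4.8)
\textbf{Proof proposal for Theorem~\ref{theorem:sonata}.}

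The plan is to track a Lyapunov-type potential that couples the averaged iterate's suboptimality with the consensus error (deviation of the local iterates from their mean) and the gradient-tracking error (the gap between $\vh_i^{(r)}$ and the true $\nabla h_i$). First I would decompose the progress at the level of the mean iterate $\bar{\vx}^{(r)}$. Because the weight matrix is doubly stochastic (Assumption~\ref{assumption:spectral_gap}), averaging the local updates over $i$ gives an effective centralized proximal-point step on $f$ with parameter $\lambda$, perturbed by two error sources: the inexactness of the subproblem solution, controlled by Eq.~\eqref{eq:condition_of_subproblem_for_sonata}, and the consensus/tracking errors. The key structural fact is that with $\vh_i^{(0)} = \nabla h_i(\vx_i^{(0)})$ and the gradient-tracking recursion in lines 5--6, the quantity $\frac{1}{n}\sum_i \vh_i^{(r)}$ stays equal to $\frac{1}{n}\sum_i \nabla h_i(\vx_i^{(r)})$ in expectation over the gossip, so the linearization in Eq.~\eqref{eq:linear_approximation} is unbiased at the mean.

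Next I would establish a one-step descent inequality of the form
\begin{align*}
    f(\bar{\vx}^{(r+1)}) - f(\vx^\star) + \frac{\lambda}{2}\|\bar{\vx}^{(r+1)} - \vx^\star\|^2
    \leq \Bigl(1 + \frac{\mu}{\lambda}\Bigr)^{-1}\Bigl[ f(\bar{\vx}^{(r)}) - f(\vx^\star) + \frac{\lambda}{2}\|\bar{\vx}^{(r)} - \vx^\star\|^2 \Bigr] + E^{(r)},
\end{align*}
where $E^{(r)}$ aggregates the error terms. This is the standard contraction of the strongly-convex proximal-point method with rate $(1+\mu/\lambda)^{-1}$, which with $\lambda = 4\delta$ gives the weight $w^{(r)} = (1+\tfrac{\mu}{4\delta})^r$ appearing in the statement. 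To make this rigorous I would use strong convexity of $F_r$ (it is $(\mu+\lambda)$-strongly convex) together with the three-point identity for the proximal step, and bound the inner product involving $\nabla F_{i,r}(\vx_i^{(r+\frac{1}{2})})$ via Young's inequality so that the residual is absorbed by the condition~\eqref{eq:condition_of_subproblem_for_sonata}.

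The main obstacle, which I expect to occupy the bulk of the work, is controlling the consensus and tracking errors and showing they do not spoil the clean contraction. I would bound $\sum_i \|\vx_i^{(r)} - \bar{\vx}^{(r)}\|^2$ and the tracking error by a coupled recursion: each application of \textsc{MultiGossip} contracts these by $\rho^{2M}$ (the inequality stated just before the algorithm), while the local gradient steps can grow them by a factor depending on $L$ and $\delta$. The choice $M \geq \tfrac{1}{1-\rho}\log(\tfrac{6L}{\delta})$ is precisely what forces $\rho^M \leq \tfrac{\delta}{6L}$, so the contraction dominates the growth and the error sequence stays summable relative to the potential; this is where the crucial improvement from $\delta_{\max}$ to the \emph{average} similarity $\delta$ enters, since I would use Definition~\ref{assumption:similarity} (rather than the uniform bound of Assumption~\ref{assumption:max_similarity}) to bound $\frac{1}{n}\sum_i\|\nabla h_i(\vx_i^{(r+1)}) - \nabla h_i(\vx_i^{(r)})\|^2$ by $\delta^2$ times the movement. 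Once the error terms are shown to be bounded by a small multiple of $\sum_i \|\vx_i^{(r+\frac{1}{2})} - \vx_i^{(r)}\|^2$ — matching the right-hand side of~\eqref{eq:condition_of_subproblem_for_sonata} — they can be absorbed, yielding a genuine linear contraction.

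Finally I would sum the one-step inequality with the weights $w^{(r)}$. In the strongly-convex case, telescoping the weighted potential gives $\tfrac{1}{W_R}\sum_r w^{(r)}(f(\bar{\vx}^{(r+1)}) - f(\vx^\star)) \leq \epsilon$ after $R = \mathcal{O}(\tfrac{\mu+\delta}{\mu}\log(\tfrac{\mu\|\bar{\vx}^{(0)}-\vx^\star\|^2}{\epsilon}))$ rounds, and multiplying by the per-round communication cost $M = \mathcal{O}(\tfrac{1}{1-\rho}\log(\tfrac{L}{\delta}))$ yields the stated communication complexity. In the convex case ($\mu = 0$) the weights become uniform and the contraction degrades to a telescoping sublinear bound, so I would instead sum the potential with constant weights and divide by $R$, obtaining the $\mathcal{O}(\tfrac{\delta\|\bar{\vx}^{(0)}-\vx^\star\|^2}{\epsilon})$ round count and the corresponding communication bound after multiplying by $M$.
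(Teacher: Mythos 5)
Your overall architecture --- a potential coupling suboptimality, consensus error, and gradient-tracking error, the tracking recursion contracted by choosing $M$ so that $\rho^M \le \tfrac{\delta}{6L}$, and the average similarity of Definition~\ref{assumption:similarity} in place of $\delta_\text{max}$ --- matches the paper's proof in Sec.~\ref{sec:proof_of_communication_complexity_of_sonata}. However, there is a genuine gap at the step you dispose of in one clause: ``bound the inner product involving $\nabla F_{i,r}(\vx_i^{(r+\frac{1}{2})})$ via Young's inequality so that the residual is absorbed by the condition~\eqref{eq:condition_of_subproblem_for_sonata}.'' The residual produced by the inexact solve is $\tfrac{1}{n}\sum_i \langle \nabla F_{i,r}(\vx_i^{(r+\frac{1}{2})}),\, \vx^\star - \vx_i^{(r+\frac{1}{2})} \rangle$. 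Condition~\eqref{eq:condition_of_subproblem_for_sonata} controls $\|\nabla F_{i,r}(\vx_i^{(r+\frac{1}{2})})\|$ only relative to the \emph{movement} $\|\vx_i^{(r+\frac{1}{2})} - \vx_i^{(r)}\|$, whereas any application of Young's inequality to this inner product necessarily leaves a term proportional to the \emph{distance to the optimum} $\|\vx_i^{(r+\frac{1}{2})} - \vx^\star\|^2$, about which the condition says nothing. That leftover must be charged against the potential itself, which degrades the contraction factor, and making this rigorous is exactly where the paper invests its technical effort: it bounds the residual by Cauchy--Schwarz as $\sqrt{C_{r+1} A_{r+1}}$, sets up the auxiliary sequence $Q_R = A_0 + \tfrac{2}{4\delta+\mu}\sum_{r} \sqrt{C_r A_r}/q^r$, solves the resulting recursive inequality via Lemma~12 of \citet{jian2024federated}, and only then uses $\sum_{r} e_r \le \tfrac{\delta(4\delta+\mu)}{4}$ to absorb the outcome into the movement terms (see Lemma~\ref{lemma:sonata_with_general_subproblem_error}). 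This is precisely why the tolerance in~\eqref{eq:condition_of_subproblem_for_sonata} decays like $1/((r+1)(r+2))$; your sketch never uses, or even mentions, this decay, which indicates the difficulty went unnoticed. With a round-independent Young parameter the argument fails (in the convex case the distance-to-optimum leakage accumulates linearly in $R$ and destroys the $\mathcal{O}(\delta \| \bar{\vx}^{(0)} - \vx^\star \|^2/\epsilon)$ rate, and it is incompatible with the size of $e_r$ for small $r$); repairing it requires either the paper's recursive device or a round-dependent Young parameter $\alpha_r \sim \delta/((r+1)(r+2))$ combined with a convergent-product argument, neither of which appears in your proposal.

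A secondary error: your claimed invariant that $\tfrac{1}{n}\sum_i \vh_i^{(r)}$ ``stays equal to $\tfrac{1}{n}\sum_i \nabla h_i(\vx_i^{(r)})$ in expectation over the gossip'' is false, and the algorithm is deterministic, so there is no expectation to take. The correct conservation law, proved in Lemma~\ref{lemma:general_lemma_for_e}, is $\sum_i \vh_i^{(r)} = \mathbf{0}$ for all $r$, inherited from $\sum_i \vh_i^{(0)} = \sum_i \nabla h_i(\bar{\vx}^{(0)}) = \mathbf{0}$; the quantity $\sum_i \nabla h_i(\vx_i^{(r)})$ is generally nonzero once the iterates differ, and the paper's potential accordingly tracks the tracking error $\mathcal{E}^{(r)}$ against $\nabla h_i(\bar{\vx}^{(r)})$ rather than relying on any unbiasedness at the mean. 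This second issue is fixable; the first one is the crux of the theorem.
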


Any optimization algorithm can be used to approximately solve the subproblem so that Eq.~(\ref{eq:condition_of_subproblem_for_sonata}) is satisfied.
For instance, if we use Nesterov's Accelerated Gradient Descent \citep{nesterov2018lectures}, we can achieve the following result.

\begin{theorem}
\label{theorem:compucational_complexity_of_sonata}
Consider Alg.~\ref{algorithm:sonata}.
Suppose that Assumptions \ref{assumption:strongly_convex}, \ref{assumption:smooth}, and \ref{assumption:spectral_gap} hold, and we use the same initial parameters, $\lambda$, and $M$ as in Theorem \ref{theorem:sonata}.
Then, if we use Nesterov's Accelerated Gradient Descent with initial parameter $\vx_i^{(r)}$ to approximately solve the subproblem in line 4, each node requires at most 
\begin{align*}
    \mathcal{O} \left( \sqrt{\frac{L}{\mu + \delta}} \log \left( \frac{L^2 (r+2)^2}{\delta (\delta + \mu)} \right) \right)
\end{align*}
iterations to satisfy Eq.~(\ref{eq:condition_of_subproblem_for_sonata}). 
\end{theorem}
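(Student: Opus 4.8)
The plan is to establish a per-node iteration bound and then sum, since Eq.~\eqref{eq:condition_of_subproblem_for_sonata} follows once each summand obeys $\|\nabla F_{i,r}(\vx_i^{(r+\frac{1}{2})})\|^2 \le \frac{\delta(4\delta+\mu)}{4(r+1)(r+2)}\|\vx_i^{(r+\frac{1}{2})} - \vx_i^{(r)}\|^2$. First I would record the regularity of the subproblem: because $f_i$ is $\mu$-strongly convex (Assumption~\ref{assumption:strongly_convex}) and $L$-smooth (Assumption~\ref{assumption:smooth}), the term $\langle \vh_i^{(r)}, \cdot\rangle$ is affine, and $\frac{\lambda}{2}\|\cdot - \vx_i^{(r)}\|^2$ is $\lambda$-strongly convex and $\lambda$-smooth, the objective $F_{i,r}$ is $(\mu+\lambda)$-strongly convex and $(L+\lambda)$-smooth. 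With the choice $\lambda = 4\delta$ from Theorem~\ref{theorem:sonata} and $\delta \le L$ (Lemma~\ref{remark:relationship_between_delta_and_l}), its condition number is $\kappa = \frac{L+4\delta}{\mu+4\delta} = \mathcal{O}(\frac{L}{\mu+\delta})$. Let $\vx_{i,r}^\star$ be the unique minimizer of $F_{i,r}$ and set $D_i := \|\vx_i^{(r)} - \vx_{i,r}^\star\|$.

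Second, I would invoke the standard linear rate of Nesterov's Accelerated Gradient Descent on a $(\mu+\lambda)$-strongly convex, $(L+\lambda)$-smooth objective started at $\vx_i^{(r)}$: after $t$ inner steps the function-value gap contracts as $(1-1/\sqrt{\kappa})^t$ times an initial gap bounded by $\mathcal{O}((L+\lambda)D_i^2)$. Combining this with smoothness, $\|\nabla g\|^2 \le 2(L+\lambda)(g-g^\star)$, and strong convexity, $g-g^\star \ge \frac{\mu+\lambda}{2}\|\cdot-\vx_{i,r}^\star\|^2$, yields two estimates at the output $\vx_i^{(r+\frac{1}{2})}$: a gradient bound $\|\nabla F_{i,r}(\vx_i^{(r+\frac{1}{2})})\|^2 \le C_1(L+\lambda)^2(1-1/\sqrt{\kappa})^t D_i^2$ and an iterate bound $\|\vx_i^{(r+\frac{1}{2})} - \vx_{i,r}^\star\|^2 \le C_2\kappa(1-1/\sqrt{\kappa})^t D_i^2$, for absolute constants $C_1, C_2$.

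I expect the main obstacle to be the circular form of the stopping criterion: the factor $\|\vx_i^{(r+\frac{1}{2})} - \vx_i^{(r)}\|^2$ on the right-hand side depends on the very iterate being produced, so the $D_i$-based gradient bound cannot be matched to it directly. I would resolve this with a triangle-inequality argument: once $t$ is large enough that the iterate bound forces $\|\vx_i^{(r+\frac{1}{2})} - \vx_{i,r}^\star\| \le \frac{1}{2}D_i$, it follows that $\|\vx_i^{(r+\frac{1}{2})} - \vx_i^{(r)}\| \ge D_i - \frac{1}{2}D_i = \frac{1}{2}D_i$, hence $\|\vx_i^{(r+\frac{1}{2})} - \vx_i^{(r)}\|^2 \ge \frac{1}{4}D_i^2$. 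The degenerate case $D_i=0$ is trivial, since then $\vx_i^{(r)}$ already minimizes $F_{i,r}$ and both sides vanish. Substituting this lower bound converts the target into the $D_i$-free requirement $C_1(L+\lambda)^2(1-1/\sqrt{\kappa})^t \le \frac{1}{4}\cdot\frac{\delta(4\delta+\mu)}{4(r+1)(r+2)}$ (together with the weaker $C_2\kappa(1-1/\sqrt{\kappa})^t \le \frac14$ needed to trigger the triangle inequality).

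Finally, I would solve for $t$ by taking logarithms and using $-\log(1-1/\sqrt{\kappa}) \ge 1/\sqrt{\kappa}$, which gives $t = \mathcal{O}(\sqrt{\kappa}\,\log(\frac{(L+\lambda)^2(r+1)(r+2)}{\delta(\delta+\mu)}))$; plugging in $\kappa = \mathcal{O}(\frac{L}{\mu+\delta})$, $(L+\lambda)^2 = \mathcal{O}(L^2)$, and bounding $(r+1)(r+2) \le (r+2)^2$ yields exactly the claimed $\mathcal{O}(\sqrt{\frac{L}{\mu+\delta}}\,\log(\frac{L^2(r+2)^2}{\delta(\delta+\mu)}))$ (the auxiliary $\log\kappa$ requirement is absorbed, being dominated by this term). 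Beyond the circularity above, the only delicate points I anticipate are selecting a clean form of the AGD guarantee so the initial gap is genuinely $\mathcal{O}((L+\lambda)D_i^2)$, and confirming the same $t$ works for all nodes simultaneously, which holds because the final bound is independent of $D_i$.
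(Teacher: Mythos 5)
Your proposal is correct and follows essentially the same route as the paper's proof: the linear rate of Nesterov's method plus smoothness gives a gradient bound in terms of the initial distance $D_i$ to the subproblem minimizer, and the circularity in the stopping criterion is resolved exactly as in the paper's Lemma~\ref{lemma:gradient_norm} — a triangle inequality combined with strong convexity to lower-bound the displacement $\| \vx_i^{(r+\frac{1}{2})} - \vx_i^{(r)} \|$ by a constant fraction of $D_i$, after which the resulting iteration count is node-independent. The only cosmetic difference is that the paper packages this conversion as a standalone lemma (yielding the factor $\frac{\alpha \mu}{\mu - \alpha}$ via the gradient-based strong-convexity bound $\| \vx - \vx^\star \| \leq \frac{1}{\mu} \| \nabla F (\vx) \|$), whereas you enforce an explicit half-distance condition through the function-value bound and check separately that the auxiliary $\log \kappa$ requirement is dominated; both give the same $\mathcal{O} \bigl( \sqrt{\tfrac{L}{\mu + \delta}} \log \bigl( \tfrac{L^2 (r+2)^2}{\delta (\delta + \mu)} \bigr) \bigr)$ bound.
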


\subsection{Discussion}

\paragraph{New Analysis with Inexact Subproblem Solution:}
The framework of PDO was initially introduced by \citet{li2020communication} and \citet{sun2022distributed}, and \citet{sun2022distributed} proposed SONATA.
However, they assumed that the subproblem was exactly solved, which is infeasible in general.
Our new theorem significantly relaxes this condition and shows that PDO can achieve the same communication complexity without solving the subproblem exactly.
To clarify this difference from \citet{sun2022distributed}, we refer to PDO with the inexact subproblem solution as \textbf{Inexact-PDO}.

\paragraph{Reducing Dependence on $\delta_\textnormal{max}$ to $\delta$:}
\citet{sun2022distributed} showed that SONATA can achieve $\tilde{\mathcal{O}} (\tfrac{\delta_\text{max}}{\mu (1 - \rho)} \log ( \tfrac{1}{\epsilon} ) )$ communication complexity in the strongly-convex case. 
However, Theorem \ref{theorem:sonata} improves it to $\tilde{\mathcal{O}} (\tfrac{\delta}{\mu (1 - \rho)} \log ( \tfrac{1}{\epsilon} ) )$.
This improvement indicates a more favorable dependence on the functional dissimilarity, which can lead to significantly lower communication costs, as shown in Remark \ref{remark:similarity}.

\paragraph{New Analysis in Convex Case:}
SONATA was analyzed only in the strongly-convex case \citep{sun2022distributed}.
Our theorem is the first to provide the analysis in both convex and strongly-convex cases.

\paragraph{Initial Values:}
Computing the initial value of $\vh_i$ requires $d_\mathcal{G}$ communications where $d_\mathcal{G}$ is the diameter of the underlying network. However, this overhead is negligible compared to the total communication cost. If the initial values are instead set to zero, additional terms may appear in the communication costs (see \citet{koloskova2021an}).

\section{Stabilized Proximal Decentralized Optimization Method}
\label{sec:spdo}
Theorem \ref{theorem:sonata} shows that PDO can achieve the best communication complexity among the existing non-accelerated decentralized methods.
However, Eq.~(\ref{eq:condition_of_subproblem_for_sonata}) implies that PDO needs to solve the subproblem more accurately as the number of rounds increases, which involves the suboptimal computational complexity.
In this section, we propose \textbf{Stabilized Proximal Decentralized Optimization (\proposed{})}, which can achieve the same communication complexity while enjoying a better computational complexity than PDO.

For simplicity, we consider that case when $\mu=0$ and $M=1$.
To relax the condition for an approximate subproblem solution, we use the idea of Hybrid Projection Proximal-point Method \citep{solodov1999hybrid}.
Straightforwardly adapting Hybrid Projection-proximal Point Method to PDO yields the following update rules:\footnote{Note that $\argmin_{\vv \in \mathbb{R}^d} (\cdot)$ has a closed-form solution.}
\begin{align*}
    \vx_i^{(r+1)} &\approx \argmin_{\vx \in \mathbb{R}^d} \left\{ f_i (\vx) + \langle \vh_i^{(r)}, \vx \rangle + \frac{\lambda}{2} \| \vx - \vv_i^{(r)} \|^2 \right\}, \\
    \vv_i^{(r + \frac{1}{2})} &= \argmin_{\vv \in \mathbb{R}^d} \left\{ \langle \nabla f_i (\vx_i^{(r+1)}), \vv \rangle + \frac{\lambda}{2} \| \vv - \vv_i^{(r)} \|^2 \right\}, \\
    \vv_i^{(r+1)} &= \sum_{j=1}^n W_{ij} \vv_j^{(r+\frac{1}{2})}, \\
    \vh_i^{(r+1)} &= \sum_{j=1}^n W_{ij} \left( \vh_j^{(r)} + \nabla f_j (\vv_j^{(r+1)}) \right) - \nabla f_i (\vv_i^{(r+1)}).
\end{align*}
However, we found that the above update rules do not work.
In the above update rule, $\vv_i^{(r+\frac{1}{2})} \not = \vx_i^{(r+1)}$ even though the subproblem is solved exactly.
That is, the above update rules do not recover PDO when the subproblem is exactly solved.
This discrepancy can hinder the convergence of the parameters to the optimal solution.
Intuitively, when we solve the subproblem exactly, $\vv_i^{(r+\frac{1}{2})}$ should equal to $\vx_i^{(r+1)}$ since the above technique is not necessary.
Motivated by this intuition, we modify the update rule of $\vv_i^{(r+\frac{1}{2})}$ as in Alg.~\ref{algorithm:stabilized_sonata}, in which the modification is highlighted in blue. 
We refer to this method as \textbf{Stabilized Proximal Decentralized Optimization (\proposed{})}.
As a special instance of \proposed{}, we recover PDO.
When the subproblem is exactly solved, i.e., $\nabla F_{i,r} (\vx_i^{(r+1)}) = \mathbf{0}$, it holds
\begin{align*}
    \nabla f_ i (\vx_i^{(r+1)}) + \vh_i^{(r)} + \lambda \left( \vx_i^{(r+1)} - \vv_i^{(r)} \right) = \mathbf{0},
\end{align*}
and with this choice of 
$ \vh_i^{(r)} \! = \!  \lambda \bigl( \vv_i^{(r)} \! - \! \vx_i^{(r+1)} \bigr)-  \nabla f_ i (\vx_i^{(r+1)})$, Alg.~\ref{algorithm:stabilized_sonata} recovers PDO.

\subsection{Convergence Analysis}

In this section, we show the convergence analysis of \proposed{}.
The proofs are deferred to Sec.~\ref{sec:proof_of_stabilized_sonata} and \ref{sec:proof_of_computational_complexity_of_stabilized_sonata}.

\begin{theorem}
\label{theorem:stabilized_sonata}
Consider Alg.~\ref{algorithm:stabilized_sonata}.
Suppose that Assumptions \ref{assumption:strongly_convex}, \ref{assumption:smooth}, and \ref{assumption:spectral_gap} hold.
Then, suppose that $\vh_i^{(0)} = \nabla h_i (\vv_i^{(0)})$, $\vv_i^{(0)} = \bar{\vv}^{(0)}$,
and 
\vspace{-1mm}
\begin{align}
    \sum_{i=1}^n \left\| \nabla F_{i,r} (\vx_i^{(r+1)}) \right\|^2 &\leq \frac{\lambda^2}{10} \sum_{i=1}^n \left\| \vv_i^{(r)} - \vx_i^{(r+1)} \right\|^2.
    \label{eq:condition_of_subproblem}
\end{align}

\textbf{Strongly-convex Case:} Suppose that $\mu > 0$. When $\lambda = 20 \delta$ and $M \geq \tfrac{1}{1 - \rho} \log (\tfrac{5 L}{\delta})$, it holds that $\tfrac{1}{W_R} \sum_{r=0}^{R-1} w^{(r)} \left(f (\bar{\vx}^{(r+1)}) - f (\vx^\star)\right) \leq \epsilon$ after $R = \mathcal{O} ( \tfrac{\mu + \delta}{\mu} \log ( \tfrac{\mu \| \bar{\vv}^{(0)} - \vx^\star \|^2}{\epsilon} ) )$
rounds where $w^{(r)} \coloneqq (1 + \tfrac{\mu}{20 \delta})^{r}$ and $W_R \coloneqq \sum_{r=0}^{R-1} w^{(r)}$.
Thus, it requires at most
\begin{align*}
    \mathcal{O} \left( \frac{\mu + \delta}{\mu ( 1 - \rho)} \log \left( \frac{L}{\delta} \right) \log \left( \frac{\mu \| \bar{\vv}^{(0)} - \vx^\star \|^2}{\epsilon} \right) \right)
\end{align*}
communication where $\bar{\vv}^{(r)} \coloneqq \tfrac{1}{n} \sum_{i=1}^n \vv_i^{(r)}$.

\textbf{Convex Case:} Suppose that $\mu = 0$. When $\lambda = 20 \delta$ and $M \geq \tfrac{1}{1 - \rho} \log (\tfrac{5 L}{\delta})$, it holds that $\frac{1}{R} \sum_{r=1}^{R} f (\bar{\vx}^{(r)}) - f (\vx^\star) \leq \epsilon$ after $R = \mathcal{O} ( \tfrac{\delta \| \bar{\vv}^{(0)} - \vx^\star \|^2}{\epsilon} )$ rounds. Thus, it requires at most 
\begin{align*}
    \mathcal{O} \left( \frac{\delta \| \bar{\vv}^{(0)} - \vx^\star \|^2}{(1 - \rho) \epsilon} \log \left( \frac{L}{\delta} \right) \right)
\end{align*}
communication.
\end{theorem}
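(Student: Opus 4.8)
The plan is to track the averaged ``virtual'' iterate $\bar\vv^{(r)} = \frac{1}{n}\sum_{i=1}^n \vv_i^{(r)}$ and show it behaves like an inexact, strongly-convex-regularized proximal-point sequence, while separately controlling the consensus and gradient-tracking errors that couple the nodes. The starting point is the gradient-tracking invariant: since $\mW$ is doubly stochastic and \textsc{MultiGossip} preserves means, the initialization $\vh_i^{(0)} = \nabla h_i(\vv_i^{(0)})$ with $\vv_i^{(0)} = \bar\vv^{(0)}$ gives $\frac1n\sum_i \vh_i^{(0)} = \nabla f(\bar\vv^{(0)}) - \nabla f(\bar\vv^{(0)}) = \mathbf{0}$, and lines 6--7 propagate this so that $\frac1n\sum_i \vh_i^{(r)} = \mathbf{0}$ for every $r$. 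Using the closed form in the footnote of line 5 together with mean-preservation of gossip, I obtain
\begin{align*}
 \bar\vv^{(r+1)} = \frac{1}{\mu+\lambda}\Bigl(\mu\bar\vx^{(r+1)} + \lambda\bar\vv^{(r)} - \tfrac{1}{n}\sum_{i=1}^n \nabla f_i(\vx_i^{(r+1)})\Bigr),
\end{align*}
where $\bar\vx^{(r+1)} = \frac1n\sum_i \vx_i^{(r+1)}$. Writing the subproblem residual as $\ve_i^{(r)} = \nabla F_{i,r}(\vx_i^{(r+1)}) = \nabla f_i(\vx_i^{(r+1)}) + \vh_i^{(r)} + \lambda(\vx_i^{(r+1)} - \vv_i^{(r)})$, the per-node stabilized gradient $\nabla f_i(\vx_i^{(r+1)}) + \vh_i^{(r)}$ equals $\ve_i^{(r)} + \lambda(\vv_i^{(r)} - \vx_i^{(r+1)})$, which in the exact case $\ve_i^{(r)}=\mathbf0$ reproduces the stabilization identity $\vv_i^{(r+\frac12)} = \vx_i^{(r+1)}$ emphasized in the text; this is precisely what makes the averaged recursion a genuine proximal step on $f$ rather than the broken variant.

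Next I would prove a one-step descent inequality for $\|\bar\vv^{(r)} - \vx^\star\|^2$ following the Hybrid-Projection-Proximal-Point logic of \citet{solodov1999hybrid}. Expanding the square and substituting the recursion produces a cross term of the form $-\frac{2}{\mu+\lambda}\langle\frac1n\sum_i\nabla f_i(\vx_i^{(r+1)}) - \mu(\bar\vx^{(r+1)}-\bar\vv^{(r)}),\, \bar\vv^{(r)}-\vx^\star\rangle$. The aim is to lower-bound this using (strong) convexity of $f$ at $\bar\vx^{(r+1)}$, paying for (i) the gap between $\frac1n\sum_i\nabla f_i(\vx_i^{(r+1)})$ and $\nabla f(\bar\vx^{(r+1)})$, (ii) the gap between $\bar\vv^{(r)}$ and $\bar\vx^{(r+1)}$, and (iii) the residuals $\ve_i^{(r)}$. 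The error condition Eq.~\eqref{eq:condition_of_subproblem}, $\sum_i\|\ve_i^{(r)}\|^2 \le \frac{\lambda^2}{10}\sum_i\|\vv_i^{(r)}-\vx_i^{(r+1)}\|^2$, is exactly what lets me absorb (iii) into a small fraction of the quadratic term, mirroring the $1-\sigma^2$ slack in \citet{solodov1999hybrid}; the resulting clean inequality reads $f(\bar\vx^{(r+1)})-f(\vx^\star) + \frac{\mu+\lambda}{2}\|\bar\vv^{(r+1)}-\vx^\star\|^2 \le \frac{\lambda}{2}\|\bar\vv^{(r)}-\vx^\star\|^2 + (\text{errors})$.

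The crux is bounding those errors, which is where the similarity $\delta$ (Definition~\ref{assumption:similarity}) and the gossip contraction enter. I would introduce the three coupled quantities $\Omega_v^{(r)} = \sum_i\|\vv_i^{(r)}-\bar\vv^{(r)}\|^2$, the $\vx$-consensus error $\sum_i\|\vx_i^{(r+1)}-\bar\vx^{(r+1)}\|^2$, and the tracking error $\sum_i\|\vh_i^{(r)}-\nabla h_i(\bar\vv^{(r)})\|^2$, then derive a recursion for them. The \textsc{MultiGossip} step contracts the first and third by $\rho^{2M}$, and $M \ge \frac{1}{1-\rho}\log(\frac{5L}{\delta})$ guarantees $\rho^{M}\le \frac{\delta}{5L}$; crucially, the \emph{average} bound in Definition~\ref{assumption:similarity} means the increments fed into the tracking error are $\sum_i\|\nabla h_i(\vv_i)-\nabla h_i(\bar\vv)\|^2 \le \delta^2\,\Omega_v$, governed by $\delta$ rather than $L$ or $\delta_\text{max}$, which is the source of the improved $\delta$-dependence. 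Combining this with $\lambda=20\delta$ and the inexactness bound, I expect the three errors to obey a linear system of spectral radius below one, so they stay proportional to $\sum_i\|\vv_i^{(r)}-\vx_i^{(r+1)}\|^2$ and fold back into the descent inequality with a harmless constant.

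Finally I would assemble a Lyapunov function --- $\|\bar\vv^{(r)}-\vx^\star\|^2$ plus a small multiple of the aggregated consensus/tracking error --- and close both regimes. In the strongly-convex case, $\frac{\mu+\lambda}{\lambda} = 1+\frac{\mu}{20\delta}$, so multiplying the clean inequality by $w^{(r)} = (1+\frac{\mu}{20\delta})^r$ telescopes the function gaps and contracts the distance by $\frac{\lambda}{\mu+\lambda}$ per round, yielding $R = \mathcal{O}(\frac{\mu+\delta}{\mu}\log(\cdots))$; multiplying by $M$ gives the stated communication bound. In the convex case ($\mu=0$) the same inequality telescopes without weights, giving $\frac{1}{R}\sum_{r=1}^R (f(\bar\vx^{(r)})-f(\vx^\star)) \le \frac{\lambda\|\bar\vv^{(0)}-\vx^\star\|^2}{2R} = \mathcal{O}(\frac{\delta\|\bar\vv^{(0)}-\vx^\star\|^2}{R})$. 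I expect the main obstacle to be the third step: showing the coupled error recursion actually contracts under $\lambda=20\delta$ and the prescribed $M$, since the residuals $\ve_i^{(r)}$, the tracking drift, and the distance $\|\vv_i^{(r)}-\vx_i^{(r+1)}\|$ feed into one another, and only the precise numerical constants (hence the specific $20$ and $5$) stabilize the system while keeping the tolerance $\frac{\lambda^2}{10}$ loose enough for cheap subproblem solves.
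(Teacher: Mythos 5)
Your overall architecture parallels the paper's proof: the gradient-tracking mean invariant $\sum_i \vh_i^{(r)} = \mathbf{0}$, a one-round descent inequality in which the tolerance \eqref{eq:condition_of_subproblem} absorbs the subproblem residuals into a fraction of the prox displacement $\sum_i\|\vv_i^{(r)}-\vx_i^{(r+1)}\|^2$, gossip-contracted consensus/tracking error recursions requiring $\rho^{M}\leq \delta/(5L)$, and weighted telescoping with $w^{(r)}=(1+\tfrac{\mu}{20\delta})^r$. Your choice to track $\|\bar{\vv}^{(r)}-\vx^\star\|^2$ separately from the consensus error, rather than the paper's $\tfrac{1}{n}\sum_i\|\vv_i^{(r)}-\vx^\star\|^2$, is only a cosmetic difference by Eq.~\eqref{eq:average_shift}.

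However, there is a genuine flaw in your error-recursion step: you invoke Definition~\ref{assumption:similarity} to claim $\sum_i\|\nabla h_i(\vv_i^{(r)})-\nabla h_i(\bar{\vv}^{(r)})\|^2 \le \delta^2\,\Omega_v^{(r)}$. Definition~\ref{assumption:similarity} bounds $\tfrac1n\sum_i\|\nabla h_i(\vx)-\nabla h_i(\vy)\|^2$ only for a \emph{common} pair of points $\vx,\vy$ shared by all nodes; here the evaluation points $\vv_i^{(r)}$ differ across $i$, and the inequality can fail by a factor of up to $n$ (concentrate all the consensus deviation on the single node whose $h_i$ is roughly $\sqrt{n}\,\delta$-smooth, which is permitted since $\delta$ can be $\sqrt{n}$ times smaller than $\delta_\text{max}$). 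Such per-node deviation terms must instead be bounded through $L$-smoothness (Assumption~\ref{assumption:smooth}), which is exactly why the paper's tracking-error recursion, Lemma~\ref{lemma:general_lemma_for_e}, carries a $32L^2\,\Xi^{(r+1)}$ term; average similarity only yields the $\delta^2\|\bar{\vx}^{(r+1)}-\bar{\vx}^{(r)}\|^2$ term, where the two points are the common averages. Consequently, your stated reason for the improved $\delta$-dependence is misplaced: the benefit of average similarity enters in the descent inequality's heterogeneity cross terms, where $\vh_i^{(r)}-\nabla h_i(\bar{\vv}^{(r)})$ and $\nabla h_i(\bar{\vv}^{(r)})-\nabla h_i(\bar{\vx}^{(r+1)})$ are evaluated at common averaged points, not in the consensus part of the tracking recursion. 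The gap is repairable --- with the correct $L^2$ coefficient your coupled error system still contracts precisely because you already impose $\rho^{M}\le\delta/(5L)$ --- but as written the contraction claim, which you yourself identify as the crux, rests on an invalid inequality and would need to be redone.
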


\begin{theorem}
\label{theorem:compucational_complexity_of_stabilized_sonata}
Consider Alg.~\ref{algorithm:stabilized_sonata}.
Suppose that Assumptions \ref{assumption:strongly_convex}, \ref{assumption:smooth}, and \ref{assumption:spectral_gap} hold, and we use the same initial parameters, $\lambda$, and $M$ as in Theorem \ref{theorem:stabilized_sonata}.
If we use the algorithm proposed in Remark 1 in \citet{nesterov2018primal} with initial parameter $\vv_i^{(r)}$ to approximately solve the subproblem in line 4, each node requires at most $\mathcal{O} (\sqrt{\tfrac{L}{\delta}})$ iterations to satisfy Eq.~(\ref{eq:condition_of_subproblem}).
\end{theorem}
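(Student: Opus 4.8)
The plan is to reduce the sum criterion \eqref{eq:condition_of_subproblem} to a per-node condition and then bound the number of inner iterations using the gradient-norm guarantee of the inner solver together with the \emph{relative} nature of the stopping test. First I would record the regularity of the subproblem: $F_{i,r}(\vx) = f_i(\vx) + \langle \vh_i^{(r)}, \vx \rangle + \frac{\lambda}{2}\|\vx - \vv_i^{(r)}\|^2$ is $(\mu+\lambda)$-strongly convex and $(L+\lambda)$-smooth, since the quadratic adds $\lambda$ to both curvature bounds and the linear term changes neither. With $\lambda = 20\delta$ and $\delta \le L$ (Lemma \ref{remark:relationship_between_delta_and_l}), its condition number is $\kappa = \frac{L+\lambda}{\mu+\lambda} \le \frac{21 L}{20\delta} = \mathcal{O}(L/\delta)$.

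Second, since summing the squares of the per-node tests $\|\nabla F_{i,r}(\vx_i^{(r+1)})\| \le \frac{\lambda}{\sqrt{10}}\|\vv_i^{(r)} - \vx_i^{(r+1)}\|$ recovers \eqref{eq:condition_of_subproblem}, it suffices to bound the inner iterations for a single node. Fix a node and write $\vx_0 = \vv_i^{(r)}$ for the inner-loop start, $\vx^\star$ for the subproblem minimizer, $c = \lambda/\sqrt{10}$, $\alpha = \mu+\lambda$, and $\beta = L+\lambda$. The key reduction converts this relative test into a coarse \emph{absolute} one: by strong convexity $\|\vx_k - \vx^\star\| \le \frac{1}{\alpha}\|\nabla F_{i,r}(\vx_k)\|$, so the triangle inequality gives $\|\vx_0 - \vx_k\| \ge \|\vx_0 - \vx^\star\| - \frac{1}{\alpha}\|\nabla F_{i,r}(\vx_k)\|$. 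Because $c < \alpha$, any iterate satisfying $\|\nabla F_{i,r}(\vx_k)\| \le \frac{c}{2}\|\vx_0 - \vx^\star\|$ then forces $\|\vx_0 - \vx_k\| \ge \frac{1}{2}\|\vx_0 - \vx^\star\|$ and hence passes the relative test. Crucially, the absolute target $\frac{c}{2}\|\vx_0 - \vx^\star\| = \Theta(\lambda\|\vx_0 - \vx^\star\|)$ scales with the initial distance.

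Third, I would invoke the inner solver. Nesterov's primal-dual method (Remark 1 in \citet{nesterov2018primal}) delivers, from $\vx_0$, a gradient-norm guarantee of the accelerated \emph{polynomial} form $\|\nabla F_{i,r}(\vx_k)\| \le \mathcal{O}\bigl(\frac{\beta}{(k+1)^2}\|\vx_0 - \vx^\star\|\bigr)$. Setting this at most $\frac{c}{2}\|\vx_0 - \vx^\star\|$, the factor $\|\vx_0 - \vx^\star\|$ cancels, leaving $(k+1)^2 \gtrsim \beta/\lambda = \mathcal{O}(L/\delta)$, i.e. $k = \mathcal{O}(\sqrt{L/\delta})$, uniformly over $r$ and over the data. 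This cancellation — the relative stopping criterion rendering the iteration count independent of $\|\vx_0 - \vx^\star\|$ — is exactly what removes the round-dependent logarithm present for Inexact-PDO in Theorem \ref{theorem:compucational_complexity_of_sonata}.

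I expect the main obstacle to be justifying the absence of \emph{any} logarithmic factor, which hinges on the inner solver controlling the \emph{gradient norm} at a $1/k^2$ rate rather than only the function value. A function-value rate $F(\vx_k) - F^\star = \mathcal{O}(\beta\|\vx_0 - \vx^\star\|^2/k^2)$ yields, through smoothness, only $\|\nabla F_{i,r}(\vx_k)\| = \mathcal{O}(\beta\|\vx_0 - \vx^\star\|/k)$, which would force $k = \mathcal{O}(L/\delta)$; and a geometric strongly-convex gradient rate would achieve the required $\Theta(\kappa)$-fold gradient decrease only at the cost of a $\log\kappa$ factor. The argument therefore stands or falls on the primal-dual method's direct control of the gradient (equivalently, of the duality gap), which I would verify by matching Remark 1's guarantee to the constants $\alpha,\beta,c$ above; the remaining steps are the routine curvature bookkeeping and the triangle-inequality reduction already sketched.
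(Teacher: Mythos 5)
Your proposal is correct and takes essentially the same route as the paper's proof: the paper likewise records that $F_{i,r}$ is $\Omega(\mu+\delta)$-strongly convex and $\mathcal{O}(L+\delta)$-smooth, invokes the $\mathcal{O}\bigl(L\|\vx^{(0)}-\vx^\star\|/T^2\bigr)$ gradient-norm guarantee of Remark 1 in \citet{nesterov2018primal} (Lemma~\ref{lemma:general_computational_complexity_lemma}), and converts that absolute bound into the relative stopping test via the same strong-convexity/triangle-inequality argument (Lemma~\ref{lemma:gradient_norm}), yielding a round-independent $\mathcal{O}(\sqrt{L/\delta})$ count. The only difference is organizational: the paper packages your inlined absolute-to-relative reduction and solver guarantee into two standalone lemmas, while you spell out the per-node reduction and the cancellation of $\|\vx_0-\vx^\star\|$ explicitly.
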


\begin{remark}
\label{remark:compucational_complexity_of_stabilized_sonata_with_fast_gradient_descent}
Suppose that the same assumptions hold as in Theorem \ref{theorem:compucational_complexity_of_stabilized_sonata}.
Then, if we use Nesterov's Accelerated Gradient Descent with initial parameter $\vv_i^{(r)}$ to approximately solve the subproblem in line 4, each node requires at most $\mathcal{O} (\sqrt{\tfrac{L}{\mu + \delta}} \log (\tfrac{L}{\delta}))$ iterations to satisfy Eq.~(\ref{eq:condition_of_subproblem}).
\end{remark}

\paragraph{Discussion:}
Eq.~(\ref{eq:condition_of_subproblem_for_sonata}) shows that Inexact-PDO needs to solve the subproblem more accurately as the number of rounds increases, whereas Eq.~(\ref{eq:condition_of_subproblem}) indicates that \proposed{} does not.
Thus, as shown in Theorem \ref{theorem:sonata}, \ref{theorem:compucational_complexity_of_sonata}, \ref{theorem:stabilized_sonata}, and \ref{theorem:compucational_complexity_of_stabilized_sonata}, \proposed{} can attain the same communication complexity as PDO while enjoying a better computational complexity.

\section{Accelerated Stabilized Proximal Decentralized Optimization Method}
\label{sec:acc_spdo}

\setlength{\textfloatsep}{10pt}
\setlength{\dbltextfloatsep}{10pt}
\begin{algorithm*}[t]
\caption{Accelerated Stabilized Proximal Decentralized Optimization Method (Accelerated-SPDO)}
\label{alg:accelerated_stabilized_sonata}
\begin{minipage}{\linewidth}
\begin{algorithmic}[1]
    \State \textbf{Input:} The number of rounds $R$, and hyperparameters $\gamma$, $M$, and $\lambda$.
    \State $A_0 = 0$ and $B_0 = 1$.
    \For{$r = 0, 1, \ldots, R-1$}
    \State Find $a_{r+1} > 0$ that satisfies $\lambda = \tfrac{(A_{r} + a_{r+1}) B_{r}}{a^2_{r+1}}$.
    \State $A_{r+1} = A_{r} + a_{r+1}$.
    \State $B_{r+1} = B_{r} + \mu a_{r+1}$.
    \For{$i = 1, \ldots, n$ in parallel}
    \State $\vy_i^{(r)} = \tfrac{A_{r} \vx_i^{(r)} + a_{r+1} \vv_i^{(r)}}{A_{r+1}}$.
    \If{$r \geq 1$}
    \State $\vh_i^{(r)} = \textsc{FastGossip} \left( \vh_i^{(r-1)} + \nabla f_i (\vy_i^{(r)}), \mW, M, \gamma, i \right) - \nabla f_i (\vy_i^{(r)})$.
    \EndIf
    \State $\vx_i^{(r+\frac{1}{2})} \approx \argmin_{\vx \in \mathbb{R}^d} F_{i, r} (\vx)$ where $ F_{i,r} (\vx) \coloneqq f_i (\vx) + \langle \vh_i^{(r)}, \vx \rangle + \frac{\lambda}{2} \| \vx - \vy_i^{(r)} \|^2$.
    \State $\vv_i^{(r + \frac{1}{2})} = \argmin_{\vv \in \mathbb{R}^d} \left\{ a_{r+1} \left( \langle \nabla f_i (\vx_i^{(r+\frac{1}{2})}) + \vh_i^{(r)}, \vv \rangle + \frac{\mu}{2} \| \vv - \vx_i^{(r+\frac{1}{2})} \|^2 \right) + \frac{B_r}{2} \| \vv - \vv_i^{(r)} \|^2 \right\}$\footnote{This update rule can be rewritten as $\vv_i^{(r+\frac{1}{2})} = \tfrac{1}{B_{r+1}} \bigl( B_r \vv_i^{(r)} + \mu a_{r+1} \vx_i^{(r+\frac{1}{2})} - a_{r+1} \bigl( \nabla f_i (\vx_i^{(r+\frac{1}{2})}) + \vh_i^{(r)} \bigr) \bigr)$.}.
    \State $\vx_i^{(r+1)} = \textsc{FastGossip} \left( \vx_i^{(r+\frac{1}{2})}, \mW, M, \gamma, i \right)$.
    \State $\vv_i^{(r+1)} = \textsc{FastGossip} \left( \vv_i^{(r+\frac{1}{2})}, \mW, M, \gamma, i \right)$.
    \EndFor
    \EndFor
\end{algorithmic}
\end{minipage}
\end{algorithm*}
\begin{algorithm}[t]
\caption{Fast Gossip Averaging}
\label{algorithm:fast_gossip}
\begin{algorithmic}[1]
\Function{FastGossip}{$\{ \va_i \}_{i=1}^n$, $\mW$, $M$, $\gamma$, $i$}
\State $\va_i^{(0)} = \va_i$ and $\va_i^{(-1)} = \va_i$ for all $i$.
\For{$m = 0, \ldots, M-1$}
\State $\va_i^{(m+1)} = (1 + \gamma) \sum_{j=1}^n W_{ij} \va_j^{(m)} - \gamma \va_i^{(m-1)}$.
\EndFor
\State \textbf{return} $\va_i^{(M)}$.
\EndFunction
\end{algorithmic}
\end{algorithm}

In this section, we propose \textbf{Accelerated-\proposed{}}, which can attain the best communication and computational complexities among the existing methods.
We present the pseudo-code in Alg.~\ref{alg:accelerated_stabilized_sonata}.
We use Monteiro and Svaiter acceleration \citep{monteiro2013accelerated} and accelerated gossip averaging \citep{liu2011accelerated} in Accelerated-\proposed{}.

\subsection{Convergence Analysis}

We now provide the convergence analysis of Accelerated-\proposed{}.
The proofs are deferred to Sec.~\ref{sec:proof_of_acc_stabilized_sonata} and \ref{sec:proof_of_computational_complexity_of_acc_stabilized_sonata}.

\begin{theorem}
\label{theorem:acc_stabilized_sonata}
Consider Alg.~\ref{alg:accelerated_stabilized_sonata}.
Suppose that Assumptions \ref{assumption:strongly_convex}, \ref{assumption:smooth}, and \ref{assumption:spectral_gap} hold.
Then, suppose that $\vh_i^{(0)} = \nabla h_i (\vv_i^{(0)})$, $\vx_i^{(0)} = \vv_i^{(0)} = \bar{\vv}^{(0)}$, and 
\vspace{-1mm}
\begin{align}
    \label{eq:condition_of_subproblem_for_acc}
    \!\!\! \sum_{i=1}^n \left\| \nabla F_{i,r} (\vx_i^{(r+\frac{1}{2})}) \right\|^2 \!\!\! &\leq \frac{\lambda^2}{352} \sum_{i=1}^n \left\| \vy_i^{(r)} - \vx_i^{(r+\frac{1}{2})} \right\|^2 \!.
\end{align}

\textbf{Strongly-convex Case:}
Suppose that $\mu > 0$. When $\lambda = 96 \delta$, $M \geq \tfrac{4}{\sqrt{1 - \rho}} \log (\tfrac{18 L^2 (192 \delta + \mu)}{\mu \delta^2})$, and $\gamma = \tfrac{1 - \sqrt{1 - \rho^2}}{1 + \sqrt{1 + \rho^2}}$, it holds that $f (\bar{\vx}^{(R)}) - f(\vx^\star) \leq \epsilon$ holds after $R = \mathcal{O} ( \sqrt{\tfrac{\mu + \delta}{\mu}} \log ( 1 + \sqrt{\tfrac{ \min\{\mu, \delta\} \| \bar{\vx}^{(0)} - \vx^\star \|^2}{\epsilon}} ) )$ rounds. Thus, it requires at most
\begin{align*}
    \mathcal{O} &\left( \sqrt{\frac{\mu + \delta}{\mu (1 - \rho)}} \log \left( \frac{L^2 (\mu + \delta)}{\mu \delta^2} \right) \right. \\
    &\quad\quad \left. \log \left( 1 + \sqrt{\frac{ \min\{\mu, \delta\} \| \bar{\vx}^{(0)} - \vx^\star \|^2}{\epsilon}} \right) \right)
\end{align*}
communication where $\bar{\vx}^{(r)} \coloneqq \tfrac{1}{n} \sum_{i=1}^n \vx_i^{(r)}$.

\textbf{Convex Case:}
Suppose that $\mu=0$. When $\lambda = 208 \delta$, $M \geq \tfrac{3}{2 \sqrt{1 - \rho}} \log (\max\{ \tfrac{12 L}{\delta}, \tfrac{20384 \delta \| \bar{\vx}^{(0)} - \vx^\star \|^2}{\epsilon} \})$, and $\gamma = \tfrac{1 - \sqrt{1 - \rho^2}}{1 + \sqrt{1 + \rho^2}}$, it holds that $f (\bar{\vx}^{(R)}) - f(\vx^\star) \leq \epsilon$ holds after $R = \mathcal{O} ( \sqrt{\tfrac{\delta \| \bar{\vx}^{(0)} - \vx^\star \|^2}{\epsilon}} )$ rounds. Thus, it requires at most
\begin{align*}
    \mathcal{O} \left( \!\! \sqrt{\frac{\delta \left\| \bar{\vx}^{(0)} - \vx^\star \right\|^2}{(1 - \rho) \epsilon}} \log \left( \! \max \left\{ \frac{L}{\delta}, \frac{\delta \| \bar{\vx}^{(0)} - \vx^\star \|^2}{\epsilon} \right\} \! \right) \!\! \right)
\end{align*}
communication.
\end{theorem}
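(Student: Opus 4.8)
The plan is to fold three ingredients into a single Lyapunov analysis: the Monteiro--Svaiter (MS) accelerated hybrid-proximal framework governing the node-averaged dynamics, the accelerated-gossip consensus contraction, and a gradient-tracking error recursion whose growth is controlled by the average similarity $\delta$. First I would track the averaged iterates $\bar{\vx}^{(r)}, \bar{\vv}^{(r)}, \bar{\vy}^{(r)}$. Since $\mW$ is doubly stochastic under Assumption~\ref{assumption:spectral_gap}, FastGossip preserves averages, so these averaged sequences satisfy---up to consensus and tracking errors---an inexact accelerated proximal-point recursion on $f$, with $(A_r, B_r, a_{r+1})$ and the extrapolation $\vy_i^{(r)}$ playing the role of the MS estimate-sequence coefficients. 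The stabilized $\vv$-update (with the $+\vh_i^{(r)}$ correction highlighted in the algorithm) guarantees that, when the subproblem is solved exactly, the scheme reduces to exact accelerated proximal-point, the acceleration-level analog of the SPDO reduction already noted in the text.

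Next I would establish the MS potential inequality for $\Phi_r \coloneqq A_r\bigl(f(\bar{\vx}^{(r)}) - f(\vx^\star)\bigr) + \tfrac{B_r}{2}\|\bar{\vv}^{(r)} - \vx^\star\|^2$, showing $\Phi_{r+1} \le \Phi_r + (\text{perturbation})$, where the perturbation collects (i) the subproblem residual $\sum_i \|\nabla F_{i,r}(\vx_i^{(r+\frac{1}{2})})\|^2$ and (ii) the discrepancy between the local surrogate gradient $\nabla f_i(\vx_i^{(r+\frac{1}{2})}) + \vh_i^{(r)}$ and the true averaged gradient $\nabla f(\bar{\vx}^{(r+\frac{1}{2})})$. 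The relative-error condition Eq.~(\ref{eq:condition_of_subproblem_for_acc}), with the constant $\lambda^2/352$, is designed so that term (i) is absorbed into the proximal progress $\|\vy_i^{(r)} - \vx_i^{(r+\frac{1}{2})}\|^2$, which is exactly the hybrid MS error criterion that permits a round-independent subproblem accuracy.

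Then I would bound the consensus errors $\sum_i \|\vx_i^{(r)} - \bar{\vx}^{(r)}\|^2$ and $\sum_i \|\vv_i^{(r)} - \bar{\vv}^{(r)}\|^2$ together with the tracking error $\sum_i \|\vh_i^{(r)} - \nabla h_i(\bar{\vy}^{(r)})\|^2$ via a coupled recursion. Accelerated gossip contracts each consensus error by a factor $\approx (1 - c\sqrt{1-\rho})^{M}$ per round, while the similarity Definition~\ref{assumption:similarity} keeps the tracking-error growth proportional to $\delta^2$ times the iterate movement; the choices $\lambda = 96\delta$ (resp. $208\delta$) tie the regularization to this scale, and the prescribed $M \ge \tfrac{4}{\sqrt{1-\rho}}\log(\cdot)$ (resp. $\tfrac{3}{2\sqrt{1-\rho}}\log(\cdot)$) forces these errors below a fixed fraction of the optimization progress. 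This lets them be folded into a merged Lyapunov function $\Psi_r = \Phi_r + (\text{weighted consensus and tracking errors})$ that stays monotone up to lower-order terms. Solving the MS recursion for $A_r$ finishes the argument: geometric growth at rate $\sim(1 + \sqrt{\mu/(\mu+\delta)})$ in the strongly-convex case gives $R = \mathcal{O}(\sqrt{(\mu+\delta)/\mu}\,\log(\cdot))$, whereas $A_r = \Omega(r^2/\lambda)$ in the convex case gives $R = \mathcal{O}(\sqrt{\delta\|\bar{\vx}^{(0)}-\vx^\star\|^2/\epsilon})$; multiplying $R$ by the per-round gossip cost $M$ yields the stated communication complexities.

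The hard part will be the third step: coupling the accelerated consensus/tracking recursion with the MS potential without degrading the acceleration rate. Because accelerated schemes multiply perturbations by the growing coefficients $a_{r+1}$, a naive bound lets consensus error accumulate faster than $\Phi_r$ decreases. The key is to show, using the stabilization correction and the $\delta$-scaled choice of $\lambda$, that each round's consensus and tracking errors are dominated by the same $\|\vy_i^{(r)} - \vx_i^{(r+\frac{1}{2})}\|^2$ quantity driving the potential decrease, so $\Psi_r$ remains monotone. Verifying that the specific constants ($96$, $208$, $352$, and the logarithmic lower bounds on $M$) make all these absorptions simultaneously valid is the delicate bookkeeping that the proof must carry out carefully.
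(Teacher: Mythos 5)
Your proposal takes essentially the same route as the paper's proof: the paper's central lemma is precisely your MS potential inequality with weights $A_r, B_r$ (using per-node distances $\tfrac{1}{n}\sum_i\|\vv_i^{(r)}-\vx^\star\|^2$ rather than the averaged one), it folds the tracking error $\mathcal{E}^{(r)}$ and consensus errors $\Xi_\vx^{(r)},\Xi_\vv^{(r)}$ into a merged Lyapunov function with $\delta$-scaled weights, absorbs the subproblem residual via the $\lambda^2/352$ condition into the $\|\vy_i^{(r)}-\vx_i^{(r+\frac12)}\|^2$ progress terms, and obtains the round counts from the growth of $A_r$ (geometric when $\mu>0$, quadratic when $\mu=0$) with communication $= R\times M$ through the accelerated-gossip contraction $\sqrt{2}(1-\sqrt{1-\rho})^{M}$. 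The one point your sketch defers to ``bookkeeping'' but which is a genuine structural feature of the paper's argument: in the convex case the per-round errors cannot all be dominated by the proximal progress term, so the paper adds an extra potential term proportional to $\|\bar{\vv}^{(r)}-\bar{\vx}^{(r)}\|^2$ and imposes a horizon-dependent consensus accuracy $\rho_{\mathrm{eff}}\lesssim (R+1)^{-3}$, which is exactly why the convex-case $M$ carries the $\log(\delta\|\bar{\vx}^{(0)}-\vx^\star\|^2/\epsilon)$ factor.
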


\begin{theorem}
\label{theorem:computational_complexity_of_acc_stabilized_sonata}
Consider Alg.~\ref{alg:accelerated_stabilized_sonata}.
Suppose that Assumptions \ref{assumption:strongly_convex}, \ref{assumption:smooth}, and \ref{assumption:spectral_gap} hold, and we use the same initial parameters, $\gamma$, $\lambda$, and $M$ as in Theorem \ref{theorem:acc_stabilized_sonata}.
Then, if we use the algorithm proposed in Remark 1 in \citet{nesterov2018primal} with initial parameter $\vy_i^{(r)}$ to approximately solve the subproblem in line 12, it requires at most $\mathcal{O} (\sqrt{\tfrac{L}{\delta}})$ iterations to satisfy Eq.~(\ref{eq:condition_of_subproblem_for_acc}).
\end{theorem}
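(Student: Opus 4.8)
The plan is to first reduce the summed criterion in Eq.~(\ref{eq:condition_of_subproblem_for_acc}) to a per-node criterion: if for every $i$ the inner solver returns $\vx_i^{(r+\frac{1}{2})}$ with $\|\nabla F_{i,r}(\vx_i^{(r+\frac{1}{2})})\| \le \frac{\lambda}{\sqrt{352}}\|\vy_i^{(r)} - \vx_i^{(r+\frac{1}{2})}\|$, then squaring and summing over $i$ yields Eq.~(\ref{eq:condition_of_subproblem_for_acc}). So I fix one node and one round, abbreviate $F \coloneqq F_{i,r}$, $\vy \coloneqq \vy_i^{(r)}$, and $\vx^\star \coloneqq \argmin_{\vx} F(\vx)$, and analyze a single run of the inner method initialized at $\vx_0 = \vy$. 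By Assumptions \ref{assumption:strongly_convex} and \ref{assumption:smooth}, $F$ is $\alpha$-strongly convex and $\beta$-smooth with $\alpha = \mu + \lambda$ and $\beta = L + \lambda$; crucially $\alpha \ge \lambda$ even when $\mu = 0$, since the quadratic term already contributes strong convexity $\lambda$. If $\vy = \vx^\star$ the criterion holds at $T=0$, so I assume $D \coloneqq \|\vy - \vx^\star\| > 0$.

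The argument rests on two estimates for the $T$-th inner iterate $\vx_T$. First, the gradient-norm guarantee of the method from Remark 1 in \citet{nesterov2018primal}: applied to the $\beta$-smooth convex function $F$ it yields $\|\nabla F(\vx_T)\| \le \frac{c_1 \beta D}{T^2}$ for a universal constant $c_1$. Second, a lower bound on the displacement $\|\vy - \vx_T\|$ appearing on the right-hand side of the criterion. For this I combine the function-value rate $F(\vx_T) - F(\vx^\star) \le \frac{c_2 \beta D^2}{T^2}$ with $\alpha$-strong convexity, giving $\|\vx_T - \vx^\star\|^2 \le \frac{2}{\alpha}(F(\vx_T) - F(\vx^\star)) \le \frac{2 c_2 \beta D^2}{\alpha T^2}$. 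Hence once $T \ge T_0 \coloneqq \lceil \sqrt{8 c_2 \beta / \alpha}\, \rceil$ we have $\|\vx_T - \vx^\star\| \le \frac{1}{2} D$, and therefore $\|\vy - \vx_T\| \ge D - \|\vx_T - \vx^\star\| \ge \frac{1}{2} D$.

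Combining the two, for $T \ge T_0$ the per-node criterion follows as soon as $\frac{c_1 \beta D}{T^2} \le \frac{\lambda}{\sqrt{352}}\cdot\frac{1}{2} D$, i.e. as soon as $T \ge \sqrt{2\sqrt{352}\, c_1\, \beta/\lambda}$. Both thresholds are of order $\sqrt{\beta/\lambda}$: the displacement threshold $T_0$ satisfies $T_0 = \mathcal{O}(\sqrt{\beta/\alpha}) = \mathcal{O}(\sqrt{\beta/\lambda})$ because $\alpha \ge \lambda$, and the gradient threshold is explicitly $\mathcal{O}(\sqrt{\beta/\lambda})$. Taking $T = \mathcal{O}(\sqrt{\beta/\lambda}) = \mathcal{O}(\sqrt{(L+\lambda)/\lambda})$ therefore suffices. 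Finally, since $\lambda = \Theta(\delta)$ (namely $\lambda = 96\delta$ or $208\delta$) and $\delta \le L$ by Lemma \ref{remark:relationship_between_delta_and_l}, we get $\sqrt{(L+\lambda)/\lambda} = \mathcal{O}(\sqrt{L/\delta})$, which is the claimed bound. As this count is independent of $D$ and of the node and round, running every node's inner solver for $\mathcal{O}(\sqrt{L/\delta})$ iterations simultaneously satisfies Eq.~(\ref{eq:condition_of_subproblem_for_acc}).

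The crux, and the main obstacle, is the displacement estimate. A naive bound $\|\nabla F(\vx_T)\| \le \beta\|\vx_T - \vx^\star\|$ would force $\|\vx_T - \vx^\star\|$ down to a $\Theta(\lambda/\beta) = \Theta(\delta/L)$ fraction of $D$, which a linearly-convergent solver reaches only after $\Theta(\sqrt{L/\delta}\log(L/\delta))$ steps — exactly the extra logarithm incurred by plain Nesterov AGD in Remark \ref{remark:compucational_complexity_of_stabilized_sonata_with_fast_gradient_descent}. The logarithm is avoided precisely because the \emph{relative} criterion only needs $\|\vy - \vx_T\|$ to be a constant fraction of $D$ (reached in $\mathcal{O}(\sqrt{\beta/\lambda})$ steps via strong convexity), while the accelerated $1/T^2$ gradient-norm rate simultaneously drives $\|\nabla F(\vx_T)\|$ below $\Theta(\lambda D)$ in the same number of steps. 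The points requiring care are that the cited method genuinely delivers a last-iterate $\mathcal{O}(\beta D / T^2)$ gradient-norm rate (not merely a $\min_{t \le T}$ or a function-value guarantee), and that the universal constants $c_1, c_2$ combine with the prescribed $\lambda$ to respect the factor $1/\sqrt{352}$.
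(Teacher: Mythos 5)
Your proof is correct and its skeleton matches the paper's: reduce Eq.~(\ref{eq:condition_of_subproblem_for_acc}) to the per-node relative criterion $\|\nabla F_{i,r}(\vx_i^{(r+\frac{1}{2})})\| \le \tfrac{\lambda}{\sqrt{352}}\|\vy_i^{(r)} - \vx_i^{(r+\frac{1}{2})}\|$, invoke the $\mathcal{O}(\beta D/T^2)$ gradient-norm rate of the Remark~1 method (with $\beta = L+\lambda$ and $D$ the initial distance to the subproblem minimizer), and lower-bound the displacement $\|\vy_i^{(r)} - \vx_T\|$ by a constant fraction of $D$ via the triangle inequality. The one step you implement differently is the displacement bound: you derive $\|\vx_T - \vx^\star\| \le D/2$ from a function-value rate $F(\vx_T)-F(\vx^\star) \le c_2\beta D^2/T^2$ combined with $(\mu+\lambda)$-strong convexity, whereas the paper (Lemma~\ref{lemma:gradient_norm}, packaged into Lemma~\ref{lemma:general_computational_complexity_lemma}) gets the same control using only the gradient itself: by strong convexity $\|\vx_T - \vx^\star\| \le \|\nabla F(\vx_T)\|/(\mu+\lambda)$, so once the gradient is below $\alpha D$ with $\alpha \le \mu+\lambda$, the triangle inequality gives $\|\vy_i^{(r)} - \vx_T\| \ge (1-\alpha/(\mu+\lambda))\,D$, and choosing $\alpha = \Theta(\lambda)$ yields the criterion after $\mathcal{O}(\sqrt{(L+\lambda)/\lambda}) = \mathcal{O}(\sqrt{L/\delta})$ iterations. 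The practical difference is what each route demands of the inner solver: yours assumes the cited method simultaneously enjoys a last-iterate function-value rate — a second guarantee that the paper never uses and that you rightly flag as unverified — while the paper's route needs only the gradient-norm rate. Your argument can be repaired to remove this dependence by replacing the function-value step with $\|\vx_T - \vx^\star\| \le \|\nabla F(\vx_T)\|/\alpha \le c_1\beta D/(\alpha T^2) \le D/2$ for $T^2 \ge 2c_1\beta/\alpha$, which is the same $\mathcal{O}(\sqrt{\beta/\lambda})$ threshold you already have; this is precisely the paper's mechanism. Everything else — the handling of $D=0$, the constant $1/\sqrt{352}$, exploiting $\mu+\lambda \ge \lambda$ so the bound holds even when $\mu=0$, and the final conversion via $\lambda = \Theta(\delta) \le \Theta(L)$ — agrees with the paper.
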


\begin{remark}
\label{remark:computational_complexity_of_acc_stabilized_sonata_with_fast_gradient_descent}
Suppose that the same assumptions hold as in Theorem \ref{theorem:computational_complexity_of_acc_stabilized_sonata}.
Then, if we use Nesterov's Accelerated Gradient Descent with initial parameter $\vy_i^{(r)}$ to approximately solve the subproblem in line 12, it requires at most $\mathcal{O} (\sqrt{\tfrac{L}{\mu + \delta}} \log (\tfrac{L}{\delta}))$ iterations to satisfy Eq.~(\ref{eq:condition_of_subproblem_for_acc}).
\end{remark}

\subsection{Discussion}
\begin{figure*}[t]
    \centering
    \subfigure[Adaptive number of local steps]{
        \includegraphics[width=\linewidth]{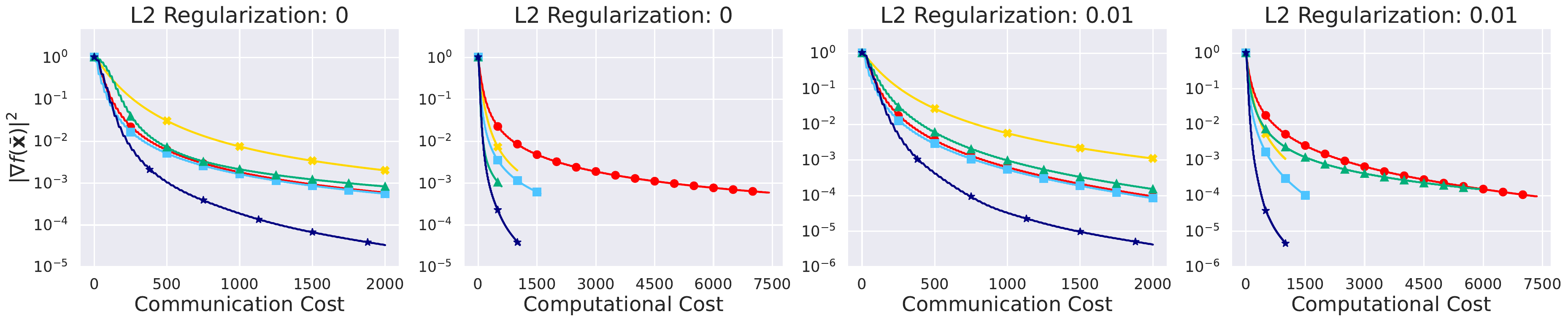}
        \label{fig:adaptive_number_of_local_steps}
    }
    \subfigure[Fixed number of local steps]{
        \includegraphics[width=\linewidth]{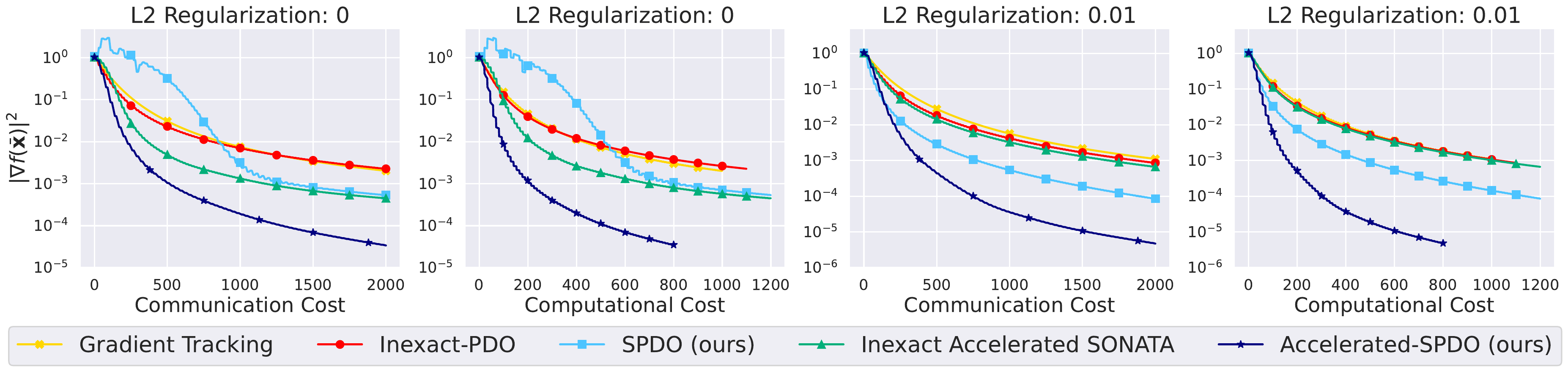}
        \label{fig:fixed_number_of_local_steps}
    }
    \caption{Convergence of the gradient norm with $\alpha=0.1$. In (a), we ran gradient descent until the condition for an approximate subproblem solution was satisfied for all methods except for Gradient Tracking. For Gradient Tracking, we ran gradient descent for $10$ times. In (b), we ran gradient descent for $10$ times to approximately solve the subproblem. See Sec.~\ref{sec:experimental_setup} for a more detailed setting.}
    \label{fig:logistic}
    \vskip 0.01 in
\end{figure*}

In this section, we compare Accelerated-\proposed{} with the existing state-of-the-art method, (Inexact) Accelerated SONATA \citep{tian2022acceleration}.
\citet{tian2022acceleration} assumed that $f_i$ is strongly convex and $\mu \leq \delta_\text{max}$.
Thus, in the following, we discuss the case when $f_i$ is strongly-convex, $\mu \leq \delta$, and $\mu \leq \delta_\text{max}$.
See Table \ref{tab:comparison} for the results.

\paragraph{Computational Complexity:}
Accelerated-\proposed{} can achieve the best computational complexity among the existing decentralized optimization methods.
Accelerated SONATA needs to solve the subproblem exactly, which is infeasible in general.
Inexact Accelerated SONATA solves the subproblem approximately and can achieve $\tilde{\mathcal{O}}((\log (\tfrac{1}{\epsilon}))^2)$ computational complexity.
Compared with these results, Accelerated-\proposed{} can achieve $\tilde{\mathcal{O}} (\log (\tfrac{1}{\epsilon}))$ computational complexity.
This is because Inexact Accelerated SONATA requires a more precise subproblem solution as the number of rounds increases, while Accelerated-\proposed{} does not.

\paragraph{Communication Complexity:}
Accelerated-\proposed{} can achieve the best communication complexity among the existing methods.
In the worst case, $\delta$ can be equal to $\delta_\text{max}$, and Accelerated-\proposed{} attains the same communication complexity as (Inexact) Accelerated SONATA up to logarithmic factors.
However, $\delta$ can be significantly smaller than $\delta_\text{max}$.
In this case, Accelerated-\proposed{} can achieve better communication complexity.

\paragraph{Acceleration in Convex Case:}
(Inexact) Accelerated SONATA has been shown to accelerate the rate only in the strongly-convex case, whereas Accelerated-\proposed{} can accelerate it in both convex and strongly-convex cases.

\subsection{Discussion on Federated Learning}
In this section, we discuss the relationship between our PDO-based methods and the existing methods developed for federated learning.
Decentralized optimization generalizes federated learning, and when the underlying network is a complete graph, decentralized learning becomes equivalent to federated learning.
PDO recovers DANE \citep{shamir2014communication} when the graph is fully-connected (i.e., $W_{ij} = \tfrac{1}{n}$). \proposed{} and Accelerated-\proposed{} contain S-DANE and Accelerated-S-DANE \citep{jiang2024stabilized} as a special instance, respectively.
Note that when the graph is fully-connected, $\rho = 0$ and multiple gossip averaging is not necessary.
Furthermore, since $\sum_{i=1}^n \vh_i^{(r)} = \mathbf{0}$, the technique we proposed in Sec.~\ref{sec:spdo} is also not required (see the discussion in Sec.~\ref{sec:spdo}).
However, in decentralized optimization, the network is generally not fully connected; hence, these techniques become essential to ensure convergence.

\section{Numerical Evaluation}

\paragraph{Experimental Setup:}
We used MNIST \citep{lecun1998gradient} and logistic loss with L2 regularization.
We set the coefficient of L2 regularization to $0$ and $0.01$ and the number of nodes $n$ to $25$.
We used the ring as the underlying network topology and distributed the data to nodes by using Dirichlet distribution with parameter $\alpha > 0$ \citep{hsu2019measuring}.
For Inexact-PDO, Inexact Accelerated SONATA, \proposed{}, and Accelerated-\proposed{}, we used gradient descent with stepsize $\eta$ to solve the subproblem approximately.
We set the number of communications to $2000$ for all methods and tuned other hyperparameters, e.g., $\lambda$, $M$, and $\eta$, to minimize the norm of the last gradient for each method.
See Sec.~\ref{sec:experimental_setup} for a more detailed setting.
\begin{figure}[t]
    \centering
    \vskip - 0.05 in
    \includegraphics[width=\linewidth]{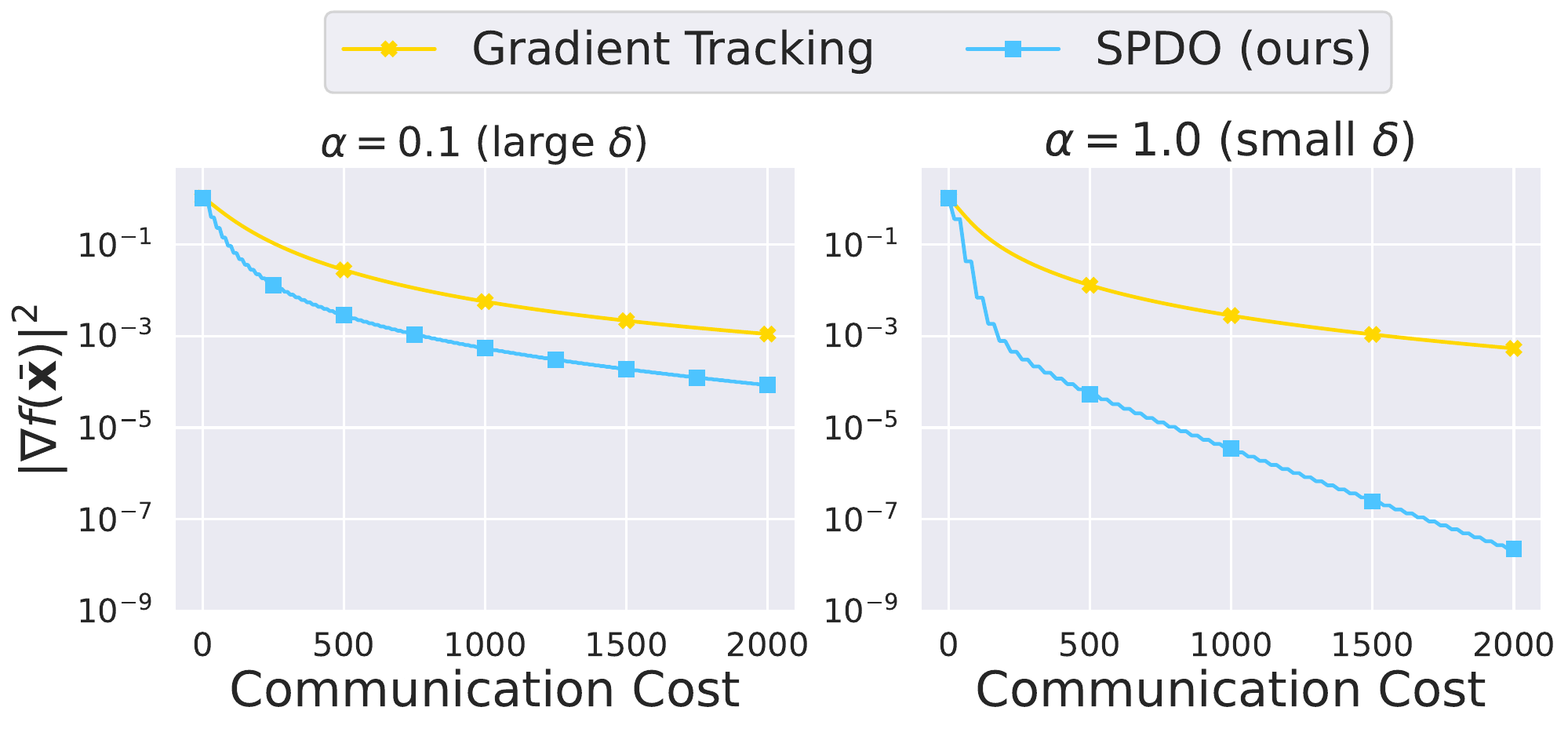}
    \vskip - 0.1 in
    \caption{Convergence of the gradient norm with different $\alpha$. We set the coefficient of L2 regularization to $0.01$. For \proposed{}, we approximately solve the subproblem as in Fig.~\ref{fig:adaptive_number_of_local_steps}.}
    \label{fig:different_alpha}
    \vskip - 0.1 in
\end{figure}

\paragraph{Results:}
Fig.~\ref{fig:adaptive_number_of_local_steps} indicates that Accelerated-\proposed{} can achieve the best communication and computational complexities. \proposed{} achieved the same communication complexity as Inexact-PDO while enjoying a better computation complexity.
These results are consistent with Theorem \ref{theorem:sonata} and \ref{theorem:stabilized_sonata}.
In Fig.~\ref{fig:fixed_number_of_local_steps}, we ran gradient descent for $10$ times to solve the subproblem approximately.
Accelerated-\proposed{} achieved the best communication and computational complexities as in Fig.~\ref{fig:adaptive_number_of_local_steps}.
The results indicate that \proposed{} outperforms Inexact-PDO in both communication and computational complexities.
This is because the condition for an approximate subproblem solution Eq.~(\ref{eq:condition_of_subproblem_for_sonata}) is not satisfied when the number of rounds is sufficiently large, which makes Inexact-PDO need to communicate more.

\paragraph{Effect of Similarity \textnormal{$\delta$:}}
Figure \ref{fig:different_alpha} compares the convergence behavior with different $\alpha$.
As $\alpha$ increases, nodes come to have similar datasets, i.e., $\delta$ comes to be small.
The convergence behaviors of Gradient Tracking were almost the same in both cases, whereas \proposed{} can converge faster as $\alpha$ increases.
This observation was consistent with the comparison in Table~\ref{tab:comparison}.
Gradient Tracking is not affected by data heterogeneity, while it still requires a certain amount of communication even though nodes have very similar functions.
In contrast, \proposed{} can exploit the similarity of local functions and reduce the communication complexity when nods have similar functions.

\section{Conclusion}
In this work, we develop a communication and computation-efficient decentralized optimization method.
To achieve this, we first introduce the proximal decentralized optimization (PDO) methods and analyze the communication and computational complexities.
We then propose \proposed{}, which can achieve the same communication complexity as PDO while enjoying a better computational complexity.
Finally, we propose Accelerated-\proposed{}, which can achieve the best communication and computational complexities among the existing methods.
Throughout the numerical experiments, we confirmed that our proposed method can outperform the existing methods as shown by our analysis.

\section*{Impact Statement}
This paper studies communication and computation-efficient decentralized optimization methods.
Our work mainly focuses on theoretical aspects of decentralized optimization, and there are no specific societal consequences that must be highlighted here.

\section*{Acknowledgments}
This work was done while YT visited the CISPA Helmholtz Center for Information Security. YT was supported by JSPS KAKENHI Grant Number 23KJ1336.


\bibliography{example_paper}
\bibliographystyle{icml2025}

\newpage
\appendix
\onecolumn

\section{Comparison of Communication and Computational Complexities in Convex Case}
\label{sec:comparison_in_convex_case}

\begin{table*}[h]
    \vskip - 0.1 in
    {\centering
    \caption{Summary of communication communication and computational complexities in the convex setting.
    }
    \vskip 0.05 in
    \resizebox{\linewidth}{!}{
    \begin{tabular}{llccc}
       \toprule[1pt]
       \textbf{Algorithm} & \textbf{Reference} & \textbf{\# Communication} & \textbf{\# Computation} & \textbf{Assumptions} \\
       \midrule
       Gradient Tracking  & \citet{koloskova2021an}  & $\mathcal{O} \left( \frac{L d_0^2}{(1 - \rho)^2 \epsilon} \right)^{(a)}$ & $\mathcal{O} \left( \frac{L d_0^2}{(1 - \rho)^2 \epsilon} \right)^{(a)}$ & \ref{assumption:strongly_convex}, \ref{assumption:smooth}, \ref{assumption:spectral_gap} \\
       \midrule
       Exact-PDO (SONATA) & Theorems~\ref{theorem:sonata}, \ref{theorem:compucational_complexity_of_sonata}$^{(b)}$ & $\mathcal{O} \left( \tfrac{\delta d_0^2}{(1 - \rho) \epsilon} \log \left( \frac{L}{\delta} \right) \right)$ & n.a.$^{(c)}$ & \ref{assumption:strongly_convex}, \ref{assumption:smooth}, \ref{assumption:spectral_gap} \\
       \textbf{Inexact-PDO} & Theorems~\ref{theorem:sonata}, \ref{theorem:compucational_complexity_of_sonata} &  $\mathcal{O} \left( \tfrac{\delta d_0^2}{(1 - \rho) \epsilon} \log \left( \frac{L}{\delta} \right) \right)$ & $\mathcal{O} \left( \tfrac{\sqrt{ \delta L} d_0^2}{\epsilon}  \log (\tfrac{1}{\epsilon}) \right)^{(b)}$ & \ref{assumption:strongly_convex}, \ref{assumption:smooth}, \ref{assumption:spectral_gap} \\
       \textbf{Stabilized-PDO [new]} & Theorems \ref{theorem:stabilized_sonata}, \ref{theorem:compucational_complexity_of_stabilized_sonata}& $\mathcal{O} \left( \tfrac{\delta d_0^2}{(1 - \rho) \epsilon} \log \left( \frac{L}{\delta} \right) \right)$  & $\mathcal{O} \left( \tfrac{\sqrt{ \delta L} d_0^2}{\epsilon} \right)^{(b)}$ & \ref{assumption:strongly_convex}, \ref{assumption:smooth}, \ref{assumption:spectral_gap} \\
       \midrule
       Accelerated SONATA &\citet{tian2022acceleration} & - & n.a.$^{(c)}$ & - \\
       Inexact Accelerated SONATA &\citet{tian2022acceleration} & - & - & - \\
       \textbf{Accelerated Stabilized-PDO [new]} & Theorems \ref{theorem:acc_stabilized_sonata}, \ref{theorem:computational_complexity_of_acc_stabilized_sonata} &  $\mathcal{O} \left( \sqrt{\tfrac{\delta d_0^2}{(1 - \rho) \epsilon}} \log \left( \max \left\{ \frac{L}{\delta}, \frac{\delta d_0^2}{\epsilon}\right\} \right) \right)$ & $\mathcal{O} \left( \sqrt{\tfrac{L d_0^2}{\epsilon}} \right)$ & \ref{assumption:strongly_convex}, \ref{assumption:smooth}, \ref{assumption:spectral_gap} \\
       \bottomrule[1pt]
    \end{tabular}}}
  \begin{tablenotes}
      {\footnotesize
        \item (a) It holds that $\delta \leq L$, and $\delta \ll L$ holds when nodes have similar datasets.
        \item (b) \citet{sun2022distributed} did not analyze the communication complexity of SONATA in the convex case, while our new theorem, Theorem~\ref{theorem:sonata}, is the first to analyze it.
        \item (c) ``n.a.'' represents that we need to solve the subproblem exactly.
        \item (d) $\delta$ is smaller than $\delta_\text{max}$ and can be at most $\sqrt{n}$ times smaller than $\delta_\text{max}$.
    }
    \end{tablenotes}
\end{table*}

\newpage
\section{Comparison with Lower Bound}
In this section, we compare the communication and computational complexities of Accelerated SPDO and the lower bound.

\paragraph{Communication Complexity:} \citet{tian2022acceleration} showed that lower bound requires at least \begin{align*} 
    \Omega \left( \sqrt{\frac{\delta}{\mu (1 - \rho)}} \log \left( \frac{\mu | \vx^\star |^2}{\epsilon} \right) \right) 
\end{align*} 
communication to achieve $f (x) - f (x^\star) \leq \epsilon$. Our Accelerated SPDO can achieve the following communication complexity: 
\begin{align*} 
    \mathcal{O} \left( \sqrt{\frac{1 + \frac{\delta}{\mu}}{1 - \rho}} \log \left( \frac{L^2 (\mu + \delta)}{\mu \delta^2} \right) \log \left( 1 + \sqrt{\frac{\min \{ \mu, \delta\} \| \vx^{(0)} - \vx^\star \|^2}{\epsilon}} \right) \right) 
\end{align*}
Thus, when $\delta \geq \mu$, Accelerated SPDO is optimal up to the logarithmic factor.

\paragraph{Computational Complexity:} For non-distributed cases, any first-order algorithm requires at least
\begin{align*} 
    \Omega \left( \sqrt{\frac{L}{\mu}} \log \left(\frac{\mu \| \vx^{(0)} - \vx^\star \|^2}{\epsilon} \right) \right) 
\end{align*} 
gradient-oracles to satisfy $f(x) - f^\star \leq \epsilon$ (See Theorem 2.1.13 in \citet{nesterov2018lectures}). When $\delta \geq \mu$, Accelerated SPDO can achieve the following computational complexity: 
\begin{align*} 
    \mathcal{O} \left( \sqrt{\frac{L}{\mu}} \log \left( 1 + \sqrt{\frac{ \mu \| \vx^{(0)} - \vx^\star \|^2}{\epsilon}} \right) \right), 
\end{align*} 
when we use the algorithm shown in Remark 1 in \citet{nesterov2018primal} (see Theorem~\ref{theorem:computational_complexity_of_acc_stabilized_sonata}).
Thus, the computational complexity of Accelerated SPDO is optimal up to constant factors.

\newpage

\section{Technical Lemmas}
\subsection{Basic Lemmas}
\begin{lemma}
For any $\va, \vb \in \mathbb{R}^d$, it holds that for any $\alpha>0$
\begin{align}
\label{eq:inner_product}
    | \langle \va, \vb \rangle | \leq \frac{1}{2 \alpha} \| \va \|^2 + \frac{\alpha}{2} \| \vb \|^2.
\end{align}
\end{lemma}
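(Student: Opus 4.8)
The final statement to prove is the basic inequality

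\[
| \langle \va, \vb \rangle | \leq \frac{1}{2 \alpha} \| \va \|^2 + \frac{\alpha}{2} \| \vb \|^2
\]

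for any $\va, \vb \in \mathbb{R}^d$ and any $\alpha > 0$. This is the standard Young / weighted AM-GM inequality for inner products.

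Let me think about the cleanest proof.The plan is to prove the weighted Young's inequality for inner products by completing the square, which is the most elementary and self-contained route.

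\textbf{Approach.} First I would reduce the claim to a statement about nonnegativity of a square. For any $\alpha > 0$, consider the vector $\tfrac{1}{\sqrt{\alpha}} \va - \sqrt{\alpha}\, \vb$ (or its sign-adjusted variant). Since the squared Euclidean norm of any real vector is nonnegative, expanding $\bigl\| \tfrac{1}{\sqrt{\alpha}} \va - \sqrt{\alpha}\, \vb \bigr\|^2 \geq 0$ gives
\begin{align*}
    0 \leq \frac{1}{\alpha} \| \va \|^2 - 2 \langle \va, \vb \rangle + \alpha \| \vb \|^2,
\end{align*}
which rearranges to $\langle \va, \vb \rangle \leq \tfrac{1}{2\alpha} \| \va \|^2 + \tfrac{\alpha}{2} \| \vb \|^2$. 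This handles the upper bound on $\langle \va, \vb \rangle$ itself.

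\textbf{Handling the absolute value.} To obtain the bound on $| \langle \va, \vb \rangle |$ rather than just $\langle \va, \vb \rangle$, I would repeat the argument with the vector $\tfrac{1}{\sqrt{\alpha}} \va + \sqrt{\alpha}\, \vb$, whose nonnegative squared norm yields $-\langle \va, \vb \rangle \leq \tfrac{1}{2\alpha} \| \va \|^2 + \tfrac{\alpha}{2} \| \vb \|^2$. Combining the two one-sided bounds gives the desired inequality for the absolute value. Equivalently, one may simply apply the one-sided inequality to the pair $(\va, \operatorname{sign}(\langle \va, \vb\rangle)\,\vb)$, noting that replacing $\vb$ by $-\vb$ leaves $\| \vb \|^2$ unchanged while flipping the sign of the inner product.

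\textbf{Expected obstacle.} There is essentially no analytical obstacle here; the only thing to be careful about is the bookkeeping of the absolute value, i.e., making sure both sign cases are covered so the final bound is stated for $|\langle \va, \vb\rangle|$ and not merely for $\langle \va, \vb\rangle$. The use of $\alpha > 0$ is essential to keep $\sqrt{\alpha}$ and $1/\sqrt{\alpha}$ well-defined and to preserve the direction of the inequalities, so I would flag that hypothesis explicitly at the start.
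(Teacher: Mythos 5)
Your proof is correct and complete: expanding $\bigl\| \tfrac{1}{\sqrt{\alpha}} \va \mp \sqrt{\alpha}\, \vb \bigr\|^2 \geq 0$ gives both one-sided bounds, and combining them yields the stated inequality for $|\langle \va, \vb \rangle|$, with the hypothesis $\alpha > 0$ used exactly where needed. Note that the paper itself states this lemma without proof, treating it as the standard Young (weighted AM--GM) inequality, so there is no paper argument to compare against; your completing-the-square derivation is the canonical proof, and your care in covering both sign cases for the absolute value is the one detail that is often glossed over.
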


\begin{lemma}
For any $\va, \vb \in \mathbb{R}^d$ and $\gamma > 0$, it holds that
\begin{align}
\label{eq:norm}
    \| \va + \vb \|^2 \leq \left( 1 + \gamma \right)  \| \va \|^2 + \left( 1 + \gamma^{-1} \right) \| \vb \|^2.
\end{align}
\end{lemma}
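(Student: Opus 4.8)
The plan is to prove this elementary inequality by expanding the squared norm and controlling the single cross term that appears. Writing $\| \va + \vb \|^2 = \| \va \|^2 + 2 \langle \va, \vb \rangle + \| \vb \|^2$, the only quantity that is not already in the desired form is the inner product $2 \langle \va, \vb \rangle$, so the entire task reduces to bounding this term by $\gamma \| \va \|^2 + \gamma^{-1} \| \vb \|^2$. Once that bound is in hand, collecting the $\| \va \|^2$ and $\| \vb \|^2$ terms immediately yields $(1 + \gamma) \| \va \|^2 + (1 + \gamma^{-1}) \| \vb \|^2$.

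For the cross term I would invoke the preceding lemma, Eq.~\eqref{eq:inner_product}, which states $| \langle \va, \vb \rangle | \leq \tfrac{1}{2 \alpha} \| \va \|^2 + \tfrac{\alpha}{2} \| \vb \|^2$ for any $\alpha > 0$. Applying it with the specific choice $\alpha = \gamma^{-1}$ gives $2 \langle \va, \vb \rangle \leq 2 | \langle \va, \vb \rangle | \leq \gamma \| \va \|^2 + \gamma^{-1} \| \vb \|^2$, which is exactly what is needed. Equivalently, this cross-term estimate follows directly from expanding the nonnegative quantity $\| \sqrt{\gamma}\, \va - \gamma^{-1/2} \vb \|^2 \geq 0$, so the lemma could also be proved self-contained without reference to Eq.~\eqref{eq:inner_product}.

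I do not anticipate any genuine obstacle here: the statement is a standard weighted Young-type inequality and the argument is a one-line expansion followed by a single application of Eq.~\eqref{eq:inner_product}. The only point requiring the slightest care is the bookkeeping in matching the free parameter, namely setting $\alpha = \gamma^{-1}$ so that the coefficients $(1 + \gamma)$ and $(1 + \gamma^{-1})$ land on the correct factors; getting this pairing right is what ensures the asymmetric form of the claimed bound.
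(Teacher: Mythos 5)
Your proof is correct. The paper states this as a basic lemma without providing any proof, so there is nothing to compare against; your argument — expanding $\| \va + \vb \|^2$ and bounding the cross term $2 \langle \va, \vb \rangle$ via Eq.~\eqref{eq:inner_product} with $\alpha = \gamma^{-1}$ (or, self-contained, via $\| \sqrt{\gamma}\, \va - \gamma^{-1/2} \vb \|^2 \geq 0$) — is exactly the standard derivation, and the parameter matching yielding the coefficients $(1+\gamma)$ and $(1+\gamma^{-1})$ is handled correctly.
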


\begin{lemma}
For any $\va_1, \va_2, \ldots, \va_n \in \mathbb{R}^d$, it holds that
\begin{align}
\label{eq:sum}
    \| \bar{\va} \|^2 \leq \frac{1}{n} \sum_{i=1}^n \| \va_i \|^2,
\end{align}
where $\bar{\va} \coloneqq \tfrac{1}{n} \sum_{i=1}^n \va_i$.
\end{lemma}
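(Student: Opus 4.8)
The plan is to recognize this as the standard convexity (Jensen) inequality for the squared Euclidean norm, which admits a short direct proof via the bias--variance decomposition. The key observation is that the gap between the two sides equals exactly the average squared deviation of the $\va_i$ from their mean $\bar{\va}$, which is manifestly nonnegative.

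Concretely, I would first expand the average squared deviation by distributing the square, using $\| \va_i - \bar{\va} \|^2 = \| \va_i \|^2 - 2 \langle \va_i, \bar{\va} \rangle + \| \bar{\va} \|^2$. Summing over $i$ and dividing by $n$, the cross term collapses because of the defining property of the mean, $\tfrac{1}{n} \sum_{i=1}^n \va_i = \bar{\va}$, which gives $\tfrac{1}{n} \sum_{i=1}^n \langle \va_i, \bar{\va} \rangle = \langle \bar{\va}, \bar{\va} \rangle = \| \bar{\va} \|^2$. This yields the identity
\[
    \frac{1}{n} \sum_{i=1}^n \| \va_i - \bar{\va} \|^2 = \frac{1}{n} \sum_{i=1}^n \| \va_i \|^2 - \| \bar{\va} \|^2.
\]
Since the left-hand side is an average of squared norms and is therefore nonnegative, rearranging immediately produces Eq.~\eqref{eq:sum}.

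As an alternative route that avoids reintroducing $\bar{\va}$ into a cross term, I could instead apply the Cauchy--Schwarz inequality to the decomposition $\sum_{i=1}^n \va_i = \sum_{i=1}^n 1 \cdot \va_i$, obtaining $\bigl\| \sum_{i=1}^n \va_i \bigr\|^2 \leq n \sum_{i=1}^n \| \va_i \|^2$, and then divide both sides by $n^2$ after substituting $\bar{\va} = \tfrac{1}{n} \sum_{i=1}^n \va_i$. Either derivation is entirely elementary, since the statement is nothing more than the convexity of $\| \cdot \|^2$; there is no genuine obstacle to overcome. The only point meriting a moment of care is the bookkeeping that makes the cross term vanish, which relies precisely on $\bar{\va}$ being the arithmetic mean of the $\va_i$.
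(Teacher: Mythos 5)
Your proposal is correct. The paper states this lemma without an explicit proof (it is listed among the basic facts), and your first argument is the standard one: the bias--variance identity you derive is exactly the paper's own Eq.~(\ref{eq:average_shift}) specialized to $\vb = \mathbf{0}$, so nonnegativity of the variance term gives Eq.~(\ref{eq:sum}) immediately. Your alternative Cauchy--Schwarz route is equally valid; there is no gap in either derivation.
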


\begin{lemma}
For any $\va_1, \ldots, \va_n \in \mathbb{R}^2$ and $\vb \in \mathbb{R}$, it holds that
\begin{align}
\label{eq:average_shift}
    \frac{1}{n} \sum_{i = 1}^n \| \va_i - \vb \|^2 = \| \bar{\va} - \vb \|^2 + \frac{1}{n} \sum_{i=1}^n \| \va_i - \bar{\va}\|^2.
\end{align}
\end{lemma}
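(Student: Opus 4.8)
The plan is to prove this elementary identity by the standard bias--variance (Pythagorean) decomposition: insert the mean $\bar{\va}$ into each term, expand the resulting square, and exploit the fact that the deviations $\va_i - \bar{\va}$ sum to zero so that the cross term disappears.

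First I would write, for each fixed $i$, the splitting $\va_i - \vb = (\va_i - \bar{\va}) + (\bar{\va} - \vb)$ and expand the square via the polarization identity $\| \vu + \vw \|^2 = \| \vu \|^2 + 2 \langle \vu, \vw \rangle + \| \vw \|^2$, which gives
\[
    \| \va_i - \vb \|^2 = \| \va_i - \bar{\va} \|^2 + 2 \langle \va_i - \bar{\va}, \bar{\va} - \vb \rangle + \| \bar{\va} - \vb \|^2 .
\]
Next I would average this over $i = 1, \ldots, n$. The first summand directly produces $\frac{1}{n} \sum_{i=1}^n \| \va_i - \bar{\va} \|^2$, and the last summand, being independent of $i$, contributes exactly $\| \bar{\va} - \vb \|^2$. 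For the middle term, linearity of the inner product yields $\frac{2}{n} \sum_{i=1}^n \langle \va_i - \bar{\va}, \bar{\va} - \vb \rangle = 2 \langle \frac{1}{n} \sum_{i=1}^n (\va_i - \bar{\va}), \bar{\va} - \vb \rangle$; since $\frac{1}{n} \sum_{i=1}^n \va_i = \bar{\va}$ by definition, the averaged deviation $\frac{1}{n} \sum_{i=1}^n (\va_i - \bar{\va})$ equals $\vzero$, so the cross term vanishes. Collecting the three contributions gives precisely the claimed identity.

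There is no substantive obstacle here: this is the elementary Pythagorean decomposition of the average squared distance into the squared distance of the mean plus the variance, and the only point worth stating explicitly is that the cross term vanishes exactly because $\bar{\va}$ is the arithmetic mean of the $\va_i$. I would also remark that the ambient spaces in the statement contain a typographical slip --- all $\va_i$ and $\vb$ should be taken in $\mathbb{R}^d$ rather than in $\mathbb{R}^2$ and $\mathbb{R}$ --- but this does not affect the argument, which is valid in any inner product space.
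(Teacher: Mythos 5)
Your proof is correct. The paper does not actually prove this lemma itself --- its ``proof'' is a one-line citation to Lemma~14 in \citet{jian2024federated} --- so your self-contained derivation is, if anything, more informative than what the paper provides. The argument you give (insert the mean, expand $\| (\va_i - \bar{\va}) + (\bar{\va} - \vb) \|^2$, and observe that the cross term vanishes since $\frac{1}{n}\sum_{i=1}^n (\va_i - \bar{\va}) = \vzero$) is the standard bias--variance decomposition that the cited lemma encapsulates, and it is valid in any inner product space. You are also right that the statement's ambient spaces $\mathbb{R}^2$ and $\mathbb{R}$ are typographical slips for $\mathbb{R}^d$; this is how the lemma is actually used throughout the paper (e.g.\ in the proofs of Theorems~\ref{theorem:sonata} and \ref{theorem:stabilized_sonata}, where it is applied to $d$-dimensional iterates).
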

\begin{proof}
See Lemma 14 in \citet{jian2024federated}.
\end{proof}

\subsection{Useful Lemmas}

\begin{lemma}
\label{lemma:average_conservation}
Let $\vx_1, \vx_2, \ldots, \vx_n$ be vectors in $\mathbb{R}^d$ and $\mW \in \mathbb{R}^{n \times n}$ be a matrix that satisfies $\sum_{i=1}^n W_{ij} = 1$.
Let $\tilde{\vx}_i \coloneqq \sum_{j=1}^n W_{ij} \vx_j$. Then, it holds that
\begin{align*}
    \sum_{i=1}^n \vx_i = \sum_{i=1}^n \tilde{\vx}_i.
\end{align*}
\end{lemma}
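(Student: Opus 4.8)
The plan is to compute $\sum_{i=1}^n \tilde{\vx}_i$ directly by substituting the definition $\tilde{\vx}_i \coloneqq \sum_{j=1}^n W_{ij} \vx_j$ and then interchanging the order of the two finite summations. First I would write $\sum_{i=1}^n \tilde{\vx}_i = \sum_{i=1}^n \sum_{j=1}^n W_{ij} \vx_j$. Since these are finite sums of vectors in $\mathbb{R}^d$, reordering the summation is unconditionally valid, so I would swap the order to obtain $\sum_{j=1}^n \bigl( \sum_{i=1}^n W_{ij} \bigr) \vx_j$.

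Next I would invoke the sole hypothesis $\sum_{i=1}^n W_{ij} = 1$, which states precisely that each column of $\mW$ sums to one. Substituting this collapses the inner coefficient to $1$, giving $\sum_{j=1}^n \vx_j$, which is exactly the claimed identity after renaming the dummy index $j$ back to $i$. This completes the argument; it is a two-line computation and does not rely on symmetry of $\mW$, on Assumption~\ref{assumption:spectral_gap}, or on any of the earlier structural results.

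The only point requiring care---and the closest thing to an obstacle here---is matching the index over which the stochasticity condition is stated against the index that the reordering actually frees up. The hypothesis is a sum over the first (row) index $i$ for each fixed second (column) index $j$, i.e.\ column-stochasticity; after interchanging the sums it is exactly this inner row-sum that appears as the scalar multiplying $\vx_j$, so the hypothesis applies verbatim with no reindexing tricks. Had the convention instead been row-stochasticity ($\sum_j W_{ij} = 1$), the same computation would give $\sum_i \tilde{\vx}_i = \sum_j \bigl(\sum_i W_{ij}\bigr)\vx_j$ and the coefficients would not collapse, so verifying that the stated condition is the correct one is the single substantive check.
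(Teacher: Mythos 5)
Your proof is correct and is essentially identical to the paper's own argument: substitute the definition, swap the two finite sums, and apply the column-sum condition $\sum_{i=1}^n W_{ij} = 1$. Your additional remark about carefully matching the stochasticity index is a fair observation but does not change the fact that the route is the same.
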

\begin{proof}
We have
\begin{align*}
    \sum_{i=1}^n \tilde{\vx}_i 
    = \sum_{i=1}^n \sum_{j=1}^n W_{ij} \vx_j
    = \sum_{j=1}^n \left( \sum_{i=1}^n W_{ij} \right) \vx_j
    = \sum_{j=1}^n \vx_j.
\end{align*}
This concludes the statement.
\end{proof}

\begin{lemma}
For any matrix $\mX \in \mathbb{R}^{n \times d}$ and any matrix $\mW \in \mathbb{R}^{n \times n}$ that satisfies Assumption \ref{assumption:spectral_gap}, it holds that
\begin{align}
\label{eq:gershgorin}
    \left\| ( \mW - \mI ) \mX \right\|_F \leq 2 \left\| \mX \right\|_F.     
\end{align}
\end{lemma}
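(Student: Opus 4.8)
Since this is a purely algebraic statement about the gossip matrix, the plan is to bound the spectral norm of $\mW - \mI$ and then pass to the Frobenius norm via the submultiplicative inequality $\|(\mW-\mI)\mX\|_F \leq \|\mW - \mI\|_2 \|\mX\|_F$, valid for any matrices. This reduces the whole task to showing $\|\mW - \mI\|_2 \leq 2$.

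The route matching the label \eqref{eq:gershgorin} is Gershgorin's circle theorem. Assumption~\ref{assumption:spectral_gap} gives $\mW = \mW^\top$ and $\sum_{i} W_{ij} = 1$, and the problem setup gives $W_{ij} \geq 0$; by symmetry the row sums also equal $1$. Applying Gershgorin to the symmetric matrix $\mW - \mI$, the $i$-th disc is centered at $W_{ii} - 1$ with radius $\sum_{j \neq i} |W_{ij}| = \sum_{j \neq i} W_{ij} = 1 - W_{ii}$, where the first equality is exactly where nonnegativity of the weights is used. Hence every eigenvalue lies in $[2 W_{ii} - 2,\, 0] \subseteq [-2, 0]$, so $|\lambda| \leq 2$ for each eigenvalue $\lambda$. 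Because $\mW - \mI$ is symmetric, $\|\mW - \mI\|_2$ equals the largest eigenvalue in absolute value, which is at most $2$; combining with the submultiplicative bound yields the claim.

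Alternatively, I would use the stated spectral-gap inequality directly, which even gives the sharper constant $1 + \rho$. Writing $\mJ \coloneqq \tfrac{1}{n} \vone \vone^\top$ for the averaging projector, the identity $\mW \vone = \vone$ implies $\mW \mJ = \mJ = \mJ \mW$, so the rows of $\mW \mX$ and $\mX$ share the same average $\bar{\vx}$, and condition \eqref{eq:spectral_gap} is precisely $\|(\mW - \mJ)\mX\|_F \leq \rho \|(\mI - \mJ)\mX\|_F$. Decomposing $\mW - \mI = (\mW - \mJ) - (\mI - \mJ)$ and applying the triangle inequality,
\begin{align*}
    \|(\mW - \mI)\mX\|_F \leq \|(\mW - \mJ)\mX\|_F + \|(\mI - \mJ)\mX\|_F \leq (1 + \rho) \|(\mI - \mJ)\mX\|_F,
\end{align*}
and since $\mI - \mJ$ is an orthogonal projection we have $\|(\mI - \mJ)\mX\|_F \leq \|\mX\|_F$. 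As $\rho < 1$, the factor $1 + \rho$ is at most $2$, giving the result.

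There is no genuine obstacle here: the statement is a technical lemma and either route is short. The only points requiring care are invoking nonnegativity of the weights for the Gershgorin radius computation (so that $\sum_{j \neq i} |W_{ij}| = 1 - W_{ii}$ rather than a cruder bound), and, in the alternative route, correctly recognizing \eqref{eq:spectral_gap} as the matrix statement $\|(\mW - \mJ)\mX\|_F \leq \rho \|(\mI - \mJ)\mX\|_F$ and recalling that $\mI - \mJ$ has operator norm one.
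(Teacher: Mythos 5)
Your ``alternative'' route is precisely the paper's proof: the paper decomposes $\mW - \mI = \bigl(\mW - \tfrac{1}{n}\vone\vone^\top\bigr) + \bigl(\tfrac{1}{n}\vone\vone^\top - \mI\bigr)$, applies the triangle inequality, invokes Eq.~\eqref{eq:spectral_gap} to bound the first term by $\rho \left\| \bigl(\mI - \tfrac{1}{n}\vone\vone^\top\bigr)\mX \right\|_F$, and finishes with $\rho \le 1$ and $\left\| \mI - \tfrac{1}{n}\vone\vone^\top \right\|_{\mathrm{op}} \le 1$, exactly as you do (the paper simply does not keep the sharper constant $1+\rho$). Your primary Gershgorin route is a genuinely different argument and is algebraically sound, but note that it uses $W_{ij} \ge 0$, which is \emph{not} part of Assumption~\ref{assumption:spectral_gap}: nonnegativity appears only in the problem setup's description of edge weights, while the lemma is stated for ``any matrix $\mW$'' satisfying that assumption. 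There exist symmetric matrices with unit row and column sums and spectral gap $\rho < 1$ that have small negative off-diagonal entries; for such $\mW$ your radius computation $\sum_{j \ne i} |W_{ij}| = 1 - W_{ii}$ fails, even though the conclusion itself still holds via the projector argument. So the Gershgorin route proves the lemma only under an extra hypothesis, whereas the spectral-gap decomposition (yours and the paper's) proves it at the stated level of generality; what Gershgorin buys in exchange, when nonnegativity does hold, is that it needs no information about $\rho$ at all.
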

\begin{proof}
We have
\begin{align*}
    \left\| ( \mW - \mI ) \mX \right\|_F
    &\leq \left\| \left( \mW - \frac{1}{n} \mathbf{1}_n \mathbf{1}_n^\top \right) \mX \right\|_F
    + \left\| \left( \frac{1}{n} \mathbf{1}_n \mathbf{1}_n^\top - \mI \right) \mX \right\|_F \\
    &\stackrel{(\ref{eq:spectral_gap})}{\leq} ( \rho + 1) \left\| \left( \frac{1}{n} \mathbf{1}_n \mathbf{1}_n^\top - \mI \right) \mX \right\|_F \\
    &\leq 2 \left\| \mI - \frac{1}{n} \mathbf{1}_n \mathbf{1}_n^\top \right\|_\text{op} \left\| \mX \right\|_F,
\end{align*}
where $\mathbf{1}_n$ is $n$-dementional vector with all ones and we use $\rho \leq 1$ in the last inequality.
Using $\| \mI - \tfrac{1}{n} \mathbf{1}_n \mathbf{1}_n^\top \|_\text{op} \leq 1$, we obtain the desired result.
\end{proof}

\begin{lemma}
\label{lemma:general_lemma_for_e}
Let $f_i : \mathbb{R}^d \rightarrow \mathbb{R}$ be a function that satisfies Assumptions \ref{assumption:smooth} and $\mW \in \mathbb{R}^{n \times n}$ be a matrix that satisfies Assumption \ref{assumption:spectral_gap}.
Let $\{ \{ \vx_i^{(r)} \}_{i=}^n \}_{r=0}^R$ be a set of vectors in $\mathbb{R}^d$ and define the update rule of $\{ \vh_i^{(r)} \}_{i=1}^n$ as follows:
\begin{align*}
    \vh_i^{(r+1)} = \sum_{j=1}^n W_{ij} \left( \vh_j^{(r)} + \nabla f_j (\vx_j^{(r)}) \right) - \nabla f_i (\vx_i^{(r)}).
\end{align*}
Then, when $\sum_{i=1}^n \vh_i^{(0)} = \mathbf{0}$, it holds that for all $r$
\begin{align}
\label{eq:average_y}
    \sum_{i=1}^n \vh_i^{(r)} = \mathbf{0},
\end{align}
and
\begin{align}
    \mathcal{E}^{(r+1)} 
    \leq 4 \rho^2 \mathcal{E}^{(r)}
    + 4 \rho^2 \delta^2 \left\| \bar{\vx}^{(r+1)} - \bar{\vx}^{(r)} \right\|^2
    + 32 L^2 \Xi_{\vx}^{(r+1)},
\end{align}
where $\mathcal{E}^{(r)} \coloneqq \tfrac{1}{n} \sum_{i=1}^n \| \vh_i^{(r)} - \nabla h_i (\bar{\vx}^{(r)}) \|^2$ and $\Xi_\vx^{(r+1)} \coloneqq \tfrac{1}{n} \sum_{i=1}^n \| \vx_i^{(r+1)} - \bar{\vx}^{(r+1)} \|^2$.
\end{lemma}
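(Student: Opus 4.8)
The plan is to establish the two claims in order, because the error recursion in the second part uses the zero-average property \eqref{eq:average_y} proved in the first.

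\textbf{Preservation of the zero average.} I would argue by induction on $r$. Summing the update of $\vh_i^{(r+1)}$ over $i$ and using $\sum_{i=1}^n W_{ij}=1$, the gossiped term obeys $\sum_i \sum_j W_{ij}\bigl(\vh_j^{(r)}+\nabla f_j(\vx_j^{(r+1)})\bigr)=\sum_j\bigl(\vh_j^{(r)}+\nabla f_j(\vx_j^{(r+1)})\bigr)$ by Lemma \ref{lemma:average_conservation}; the explicit correction $-\sum_i\nabla f_i(\vx_i^{(r+1)})$ then cancels the gradient part exactly, leaving $\sum_i\vh_i^{(r+1)}=\sum_i\vh_i^{(r)}$. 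With the base case $\sum_i\vh_i^{(0)}=\mathbf{0}$ this gives \eqref{eq:average_y}.

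\textbf{The error recursion.} I would write $\vt_i^{(r)}\coloneqq\nabla h_i(\bar{\vx}^{(r)})$ for the tracking target and note that $\sum_i\vt_i^{(r)}=n\nabla f(\bar{\vx}^{(r)})-\sum_i\nabla f_i(\bar{\vx}^{(r)})=\mathbf{0}$ for every $r$; combined with \eqref{eq:average_y}, this makes the error sequence $\{\vh_i^{(r)}-\vt_i^{(r)}\}_i$ mean-zero, which is exactly what permits invoking the spectral gap \eqref{eq:spectral_gap}. Substituting $\vh_j^{(r)}=(\vh_j^{(r)}-\vt_j^{(r)})+\vt_j^{(r)}$ into the update and repeatedly using $\sum_j W_{ij}=1$, I would obtain the decomposition
\begin{align*}
\vh_i^{(r+1)} - \vt_i^{(r+1)}
&= \underbrace{\sum_j W_{ij}\bigl(\vh_j^{(r)} - \vt_j^{(r)}\bigr)}_{A_i}
+ \underbrace{\sum_j W_{ij}\bigl(\nabla f_j(\vx_j^{(r+1)}) - \nabla f_j(\bar{\vx}^{(r+1)})\bigr)}_{B_i} \\
&\quad + \underbrace{\bigl(\nabla f_i(\bar{\vx}^{(r+1)}) - \nabla f_i(\vx_i^{(r+1)})\bigr)}_{C_i}
\;-\; \sum_j W_{ij}\bigl(\vt_j^{(r+1)} - \vt_j^{(r)}\bigr),
\end{align*}
where the last, target-drift term arises from collecting the residual $\bar{\vx}$-gradient contributions $\sum_j W_{ij}\vt_j^{(r)}$ together with the common shift $\nabla f(\bar{\vx}^{(r)})-\nabla f(\bar{\vx}^{(r+1)})$.

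Each piece is then controlled by a single assumption. The term $A_i$ and the target-drift term are gossip averages of mean-zero vectors, so \eqref{eq:spectral_gap} contracts them by $\rho^2$; for the drift, the similarity bound \eqref{eq:similarity} gives $\tfrac1n\sum_j\|\vt_j^{(r+1)}-\vt_j^{(r)}\|^2=\tfrac1n\sum_j\|\nabla h_j(\bar{\vx}^{(r+1)})-\nabla h_j(\bar{\vx}^{(r)})\|^2\le\delta^2\|\bar{\vx}^{(r+1)}-\bar{\vx}^{(r)}\|^2$, yielding the $\rho^2\delta^2$ coefficient. The terms $B_i$ and $C_i$ are bounded by smoothness \eqref{eq:smooth}: directly $\|C_i\|^2\le L^2\|\vx_i^{(r+1)}-\bar{\vx}^{(r+1)}\|^2$, and after Jensen (or $\|\mW\|_{\mathrm{op}}\le 1$) $\tfrac1n\sum_i\|B_i\|^2\le\tfrac1n\sum_j L^2\|\vx_j^{(r+1)}-\bar{\vx}^{(r+1)}\|^2$, both equal to $L^2\Xi_{\vx}^{(r+1)}$. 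Combining the groups through Young's inequality \eqref{eq:norm} and collecting the (non-optimized) numerical constants delivers the claimed recursion. The main obstacle is precisely the regrouping that produces $-\sum_j W_{ij}(\vt_j^{(r+1)}-\vt_j^{(r)})$: one must recognize that the leftovers after extracting $A_i,B_i,C_i$ reassemble into a single gossip average of the \emph{mean-zero} target increments, which is what simultaneously unlocks the $\rho^2$ contraction (spectral gap) and the $\delta^2$ factor (average similarity); treating those leftovers separately would forfeit the $\rho^2$ gain on the drift and would only expose $L$ rather than $\delta$.
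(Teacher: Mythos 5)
Your proposal is correct and follows essentially the same route as the paper: your $A_i$ plus the target-drift term is exactly the paper's gossip term $\sum_j W_{ij}\bigl(\vh_j^{(r)}+\nabla f_j(\bar{\vx}^{(r+1)})\bigr)-\nabla f(\bar{\vx}^{(r+1)})$ (contracted via the mean-zero property and the spectral gap, then split into $\mathcal{E}^{(r)}$ and the $\delta^2$-drift), while $B_i+C_i$ is the paper's $\sum_j (W_{ij}-\mathbbm{1}_{i=j})\bigl(\nabla f_j(\vx_j^{(r+1)})-\nabla f_j(\bar{\vx}^{(r+1)})\bigr)$ term handled by smoothness. The only cosmetic differences are the order in which Young's inequality and the spectral gap are applied, and your use of Jensen's inequality in place of the paper's Frobenius-norm bound \eqref{eq:gershgorin}, which if anything yields a slightly better constant than the stated $32L^2$.
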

\begin{proof}
We have
\begin{align*}
    \sum_{i=1}^n \vh_i^{(r+1)} 
    &= \sum_{i=1}^n \left( \sum_{j=1}^n W_{ij} \left( \vh_j^{(r)} + \nabla f_j (\vx_j^{(r+1)}) \right) - \nabla f_i (\vx_i^{(r+1)}) \right) \\
    &= \sum_{j=1}^n \left( \sum_{i=1}^n W_{ij} \right) \left( \vh_j^{(r)} + \nabla f_j (\vx_j^{(r+1)}) \right) - \sum_{i=1}^n \nabla f_i (\vx_i^{(r+1)}) \\
    &= \sum_{i=1}^n \vh_i^{(r)}.
\end{align*}
Since $\sum_{i=1}^n \vh_i^{(0)} = \mathbf{0}$, it holds that $\sum_{i=1}^n \vh_i^{(r)} = \mathbf{0}$ for any $r$.
For the second statement, we have
\begin{align*}
    &\frac{1}{n} \sum_{i=1}^n \left\| \vh_i^{(r+1)} - \nabla h_i (\bar{\vx}^{(r+1)}) \right\|^2 \\
    &\stackrel{(\ref{eq:inner_product})}{\leq} \frac{2}{n} \sum_{i=1}^n \left\| \sum_{j=1}^n W_{ij} \left( \vh_j^{(r)} + \nabla f_j (\bar{\vx}^{(r+1)}) \right) - \nabla f (\bar{\vx}^{(r+1)}) \right\|^2 \\
    &\quad + \frac{2}{n} \sum_{i=1}^n \left\| \sum_{j=1}^n \left( W_{ij} - \mathbbm{1}_{i=j} \right) \left( \nabla f_j (\vx_j^{(r+1)}) - \nabla f_j (\bar{\vx}^{(r+1)}) \right) \right\|^2 \\
    &\leq \frac{2}{n} \sum_{i=1}^n \left\| \sum_{j=1}^n W_{ij} \left( \vh_j^{(r)} + \nabla f_j (\bar{\vx}^{(r+1)}) \right) - \nabla f (\bar{\vx}^{(r+1)}) \right\|^2
    + \frac{32}{n} \sum_{i=1}^n \left\| \nabla f_i (\vx_i^{(r+1)}) - \nabla f_i (\bar{\vx}^{(r+1)}) \right\|^2 \\
    &\leq \frac{2}{n} \sum_{i=1}^n \left\| \sum_{j=1}^n W_{ij} \left( \vh_j^{(r)} + \nabla f_j (\bar{\vx}^{(r+1)}) \right) - \nabla f (\bar{\vx}^{(r+1)}) \right\|^2
    + \frac{32 L^2}{n} \sum_{i=1}^n \left\| \vx_i^{(r+1)} - \bar{\vx}^{(r+1)} \right\|^2,
\end{align*}
where $\mathbbm{1}_{i=j}$ is an indicator function. Using Eq.~(\ref{eq:average_y}), we obtain
\begin{align*}
    &\frac{1}{n} \sum_{i=1}^n \left\| \vh_i^{(r+1)} - \nabla h_i (\bar{\vx}^{(r+1)}) \right\|^2 \\
    &\stackrel{(\ref{eq:spectral_gap})}{\leq} \frac{2 \rho^2}{n} \sum_{i=1}^n \left\| \vh_i^{(r)} - \nabla h_i (\bar{\vx}^{(r+1)}) \right\|^2
    + \frac{32 L^2}{n} \sum_{i=1}^n \left\| \vx_i^{(r+1)} - \bar{\vx}^{(r+1)} \right\|^2 \\
    &\stackrel{(\ref{eq:norm})}{\leq} \frac{4 \rho^2}{n} \sum_{i=1}^n \left\| \vh_i^{(r)} - \nabla h_i (\bar{\vx}^{(r)}) \right\|^2
    + \frac{4 \rho^2}{n} \sum_{i=1}^n \left\| \nabla h_i (\bar{\vx}^{(r+1)}) - \nabla h_i (\bar{\vx}^{(r)}) \right\|^2
    + \frac{32 L^2}{n} \sum_{i=1}^n \left\| \vx_i^{(r+1)} - \bar{\vx}^{(r+1)} \right\|^2 \\
    &\stackrel{(\ref{eq:similarity})}{\leq} \frac{4 \rho^2}{n} \sum_{i=1}^n \left\| \vh_i^{(r)} - \nabla h_i (\bar{\vx}^{(r)}) \right\|^2
    + 4 \delta^2 \rho^2 \left\| \bar{\vx}^{(r+1)} - \bar{\vx}^{(r)} \right\|^2
    + \frac{32 L^2}{n} \sum_{i=1}^n \left\| \vx_i^{(r+1)} - \bar{\vx}^{(r+1)} \right\|^2.
\end{align*}
This concludes the statement.
\end{proof}

\RemarkRelationshipBetweenDeltaAndL*
\begin{proof}
We have
\begin{align*}
    &\frac{1}{n} \sum_{i=1}^n \left\| \nabla f_i (\vx) - \nabla f (\vx) - \nabla f_i (\vy) + \nabla f (\vy) \right\|^2 \\
    &= \frac{1}{n} \sum_{i=1}^n \left\| \nabla f_i (\vx) - \nabla f_i (\vy) \right\|^2
    - \frac{2}{n} \sum_{i=1}^n \left\langle \nabla f_i (\vx) - \nabla f_i (\vy),  \nabla f (\vx) - \nabla f (\vy)  \right\rangle 
    + \left\| \nabla f (\vx) - \nabla f (\vy) \right\|^2 \\
    &\leq \frac{1}{n} \sum_{i=1}^n \left\| \nabla f_i (\vx) - \nabla f_i (\vy) \right\|^2 \\
    &\leq L \left\| \vx - \vy \right\|^2,   
\end{align*}
where we use Assumption~\ref{assumption:smooth} in the last inequality.
\end{proof}

\newpage

\section{Proof of Theorem \ref{theorem:sonata}}
\label{sec:proof_of_communication_complexity_of_sonata}

\begin{lemma}
Suppose that Assumptions \ref{assumption:strongly_convex} and \ref{assumption:spectral_gap} hold, and $\sum_{i=1}^n \vh_i^{(0)} = \mathbf{0}$ and $M=1$.
Then, it holds that
\begin{align}
\label{eq:recursion_of_loss_for_sonata}
    &f (\vx^\star) + \frac{\lambda}{2 n} \sum_{i=1}^n \left\| \vx_i^{(r)} - \vx^\star \right\|^2 + \frac{1}{2 \delta} \mathcal{E}^{(r)} \\
    &\geq f (\bar{\vx}^{(r + 1)}) \nonumber
        + \frac{\mu + \lambda}{2 n} \sum_{i=1}^n \left\| \vx_i^{(r+1)} - \vx^\star \right\|^2
        + \left[ \frac{1}{\rho^2} \left( \frac{\mu + \lambda}{2} - \delta \right) - \frac{\mu + \lambda}{2} \right] \Xi^{(r+1)} \\
        &\quad + \left( \frac{\lambda}{4} - \frac{\delta}{2} \right) \left\| \bar{\vx}^{(r+1)} - \bar{\vx}^{(r)} \right\|^2
        + \frac{\lambda}{4 n} \sum_{i=1}^n \left\| \vx_i^{(r+\frac{1}{2})} - \vx_i^{(r)} \right\|^2 \nonumber \\
        &\quad + \frac{1}{n} \sum_{i=1}^n \left\langle \nabla F_{i,r} (\vx_i^{(r+\frac{1}{2})}), \vx^\star - \vx_i^{(r+\frac{1}{2})} \right\rangle \nonumber,
\end{align}
where $\Xi^{(r)} \coloneqq \frac{1}{n} \sum_{i=1}^n \| \vx_i^{(r)} - \bar{\vx}^{(r)} \|^2$ and $\mathcal{E}^{(r)} \coloneqq \tfrac{1}{n} \sum_{i=1}^n  \| \vh_i^{(r)} - \nabla h_i (\bar{\vx}^{(r)}) \|^2$.
\end{lemma}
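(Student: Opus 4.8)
The plan is to prove this one‑step inequality by combining the $(\mu+\lambda)$‑strong convexity of the subproblem objective $F_{i,r}$ with the gradient‑tracking/consensus error recursions, carefully converting all quantities from the pre‑gossip iterate $\vx_i^{(r+\frac12)}$ to the post‑gossip iterate $\vx_i^{(r+1)}$. First I would invoke strong convexity of $F_{i,r}$ (which is $(\mu+\lambda)$‑strongly convex under Assumption~\ref{assumption:strongly_convex}) between $\vx_i^{(r+\frac12)}$ and $\vx^\star$:
\[
F_{i,r}(\vx^\star) \geq F_{i,r}(\vx_i^{(r+\frac12)}) + \langle \nabla F_{i,r}(\vx_i^{(r+\frac12)}), \vx^\star - \vx_i^{(r+\frac12)}\rangle + \tfrac{\mu+\lambda}{2}\|\vx^\star - \vx_i^{(r+\frac12)}\|^2.
\]
Averaging over $i$ and expanding $F_{i,r}(\vx^\star) = f_i(\vx^\star) + \langle\vh_i^{(r)},\vx^\star\rangle + \tfrac{\lambda}{2}\|\vx^\star - \vx_i^{(r)}\|^2$, the linear term vanishes because $\sum_i \vh_i^{(r)} = \mathbf{0}$ by \eqref{eq:average_y}, producing exactly the left‑hand side $f(\vx^\star) + \tfrac{\lambda}{2n}\sum_i\|\vx_i^{(r)}-\vx^\star\|^2$. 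The inner‑product term is carried along verbatim, matching the last line of the target.

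Next I would lower‑bound $\tfrac{1}{n}\sum_i F_{i,r}(\vx_i^{(r+\frac12)})$. The key structural fact, from Lemma~\ref{lemma:average_conservation} with $M=1$, is that gossip preserves the average, so $\bar{\vx}^{(r+1)} = \tfrac1n\sum_i \vx_i^{(r+\frac12)}$. Writing $F_{i,r}(\vx_i^{(r+\frac12)}) = f_i(\vx_i^{(r+\frac12)}) + \langle\vh_i^{(r)},\vx_i^{(r+\frac12)}\rangle + \tfrac{\lambda}{2}\|\vx_i^{(r+\frac12)}-\vx_i^{(r)}\|^2$, I apply convexity of $f_i$ about $\bar{\vx}^{(r+1)}$ and again use $\sum_i\vh_i^{(r)}=\mathbf{0}$ to recenter the linear part. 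Decomposing $\nabla f_i(\bar{\vx}^{(r+1)}) + \vh_i^{(r)} = \nabla f(\bar{\vx}^{(r+1)}) + (\vh_i^{(r)} - \nabla h_i(\bar{\vx}^{(r+1)}))$ and using $\tfrac1n\sum_i(\vx_i^{(r+\frac12)}-\bar{\vx}^{(r+1)}) = \mathbf{0}$ makes the $\nabla f(\bar{\vx}^{(r+1)})$ contribution cancel, leaving $f(\bar{\vx}^{(r+1)})$ plus the residual inner product $\tfrac1n\sum_i\langle\vh_i^{(r)}-\nabla h_i(\bar{\vx}^{(r+1)}),\vx_i^{(r+\frac12)}-\bar{\vx}^{(r+1)}\rangle$.

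I then bound this residual by Young's inequality \eqref{eq:inner_product} with parameter $c = 2\delta$, splitting $\vh_i^{(r)}-\nabla h_i(\bar{\vx}^{(r+1)})$ into $(\vh_i^{(r)}-\nabla h_i(\bar{\vx}^{(r)}))$ and $(\nabla h_i(\bar{\vx}^{(r)})-\nabla h_i(\bar{\vx}^{(r+1)}))$; the similarity bound \eqref{eq:similarity} turns the latter into $\delta^2\|\bar{\vx}^{(r+1)}-\bar{\vx}^{(r)}\|^2$, yielding the terms $+\tfrac{1}{2\delta}\mathcal{E}^{(r)}$ (which I move to the left side) and $-\tfrac{\delta}{2}\|\bar{\vx}^{(r+1)}-\bar{\vx}^{(r)}\|^2$, together with a consensus penalty $-\delta\,\tfrac1n\sum_i\|\vx_i^{(r+\frac12)}-\bar{\vx}^{(r+1)}\|^2$. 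The proximal term is split in half: one half is retained as $\tfrac{\lambda}{4n}\sum_i\|\vx_i^{(r+\frac12)}-\vx_i^{(r)}\|^2$, and Jensen's inequality \eqref{eq:sum} bounds the other half below by $\tfrac{\lambda}{4}\|\bar{\vx}^{(r+1)}-\bar{\vx}^{(r)}\|^2$, which combines with the Young penalty to give the coefficient $(\tfrac{\lambda}{4}-\tfrac{\delta}{2})$. Finally, the average‑shift identity \eqref{eq:average_shift} rewrites $\tfrac{\mu+\lambda}{2n}\sum_i\|\vx^\star-\vx_i^{(r+\frac12)}\|^2$ as $\tfrac{\mu+\lambda}{2n}\sum_i\|\vx_i^{(r+1)}-\vx^\star\|^2 - \tfrac{\mu+\lambda}{2}\Xi^{(r+1)}$ plus a pre‑gossip consensus term.

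The main obstacle is the final bookkeeping of the consensus terms in Step~(6): collecting the pre‑gossip quantities $\tfrac1n\sum_i\|\vx_i^{(r+\frac12)}-\bar{\vx}^{(r+1)}\|^2$ — namely $+\tfrac{\mu+\lambda}{2}$ from the average‑shift and $-\delta$ from Young (dropping the nonnegative $\tfrac{\mu}{2}$ convexity remainder) — so that the net coefficient $\tfrac{\mu+\lambda}{2}-\delta$ is nonnegative (which holds for $\lambda=4\delta$). Only then may I apply the spectral‑gap contraction \eqref{eq:spectral_gap}, in the form $\tfrac1n\sum_i\|\vx_i^{(r+\frac12)}-\bar{\vx}^{(r+1)}\|^2 \geq \tfrac{1}{\rho^2}\Xi^{(r+1)}$ (valid because $\bar{\vx}^{(r+1)}$ is the average of the pre‑gossip iterates), to produce the factor $\tfrac{1}{\rho^2}$; combining with the $-\tfrac{\mu+\lambda}{2}\Xi^{(r+1)}$ gives the stated coefficient $\tfrac{1}{\rho^2}(\tfrac{\mu+\lambda}{2}-\delta) - \tfrac{\mu+\lambda}{2}$. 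Getting the direction of the gossip contraction right (a \emph{lower} bound on the pre‑gossip consensus, applicable only because its coefficient is positive) is the delicate point that the whole argument hinges on.
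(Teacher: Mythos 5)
Your proposal is correct and follows essentially the same route as the paper's proof: strong convexity of the proximal subproblem objective, the mean-zero property \eqref{eq:average_y} of the tracking variables, Young's inequality with parameter $\delta$ combined with the similarity bound \eqref{eq:similarity}, the average-shift identity \eqref{eq:average_shift}, and the gossip contraction \eqref{eq:spectral_gap}; your opening step (strong convexity of $F_{i,r}$ itself) is algebraically identical to the paper's use of strong convexity of $f_i(\cdot)+\tfrac{\lambda}{2}\|\cdot-\vx_i^{(r)}\|^2$ followed by substitution of the identity for $\nabla F_{i,r}$. The one genuine difference is the final consensus bookkeeping, and there your version is the more careful one. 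The paper first converts the positive $\tfrac{\mu+\lambda}{2}$ pre-gossip consensus term into post-gossip form via \eqref{eq:spectral_gap}, and then \emph{separately} replaces the negative Young penalty $-\delta\,\tfrac{1}{n}\sum_i\|\vx_i^{(r+\frac12)}-\bar{\vx}^{(r+1)}\|^2$ by $-\tfrac{\delta}{\rho^2}\Xi^{(r+1)}$; that second replacement, taken in isolation, uses the contraction in the wrong direction, since for a negatively weighted term one would need $\sum_i\|\vx_i^{(r+\frac12)}-\bar{\vx}^{(r+1)}\|^2\le\tfrac{1}{\rho^2}\sum_i\|\vx_i^{(r+1)}-\bar{\vx}^{(r+1)}\|^2$, which is the reverse of \eqref{eq:spectral_gap}. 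Your order of operations --- collect the pre-gossip coefficients into $\tfrac{\mu+\lambda}{2}-\delta$, check nonnegativity, and only then apply the lower bound $\tfrac{1}{n}\sum_i\|\vx_i^{(r+\frac12)}-\bar{\vx}^{(r+1)}\|^2\ge\tfrac{1}{\rho^2}\Xi^{(r+1)}$ --- is exactly what makes this step airtight, and it yields the same final coefficient $\tfrac{1}{\rho^2}\bigl(\tfrac{\mu+\lambda}{2}-\delta\bigr)-\tfrac{\mu+\lambda}{2}$. The only caveat, which you correctly flag, is that this requires $\tfrac{\mu+\lambda}{2}\ge\delta$, a hypothesis not stated in the lemma; it holds in every application (the paper always takes $\lambda=4\delta$), but strictly speaking both your argument and the paper's establish the lemma only under this implicit restriction.
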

\begin{proof}
We have
\begin{align*}
    &f_i (\vx^\star) + \frac{\lambda}{2} \left\| \vx_i^{(r)} - \vx^\star \right\|^2 \\
    &\stackrel{(\ref{eq:strongly_convex})}{\geq} f_i (\vx_i^{(r + \frac{1}{2})}) 
        + \left\langle \nabla f_i (\vx_i^{(r + \frac{1}{2})}) + \lambda \left( \vx_i^{(r+\frac{1}{2})} - \vx_i^{(r)} \right), \vx^\star - \vx_i^{(r+\frac{1}{2})} \right\rangle 
        + \frac{\lambda}{2} \left\| \vx_i^{(r+\frac{1}{2})} - \vx_i^{(r)} \right\|^2
        + \frac{\mu + \lambda}{2} \left\| \vx_i^{(r+\frac{1}{2})} - \vx^\star \right\|^2 \\
    &\stackrel{(\ref{eq:strongly_convex})}{\geq} f_i (\bar{\vx}^{(r + 1)}) 
        + \left\langle \nabla f_i (\bar{\vx}^{(r+1)}), \vx_i^{(r + \frac{1}{2})} - \bar{\vx}^{(r+1)} \right\rangle \\
        &\quad + \left\langle \nabla f_i (\vx_i^{(r + \frac{1}{2})}) + \lambda \left( \vx_i^{(r+\frac{1}{2})} - \vx_i^{(r)} \right), \vx^\star - \vx_i^{(r+\frac{1}{2})} \right\rangle 
        + \frac{\lambda}{2} \left\| \vx_i^{(r+\frac{1}{2})} - \vx_i^{(r)} \right\|^2
        + \frac{\mu + \lambda}{2} \left\| \vx_i^{(r+\frac{1}{2})} - \vx^\star \right\|^2,
\end{align*}
where we use the fact that $f_i (\vx) + \tfrac{\lambda}{2} \| \vx - \vx_i^{(r)}\|^2$ is $(\mu + \lambda)$-strongly convex in the first inequality.
From the definition of $F_{i, r} ( \cdot )$, we have
\begin{align*}
    \nabla F_{i,r} (\vx_i^{(r+\frac{1}{2})}) = \nabla f_i (\vx_i^{(r+\frac{1}{2})}) + \vh_i^{(r)} + \lambda \left( \vx_i^{(r+\frac{1}{2})} - \vx_i^{(r)} \right).
\end{align*}
Using the above equality, we obtain 
\begin{align*}
    &f_i (\vx^\star) + \frac{\lambda}{2} \left\| \vx_i^{(r)} - \vx^\star \right\|^2 \\
    &\geq f_i (\bar{\vx}^{(r + 1)}) 
        + \left\langle \nabla f_i (\bar{\vx}^{(r+1)}), \vx_i^{(r + \frac{1}{2})} - \bar{\vx}^{(r+1)} \right\rangle 
        +  \left\langle \vh_i^{(r)},  \vx_i^{(r+\frac{1}{2})} - \vx^\star \right\rangle \\
        &\quad + \left\langle \nabla F_{i,r} (\vx_i^{(r+\frac{1}{2})}), \vx^\star - \vx_i^{(r+\frac{1}{2})} \right\rangle 
        + \frac{\lambda}{2} \left\| \vx_i^{(r+\frac{1}{2})} - \vx_i^{(r)} \right\|^2
        + \frac{\mu + \lambda}{2} \left\| \vx_i^{(r+\frac{1}{2})} - \vx^\star \right\|^2.
\end{align*}
By summing up the above inequality from $i=1$ to $i=n$, we obtain
\begin{align*}
    &f (\vx^\star) + \frac{\lambda}{2 n} \sum_{i=1}^n \left\| \vx_i^{(r)} - \vx^\star \right\|^2 \\
    &\geq f (\bar{\vx}^{(r + 1)}) 
        + \frac{1}{n} \sum_{i=1}^n \Biggl[ \left\langle \nabla f_i (\bar{\vx}^{(r+1)}), \vx_i^{(r + \frac{1}{2})} - \bar{\vx}^{(r+1)} \right\rangle 
        +  \left\langle \vh_i^{(r)},  \vx_i^{(r+\frac{1}{2})} - \vx^\star \right\rangle \\
        &\quad + \left\langle \nabla F_{i,r} (\vx_i^{(r+\frac{1}{2})}), \vx^\star - \vx_i^{(r+\frac{1}{2})} \right\rangle 
        + \frac{\lambda}{2} \left\| \vx_i^{(r+\frac{1}{2})} - \vx_i^{(r)} \right\|^2
        + \frac{\mu + \lambda}{2} \left\| \vx_i^{(r+\frac{1}{2})} - \vx^\star \right\|^2 \Biggr] \\
    &= f (\bar{\vx}^{(r + 1)}) 
        + \frac{1}{n} \sum_{i=1}^n \Biggl[ \left\langle \vh_i - \nabla h_i (\bar{\vx}^{(r)}), \vx_i^{(r + \frac{1}{2})} - \bar{\vx}^{(r+1)} \right\rangle 
        +  \left\langle \nabla h_i (\bar{\vx}^{(r)}) - \nabla h_i (\bar{\vx}^{(r+1)}), \vx_i^{(r + \frac{1}{2})} - \bar{\vx}^{(r+1)} \right\rangle\\
        &\quad + \left\langle \nabla F_{i,r} (\vx_i^{(r+\frac{1}{2})}), \vx^\star - \vx_i^{(r+\frac{1}{2})} \right\rangle 
        + \frac{\lambda}{2} \left\| \vx_i^{(r+\frac{1}{2})} - \vx_i^{(r)} \right\|^2
        + \frac{\mu + \lambda}{2} \left\| \vx_i^{(r+\frac{1}{2})} - \vx^\star \right\|^2 \Biggr],
\end{align*}
where we use Eq.~(\ref{eq:average_y}) in the last equality.
Then, we have
\begin{align*}
    \frac{1}{n} \sum_{i=1}^n  \left\| \vx_i^{(r+\frac{1}{2})} - \vx^\star \right\|^2 
    &\stackrel{(\ref{eq:average_shift})}{=} \left\| \bar{\vx}^{(r+1)} - \vx^\star \right\|^2
    + \frac{1}{n} \sum_{i=1}^n \left\| \vx_i^{(r+\frac{1}{2})} - \bar{\vx}^{(r+1)} \right\|^2 \\
    &\stackrel{(\ref{eq:average_shift})}{=} \frac{1}{n} \sum_{i=1}^n \left( \left\| \vx_i^{(r+1)} - \vx^\star \right\|^2
    - \left\| \vx_i^{(r+1)} - \bar{\vx}^{(r+1)} \right\|^2
    + \left\| \vx_i^{(r+\frac{1}{2})} - \bar{\vx}^{(r+1)} \right\|^2 \right) \\
    &\stackrel{(\ref{eq:spectral_gap})}{\geq} \frac{1}{n} \sum_{i=1}^n \left( \left\| \vx_i^{(r+1)} - \vx^\star \right\|^2
    + \left( \frac{1}{\rho^2} - 1 \right) \left\| \vx_i^{(r+1)} - \bar{\vx}^{(r+1)} \right\|^2 \right),
\end{align*}
where we use the fact that $\sum_{i=1}^n \vx_i^{(r+\frac{1}{2})} = \sum_{i=1}^n \vx_i^{(r+1)}$ in the first equality (see Lemma \ref{lemma:average_conservation}). 
Using the above inequality, we have
\begin{align*}
    &f (\vx^\star) + \frac{\lambda}{2 n} \sum_{i=1}^n \left\| \vx_i^{(r)} - \vx^\star \right\|^2 \\
    &\geq f (\bar{\vx}^{(r + 1)}) 
        + \frac{\mu + \lambda}{2 n} \sum_{i=1}^n \left\| \vx_i^{(r+1)} - \vx^\star \right\|^2 \\
        &\quad + \frac{1}{n} \sum_{i=1}^n \Biggl[ \left\langle \vh_i - \nabla h_i (\bar{\vx}^{(r)}), \vx_i^{(r + \frac{1}{2})} - \bar{\vx}^{(r+1)} \right\rangle 
        +  \left\langle \nabla h_i (\bar{\vx}^{(r)}) - \nabla h_i (\bar{\vx}^{(r+1)}), \vx_i^{(r + \frac{1}{2})} - \bar{\vx}^{(r+1)} \right\rangle\\
        &\qquad + \left\langle \nabla F_{i,r} (\vx_i^{(r+\frac{1}{2})}), \vx^\star - \vx_i^{(r+\frac{1}{2})} \right\rangle 
        + \frac{\lambda}{2} \left\| \vx_i^{(r+\frac{1}{2})} - \vx_i^{(r)} \right\|^2
        + \frac{\mu + \lambda}{2} \left( \frac{1}{\rho^2} - 1 \right) \left\| \vx_i^{(r+1)} - \bar{\vx}^{(r+1)} \right\|^2 \Biggr].
\end{align*}
Then, using Eq.~(\ref{eq:inner_product}) with $\alpha = \delta$, we obtain
\begin{align*}
    &f (\vx^\star) + \frac{\lambda}{2 n} \sum_{i=1}^n \left\| \vx_i^{(r)} - \vx^\star \right\|^2 \\
    &\stackrel{(\ref{eq:similarity})}{\geq} f (\bar{\vx}^{(r + 1)}) 
        + \frac{\mu + \lambda}{2 n} \sum_{i=1}^n \left\| \vx_i^{(r+1)} - \vx^\star \right\|^2 \\
        &\quad + \frac{1}{n} \sum_{i=1}^n \Biggl[ - \frac{1}{2 \delta} \left\| \vh_i - \nabla h_i (\bar{\vx}^{(r)}) \right\|^2
        - \frac{\delta}{2} \left\| \bar{\vx}^{(r)} - \bar{\vx}^{(r+1)} \right\|^2
        - \delta \left\| \vx_i^{(r + \frac{1}{2})} - \bar{\vx}^{(r+1)} \right\|^2 \\
        &\qquad + \left\langle \nabla F_{i,r} (\vx_i^{(r+\frac{1}{2})}), \vx^\star - \vx_i^{(r+\frac{1}{2})} \right\rangle 
        + \frac{\lambda}{2} \left\| \vx_i^{(r+\frac{1}{2})} - \vx_i^{(r)} \right\|^2
        + \frac{\mu + \lambda}{2} \left( \frac{1}{\rho^2} - 1 \right) \left\| \vx_i^{(r+1)} - \bar{\vx}^{(r+1)} \right\|^2 \Biggr] \\
    &\stackrel{(\ref{eq:spectral_gap})}{\geq} f (\bar{\vx}^{(r + 1)}) 
        + \frac{\mu + \lambda}{2 n} \sum_{i=1}^n \left\| \vx_i^{(r+1)} - \vx^\star \right\|^2 \\
        &\quad + \frac{1}{n} \sum_{i=1}^n \Biggl[ - \frac{1}{2 \delta} \left\| \vh_i - \nabla h_i (\bar{\vx}^{(r)}) \right\|^2
        - \frac{\delta}{2} \left\| \bar{\vx}^{(r)} - \bar{\vx}^{(r+1)} \right\|^2 \\
        &\qquad + \left\langle \nabla F_{i,r} (\vx_i^{(r+\frac{1}{2})}), \vx^\star - \vx_i^{(r+\frac{1}{2})} \right\rangle 
        + \frac{\lambda}{2} \left\| \vx_i^{(r+\frac{1}{2})} - \vx_i^{(r)} \right\|^2
        + \left[ \frac{\mu + \lambda}{2} \left( \frac{1}{\rho^2} - 1 \right) - \frac{\delta}{\rho^2} \right] \left\| \vx_i^{(r+1)} - \bar{\vx}^{(r+1)} \right\|^2 \Biggr] \\
    &\stackrel{(\ref{eq:sum})}{\geq} f (\bar{\vx}^{(r + 1)}) 
        + \frac{\mu + \lambda}{2 n} \sum_{i=1}^n \left\| \vx_i^{(r+1)} - \vx^\star \right\|^2 \\
        &\quad + \frac{1}{n} \sum_{i=1}^n \Biggl[ - \frac{1}{2 \delta} \left\| \vh_i - \nabla h_i (\bar{\vx}^{(r)}) \right\|^2
        + \left( \frac{\lambda}{4} - \frac{\delta}{2} \right) \left\| \bar{\vx}^{(r)} - \bar{\vx}^{(r+1)} \right\|^2 \\
        &\qquad + \left\langle \nabla F_{i,r} (\vx_i^{(r+\frac{1}{2})}), \vx^\star - \vx_i^{(r+\frac{1}{2})} \right\rangle 
        + \frac{\lambda}{4} \left\| \vx_i^{(r+\frac{1}{2})} - \vx_i^{(r)} \right\|^2
        + \left[ \frac{\mu + \lambda}{2} \left( \frac{1}{\rho^2} - 1 \right) - \frac{\delta}{\rho^2} \right] \left\| \vx_i^{(r+1)} - \bar{\vx}^{(r+1)} \right\|^2 \Biggr].
\end{align*}
This concludes the statement.
\end{proof}

\begin{lemma}
Suppose that Assumptions \ref{assumption:strongly_convex}, \ref{assumption:smooth} and \ref{assumption:spectral_gap} hold, $\sum_{i=1}^n \vh_i^{(0)} = \mathbf{0}$, $M=1$, and
\begin{align*}
    \rho \leq \frac{\delta}{6 L}.
\end{align*}
Then, when $\lambda=4\delta$, it holds that
\begin{align}
\label{eq:simplified_recursion_for_sonata}
    &f (\vx^\star) + \frac{2 \delta}{n} \sum_{i=1}^n \left\| \vx_i^{(r)} - \vx^\star \right\|^2 + \frac{1}{\delta} \mathcal{E}^{(r)} \\
    &\geq f (\bar{\vx}^{(r + 1)})
        + \left( 1 + \frac{\mu}{4 \delta} \right) \frac{2\delta}{n} \sum_{i=1}^n \left\| \vx_i^{(r+1)} - \vx^\star \right\|^2
        + \frac{1}{\delta} \left( 1 + \frac{\mu}{4 \delta} \right) \mathcal{E}^{(r+1)} \nonumber \\
        &\quad + \frac{\delta}{n} \sum_{i=1}^n \left\| \vx_i^{(r+\frac{1}{2})} - \vx_i^{(r)} \right\|^2
        + \frac{1}{n} \sum_{i=1}^n \left\langle \nabla F_{i,r} (\vx_i^{(r+\frac{1}{2})}), \vx^\star - \vx_i^{(r+\frac{1}{2})} \right\rangle. \nonumber
\end{align}
\end{lemma}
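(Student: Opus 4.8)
The plan is to derive the target bound (\ref{eq:simplified_recursion_for_sonata}) directly from the previous lemma (\ref{eq:recursion_of_loss_for_sonata}) by substituting $\lambda = 4\delta$ and then \emph{absorbing} the two residual terms that appear on the right-hand side of (\ref{eq:recursion_of_loss_for_sonata})---the consensus-error term in $\Xi^{(r+1)}$ and the drift term in $\|\bar\vx^{(r+1)} - \bar\vx^{(r)}\|^2$---into a single $\mathcal{E}^{(r+1)}$ term, at the price of enlarging the coefficient of $\mathcal{E}^{(r)}$ on the left-hand side from $\frac{1}{2\delta}$ to $\frac{1}{\delta}$.

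First I would substitute $\lambda = 4\delta$ into (\ref{eq:recursion_of_loss_for_sonata}) and check that every term except the gradient-tracking error already matches the target: $\frac{\lambda}{2n} = \frac{2\delta}{n}$, $\frac{\mu+\lambda}{2n} = (1+\frac{\mu}{4\delta})\frac{2\delta}{n}$, $\frac{\lambda}{4n} = \frac{\delta}{n}$, and $\frac{\lambda}{4} - \frac{\delta}{2} = \frac{\delta}{2}$; moreover the coefficient of $\Xi^{(r+1)}$ becomes $\frac{1}{\rho^2}\frac{\mu+2\delta}{2} - \frac{\mu+4\delta}{2}$, which is positive. Consequently, given (\ref{eq:recursion_of_loss_for_sonata}) it suffices to establish the single absorption inequality
\begin{align*}
    \frac{1}{2\delta}\mathcal{E}^{(r)} + \left[\frac{1}{\rho^2}\frac{\mu+2\delta}{2} - \frac{\mu+4\delta}{2}\right]\Xi^{(r+1)} + \frac{\delta}{2}\left\|\bar\vx^{(r+1)} - \bar\vx^{(r)}\right\|^2 \geq \frac{1}{\delta}\left(1 + \frac{\mu}{4\delta}\right)\mathcal{E}^{(r+1)}.
\end{align*}
To control the right-hand side I would invoke Lemma~\ref{lemma:general_lemma_for_e}, which upper-bounds $\mathcal{E}^{(r+1)}$ by $4\rho^2\mathcal{E}^{(r)} + 4\rho^2\delta^2\|\bar\vx^{(r+1)} - \bar\vx^{(r)}\|^2 + 32L^2\Xi^{(r+1)}$. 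Since $\mathcal{E}^{(r)}$, $\|\bar\vx^{(r+1)} - \bar\vx^{(r)}\|^2$, and $\Xi^{(r+1)}$ are unrelated quantities, substituting this bound reduces the absorption inequality to three independent term-by-term comparisons.

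The $\mathcal{E}^{(r)}$ and drift comparisons both collapse to the same requirement $8\rho^2(1 + \frac{\mu}{4\delta}) \leq 1$, which I would verify from $\rho \leq \frac{\delta}{6L}$ together with the structural facts $\delta \leq L$ (Lemma~\ref{remark:relationship_between_delta_and_l}) and $\mu \leq L$ (automatic for a $\mu$-strongly convex, $L$-smooth function): these give $8\rho^2(1+\frac{\mu}{4\delta}) \leq \frac{2\delta^2}{9L^2} + \frac{\delta\mu}{18L^2} \leq \frac{2}{9} + \frac{1}{18} < 1$. The main obstacle is the $\Xi^{(r+1)}$ comparison---this is precisely the term in which the smoothness constant $L$ enters and for which the condition $\rho \leq \frac{\delta}{6L}$ is designed. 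Here I must show $\frac{1}{\rho^2}\frac{\mu+2\delta}{2} - \frac{\mu+4\delta}{2} \geq \frac{32L^2}{\delta}(1+\frac{\mu}{4\delta})$; plugging in $\frac{1}{\rho^2} \geq \frac{36L^2}{\delta^2}$ and rearranging reduces this to $\frac{10L^2\mu}{\delta^2} + \frac{4L^2}{\delta} \geq \frac{\mu}{2} + 2\delta$, which again closes using $\delta \leq L$ and $\mu \leq L$. Combining the three verified comparisons yields the absorption inequality, and adding it to (\ref{eq:recursion_of_loss_for_sonata}) gives exactly (\ref{eq:simplified_recursion_for_sonata}).
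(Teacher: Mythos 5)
Your proposal is correct and follows essentially the same route as the paper: substitute $\lambda = 4\delta$ into Eq.~(\ref{eq:recursion_of_loss_for_sonata}), invoke Lemma~\ref{lemma:general_lemma_for_e} to control $\mathcal{E}^{(r+1)}$, and close by verifying the three coefficient inequalities from $\rho \leq \frac{\delta}{6L}$, $\delta \leq L$, and $\mu \leq L$. The only cosmetic difference is that the paper first simplifies $4\rho^2 \leq \rho$ (via $\rho \leq \frac{1}{4}$) before adding the weighted error recursion, whereas you keep the $4\rho^2$ factors and phrase the combination as a term-by-term absorption argument; the substance is identical.
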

\begin{proof}
Since $\delta \leq L$, $\rho \leq \tfrac{\delta}{6 L}$ implies that $\rho \leq \tfrac{1}{4}$.
Using $\rho \leq \tfrac{1}{4}$ and Lemma \ref{lemma:general_lemma_for_e}, we obtain
\begin{align}
\label{eq:simple_recursion_of_error_for_sonata}
    \mathcal{E}^{(r+1)} \leq \rho \mathcal{E}^{(r)}
    + \delta^2 \rho \left\| \bar{\vx}^{(r+1)} - \bar{\vx}^{(r)} \right\|^2
    + 32 L^2 \Xi^{(r+1)},
\end{align}
Using $\lambda = 4 \delta$ and Eq.~(\ref{eq:recursion_of_loss_for_sonata}) $+ \tfrac{1}{\delta} (1 + \tfrac{\mu}{4 \delta}) \times$ Eq.~(\ref{eq:simple_recursion_of_error_for_sonata}), we obtain
\begin{align*}
    &f (\vx^\star) + \frac{4 \delta}{2 n} \sum_{i=1}^n \left\| \vx_i^{(r)} - \vx^\star \right\|^2 + \left( \frac{1}{2 \delta} + \frac{\rho}{\delta} \left( 1 + \frac{\mu}{4 \delta} \right) \right) \mathcal{E}^{(r)} \\
    &\geq f (\bar{\vx}^{(r + 1)})
        + \frac{\mu + 4 \delta}{2 n} \sum_{i=1}^n \left\| \vx_i^{(r+1)} - \vx^\star \right\|^2
        + \frac{1}{\delta} \left( 1 + \frac{\mu}{4 \delta} \right) \mathcal{E}^{(r+1)} \\
        &\quad + \left[ \frac{1}{\rho^2} \left( \frac{\mu + 4 \delta}{2} - \delta \right) - \frac{\mu + 4 \delta}{2} - \frac{32 L^2}{\delta} \left( 1 + \frac{\mu}{4 \delta} \right) \right] \Xi^{(r+1)} \\
        &\quad + \left( \frac{\delta}{2} - \delta \rho \left( 1 + \frac{\mu}{4 \delta} \right) \right) \left\| \bar{\vx}^{(r+1)} - \bar{\vx}^{(r)} \right\|^2 \\
        &\quad + \frac{\delta}{n} \sum_{i=1}^n \left\| \vx_i^{(r+\frac{1}{2})} - \vx_i^{(r)} \right\|^2
        + \frac{1}{n} \sum_{i=1}^n \left\langle \nabla F_{i,r} (\vx_i^{(r+\frac{1}{2})}), \vx^\star - \vx_i^{(r+\frac{1}{2})} \right\rangle.
\end{align*}
Using the assumption on $\rho$, we have
\begin{align*}
\frac{1}{2 \delta} + \frac{\rho}{\delta} \left( 1 + \frac{\mu}{4 \delta} \right) &\leq \frac{1}{\delta}, \\
\frac{1}{\rho^2} \left( \frac{\mu + 4 \delta}{2} - \delta \right) - \frac{\mu + 4 \delta}{2} - \frac{32 L^2}{\delta} \left( 1 + \frac{\mu}{4 \delta} \right) &\geq 0, \\
\frac{\delta}{2} - \delta \rho \left( 1 + \frac{\mu}{4 \delta} \right) &\geq 0,
\end{align*}
where we use $\mu \leq L$ and $\delta \leq L$.
This concludes the statement.
\end{proof}

\begin{lemma}
\label{lemma:sonata_with_general_subproblem_error}
Suppose that Assumptions \ref{assumption:strongly_convex}, \ref{assumption:smooth} and \ref{assumption:spectral_gap} hold, $M=1$, $\vx_i^{(0)} = \bar{\vx}^{(0)}$, $\vh_i^{(0)} \coloneqq \nabla f (\bar{\vx}^{(0)}) - \nabla f_i (\bar{\vx}^{(0)})$, and
\begin{align*}
    \rho &\leq \frac{\delta}{6 L}, \\
    \sum_{i=1}^n \left\| \nabla F_{i,r} (\vx_i^{(r+\frac{1}{2})}) \right\|^2 &\leq e_{r+1} \sum_{i=1}^n \left\| \vx_i^{(r+\frac{1}{2})} - \vx_i^{(r)} \right\|^2.
\end{align*}
Then, when $\lambda=4\delta$ and $\sum_{r=1}^R e_r \leq \tfrac{\delta (4 \delta + \mu)}{4}$, we obtain
\begin{align*}
    \frac{1}{\sum_{r=1}^R \frac{1}{q^r}} \sum_{r=1}^R \frac{f (\bar{\vx}^{(r)}) - f (\vx^\star)}{q^r}
    &\leq \frac{\mu}{(1 + \frac{\mu}{4 \delta})^R - 1} \left\| \bar{\vx}^{(0)} - \vx^\star \right\|^2
    \leq  \frac{4 \delta}{R} \left\| \bar{\vx}^{(0)} - \vx^\star \right\|^2.
\end{align*}
\end{lemma}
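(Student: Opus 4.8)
The plan is to read \eqref{eq:simplified_recursion_for_sonata} as a one-step descent inequality for the Lyapunov function $\Phi^{(r)} \coloneqq \frac{2\delta}{n}\sum_{i=1}^n\|\vx_i^{(r)} - \vx^\star\|^2 + \frac{1}{\delta}\mathcal{E}^{(r)}$ and to convert it into a perturbed geometric recursion that telescopes. Writing $q_0 \coloneqq 1 + \frac{\mu}{4\delta}$, inequality \eqref{eq:simplified_recursion_for_sonata} is exactly
\[
\Phi^{(r)} \geq q_0\,\Phi^{(r+1)} + \bigl(f(\bar{\vx}^{(r+1)}) - f(\vx^\star)\bigr) + \frac{\delta}{n}\sum_{i=1}^n\bigl\|\vx_i^{(r+\frac12)} - \vx_i^{(r)}\bigr\|^2 + \frac{1}{n}\sum_{i=1}^n\bigl\langle \nabla F_{i,r}(\vx_i^{(r+\frac12)}),\, \vx^\star - \vx_i^{(r+\frac12)}\bigr\rangle,
\]
so the whole difficulty lies in controlling the last (inexactness) inner-product term without forcing the subproblem tolerance $e_{r+1}$ to shrink with $r$.

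The key idea is to pair the gradient norm against the \emph{movement} $\|\vx_i^{(r+\frac12)} - \vx_i^{(r)}\|$ rather than against the distance to the optimum. By Cauchy--Schwarz over $i$, then the subproblem condition $\sum_i\|\nabla F_{i,r}(\vx_i^{(r+\frac12)})\|^2 \le e_{r+1}\sum_i\|\vx_i^{(r+\frac12)} - \vx_i^{(r)}\|^2$, and then a weighted Young inequality as in \eqref{eq:inner_product} with parameter tuned to $\delta$, the inner-product term is bounded below by $-\tfrac12\cdot\frac{\delta}{n}\sum_i\|\vx_i^{(r+\frac12)} - \vx_i^{(r)}\|^2 - \frac{e_{r+1}}{2\delta n}\sum_i\|\vx^\star - \vx_i^{(r+\frac12)}\|^2$. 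The first piece is absorbed into the movement term already present in the recursion, and the residual is genuinely of order $e_{r+1}$ (not $\sqrt{e_{r+1}}$) -- this is precisely what makes the summability hypothesis $\sum_r e_r \le \frac{\delta(4\delta+\mu)}{4}$ the natural one. Bounding $\|\vx^\star - \vx_i^{(r+\frac12)}\|^2 \le 2\|\vx^\star - \vx_i^{(r)}\|^2 + 2\|\vx_i^{(r)} - \vx_i^{(r+\frac12)}\|^2$, folding the second summand again into the remaining half of the movement term, and using $\frac{2\delta}{n}\sum_i\|\vx^\star - \vx_i^{(r)}\|^2 \le \Phi^{(r)}$, I obtain the clean perturbed recursion $\bigl(1 + \tfrac{e_{r+1}}{2\delta^2}\bigr)\Phi^{(r)} \geq q_0\,\Phi^{(r+1)} + \bigl(f(\bar{\vx}^{(r+1)}) - f(\vx^\star)\bigr)$.

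From here the argument is mechanical. Setting $P_r \coloneqq \prod_{j=0}^{r-1}\bigl(1 + \tfrac{e_{j+1}}{2\delta^2}\bigr)$ and multiplying the recursion by $q_0^r/P_{r+1}$ makes the left side collapse, since $\bigl(1 + \tfrac{e_{r+1}}{2\delta^2}\bigr)/P_{r+1} = 1/P_r$; the resulting telescoping sum yields $P_R\,\Phi^{(0)} \ge \sum_{s=1}^R q_0^{s-1}\bigl(f(\bar{\vx}^{(s)}) - f(\vx^\star)\bigr)$. The hypotheses $\vx_i^{(0)} = \bar{\vx}^{(0)}$ and $\vh_i^{(0)} = \nabla h_i(\bar{\vx}^{(0)})$ kill the consensus error and give $\mathcal{E}^{(0)} = 0$, so $\Phi^{(0)} = 2\delta\|\bar{\vx}^{(0)} - \vx^\star\|^2$; and the product is controlled by $P_R \le \exp\bigl(\tfrac{1}{2\delta^2}\sum_r e_r\bigr) \le \exp\bigl(\tfrac{4\delta+\mu}{8\delta}\bigr) \le 2$ in the relevant regime $\mu \lesssim \delta$. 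Dividing by $\sum_{s=1}^R q_0^{s-1} = \tfrac{4\delta}{\mu}(q_0^R - 1)$ (the uniform factor $q_0$ difference from the stated weights $q_0^{r}$ cancels in the weighted average) gives the bound $\tfrac{\mu}{q_0^R - 1}\|\bar{\vx}^{(0)} - \vx^\star\|^2$, where the slack $P_R \le 2$ is exactly what the stated constant leaves room for. The final inequality is then Bernoulli: $q_0^R = (1+\tfrac{\mu}{4\delta})^R \ge 1 + \tfrac{R\mu}{4\delta}$, whence $\tfrac{\mu}{q_0^R-1} \le \tfrac{4\delta}{R}$.

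The main obstacle is the second paragraph: extracting an inexactness residual that is $O(e_{r+1})$ and dominated by $\Phi^{(r)}$ while spending only half of the movement budget $\frac{\delta}{n}\sum_i\|\vx_i^{(r+\frac12)} - \vx_i^{(r)}\|^2$ supplied by \eqref{eq:simplified_recursion_for_sonata}; the rest (the product-weighted telescoping, the vanishing initial error, and the Bernoulli step) is routine constant-tracking, with the only delicate point being that the per-round factor $1 + \tfrac{e_{r+1}}{2\delta^2}$ must stay small enough for the residual $D^{(r)}$ coefficient to remain nonnegative, which is arranged by taking the Young parameter slightly away from the balanced choice when $e_{r+1}$ is largest.
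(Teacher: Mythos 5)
Your strategy---converting the inexactness inner product into per-round multiplicative perturbations of a Lyapunov recursion---is genuinely different from the paper's proof, but it has a gap that cannot be patched by tuning the Young parameter. Your absorption step needs $\tfrac{e_{r+1}}{2\alpha}+\alpha\le\delta$ so that the movement term $\tfrac{\delta}{n}\sum_i\|\vx_i^{(r+\frac12)}-\vx_i^{(r)}\|^2$ supplied by \eqref{eq:simplified_recursion_for_sonata} dominates the two movement-type residuals; since $\min_{\alpha>0}\bigl(\tfrac{e_{r+1}}{2\alpha}+\alpha\bigr)=\sqrt{2e_{r+1}}$, this is possible only if $e_{r+1}\le\delta^2/2$ for \emph{every} round. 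The hypothesis, however, only constrains the sum $\sum_{r=1}^R e_r\le\tfrac{\delta(4\delta+\mu)}{4}$, so the entire budget may sit in a single round with $e_1=\delta^2+\tfrac{\delta\mu}{4}>\delta^2/2$ (already possible with $\mu=0$), and then no choice of $\alpha$---balanced or not---yields your perturbed recursion; what remains is an uncancelled negative multiple of $\sum_i\|\vx_i^{(r+\frac12)}-\vx_i^{(r)}\|^2$ that nothing else in the inequality controls. Second, even when each $e_r$ is small, your product bound $P_R\le\exp\bigl(\tfrac{1}{2\delta^2}\sum_r e_r\bigr)\le\exp\bigl(\tfrac12+\tfrac{\mu}{8\delta}\bigr)$ is $\le 2$ only when $\mu\lesssim 1.5\,\delta$; the lemma carries no such restriction (and Theorem~\ref{theorem:sonata} invokes it for arbitrary $\mu\ge 0$), so your claimed constant degrades without bound as $\mu/\delta$ grows.

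Both failures have the same source: linearizing the error per round (extracting $O(e_{r+1})$ residuals round by round) throws away exactly the flexibility the summability hypothesis is designed to exploit. The paper's proof keeps the square-root form: it bounds the inner product by Cauchy--Schwarz as $\sqrt{C_{r+1}A_{r+1}}$ (gradient paired with distance to the optimum), sums the $q$-weighted recursion, and observes that the resulting right-hand side $Q_R$ satisfies the self-bounding recursion $Q_{R+1}-Q_R\le\tfrac{2}{4\delta+\mu}\sqrt{C_{R+1}Q_{R+1}/q^{R+1}}$, which is solved via Lemma 12 in \citet{jian2024federated} to give $Q_R\le 2A_0+\tfrac{8}{(4\delta+\mu)^2}\bigl(\sum_r\sqrt{C_r/q^r}\bigr)^2$. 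Only then is Cauchy--Schwarz applied \emph{across rounds}, $\bigl(\sum_r\sqrt{e_rB_r/q^r}\bigr)^2\le\bigl(\sum_r e_r\bigr)\bigl(\sum_r B_r/q^r\bigr)$, and the accumulated movement term $\tfrac{2\delta}{4\delta+\mu}\sum_r B_r/q^r$ already present on the left absorbs it in one shot, the constant $\tfrac{\delta(4\delta+\mu)}{4}$ matching exactly. That global argument is indifferent to how the error budget is distributed over rounds and to the ratio $\mu/\delta$. To salvage your write-up you would need to restructure it along these lines rather than adjust Young parameters.
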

\begin{proof}
From Eq.~(\ref{eq:simplified_recursion_for_sonata}), we have 
\begin{align*}
    &f (\bar{\vx}^{(r + 1)}) - f (\vx^\star)
        + \left( 1 + \frac{\mu}{4 \delta} \right) \left[ \frac{2\delta}{n} \sum_{i=1}^n \left\| \vx_i^{(r+1)} - \vx^\star \right\|^2
        + \frac{1}{\delta}  \mathcal{E}^{(r+1)} \right]
        + \frac{\delta}{n} \sum_{i=1}^n \left\| \vx_i^{(r+\frac{1}{2})} - \vx_i^{(r)} \right\|^2 \\
    &\leq \left[ \frac{2 \delta}{n} \sum_{i=1}^n \left\| \vx_i^{(r)} - \vx^\star \right\|^2 + \frac{1}{\delta} \mathcal{E}^{(r)} \right]
    + \frac{1}{n} \sum_{i=1}^n \left\langle \nabla F_{i,r} (\vx_i^{(r+\frac{1}{2})}),  \vx_i^{(r+\frac{1}{2})} - \vx^\star \right\rangle \\
    &\leq \left[ \frac{2 \delta}{n} \sum_{i=1}^n \left\| \vx_i^{(r)} - \vx^\star \right\|^2 + \frac{1}{\delta} \mathcal{E}^{(r)} \right]
    + \sqrt{\left( \frac{1}{n} \sum_{i=1}^n \left\| \nabla F_{i,r} (\vx_i^{(r+\frac{1}{2})}) \right\|^2 \right) \left( \frac{1}{n} \sum_{i=1}^n \left\| \vx_i^{(r+\frac{1}{2})} - \vx^\star \right\|^2 \right)},
\end{align*}
where we use Cauchy–Schwarz inequality in the last inequality.
Using $\mathcal{E}^{(0)} = 0$ and dividing the above inequality by $\tfrac{4 \delta + \mu}{2}$, we obtain
\begin{align*}
    &\frac{2}{4 \delta + \mu} \left( f (\bar{\vx}^{(r + 1)}) - f (\vx^\star) \right)
        + \frac{1}{n} \sum_{i=1}^n \left\| \vx_i^{(r+1)} - \vx^\star \right\|^2
        + \frac{2 \delta}{4 \delta + \mu} \frac{1}{n} \sum_{i=1}^n \left\| \vx_i^{(r+\frac{1}{2})} - \vx_i^{(r)} \right\|^2 \\
    &\leq \frac{4 \delta}{4 \delta + \mu} \frac{1}{n} \sum_{i=1}^n \left\| \vx_i^{(r)} - \vx^\star \right\|^2
    + \frac{2}{4 \delta + \mu} \sqrt{\left( \frac{1}{n} \sum_{i=1}^n \left\| \nabla F_{i,r} (\vx_i^{(r+\frac{1}{2})}) \right\|^2 \right) \left( \frac{1}{n} \sum_{i=1}^n \left\| \vx_i^{(r+\frac{1}{2})} - \vx^\star \right\|^2 \right)},
\end{align*}
Using the following notation, 
\begin{align*}
    A_{r+1} &\coloneqq \frac{1}{n} \sum_{i=1}^n \left\| \vx_i^{(r+1)} - \vx^\star \right\|^2, \\
    B_{r+1} &\coloneqq \frac{1}{n} \sum_{i=1}^n \left\| \vx_i^{(r+\frac{1}{2})} - \vx_i^{(r)} \right\|^2, \\
    C_{r+1} &\coloneqq  \frac{1}{n} \sum_{i=1}^n \left\| \nabla F_{i,r} (\vx_i^{(r+\frac{1}{2})}) \right\|^2, \\
    q &\coloneqq \frac{4 \delta}{4 \delta + \mu},
\end{align*}
we can simplify the above inequality as follows:
\begin{align*}
    &\frac{2}{4 \delta + \mu} \left( f (\bar{\vx}^{(r + 1)}) - f (\vx^\star) \right)
        + A_{r+1}
        + \frac{2 \delta}{4 \delta + \mu} B_{r+1}
    \leq q A_{r}
    + \frac{2}{4 \delta + \mu} \sqrt{C_{r+1} A_{r+1}}.
\end{align*}
Dividing the above inequality by $q^{r+1}$ and summing up the above inequality from $r=0$ to $r=R-1$, we get
\begin{align*}
    &\frac{2}{4 \delta + \mu} \sum_{r=1}^R \frac{f (\bar{\vx}^{(r)}) - f (\vx^\star)}{q^r}
        + \frac{A_{R}}{q^R}
        + \frac{2 \delta}{4 \delta + \mu} \sum_{r=1}^R \frac{B_{r}}{q^r}
    \leq A_{0}
    + \frac{2}{4 \delta + \mu} \sum_{r=1}^R \frac{\sqrt{C_{r} A_{r}}}{q^r}.
\end{align*}
Define $Q_R$ as follows:
\begin{align*}
    Q_R \coloneqq A_{0} + \frac{2}{4 \delta + \mu} \sum_{r=1}^R \frac{\sqrt{C_{r} A_{r}}}{q^r}.
\end{align*}
The above inequality implies that $A_R \leq q^R Q_R$ for any $R$.
Using  this inequality, we get
\begin{align*}
    Q_{R+1} - Q_R 
    &= \frac{2}{4 \delta + \mu} \frac{\sqrt{C_{R+1} A_{R+1}}}{q^{R+1}} \\
    &\leq \frac{2}{4 \delta + \mu} \sqrt{\frac{ C_{R+1} Q_{R+1} }{q^{R+1}}}.
\end{align*}
Since we now get the recursive inequality about $Q_R$, we obtain the following by using Lemma 12 in \citet{jian2024federated}.
\begin{align*}
    Q_{R} 
    \leq 2 A_0 + \frac{8}{(4 \delta + \mu)^2} \left( \sum_{r=1}^R \sqrt{\frac{ C_{r}}{q^{r}}} \right)^2.
\end{align*}
Then, by using $\sum_{i=1}^n \| \nabla F_{i,r} (\vx_i^{(r+\frac{1}{2})}) \|^2 \leq e_{r+1} \sum_{i=1}^n \| \vx_i^{(r+\frac{1}{2})} - \vx_i^{(r)} \|^2$, we have
\begin{align*}
    \frac{2}{4 \delta + \mu} \sum_{r=1}^R \frac{f (\bar{\vx}^{(r)}) - f (\vx^\star)}{q^r}
        + \frac{A_{R}}{q^R}
        + \frac{2 \delta}{4 \delta + \mu} \sum_{r=1}^R \frac{B_{r}}{q^r}
    &\leq 2 A_0 + \frac{8}{(4 \delta + \mu)^2} \left( \sum_{r=1}^R \sqrt{\frac{e_r  B_{r}}{q^{r}}} \right)^2 \\
    &\leq 2 A_0 + \frac{8}{(4 \delta + \mu)^2} \left( \sum_{r=1}^R e_r \right) \left( \sum_{r=1}^R \frac{B_{r}}{q^{r}} \right),
\end{align*}
where we use Cauchy–Schwarz inequality in the last inequality.
Using the assumption that $\sum_{r=1}^R e_r \leq \tfrac{\delta (4 \delta + \mu)}{4}$, we get
\begin{align*}
    \frac{1}{4 \delta + \mu} \sum_{r=1}^R \frac{f (\bar{\vx}^{(r)}) - f (\vx^\star)}{q^r}
    &\leq  \frac{1}{n} \sum_{i=1}^n \left\| \vx_i^{(0)} - \vx^\star \right\|^2.
\end{align*}
Thus, we have
\begin{align*}
    \frac{1}{\sum_{r=1}^R \frac{1}{q^r}} \sum_{r=1}^R \frac{f (\bar{\vx}^{(r)}) - f (\vx^\star)}{q^r}
    &\leq \frac{\mu}{(1 + \frac{\mu}{4 \delta})^R - 1} \frac{1}{n} \sum_{i=1}^n \left\| \vx_i^{(0)} - \vx^\star \right\|^2
    \leq  \frac{4 \delta}{R} \frac{1}{n} \sum_{i=1}^n \left\| \vx_i^{(0)} - \vx^\star \right\|^2.
\end{align*}
This concludes the statement.
\end{proof}

\begin{lemma}
\label{lemma:final_results_of_sonata}
Suppose that Assumptions \ref{assumption:strongly_convex}, \ref{assumption:smooth} and \ref{assumption:spectral_gap} hold, and $\vx_i^{(0)} = \bar{\vx}^{(0)}$, $\vh_i^{(0)} \coloneqq \nabla f (\bar{\vx}^{(0)}) - \nabla f_i (\bar{\vx}^{(0)})$, and
\begin{align*}
    \sum_{i=1}^n \left\| \nabla F_{i,r} (\vx_i^{(r+\frac{1}{2})}) \right\|^2 &\leq \frac{\delta (4 \delta + \mu)}{4 (r+1) (r+2)} \sum_{i=1}^n \left\| \vx_i^{(r+\frac{1}{2})} - \vx_i^{(r)} \right\|^2.
\end{align*}
Then, when $\lambda=4\delta$ and $M \geq \tfrac{\log( \frac{6 L }{\delta} )}{1 - \rho}$, we obtain 
\begin{align*}
    \frac{1}{\sum_{r=1}^R \frac{1}{q^r}} \sum_{r=1}^R \frac{f (\bar{\vx}^{(r)}) - f (\vx^\star)}{q^r}
    &\leq \frac{\mu}{(1 + \frac{\mu}{4 \delta})^R - 1} \left\| \bar{\vx}^{(0)} - \vx^\star \right\|^2
    \leq  \frac{4 \delta}{R} \left\| \bar{\vx}^{(0)} - \vx^\star \right\|^2.
\end{align*}
\end{lemma}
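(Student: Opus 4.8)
The plan is to reduce this statement to the single-gossip result already established in Lemma~\ref{lemma:sonata_with_general_subproblem_error}. The key observation is that executing $M$ rounds of MultiGossip with the matrix $\mW$ (lines 5 and 6 of Alg.~\ref{algorithm:sonata}) is identical to executing a single gossip step with the matrix $\mW^M$. Since $\mW$ is symmetric and doubly stochastic by Assumption~\ref{assumption:spectral_gap}, so is $\mW^M$; in particular $\mW^M$ preserves averages (Lemma~\ref{lemma:average_conservation} applies verbatim to $\mW^M$) and, iterating Eq.~(\ref{eq:spectral_gap}) $M$ times, satisfies the spectral-gap inequality with contraction parameter $\rho^M$ in place of $\rho$. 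Consequently, the entire chain of inequalities leading to Lemma~\ref{lemma:sonata_with_general_subproblem_error} --- namely Eq.~(\ref{eq:recursion_of_loss_for_sonata}), the gradient-tracking recursion of Lemma~\ref{lemma:general_lemma_for_e}, and Eq.~(\ref{eq:simplified_recursion_for_sonata}) --- holds with $\rho$ everywhere replaced by $\rho^M$. Hence it suffices to verify that the two hypotheses of Lemma~\ref{lemma:sonata_with_general_subproblem_error}, read with $\rho \to \rho^M$, are met under the present assumptions.

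The first hypothesis is the contraction bound $\rho^M \leq \tfrac{\delta}{6L}$. I would establish this from $M \geq \tfrac{\log(6L/\delta)}{1-\rho}$ using the elementary inequality $\log(1/\rho) \geq 1-\rho$, valid for $\rho \in (0,1)$ (which follows from the monotonicity of $\rho \mapsto -\log\rho - (1-\rho)$). This gives $M \log(1/\rho) \geq M(1-\rho) \geq \log(6L/\delta)$, and exponentiating yields $\rho^M \leq \tfrac{\delta}{6L}$, exactly as required. The initial conditions transfer directly as well: the choice $\vh_i^{(0)} = \nabla f(\bar{\vx}^{(0)}) - \nabla f_i(\bar{\vx}^{(0)}) = \nabla h_i(\bar{\vx}^{(0)})$ gives both $\sum_{i=1}^n \vh_i^{(0)} = \mathbf{0}$ and $\mathcal{E}^{(0)} = 0$, while $\vx_i^{(0)} = \bar{\vx}^{(0)}$ is assumed, so that the RHS $\tfrac{1}{n}\sum_i \|\vx_i^{(0)} - \vx^\star\|^2$ collapses to $\|\bar{\vx}^{(0)} - \vx^\star\|^2$.

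The second hypothesis is the summable error budget $\sum_{r=1}^R e_r \leq \tfrac{\delta(4\delta+\mu)}{4}$. Matching the per-round accuracy condition to the form in Lemma~\ref{lemma:sonata_with_general_subproblem_error}, I read off $e_{r+1} = \tfrac{\delta(4\delta+\mu)}{4(r+1)(r+2)}$, equivalently $e_r = \tfrac{\delta(4\delta+\mu)}{4\,r(r+1)}$. Then the telescoping identity $\sum_{r=1}^R \tfrac{1}{r(r+1)} = 1 - \tfrac{1}{R+1} < 1$ gives $\sum_{r=1}^R e_r < \tfrac{\delta(4\delta+\mu)}{4}$ for every $R$, satisfying the budget. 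With both hypotheses verified, invoking Lemma~\ref{lemma:sonata_with_general_subproblem_error} (with $\rho^M$) yields precisely the claimed weighted-average bound.

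The main obstacle --- more a point requiring care than a genuine difficulty --- is justifying the reduction ``$M$-step gossip equals one step of $\mW^M$'' uniformly across every place the spectral gap and average-preservation are invoked in the derivation of Lemma~\ref{lemma:sonata_with_general_subproblem_error} (including the gradient-tracking recursion of Lemma~\ref{lemma:general_lemma_for_e}, which must be re-run with $\mW^M$). One should confirm that the only properties of the mixing matrix used there are symmetry, double stochasticity, and the squared-norm contraction $\rho^{2M}$ of the consensus error --- all of which $\mW^M$ inherits --- so that no constant in the chain degrades beyond the substitution $\rho \to \rho^M$. Everything else reduces to the telescoping sum and the logarithmic inequality above, both elementary.
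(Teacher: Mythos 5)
Your proposal is correct and follows essentially the same route as the paper's own proof: establish $\rho^M \leq \tfrac{\delta}{6L}$ from the logarithmic inequality, verify the error budget $\sum_{r=1}^R e_r \leq \tfrac{\delta(4\delta+\mu)}{4}$ by the telescoping sum $\sum_{r=1}^R \tfrac{1}{r(r+1)} \leq 1$, and invoke Lemma~\ref{lemma:sonata_with_general_subproblem_error} with the effective contraction parameter $\rho^M$. The only difference is that you make explicit the reduction of $M$-step gossip to a single step with $\mW^M$ (and the transfer of the initial conditions), which the paper leaves implicit --- a point of added care, not a divergence in method.
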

\begin{proof}
When $M \geq \tfrac{\log( \frac{6 L }{\delta} )}{1 - \rho}$, it holds that
\begin{align*}
    \rho^M \leq \frac{\delta}{6 L},
\end{align*}
where we use $\log (x) \geq 1 - \tfrac{1}{x}$.
Furthermore, it holds that
\begin{align*}
    \sum_{r=1}^R \frac{\delta (4 \delta + \mu)}{4 r (r+1)}
    = \frac{\delta (4\delta + \mu)}{4} \sum_{r=1}^R \left( \frac{1}{r} - \frac{1}{r+1} \right) \leq \frac{\delta (4 \delta + \mu)}{4}.
\end{align*}
Thus, from Lemma \ref{lemma:sonata_with_general_subproblem_error}, we obtain the desired result.
\end{proof}

\newpage
\section{Proof of Theorem \ref{theorem:stabilized_sonata}}
\label{sec:proof_of_stabilized_sonata}

\begin{lemma}
Suppose that Assumptions \ref{assumption:strongly_convex} and \ref{assumption:spectral_gap} hold and the following inequality is satisfied:
\begin{align}
    \sum_{i=1}^n \left\| \nabla F_{i,r} (\vx_i^{(r+1)}) \right\|^2 \leq \frac{\lambda^2}{10} \sum_{i=1}^n \left\| \vv_i^{(r)} - \vx_i^{(r+1)} \right\|^2,
\end{align}
Then, when $\sum_{i=1}^n \vh_i^{(0)} = \mathbf{0}$ and $M=1$, it holds that
\begin{align}
\label{eq:recursion_of_loss}
    &f (\vx^\star) + \frac{\lambda}{2 n} \sum_{i=1}^n \left\| \vv_i^{(r)} - \vx^\star \right\|^2 
    + \frac{1}{2 \delta} \mathcal{E}^{(r)}
    + 2 \delta \Xi^{(r)} \\
    &\geq f (\bar{\vx}^{(r+1)}) \nonumber
    + \frac{\mu + \lambda}{2 n} \sum_{i=1}^n \left\| \vv_i^{(r+1)} - \vx^\star \right\|^2 
    + \frac{\lambda + \mu}{2} \left( \frac{1}{\rho^2} - 1 \right) \Xi^{(r+1)} 
    + \frac{\lambda}{8}\left\| \bar{\vv}^{(r)} - \bar{\vv}^{(r+1)} \right\|^2 \\
    &\quad + \left( \frac{\lambda}{12} - \frac{3 \delta}{2} \right) \left\| \bar{\vv}^{(r)} - \bar{\vx}^{(r+1)} \right\|^2, \nonumber
\end{align}
where $\Xi^{(r)} \coloneqq \tfrac{1}{n} \sum_{i=1}^n \left\| \vv_i^{(r)} - \bar{\vv}^{(r)} \right\|^2$ and $\mathcal{E}^{(r)} \coloneqq \tfrac{1}{n} \sum_{i=1}^n \left\| \vh_i^{(r)} - \nabla h_i (\bar{\vv}^{(r)}) \right\|^2$.
\end{lemma}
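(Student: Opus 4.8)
The plan is to prove a per-node one-step inequality, sum it over $i$, and convert the per-node quantities into the averaged potential of Eq.~\eqref{eq:recursion_of_loss}, paralleling the PDO lemma (Eq.~\eqref{eq:recursion_of_loss_for_sonata}) but accounting for the stabilization variable $\vv_i^{(r+\frac12)}$. First I would record the two optimality conditions driving the update. The stationarity of line~5 of Alg.~\ref{algorithm:stabilized_sonata} reads $\nabla f_i(\vx_i^{(r+1)}) + \vh_i^{(r)} + \mu(\vv_i^{(r+\frac12)} - \vx_i^{(r+1)}) + \lambda(\vv_i^{(r+\frac12)} - \vv_i^{(r)}) = \mathbf{0}$, while by definition $\nabla F_{i,r}(\vx_i^{(r+1)}) = \nabla f_i(\vx_i^{(r+1)}) + \vh_i^{(r)} + \lambda(\vx_i^{(r+1)} - \vv_i^{(r)})$. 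Subtracting these yields the key identity $\vx_i^{(r+1)} - \vv_i^{(r+\frac12)} = \tfrac{1}{\mu+\lambda}\nabla F_{i,r}(\vx_i^{(r+1)})$, which is the heart of the stabilization: the gap between the inexact prox solution and the stabilized iterate is exactly the subproblem residual, so it vanishes when the subproblem is solved exactly (recovering PDO) and in general is controlled by the right-hand side of Eq.~\eqref{eq:condition_of_subproblem}. Averaging and using Lemma~\ref{lemma:average_conservation} also gives $\bar\vx^{(r+1)} - \bar\vv^{(r+1)} = \tfrac{1}{n(\mu+\lambda)}\sum_i\nabla F_{i,r}(\vx_i^{(r+1)})$, so the two averages are close in the same controlled sense.

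Next I would combine two convexity estimates. Applying $\mu$-strong convexity of $f_i$ (Eq.~\eqref{eq:strongly_convex}) at $\vx^\star$ versus $\vx_i^{(r+1)}$, together with the $(\mu+\lambda)$-strongly-convex variational inequality of the line~5 minimizer evaluated at $\vx^\star$, the $\tfrac{\mu}{2}\|\vx^\star - \vx_i^{(r+1)}\|^2$ terms cancel and one is left with a lower bound featuring $\tfrac{\mu+\lambda}{2}\|\vx^\star - \vv_i^{(r+\frac12)}\|^2$, the tracking inner product $-\langle\vh_i^{(r)}, \vx^\star - \vv_i^{(r+\frac12)}\rangle$, the stabilized-gap quantities in $\vv_i^{(r+\frac12)} - \vx_i^{(r+1)}$, and $\tfrac{\lambda}{2}\|\vv_i^{(r+\frac12)} - \vv_i^{(r)}\|^2$. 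I would then pass from $f_i(\vx_i^{(r+1)})$ to $f_i(\bar\vx^{(r+1)})$ by convexity so that $f(\bar\vx^{(r+1)})$ appears after summation. Summing over $i$, I use $\sum_i\vh_i^{(r)} = \mathbf{0}$ (Eq.~\eqref{eq:average_y}) and average conservation to kill the mean components of both the $\vh_i^{(r)}$ inner product and the $\langle\nabla f_i(\bar\vx^{(r+1)}), \vx_i^{(r+1)} - \bar\vx^{(r+1)}\rangle$ term; pairing $\nabla f_i(\bar\vx^{(r+1)})$ with the tracking target and using $\nabla f_i + \nabla h_i = \nabla f$ splits the remainder into a tracking-error piece $\langle\vh_i^{(r)} - \nabla h_i(\bar\vv^{(r)}), \cdot\rangle$ and a drift piece $\langle\nabla h_i(\bar\vv^{(r)}) - \nabla h_i(\bar\vx^{(r+1)}), \cdot\rangle$.

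Finally I would bound the remaining scalar terms. The tracking-error piece is handled by Young's inequality (Eq.~\eqref{eq:inner_product}) with $\alpha = \delta$, producing the $\tfrac{1}{2\delta}\mathcal{E}^{(r)}$ charge on the left. The drift piece is controlled by the similarity bound (Eq.~\eqref{eq:similarity}), $\tfrac1n\sum_i\|\nabla h_i(\bar\vv^{(r)}) - \nabla h_i(\bar\vx^{(r+1)})\|^2 \leq \delta^2\|\bar\vv^{(r)} - \bar\vx^{(r+1)}\|^2$, which after another application of Eq.~\eqref{eq:inner_product} feeds the $(\tfrac{\lambda}{12} - \tfrac{3\delta}{2})\|\bar\vv^{(r)} - \bar\vx^{(r+1)}\|^2$ budget. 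The quadratic $\tfrac{\lambda}{2n}\sum_i\|\vv_i^{(r+\frac12)} - \vv_i^{(r)}\|^2$ is lower bounded by $\tfrac{\lambda}{2}\|\bar\vv^{(r+1)} - \bar\vv^{(r)}\|^2$ via Jensen (Eq.~\eqref{eq:sum}), supplying the $\tfrac{\lambda}{8}\|\bar\vv^{(r)} - \bar\vv^{(r+1)}\|^2$ term with slack for the Young steps. The term $\tfrac{\mu+\lambda}{2n}\sum_i\|\vx^\star - \vv_i^{(r+\frac12)}\|^2$ is converted through the average-shift identity (Eq.~\eqref{eq:average_shift}) and the spectral gap (Eq.~\eqref{eq:spectral_gap}) into $\tfrac{\mu+\lambda}{2n}\sum_i\|\vv_i^{(r+1)} - \vx^\star\|^2 + \tfrac{\mu+\lambda}{2}(\tfrac{1}{\rho^2} - 1)\Xi^{(r+1)}$, exactly as in the PDO argument.

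The main obstacle is the $\vx$-versus-$\vv$ bookkeeping. Because the inner iterate $\vx_i^{(r+1)}$ is never gossiped, its deviation from $\bar\vx^{(r+1)}$ and the mismatch between the $\vx$-centered direction $\vx_i^{(r+1)} - \bar\vx^{(r+1)}$ and the $\vv$-centered direction $\vv_i^{(r+\frac12)} - \bar\vv^{(r+1)}$ (whose difference is again $\tfrac{1}{\mu+\lambda}$ times a centered subproblem residual by the key identity) must be traded against $\vv$-consensus quantities using Eq.~\eqref{eq:condition_of_subproblem}. This trade is precisely what forces the extra $2\delta\Xi^{(r)}$ charge on the left-hand side, and matching all of the constants so that the residuals fit inside the stated budgets is where the factor $\tfrac{\lambda^2}{10}$ in Eq.~\eqref{eq:condition_of_subproblem} and the exact coefficients $\tfrac{\lambda}{8}$ and $\tfrac{\lambda}{12} - \tfrac{3\delta}{2}$ become delicate.
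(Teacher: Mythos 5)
Your proposal is correct and follows essentially the same route as the paper's proof: two applications of strong convexity of $f_i$, the $(\mu+\lambda)$-strong convexity of the line-5 objective (your stationarity identity $\vx_i^{(r+1)} - \vv_i^{(r+\frac{1}{2})} = \tfrac{1}{\mu+\lambda}\nabla F_{i,r}(\vx_i^{(r+1)})$ is just the explicit form of the step the paper carries out via the function $G_{i,r}$), the use of $\sum_{i=1}^n \vh_i^{(r)} = \mathbf{0}$ to split the tracking term into an error piece charged to $\tfrac{1}{2\delta}\mathcal{E}^{(r)}$ via Young's inequality and a drift piece charged to the $\left\| \bar{\vv}^{(r)} - \bar{\vx}^{(r+1)} \right\|^2$ budget via Definition~\ref{assumption:similarity}, and the average-shift-plus-spectral-gap conversion of the $\vv_i^{(r+\frac{1}{2})}$ terms into $\vv_i^{(r+1)}$ terms. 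The constant bookkeeping you flag as delicate is resolved in the paper exactly as you anticipate: the Young steps consume $\tfrac{3\lambda}{8}$ of the available $\tfrac{\lambda}{2}$ (leaving $\tfrac{\lambda}{8}$), and the $\tfrac{\lambda^2}{10}$ condition turns the residual penalty into the $\tfrac{\lambda}{12}$ coefficient, with the drift and consensus charges producing $-\tfrac{3\delta}{2}$ and $2\delta\Xi^{(r)}$.
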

\begin{proof}
We have
\begin{align*}
    &f (\vx^\star) + \frac{\lambda}{2 n} \sum_{i=1}^n \left\| \vv_i^{(r)} - \vx^\star \right\|^2 \\
    &= \frac{1}{n} \sum_{i=1}^n \left( f_i (\vx^\star) + \frac{\lambda}{2}\left\| \vv_i^{(r)} - \vx^\star \right\|^2 \right) \\
    &\stackrel{(\ref{eq:strongly_convex})}{\geq} \frac{1}{n} \sum_{i=1}^n \left( f_i (\vx_i^{(r+1)}) 
    + \left\langle \nabla f_i (\vx_i^{(r+1)}), \vx^\star - \vx_i^{(r+1)} \right\rangle 
    + \frac{\mu}{2} \left\| \vx_i^{(r+1)} - \vx^\star \right\|^2
    + \frac{\lambda}{2}\left\| \vv_i^{(r)} - \vx^\star \right\|^2 \right) \\
    &\stackrel{(\ref{eq:strongly_convex})}{\geq} \frac{1}{n} \sum_{i=1}^n \left( f_i (\bar{\vx}^{(r+1)})
    + \left\langle \nabla f_i (\bar{\vx}^{(r+1)}), \vx_i^{(r+1)} - \bar{\vx}^{(r+1)} \right\rangle 
    + \left\langle \nabla f_i (\vx_i^{(r+1)}), \vx^\star - \vx_i^{(r+1)} \right\rangle \right. \\
    &\left. \qquad + \frac{\mu}{2} \left\| \vx_i^{(r+1)} - \bar{\vx}^{(r+1)} \right\|^2
    + \frac{\mu}{2} \left\| \vx_i^{(r+1)} - \vx^\star \right\|^2
    + \frac{\lambda}{2}\left\| \vv_i^{(r)} - \vx^\star \right\|^2 \right).
\end{align*}
Using the following equation,
\begin{align*}
    \sum_{i=1}^n \left\langle \nabla f_i (\bar{\vx}^{(r+1)}), \vx_i^{(r+1)} - \bar{\vx}^{(r+1)} \right\rangle 
    = \sum_{i=1}^n \left\langle - \nabla h_i (\bar{\vx}^{(r+1)}), \vx_i^{(r+1)} - \bar{\vx}^{(r+1)} \right\rangle,
\end{align*}
we get
\begin{align*}
    &f (\vx^\star) + \frac{\lambda}{2 n} \sum_{i=1}^n \left\| \vv_i^{(r)} - \vx^\star \right\|^2 \\
    &\geq f (\bar{\vx}^{(r+1)}) + \frac{1}{n} \sum_{i=1}^n \Biggl[
    \left\langle - \nabla h_i (\bar{\vx}^{(r+1)}), \vx_i^{(r+1)} - \bar{\vx}^{(r+1)} \right\rangle \\ 
    &\qquad+ \underbrace{\left\langle \nabla f_i (\vx_i^{(r+1)}), \vx^\star - \vx_i^{(r+1)} \right\rangle
    + \frac{\mu}{2} \left\| \vx_i^{(r+1)} - \vx^\star \right\|^2
    + \frac{\lambda}{2}\left\| \vv_i^{(r)} - \vx^\star \right\|^2}_{\mathcal{T}^\star_i}
    + \frac{\mu}{2} \left\| \vx_i^{(r+1)} - \bar{\vx}^{(r+1)} \right\|^2 \Biggr].    
\end{align*}
Note that $h_i (\vx) \coloneqq f (\vx) - f_i (\vx)$.
Using Eq.~(\ref{eq:average_y}), we obtain
\begin{align*}
    \frac{1}{n} \sum_{i=1}^n \mathcal{T}_i^\star 
    &\stackrel{(\ref{eq:average_y})}{=} \frac{1}{n} \sum_{i=1}^n \left[ \left\langle \nabla f_i (\vx_i^{(r+1)}) + \vh_i^{(r)}, \vx^\star \right\rangle
    + \left\langle \nabla f_i (\vx_i^{(r+1)}), - \vx_i^{(r+1)} \right\rangle
    + \frac{\mu}{2} \left\| \vx_i^{(r+1)} - \vx^\star \right\|^2
    + \frac{\lambda}{2}\left\| \vv_i^{(r)} - \vx^\star \right\|^2 \right].
\end{align*}
We define $G_{i,r} (\vv) \coloneqq \langle \nabla f_i (\vx_i^{(r+1)}) + \vh_i^{(r)}, \vv \rangle + \frac{\mu}{2} \| \vv - \vx_i^{(r+1)} \|^2 + \frac{\lambda}{2} \| \vv - \vv_i^{(r)} \|^2 $.
By using that fact that $G_{i,r}$ is $(\mu + \lambda)$-strongly convex, we get
\begin{align*}
    \frac{1}{n} \sum_{i=1}^n \mathcal{T}_i^\star 
    &\stackrel{(\ref{eq:strongly_convex})}{\geq} \frac{1}{n} \sum_{i=1}^n \left[ \left\langle \nabla f_i (\vx_i^{(r+1)}) + \vh_i^{(r)}, \vv_i^{(r+\frac{1}{2})} \right\rangle
    + \left\langle \nabla f_i (\vx_i^{(r+1)}), - \vx_i^{(r+1)} \right\rangle
    + \frac{\mu}{2} \left\| \vx_i^{(r+1)} - \vv_i^{(r+\frac{1}{2})} \right\|^2 \right. \\
    &\qquad \left. + \frac{\lambda}{2}\left\| \vv_i^{(r)} - \vv_i^{(r+\frac{1}{2})} \right\|^2 
    + \frac{\lambda + \mu}{2} \left\| \vv_i^{(r+\frac{1}{2})} - \vx^\star \right\|^2 \right] \\
    &\stackrel{(\ref{eq:average_shift})}{=} \frac{1}{n} \sum_{i=1}^n \left[ \left\langle \nabla f_i (\vx_i^{(r+1)}) + \vh_i^{(r)}, \vv_i^{(r+\frac{1}{2})} \right\rangle
    + \left\langle \nabla f_i (\vx_i^{(r+1)}), - \vx_i^{(r+1)} \right\rangle
    + \frac{\mu}{2} \left\| \vx_i^{(r+1)} - \vv_i^{(r+\frac{1}{2})} \right\|^2 \right. \\
    &\qquad \left. + \frac{\lambda}{2}\left\| \vv_i^{(r)} - \vv_i^{(r+\frac{1}{2})} \right\|^2 
    + \frac{\lambda + \mu}{2} \left\| \vv_i^{(r+\frac{1}{2})} - \bar{\vv}^{(r+1)} \right\|^2
    + \frac{\lambda + \mu}{2} \left\| \bar{\vv}^{(r+1)} - \vx^\star \right\|^2 \right] \\
    &\stackrel{(\ref{eq:average_shift})}{=} \frac{1}{n} \sum_{i=1}^n \left[ \left\langle \nabla f_i (\vx_i^{(r+1)}) + \vh_i^{(r)}, \vv_i^{(r+\frac{1}{2})} \right\rangle
    + \left\langle \nabla f_i (\vx_i^{(r+1)}), - \vx_i^{(r+1)} \right\rangle
    + \frac{\mu}{2} \left\| \vx_i^{(r+1)} - \vv_i^{(r+\frac{1}{2})} \right\|^2 \right. \\
    &\qquad \left. + \frac{\lambda}{2}\left\| \vv_i^{(r)} - \vv_i^{(r+\frac{1}{2})} \right\|^2 
    + \frac{\lambda + \mu}{2} \left\| \vv_i^{(r+\frac{1}{2})} - \bar{\vv}^{(r+1)} \right\|^2 \right.\\
    &\qquad \left. + \frac{\lambda + \mu}{2} \left\| \vv_i^{(r+1)} - \vx^\star \right\|^2
    - \frac{\lambda + \mu}{2} \left\| \vv_i^{(r+1)} - \bar{\vv}^{(r+1)} \right\|^2
    \right].
\end{align*}
Using the above inequality, we obtain
\begin{align*}
    &f (\vx^\star) + \frac{\lambda}{2 n} \sum_{i=1}^n \left\| \vv_i^{(r)} - \vx^\star \right\|^2 \\
    &\geq f (\bar{\vx}^{(r+1)})
    + \frac{\mu + \lambda}{2 n} \sum_{i=1}^n \left\| \vv_i^{(r+1)} - \vx^\star \right\|^2  \\
    &\quad + \frac{1}{n} \sum_{i=1}^n \Biggl[ 
    \left\langle - \nabla h_i (\bar{\vx}^{(r+1)}), \vx_i^{(r+1)} - \bar{\vx}^{(r+1)} \right\rangle  
    + \frac{\mu}{2} \left\| \vx_i^{(r+1)} - \vv_i^{(r+\frac{1}{2})} \right\|^2 
    + \frac{\lambda}{2}\left\| \vv_i^{(r)} - \vv_i^{(r+\frac{1}{2})} \right\|^2 \\
    &\quad + \frac{\lambda + \mu}{2} \left\| \vv_i^{(r+\frac{1}{2})} - \bar{\vv}^{(r+1)} \right\|^2
    - \frac{\lambda + \mu}{2} \left\| \vv_i^{(r+1)} - \bar{\vv}^{(r+1)} \right\|^2 \\
    &\quad + \underbrace{\left\langle \nabla f_i (\vx_i^{(r+1)}) + \vh_i^{(r)}, \vv_i^{(r+\frac{1}{2})} \right\rangle
    + \left\langle \nabla f_i (\vx_i^{(r+1)}), - \vx_i^{(r+1)} \right\rangle}_{\mathcal{T}^{\star\star}_i} \Biggr].    
\end{align*}
From the definition of $F_{i,r}$, we have
\begin{align*}
    \nabla F_{i,r} (\vx) = \nabla f_i (\vx) + \vh_i^{(r)} + \lambda (\vx - \vv_i^{(r)}).
\end{align*}
By using the above equality, $\mathcal{T}_i^{\star\star}$ can be rewritten as follows:
\begin{align*}
    \mathcal{T}_i^{\star\star}
    &= \left\langle \nabla f_i (\vx_i^{(r+1)}), \vv_i^{(r+\frac{1}{2})} - \vx_i^{(r+1)} \right\rangle
    + \left\langle \vh_i^{(r)},  \vv_i^{(r+\frac{1}{2})} \right\rangle \\
    &= \left\langle \nabla F_{i,r} (\vx_i^{(r+1)}) + \lambda (\vv_i^{(r)} - \vx_i^{(r+1)}), \vv_i^{(r+\frac{1}{2})} - \vx_i^{(r+1)} \right\rangle
    + \left\langle \vh_i^{(r)},  \vx_i^{(r+1)} \right\rangle \\
    &= \left\langle \nabla F_{i,r} (\vx_i^{(r+1)}), \vv_i^{(r)} - \vx_i^{(r+1)} \right\rangle
    + \lambda \left\| \vv_i^{(r)} - \vx_i^{(r+1)} \right\|^2 
    +  \left\langle \vh_i^{(r)},  \vx_i^{(r+1)} \right\rangle \\
    &\qquad + \underbrace{\left\langle \nabla F_{i,r} (\vx_i^{(r+1)}) + \lambda (\vv_i^{(r)} - \vx_i^{(r+1)}), \vv_i^{(r+\frac{1}{2})} - \vv_i^{(r)} \right\rangle}_{\mathcal{T}^{\star\star\star}_i}.
\end{align*}
Then, $\mathcal{T}^{\star\star\star}_i$ can be bounded as follows:
\begin{align*}
    \mathcal{T}^{\star\star\star}_i 
    &\stackrel{(\ref{eq:inner_product})}{\geq} - \frac{2}{3\lambda} \left\| \nabla F_{i,r} (\vx_i^{(r+1)}) + \lambda (\vv_i^{(r)} - \vx_i^{(r+1)}) \right\|^2
    - \frac{3\lambda}{8} \left\| \vv_i^{(r+\frac{1}{2})} - \vv_i^{(r)} \right\|^2 \\
    &= - \frac{2}{3\lambda} \left\| \nabla F_{i,r} (\vx_i^{(r+1)}) \right\|^2 
    - \frac{4}{3} \left\langle \nabla F_{i,r} (\vx_i^{(r+1)}), \vv_i^{(r)} - \vx_i^{(r+1)} \right\rangle 
    - \frac{2\lambda}{3} \left\| \vv_i^{(r)} - \vx_i^{(r+1)} \right\|^2
    - \frac{3\lambda}{8} \left\| \vv_i^{(r+\frac{1}{2})} - \vv_i^{(r)} \right\|^2.
\end{align*}
Using the above inequality, we get
\begin{align*}
    \mathcal{T}_i^{\star\star}
    &\geq \frac{\lambda}{3} \left\| \vv_i^{(r)} - \vx_i^{(r+1)} \right\|^2 
    +  \left\langle \vh_i^{(r)},  \vx_i^{(r+1)} \right\rangle 
    - \frac{2}{3\lambda} \left\| \nabla F_{i,r} (\vx_i^{(r+1)}) \right\|^2 
    - \frac{1}{3} \left\langle \nabla F_{i,r} (\vx_i^{(r+1)}), \vv_i^{(r)} - \vx_i^{(r+1)} \right\rangle \\
    &\qquad  - \frac{3\lambda}{8} \left\| \vv_i^{(r+\frac{1}{2})} - \vv_i^{(r)} \right\|^2 \\
    &\stackrel{(\ref{eq:inner_product})}{\geq} \frac{\lambda}{6} \left\| \vv_i^{(r)} - \vx_i^{(r+1)} \right\|^2 
    +  \left\langle \vh_i^{(r)},  \vx_i^{(r+1)} \right\rangle 
    - \frac{5}{6\lambda} \left\| \nabla F_{i,r} (\vx_i^{(r+1)}) \right\|^2 
    - \frac{3\lambda}{8} \left\| \vv_i^{(r+\frac{1}{2})} - \vv_i^{(r)} \right\|^2.
\end{align*}
Using the assumption that $\sum_{i=1}^n \left\| \nabla F_{i,r} (\vx_i^{(r+1)}) \right\|^2 \leq \frac{\lambda^2}{10} \sum_{i=1}^n \left\| \vv_i^{(r)} - \vx_i^{(r+1)} \right\|^2$, we get
\begin{align*}
    \sum_{i=1}^n \mathcal{T}_i^{\star\star}
    &\geq \sum_{i=1}^n \left[ \frac{\lambda}{12} \left\| \vv_i^{(r)} - \vx_i^{(r+1)} \right\|^2 
    +  \left\langle \vh_i^{(r)},  \vx_i^{(r+1)} \right\rangle 
    - \frac{3\lambda}{8} \left\| \vv_i^{(r+\frac{1}{2})} - \vv_i^{(r)} \right\|^2 \right].
\end{align*}

Using the above inequalities, we get
\begin{align*}
    &f (\vx^\star) + \frac{\lambda}{2 n} \sum_{i=1}^n \left\| \vv_i^{(r)} - \vx^\star \right\|^2 \\
    &\geq f (\bar{\vx}^{(r+1)})
    + \frac{\mu + \lambda}{2 n} \sum_{i=1}^n \left\| \vv_i^{(r+1)} - \vx^\star \right\|^2  \\
    &\quad + \frac{1}{n} \sum_{i=1}^n \Biggl[ 
    \frac{\mu}{2} \left\| \vx_i^{(r+1)} - \vv_i^{(r+\frac{1}{2})} \right\|^2 
    + \frac{\lambda}{8}\left\| \vv_i^{(r)} - \vv_i^{(r+\frac{1}{2})} \right\|^2 
    + \frac{\lambda + \mu}{2} \left\| \vv_i^{(r+\frac{1}{2})} - \bar{\vv}^{(r+1)} \right\|^2 
    - \frac{\lambda + \mu}{2} \left\| \vv_i^{(r+1)} - \bar{\vv}^{(r+1)} \right\|^2 \\ 
    &\quad
    + \frac{\lambda}{12} \left\| \vv_i^{(r)} - \vx_i^{(r+1)} \right\|^2 
    + \underbrace{\left\langle \vh_i^{(r)},  \vx_i^{(r+1)} \right\rangle 
    + \left\langle - \nabla h_i (\bar{\vx}^{(r+1)}), \vx_i^{(r+1)} - \bar{\vx}^{(r+1)} \right\rangle}_{\mathcal{T}_i^{\star\star\star\star}} \Biggr]
\end{align*}
$\mathcal{T}_i^{\star\star\star\star}$ can be bounded by blow as follows:
\begin{align*}
    \frac{1}{n} \sum_{i=1}^n \mathcal{T}_i^{\star\star\star\star}
    &\stackrel{(\ref{eq:average_y})}{=} \frac{1}{n} \sum_{i=1}^n \left\langle \vh_i^{(r)} - \nabla h_i (\bar{\vx}^{(r+1)}), \vx_i^{(r+1)} - \bar{\vv}^{(r)} \right\rangle \\
    &= \frac{1}{n} \sum_{i=1}^n \left[ \left\langle \vh_i^{(r)} - \nabla h_i (\bar{\vv}^{(r)}), \vx_i^{(r+1)} - \bar{\vv}^{(r)} \right\rangle
    + \left\langle \nabla h_i (\bar{\vv}^{(r)}) - \nabla h_i (\bar{\vx}^{(r+1)}), \vx_i^{(r+1)} - \bar{\vv}^{(r)} \right\rangle \right] \\
    &\stackrel{(\ref{eq:inner_product})}{\geq} \frac{1}{n} \sum_{i=1}^n \left[ - \frac{1}{2 \delta} \left\| \vh_i^{(r)} - \nabla h_i (\bar{\vv}^{(r)}) \right\|^2 - \delta \left\| \vx_i^{(r+1)} - \bar{\vv}^{(r)} \right\|^2
    - \frac{1}{2 \delta} \left\| \nabla h_i (\bar{\vv}^{(r)}) - \nabla h_i (\bar{\vx}^{(r+1)}) \right\|^2 \right] \\
    &\stackrel{(\ref{eq:similarity})}{\geq} \frac{1}{n} \sum_{i=1}^n \left[ - \frac{1}{2 \delta} \left\| \vh_i^{(r)} - \nabla h_i (\bar{\vv}^{(r)}) \right\|^2 - \delta \left\| \vx_i^{(r+1)} - \bar{\vv}^{(r)} \right\|^2
    - \frac{\delta}{2} \left\| \bar{\vv}^{(r)} - \bar{\vx}^{(r+1)} \right\|^2 \right] \\
    &\stackrel{(\ref{eq:norm})}{\geq} \frac{1}{n} \sum_{i=1}^n \left[ 
    - \frac{1}{2 \delta} \left\| \vh_i^{(r)} - \nabla h_i (\bar{\vv}^{(r)}) \right\|^2 
    - 2 \delta \left\| \vx_i^{(r+1)} - \vv_i^{(r)} \right\|^2
    - 2 \delta \left\| \vv_i^{(r)} - \bar{\vv}^{(r)} \right\|^2
    - \frac{\delta}{2} \left\| \bar{\vv}^{(r)} - \bar{\vx}^{(r+1)} \right\|^2 \right].
\end{align*}
Using the above inequality, we obtain
\begin{align*}
    &f (\vx^\star) + \frac{\lambda}{2 n} \sum_{i=1}^n \left\| \vv_i^{(r)} - \vx^\star \right\|^2 
    + \frac{1}{2 \delta n} \sum_{i=1}^n \left\| \vh_i^{(r)} - \nabla h_i (\bar{\vv}^{(r)}) \right\|^2
    + \frac{2 \delta}{n} \sum_{i=1}^n \left\| \vv_i^{(r)} - \bar{\vv}^{(r)} \right\|^2\\
    &\geq f (\bar{\vx}^{(r+1)})
    + \frac{\mu + \lambda}{2 n} \sum_{i=1}^n \left\| \vv_i^{(r+1)} - \vx^\star \right\|^2  \\
    &\quad + \frac{1}{n} \sum_{i=1}^n \left[ 
    \frac{\mu}{2} \left\| \vx_i^{(r+1)} - \vv_i^{(r+\frac{1}{2})} \right\|^2 
    + \frac{\lambda}{8}\left\| \vv_i^{(r)} - \vv_i^{(r+\frac{1}{2})} \right\|^2 
    + \frac{\lambda + \mu}{2} \left\| \vv_i^{(r+\frac{1}{2})} - \bar{\vv}^{(r+1)} \right\|^2 
    - \frac{\lambda + \mu}{2} \left\| \vv_i^{(r+1)} - \bar{\vv}^{(r+1)} \right\|^2 \right. \\ 
    &\left. \quad
    + \left( \frac{\lambda}{12} - 2 \delta \right) \left\| \vv_i^{(r)} - \vx_i^{(r+1)} \right\|^2  
    - \frac{\delta}{2} \left\| \bar{\vv}^{(r)} - \bar{\vx}^{(r+1)} \right\|^2 \right] \\
    &\stackrel{(\ref{eq:spectral_gap})}{\geq} f (\bar{\vx}^{(r+1)})
    + \frac{\mu + \lambda}{2 n} \sum_{i=1}^n \left\| \vv_i^{(r+1)} - \vx^\star \right\|^2 
    + \frac{\lambda + \mu}{2 n} \left( \frac{1}{\rho^2} - 1 \right) \sum_{i=1}^n \left\| \vv_i^{(r+1)} - \bar{\vv}^{(r+1)} \right\|^2 \\
    &\quad + \frac{1}{n} \sum_{i=1}^n \left[ 
    \frac{\mu}{2} \left\| \vx_i^{(r+1)} - \vv_i^{(r+\frac{1}{2})} \right\|^2 
    + \frac{\lambda}{8}\left\| \vv_i^{(r)} - \vv_i^{(r+\frac{1}{2})} \right\|^2
    + \left( \frac{\lambda}{12} - 2 \delta \right) \left\| \vv_i^{(r)} - \vx_i^{(r+1)} \right\|^2  
    - \frac{\delta}{2} \left\| \bar{\vv}^{(r)} - \bar{\vx}^{(r+1)} \right\|^2 \right].
\end{align*}
Using Eq.~(\ref{eq:average_y}), we get the desired result.
\end{proof}

\begin{lemma}
Suppose that Assumptions \ref{assumption:strongly_convex} and \ref{assumption:spectral_gap} hold and the following inequality is satisfied:
\begin{align}
    \sum_{i=1}^n \left\| \nabla F_{i,r} (\vx_i^{(r+1)}) \right\|^2 \leq \frac{\lambda^2}{10} \sum_{i=1}^n \left\| \vv_i^{(r)} - \vx_i^{(r+1)} \right\|^2.
\end{align}
Then, when $\lambda = 20 \delta$, $\sum_{i=1}^n \vh_i^{(0)} = \mathbf{0}$, $M=1$, and $\rho \leq \tfrac{\delta}{5 L}$, it holds that
\begin{align}
\label{eq:simple_recursion}
    &f (\vx^\star) 
    + \frac{10 \delta}{n} \sum_{i=1}^n \left\| \vv_i^{(r)} - \vx^\star \right\|^2 
    + \frac{1}{\delta} \mathcal{E}^{(r)}
    + 2 \delta \Xi^{(r)} \nonumber \\
    &\geq f (\bar{\vx}^{(r+1)})
    + \left( 1 + \frac{\mu}{20 \delta} \right) \left[ \frac{10 \delta}{n}  \sum_{i=1}^n \left\| \vv_i^{(r+1)} - \vx^\star \right\|^2 
    + \frac{1}{\delta} \mathcal{E}^{(r+1)}
    + 2 \delta \Xi^{(r+1)} \right],
\end{align}
where $\Xi^{(r)} \coloneqq \tfrac{1}{n} \sum_{i=1}^n \left\| \vv_i^{(r)} - \bar{\vv}^{(r)} \right\|^2$ and $\mathcal{E}^{(r)} \coloneqq \tfrac{1}{n} \sum_{i=1}^n \left\| \vh_i^{(r)} - \nabla h_i (\bar{\vv}^{(r)}) \right\|^2$.
\end{lemma}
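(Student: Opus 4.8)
The plan is to combine the one-step descent inequality Eq.~(\ref{eq:recursion_of_loss}) from the previous lemma with a suitably scaled copy of the gradient-tracking error recursion of Lemma~\ref{lemma:general_lemma_for_e}, choosing the scaling so that the potential $\Phi^{(r)} \coloneqq \frac{10\delta}{n}\sum_{i=1}^n \|\vv_i^{(r)}-\vx^\star\|^2 + \frac{1}{\delta}\mathcal{E}^{(r)} + 2\delta\Xi^{(r)}$ contracts by the factor $(1+\frac{\mu}{20\delta})$. The target Eq.~(\ref{eq:simple_recursion}) is exactly $f(\vx^\star)+\Phi^{(r)} \geq f(\bar{\vx}^{(r+1)}) + (1+\frac{\mu}{20\delta})\Phi^{(r+1)}$, so the whole content is a careful term-by-term matching.

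First I would instantiate Lemma~\ref{lemma:general_lemma_for_e} with the role of $\vx_i^{(r)}$ played by $\vv_i^{(r)}$ (since the $\vh$-update in Alg.~\ref{algorithm:stabilized_sonata} tracks gradients at the $\vv$-iterates, matching the definitions of $\mathcal{E}^{(r)}$ and $\Xi^{(r)}$ used here), and simplify the resulting bound using $\rho \leq \frac{\delta}{5L} \leq \frac{1}{5}$, where the second inequality follows from $\delta\leq L$ in Lemma~\ref{remark:relationship_between_delta_and_l}. Since $\rho \leq \frac14$ gives $4\rho^2 \leq \rho$, the error recursion collapses to $\mathcal{E}^{(r+1)} \leq \rho\mathcal{E}^{(r)} + \rho\delta^2\|\bar{\vv}^{(r+1)}-\bar{\vv}^{(r)}\|^2 + 32L^2\Xi^{(r+1)}$, exactly mirroring the step that produced Eq.~(\ref{eq:simple_recursion_of_error_for_sonata}) in the SONATA analysis.

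Next I would add $c\coloneqq\frac{1}{\delta}(1+\frac{\mu}{20\delta})$ times this error recursion to Eq.~(\ref{eq:recursion_of_loss}). With $\lambda=20\delta$ the coefficient of $\|\vv_i^{(r)}-\vx^\star\|^2$ on the left is already $\frac{10\delta}{n}$, and that of $\|\vv_i^{(r+1)}-\vx^\star\|^2$ on the right is $\frac{\mu+\lambda}{2n} = (1+\frac{\mu}{20\delta})\frac{10\delta}{n}$, so those two terms match $\Phi^{(r)}$ and $(1+\frac{\mu}{20\delta})\Phi^{(r+1)}$ automatically; the scaling $c$ is chosen precisely so that the $\mathcal{E}^{(r+1)}$ term is $(1+\frac{\mu}{20\delta})\frac{1}{\delta}\mathcal{E}^{(r+1)}$. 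It then remains to verify three scalar inequalities, each reducing to the hypotheses $\rho\leq\frac{\delta}{5L}$, $\delta\leq L$, and $\mu\leq L$ (the last automatic for a $\mu$-strongly-convex $L$-smooth function): (i) the accumulated coefficient of $\mathcal{E}^{(r)}$ on the left, $\frac{1}{2\delta}+c\rho$, is $\leq\frac{1}{\delta}$, i.e. $\rho(1+\frac{\mu}{20\delta})\leq\frac12$, which holds since $\frac{\delta}{5L}+\frac{\mu}{100L}\leq\frac{21}{100}$; (ii) the net coefficient of $\Xi^{(r+1)}$, namely $\frac{\lambda+\mu}{2}(\frac{1}{\rho^2}-1)-32cL^2$, is at least $2\delta(1+\frac{\mu}{20\delta})$, which after factoring out $(1+\frac{\mu}{20\delta})$ reduces to $\frac{10\delta}{\rho^2}\geq 12\delta+\frac{32L^2}{\delta}$ and is handled by $\frac{1}{\rho^2}\geq\frac{25L^2}{\delta^2}$; and (iii) the leftover coefficients $\frac{\lambda}{8}-c\rho\delta^2$ of $\|\bar{\vv}^{(r)}-\bar{\vv}^{(r+1)}\|^2$ and $\frac{\lambda}{12}-\frac{3\delta}{2}$ of $\|\bar{\vv}^{(r)}-\bar{\vx}^{(r+1)}\|^2$ are nonnegative, so those terms are simply dropped.

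I expect the only real bookkeeping obstacle to be keeping the $(1+\frac{\mu}{20\delta})$ factor consistent across all terms while discharging inequality (ii): one must pull $(1+\frac{\mu}{20\delta})$ out of both $\frac{\lambda+\mu}{2}(\frac{1}{\rho^2}-1)$ and $32cL^2$ \emph{before} comparing, so that no hidden extra dependence on $\mu/\delta$ sneaks into the $\Xi^{(r+1)}$ estimate. Everything else is routine term-matching, and the resulting inequality is exactly Eq.~(\ref{eq:simple_recursion}).
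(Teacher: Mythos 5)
Your proposal is correct and follows essentially the same route as the paper's proof: the paper likewise simplifies Lemma~\ref{lemma:general_lemma_for_e} via $\rho \leq \tfrac{1}{4}$ to get $\rho \mathcal{E}^{(r)} \geq \mathcal{E}^{(r+1)} - \rho\delta^2 \|\bar{\vv}^{(r+1)} - \bar{\vv}^{(r)}\|^2 - 32L^2\Xi^{(r+1)}$, adds $\tfrac{1}{\delta}(1+\tfrac{\mu}{\lambda})$ times this to Eq.~(\ref{eq:recursion_of_loss}), and then verifies the same three scalar inequalities (including factoring $(1+\tfrac{\mu}{20\delta})$ out of the $\Xi^{(r+1)}$ coefficient) before dropping the nonnegative leftover terms. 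Your numerical checks (e.g.\ $\rho(1+\tfrac{\mu}{20\delta}) \leq \tfrac{21}{100}$ and $\tfrac{10\delta}{\rho^2} \geq 12\delta + \tfrac{32L^2}{\delta}$) match the paper's conditions exactly.
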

\begin{proof}
Since $\delta \leq L$, $\rho \leq \tfrac{\delta}{5 L}$ implies $\rho \leq \tfrac{1}{4}$.
Thus, using Lemma \ref{lemma:general_lemma_for_e}, we obtain
\begin{align}
\label{eq:recursion_of_error_for_stabilized_sonata}
    \rho \mathcal{E}^{(r)} \geq \mathcal{E}^{(r+1)} - \rho \delta^2 \left\| \bar{\vv}^{(r+1)} - \bar{\vv}^{(r)} \right\|^2 - 32 L^2 \Xi^{(r+1)}.
\end{align}
By calculating Eq.~(\ref{eq:recursion_of_loss}) $+$ Eq.~(\ref{eq:recursion_of_error_for_stabilized_sonata}) $\times \tfrac{1}{\delta} (1 + \tfrac{\mu}{\lambda})$, we obtain
\begin{align*}
    &f (\vx^\star) 
    + \frac{\lambda}{2 n} \sum_{i=1}^n \left\| \vv_i^{(r)} - \vx^\star \right\|^2 
    + \frac{1}{\delta} \left( \frac{1}{2} +  \rho \left( 1 + \frac{\mu}{\lambda} \right) \right) \mathcal{E}^{(r)}
    + 2 \delta \Xi^{(r)} \\
    &\geq f (\bar{\vx}^{(r+1)})
    + \frac{\mu + \lambda}{2 n} \sum_{i=1}^n \left\| \vv_i^{(r+1)} - \vx^\star \right\|^2 
    + \frac{1}{\delta} \left( 1 + \frac{\mu}{\lambda} \right) \mathcal{E}^{(r+1)} 
    + \left( 1 + \frac{\mu}{\lambda} \right) \left( \frac{\lambda}{2} \left( \frac{1}{\rho^2} - 1 \right) - \frac{32 L^2}{\delta} \right) \Xi^{(r+1)} \\
    &\quad + \left( \frac{\lambda}{8} - \left( 1 + \frac{\mu}{\lambda} \right) \rho \delta \right) \left\| \bar{\vv}^{(r)} - \bar{\vv}^{(r+1)} \right\|^2 
    + \left( \frac{\lambda}{12} - \frac{3 \delta}{2} \right) \left\| \bar{\vv}^{(r)} - \bar{\vx}^{(r+1)} \right\|^2,
\end{align*}
By inserting $\lambda = 20 \delta$, we obtain
\begin{align*}
    &f (\vx^\star) 
    + \frac{10 \delta}{n} \sum_{i=1}^n \left\| \vv_i^{(r)} - \vx^\star \right\|^2 
    + \frac{1}{\delta} \left( \frac{1}{2} +  \rho \left( 1 + \frac{\mu}{20 \delta} \right) \right) \mathcal{E}^{(r)}
    + 2 \delta \Xi^{(r)} \\
    &\geq f (\bar{\vx}^{(r+1)})
    + \frac{10 \delta}{n} \left( 1 + \frac{\mu}{20 \delta} \right) \sum_{i=1}^n \left\| \vv_i^{(r+1)} - \vx^\star \right\|^2 
    + \frac{1}{\delta} \left( 1 + \frac{\mu}{20 \delta} \right) \mathcal{E}^{(r+1)} \\
    &\quad + \left( 1 + \frac{\mu}{20 \delta} \right) \left( 10 \delta \left( \frac{1}{\rho^2} - 1 \right) - \frac{32 L^2}{\delta} \right) \Xi^{(r+1)}
    + \left( \frac{5 \delta}{2} - \left( 1 + \frac{\mu}{10 \delta} \right) \rho \delta \right) \left\| \bar{\vv}^{(r)} - \bar{\vv}^{(r+1)} \right\|^2.
\end{align*}
$\rho \leq \tfrac{\delta}{5 L}$ implies that 
\begin{align*}
    \frac{1}{2} +  \rho \left( 1 + \frac{\mu}{20 \delta} \right) &\leq 1, \\
    10 \delta \left( \frac{1}{\rho^2} - 1 \right) - \frac{32 L^2}{\delta}  &\geq 2 \delta, \\
    \frac{5 \delta}{2} - \left( 1 + \frac{\mu}{20 \delta} \right) \rho \delta &\geq 0,
\end{align*}
where we use $\mu \leq L$ and $\mu \leq L$.
Using the above inequalities, we get
\begin{align*}
    &f (\vx^\star) 
    + \frac{10 \delta}{n} \sum_{i=1}^n \left\| \vv_i^{(r)} - \vx^\star \right\|^2 
    + \frac{1}{\delta} \mathcal{E}^{(r)}
    + 2 \delta \Xi^{(r)} \\
    &\geq f (\bar{\vx}^{(r+1)})
    + \frac{10 \delta}{n} \left( 1 + \frac{\mu}{20 \delta} \right) \sum_{i=1}^n \left\| \vv_i^{(r+1)} - \vx^\star \right\|^2 
    + \frac{1}{\delta} \left( 1 + \frac{\mu}{20 \delta} \right) \mathcal{E}^{(r+1)}
    + 2 \delta \left( 1 + \frac{\mu}{20 \delta} \right) \Xi^{(r+1)}.
\end{align*}
Thus, we can get the desired result.
\end{proof}

\begin{lemma}
\label{lemma:final_results_of_stabilized_sonata_with_single_gossip}
Suppose that Assumptions \ref{assumption:strongly_convex} and \ref{assumption:spectral_gap} hold and the following inequality is satisfied:
\begin{align}
    \sum_{i=1}^n \left\| \nabla F_{i,r} (\vx_i^{(r+1)}) \right\|^2 \leq \frac{\lambda^2}{10} \sum_{i=1}^n \left\| \vv_i^{(r)} - \vx_i^{(r+1)} \right\|^2.
\end{align}
Then, when $\lambda = 20 \delta$, $M=1$, $\vv_i^{(0)} = \bar{\vv}^{(0)}$, $\vh_i^{(0)} = \nabla f (\bar{\vv}^{(0)}) -  \nabla f_i (\bar{\vv}^{(0)})$, and $\rho \leq \tfrac{\delta}{5 L}$, we obtain 
\begin{align*}
    &\frac{1}{W_R} \sum_{r=0}^{R-1} w^{(r)} \left(f (\bar{\vx}^{(r+1)}) - f (\vx^\star)\right)
    \leq \frac{\mu}{2 ( (1 + \frac{\mu}{20 \delta})^R - 1 )} \left\| \bar{\vv}^{(0)} - \vx^\star \right\|^2
    \leq  \frac{10 \delta}{R} \left\| \bar{\vv}^{(0)} - \vx^\star \right\|^2,
\end{align*}
where $w^{(r)} \coloneqq (1 + \tfrac{\mu}{20 \delta})^r$ and $W_R \coloneqq \sum_{r=0}^{R-1} w^{(r)}$.
\end{lemma}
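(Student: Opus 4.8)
The plan is to read Eq.~(\ref{eq:simple_recursion}) as a one-step contraction of a single Lyapunov potential and then telescope against geometrically growing weights. Define
\begin{align*}
    \Phi^{(r)} \coloneqq \frac{10 \delta}{n} \sum_{i=1}^n \left\| \vv_i^{(r)} - \vx^\star \right\|^2 + \frac{1}{\delta} \mathcal{E}^{(r)} + 2 \delta \Xi^{(r)}.
\end{align*}
With this notation, Eq.~(\ref{eq:simple_recursion}) becomes exactly
\begin{align*}
    \Phi^{(r)} \geq \left( f(\bar{\vx}^{(r+1)}) - f(\vx^\star) \right) + \left( 1 + \frac{\mu}{20 \delta} \right) \Phi^{(r+1)},
\end{align*}
once the $f(\vx^\star)$ on the left and $f(\bar{\vx}^{(r+1)})$ on the right are rearranged to isolate the function-value gap. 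All hypotheses needed to invoke Eq.~(\ref{eq:simple_recursion}) — namely $\lambda = 20\delta$, $M=1$, $\rho \leq \tfrac{\delta}{5L}$, the condition $\sum_{i=1}^n \vh_i^{(0)} = \mathbf{0}$ (which holds because $\vh_i^{(0)} = \nabla h_i(\bar{\vv}^{(0)})$ sums to zero by definition of $h_i$), and the subproblem-accuracy bound — are part of the present lemma's assumptions, so no new estimate is required.

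Next I would multiply this recursion by $w^{(r)} = (1 + \tfrac{\mu}{20\delta})^r$. Since $w^{(r)}(1 + \tfrac{\mu}{20\delta}) = w^{(r+1)}$, this produces the telescoping inequality
\begin{align*}
    w^{(r)} \left( f(\bar{\vx}^{(r+1)}) - f(\vx^\star) \right) \leq w^{(r)} \Phi^{(r)} - w^{(r+1)} \Phi^{(r+1)}.
\end{align*}
Summing over $r = 0, \ldots, R-1$ and discarding the nonnegative term $w^{(R)} \Phi^{(R)} \geq 0$ gives
\begin{align*}
    \sum_{r=0}^{R-1} w^{(r)} \left( f(\bar{\vx}^{(r+1)}) - f(\vx^\star) \right) \leq w^{(0)} \Phi^{(0)} = \Phi^{(0)}.
\end{align*}

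Finally I would evaluate $\Phi^{(0)}$ from the initialization. Because $\vv_i^{(0)} = \bar{\vv}^{(0)}$ for every $i$, we get $\Xi^{(0)} = 0$ and $\tfrac{10\delta}{n}\sum_{i=1}^n \|\vv_i^{(0)} - \vx^\star\|^2 = 10\delta \|\bar{\vv}^{(0)} - \vx^\star\|^2$; and because $\vh_i^{(0)} = \nabla f(\bar{\vv}^{(0)}) - \nabla f_i(\bar{\vv}^{(0)}) = \nabla h_i(\bar{\vv}^{(0)})$, we get $\mathcal{E}^{(0)} = 0$. Hence $\Phi^{(0)} = 10\delta \|\bar{\vv}^{(0)} - \vx^\star\|^2$. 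Dividing the telescoped bound by $W_R = \sum_{r=0}^{R-1} w^{(r)} = \tfrac{20\delta}{\mu}\bigl((1 + \tfrac{\mu}{20\delta})^R - 1\bigr)$ yields the first claimed inequality, and the second follows from Bernoulli's inequality $(1 + \tfrac{\mu}{20\delta})^R - 1 \geq R\,\tfrac{\mu}{20\delta}$.

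The only mildly delicate point is the bookkeeping of the weights and the initialization; the genuinely hard work — establishing the one-step potential decrease Eq.~(\ref{eq:simple_recursion}) with exactly the constants that make the bracketed potential reappear with factor $(1 + \tfrac{\mu}{20\delta})$ — is already carried out in the preceding lemma, so no further inequality-chasing is needed here.
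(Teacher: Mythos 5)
Your proposal is correct and follows essentially the same route as the paper's own proof: both rearrange Eq.~(\ref{eq:simple_recursion}) into a one-step contraction of the potential $\tfrac{10\delta}{n}\sum_{i=1}^n \|\vv_i^{(r)}-\vx^\star\|^2 + \tfrac{1}{\delta}\mathcal{E}^{(r)} + 2\delta\,\Xi^{(r)}$, multiply by the weights $w^{(r)}$, telescope, and then use $\Xi^{(0)}=\mathcal{E}^{(0)}=0$ together with $W_R = \tfrac{20\delta}{\mu}\bigl((1+\tfrac{\mu}{20\delta})^R-1\bigr)$ and Bernoulli's inequality. Your explicit verification that $\sum_{i=1}^n \vh_i^{(0)} = \mathbf{0}$ follows from the initialization is a minor bookkeeping point the paper leaves implicit, but it changes nothing substantive.
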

\begin{proof}
From Eq.~(\ref{eq:simple_recursion}), we have
\begin{align*}
    &f (\bar{\vx}^{(r+1)}) - f (\vx^\star) \\
    &\leq 10 \delta \left[ \frac{1}{n} \sum_{i=1}^n \left\| \vv_i^{(r)} - \vx^\star \right\|^2 
    + \frac{1}{10 \delta^2} \mathcal{E}^{(r)}
    + \frac{1}{5} \Xi^{(r)} \right]
    - 10 \delta \left( 1 + \frac{\mu}{20 \delta} \right) \left[ \frac{1}{n}  \sum_{i=1}^n \left\| \vv_i^{(r+1)} - \vx^\star \right\|^2 
    + \frac{1}{10 \delta^2} \mathcal{E}^{(r+1)}
    + \frac{1}{5} \Xi^{(r+1)} \right].
\end{align*}
We define $w^{(r)} \coloneqq (1 + \tfrac{\mu}{20 \delta})^{r}$.
We obtain
\begin{align*}
    &\frac{1}{W_R} \sum_{r=0}^{R-1} w^{(r)} \left(f (\bar{\vx}^{(r+1)}) - f (\vx^\star)\right)
    \leq \frac{10 \delta}{W_R} \left[ \frac{1}{n} \sum_{i=1}^n \left\| \vv_i^{(0)} - \vx^\star \right\|^2 
    + \frac{1}{10 \delta^2} \mathcal{E}^{(0)}
    + \frac{1}{5} \Xi^{(0)} \right],
\end{align*}
where $W_R \coloneqq \sum_{r=0}^{R-1} w^{(r)}$.
Using $\mathcal{E}^{(0)} = 0$ and $\Xi^{(0)} = 0$, we obtain
\begin{align*}
    &\frac{1}{W_R} \sum_{r=0}^{R-1} w^{(r)} \left(f (\bar{\vx}^{(r+1)}) - f (\vx^\star)\right)
    \leq \frac{10 \delta}{W_R} \left\| \bar{\vv}^{(0)} - \vx^\star \right\|^2.
\end{align*}
Using $W_R = \frac{20 \delta}{\mu} ( \left( 1 + \frac{\mu}{20 \delta} \right)^{R} - 1  )$, we obtain
\begin{align*}
    &\frac{1}{W_R} \sum_{r=0}^{R-1} w^{(r)} \left(f (\bar{\vx}^{(r+1)}) - f (\vx^\star)\right)
    \leq \frac{\mu}{2 ( (1 + \frac{\mu}{20 \delta})^R - 1 )} \left\| \bar{\vv}^{(0)} - \vx^\star \right\|^2
    \leq  \frac{10 \delta}{R} \left\| \bar{\vv}^{(0)} - \vx^\star \right\|^2.
\end{align*}
\end{proof}

\begin{lemma}
\label{lemma:final_results_of_stabilized_sonata}
Suppose that Assumptions \ref{assumption:strongly_convex}, \ref{assumption:smooth}, and \ref{assumption:spectral_gap} hold and the following inequality is satisfied:
\begin{align}
    \sum_{i=1}^n \left\| \nabla F_{i,r} (\vx_i^{(r+1)}) \right\|^2 \leq \frac{\lambda^2}{10} \sum_{i=1}^n \left\| \vv_i^{(r)} - \vx_i^{(r+1)} \right\|^2.
\end{align}
When $\vv_i^{(0)} = \bar{\vv}^{(0)}$, $\vh_i^{(0)} = \nabla f (\bar{\vv}^{(0)}) -  \nabla f_i (\bar{\vv}^{(0)})$, $\lambda = 20 \delta$, and $M \geq \tfrac{\log (\frac{5 L}{\delta})}{1 - \rho}$ it holds that
\begin{align*}
    &\frac{1}{W_R} \sum_{r=0}^{R-1} w^{(r)} \left(f (\bar{\vx}^{(r+1)}) - f (\vx^\star)\right)
    \leq \frac{\mu}{2 ( (1 + \frac{\mu}{20 \delta})^R - 1 )} \left\| \bar{\vv}^{(0)} - \vx^\star \right\|^2
    \leq  \frac{10 \delta}{R} \left\| \bar{\vv}^{(0)} - \vx^\star \right\|^2,
\end{align*}
where $w^{(r)} \coloneqq (1 + \tfrac{\mu}{20 \delta})^r$ and $W_R \coloneqq \sum_{r=0}^{R-1} w^{(r)}$.
\end{lemma}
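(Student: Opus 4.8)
The plan is to reduce the general-$M$ statement to the single-gossip Lemma~\ref{lemma:final_results_of_stabilized_sonata_with_single_gossip} by replacing the mixing matrix $\mW$ with its $M$-th power $\mW^M$ and tracking the induced spectral gap $\rho^M$. Since every averaging step in Alg.~\ref{algorithm:stabilized_sonata} is applied through \textsc{MultiGossip} (Alg.~\ref{algorithm:gossip}), and \textsc{MultiGossip} with $M$ iterations is exactly left-multiplication by $\mW^M$, running Alg.~\ref{algorithm:stabilized_sonata} with parameter $M$ coincides with running the same algorithm with a single gossip step but mixing matrix $\mW^M$. Both the $\vv$-update and the gradient-tracking $\vh$-update are linear in the averaging operator, so this substitution is exact.

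First I would verify that $\mW^M$ inherits the hypotheses of Assumption~\ref{assumption:spectral_gap} with parameter $\rho^M$. Symmetry is immediate from $(\mW^M)^\top = (\mW^\top)^M = \mW^M$, and double stochasticity follows from $\mathbf{1}_n^\top \mW = \mathbf{1}_n^\top$ and $\mW \mathbf{1}_n = \mathbf{1}_n$, which give $\mathbf{1}_n^\top \mW^M = \mathbf{1}_n^\top$; in particular Lemma~\ref{lemma:average_conservation} still applies. Iterating Eq.~(\ref{eq:spectral_gap}) $M$ times yields $\sum_{i=1}^n \| (\mW^M \vx)_i - \bar{\vx} \|^2 \le \rho^{2M} \sum_{i=1}^n \| \vx_i - \bar{\vx} \|^2$, so the contraction factor becomes $\rho^M$. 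Consequently every intermediate result behind Lemma~\ref{lemma:final_results_of_stabilized_sonata_with_single_gossip} — in particular Lemma~\ref{lemma:general_lemma_for_e} and the two recursion lemmas — holds verbatim with $\rho$ replaced by $\rho^M$, since their proofs only use symmetry, double stochasticity, and the spectral-gap bound.

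Next I would check that the stated choice of $M$ forces the effective gap below the threshold required by Lemma~\ref{lemma:final_results_of_stabilized_sonata_with_single_gossip}, namely $\rho^M \le \tfrac{\delta}{5L}$. Taking logarithms, this is $M \log(1/\rho) \ge \log(\tfrac{5L}{\delta})$, and using the elementary inequality $\log(x) \ge 1 - \tfrac{1}{x}$ with $x = 1/\rho$ gives $\log(1/\rho) \ge 1 - \rho$; hence $M \ge \tfrac{1}{1-\rho}\log(\tfrac{5L}{\delta})$ suffices. With $\tilde{\rho} \coloneqq \rho^M \le \tfrac{\delta}{5L}$, $\lambda = 20\delta$, the stated initialization $\vv_i^{(0)} = \bar{\vv}^{(0)}$ and $\vh_i^{(0)} = \nabla f(\bar{\vv}^{(0)}) - \nabla f_i(\bar{\vv}^{(0)})$, and the subproblem-accuracy hypothesis all in force, I would invoke Lemma~\ref{lemma:final_results_of_stabilized_sonata_with_single_gossip} applied to $\mW^M$ to obtain exactly the claimed bound, whose right-hand side depends only on $\mu$, $\delta$, and $R$ and is therefore unchanged by the substitution.

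The only genuinely delicate point — and the one I would spend the most care on — is the exactness of the reduction $\mW \mapsto \mW^M$: one must confirm that the stabilization and gradient-tracking bookkeeping is preserved, i.e.\ that the quantities $\mathcal{E}^{(r)}$ and $\Xi^{(r)}$ evolve under $M$ gossip steps precisely as the single-step analysis predicts for $\mW^M$. Because \textsc{MultiGossip} is linear and $\mW^M$ remains symmetric and doubly stochastic, this holds, but it is the hinge of the whole argument; everything else is the routine threshold computation of the previous paragraph, mirroring the proof of Lemma~\ref{lemma:final_results_of_sonata}.
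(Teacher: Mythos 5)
Your proposal is correct and matches the paper's own proof: the paper likewise reduces to Lemma~\ref{lemma:final_results_of_stabilized_sonata_with_single_gossip} by noting that $M$ gossip rounds act as the mixing matrix $\mW^M$ with spectral gap $\rho^M$, and uses $\log(x) \geq 1 - \tfrac{1}{x}$ to show $M \geq \tfrac{1}{1-\rho}\log(\tfrac{5L}{\delta})$ forces $\rho^M \leq \tfrac{\delta}{5L}$. Your write-up is simply more explicit about verifying that $\mW^M$ inherits symmetry, double stochasticity, and the contraction property, which the paper leaves implicit.
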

\begin{proof}
When $M \geq \tfrac{1}{1-\rho} \log (\tfrac{5 L}{\delta})$, it holds that
\begin{align*}
    \rho^{M} \leq \frac{\delta}{5 L},
\end{align*}
where we use $\log (x) \geq 1 - \tfrac{1}{x}$.
From Lemma \ref{lemma:final_results_of_stabilized_sonata_with_single_gossip}, we obtain the desired result.
\end{proof}

\newpage
\section{Proof of Theorem \ref{theorem:acc_stabilized_sonata}}
\label{sec:proof_of_acc_stabilized_sonata}

\subsection{Useful Lemmas}

\begin{lemma}
\label{lemma:A_R}
When $\mu \leq 4 \lambda$, it holds that for any $r \geq 0$ 
\begin{align*}
    A_r 
    \geq \frac{1}{4\mu} \left[ \left( 1 + \sqrt{\frac{\mu}{4 \lambda}} \right)^r - \left( 1 - \sqrt{\frac{\mu}{4 \lambda}} \right)^r \right]^2
    \geq \frac{r^2}{4 \lambda}.
\end{align*}
Otherwise, for any $r \geq 1$, it holds that
\begin{align*}
    A_r \geq \frac{1}{4 \lambda} \left( 1 + \sqrt{\frac{\mu}{4 \lambda}} \right)^{2 (r-1)}.
\end{align*}
\end{lemma}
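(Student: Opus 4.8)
The plan is to reduce both bounds to a single ``master'' increment inequality for the square-root sequence $u_r := \sqrt{A_r}$, and then to run two short inductions off it.

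First I would unwind the defining relations of Algorithm~\ref{alg:accelerated_stabilized_sonata}. Telescoping $B_{r+1} = B_r + \mu a_{r+1}$ and $A_{r+1} = A_r + a_{r+1}$ from $A_0 = 0$, $B_0 = 1$ gives $B_r = 1 + \mu A_r$. Substituting this into $\lambda a_{r+1}^2 = (A_r + a_{r+1}) B_r = A_{r+1} B_r$ and using $a_{r+1} = A_{r+1} - A_r$ yields
\begin{align*}
    \lambda (A_{r+1} - A_r)^2 = A_{r+1} (1 + \mu A_r).
\end{align*}
Because each $a_{r+1} > 0$, the $A_r \geq 0$ are increasing, so I may take square roots, write $A_{r+1} - A_r = (u_{r+1} - u_r)(u_{r+1} + u_r)$, and bound $u_{r+1}/(u_{r+1}+u_r) \geq \tfrac{1}{2}$ (legitimate since $u_{r+1} > 0$) to reach the master inequality
\begin{align*}
    u_{r+1} - u_r \geq \frac{1}{2\sqrt{\lambda}} \sqrt{1 + \mu u_r^2}.
\end{align*}

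Two crude corollaries dispatch the easy parts. Dropping the $1$ gives $u_{r+1} - u_r \geq \tfrac{\sqrt{\mu}}{2\sqrt{\lambda}} u_r$, i.e. $u_{r+1} \geq (1+c) u_r$ with $c := \sqrt{\mu/(4\lambda)}$; combined with $u_1 \geq 1/(2\sqrt{\lambda})$ (the master inequality at $r=0$, where $u_0 = 0$) this iterates to $u_r \geq (1+c)^{r-1}/(2\sqrt{\lambda})$, which after squaring is exactly the $\mu > 4\lambda$ claim for $r \geq 1$. Dropping instead the $\mu u_r^2$ term gives $u_{r+1} - u_r \geq 1/(2\sqrt{\lambda})$, hence $u_r \geq r/(2\sqrt{\lambda})$ and $A_r \geq r^2/(4\lambda)$, the rightmost inequality in the $\mu \leq 4\lambda$ case.

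The real work is the sharp $\mu \leq 4\lambda$ bound, which I would prove by induction with a carefully chosen proxy. Set $\phi_r := (1+c)^r - (1-c)^r$ and $\psi_r := (1+c)^r + (1-c)^r$, so that $\phi_{r+1} - \phi_r = c\,\psi_r$ and $\psi_r^2 - \phi_r^2 = 4(1-c^2)^r$ hold by direct expansion. The claim is $u_r \geq \phi_r/(2\sqrt{\mu})$. Assuming it at step $r$, the master inequality gives $u_{r+1} - u_r \geq \tfrac{1}{2\sqrt{\lambda}}\sqrt{1 + \phi_r^2/4}$, while the needed increment of the proxy is $\tfrac{c\,\psi_r}{2\sqrt{\mu}} = \tfrac{\psi_r}{4\sqrt{\lambda}}$; the step therefore closes iff $1 + \phi_r^2/4 \geq \psi_r^2/4$, i.e. $\psi_r^2 - \phi_r^2 = 4(1-c^2)^r \leq 4$, which is precisely where $\mu \leq 4\lambda$ (so $c \leq 1$) is used. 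This matching of the proxy's growth to the worst case of the master inequality, together with the observation that the residual $\psi_r^2 - \phi_r^2$ stays bounded by $4$ exactly in this regime, is the one delicate point; everything else is bookkeeping. Finally the chain to $r^2/(4\lambda)$ follows from the elementary binomial estimate $\phi_r = 2\sum_{k\ \mathrm{odd}} \binom{r}{k} c^k \geq 2 r c$, giving $A_r \geq \phi_r^2/(4\mu) \geq (2rc)^2/(4\mu) = r^2/(4\lambda)$.
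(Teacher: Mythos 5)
Your proof is correct, and it necessarily takes a different route from the paper, because the paper does not actually prove this lemma: its entire proof is a citation to Lemma 12 of \citet{jiang2024stabilized}. Your argument is self-contained, and the key steps all check out: telescoping gives $B_r = 1+\mu A_r$, so the defining relation becomes $\lambda(A_{r+1}-A_r)^2 = A_{r+1}(1+\mu A_r)$, consistent with the closed form for $a_{r+1}$ the paper uses in Lemma~\ref{lemma:relationship_between_small_and_large_a_1}; since the sequence is increasing, $u_{r+1}/(u_{r+1}+u_r)\geq \tfrac12$ for $u_r=\sqrt{A_r}$, yielding the master inequality $u_{r+1}-u_r \geq \tfrac{1}{2\sqrt{\lambda}}\sqrt{1+\mu u_r^2}$; dropping $\mu u_r^2$ gives $A_r \geq r^2/(4\lambda)$, dropping the $1$ and iterating $u_{r+1}\geq (1+c)u_r$ from $u_1 \geq 1/(2\sqrt{\lambda})$ gives the $\mu>4\lambda$ bound; and in the induction for the sharp bound, the identities $\phi_{r+1}-\phi_r = c\psi_r$, $c/(2\sqrt{\mu}) = 1/(4\sqrt{\lambda})$, and $\psi_r^2-\phi_r^2 = 4(1-c^2)^r$ reduce the inductive step exactly to $(1-c^2)^r \leq 1$, which is where $\mu\leq 4\lambda$ enters; finally $\phi_r \geq 2rc$ gives $\phi_r^2/(4\mu)\geq r^2 c^2/\mu = r^2/(4\lambda)$, so the full claimed chain holds. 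Two points worth making explicit in a final write-up: the trivial base case $u_0 = 0 = \phi_0/(2\sqrt{\mu})$, and the fact that both occurrences of $u_r$ on the right-hand side of the master inequality are nondecreasing in $u_r$, which is what licenses substituting the inductive lower bound. What your approach buys is that the lemma becomes verifiable without consulting the external reference, via a single increment inequality from which all three estimates follow; it is also in the same spirit as the technique the paper itself uses for the companion result Lemma~\ref{lemma:upper_bound_of_A_R}, which likewise passes to the square-root sequence $\sqrt{A_r}$.
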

\begin{proof}
See Lemma 12 in \citet{jiang2024stabilized}.    
\end{proof}

\begin{lemma}
\label{lemma:upper_bound_of_A_R}
When $\mu=0$, it holds that for any $0 \leq  r \leq R$,
\begin{align}
\label{eq:upper_bound_of_A_R}
    A_r \leq \frac{r^2}{\lambda}.
\end{align}
\end{lemma}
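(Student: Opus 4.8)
The plan is to exploit the fact that when $\mu = 0$ the sequence $B_r$ is constant, which collapses the defining relation for $a_{r+1}$ into an explicit one-step recurrence for $A_r$ that is then controlled by a simple increment bound.

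First I would observe that $B_{r+1} = B_r + \mu a_{r+1} = B_r$ when $\mu = 0$, so $B_r = B_0 = 1$ for every $r$. The line defining $a_{r+1}$ then reads $\lambda = (A_r + a_{r+1})/a_{r+1}^2 = A_{r+1}/a_{r+1}^2$, hence $\lambda a_{r+1}^2 = A_{r+1}$ with $a_{r+1} = A_{r+1} - A_r > 0$. Note that $A_0 = 0$ and $A_{r+1} > A_r$ force $A_r \ge 0$ for all $r$, which is the only structural input I will need besides this recurrence.

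Next I would pass to the variable $t_r := \sqrt{\lambda A_r}$, so that $A_r = t_r^2/\lambda$ and $a_{r+1} = \sqrt{A_{r+1}/\lambda} = t_{r+1}/\lambda$. Substituting into $A_{r+1} = A_r + a_{r+1}$ and clearing $\lambda$ yields the clean recurrence $t_{r+1}^2 = t_r^2 + t_{r+1}$. Since $a_{r+1} > 0$ we have $t_{r+1} > 0$ and $t_r \ge 0$, so I may factor $t_{r+1}^2 - t_r^2 = t_{r+1}$ as $(t_{r+1} - t_r)(t_{r+1} + t_r) = t_{r+1}$ and divide by $t_{r+1} + t_r > 0$ to obtain $t_{r+1} - t_r = t_{r+1}/(t_{r+1} + t_r) \le 1$, where the last inequality again uses $t_r \ge 0$.

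Finally I would telescope the increment bound from $0$ to $r-1$, together with $t_0 = \sqrt{\lambda A_0} = 0$, to conclude $t_r \le r$; squaring and dividing by $\lambda$ then returns $A_r \le r^2/\lambda$, as claimed. I do not anticipate a genuine obstacle: the argument is merely a discrete analogue of the standard estimate of roughly $1/\sqrt{\lambda}$ progress per step, and it complements the matching lower bound $A_r \ge r^2/(4\lambda)$ of Lemma~\ref{lemma:A_R}. The only point deserving a line of care is verifying $t_{r+1} > 0$ so that the division isolating $t_{r+1} - t_r$ is valid, which follows at once from $a_{r+1} > 0$, and pinning down the base case $t_0 = 0$ that anchors the telescope.
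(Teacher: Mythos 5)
Your proof is correct and follows essentially the same route as the paper's: both pass to the square-root variable ($t_r = \sqrt{\lambda A_r}$ in your notation, $C_r = \sqrt{A_r}$ in the paper's), use the difference-of-squares factorization of $A_{r+1} - A_r$ together with nonnegativity to bound each increment by $1/\sqrt{\lambda}$ (equivalently $t_{r+1}-t_r \le 1$), and telescope from $A_0 = 0$. If anything, your write-up is slightly cleaner, since the paper's displayed conclusion $C_{r+1} - C_r \geq \sqrt{1/\lambda}$ has its inequality sign reversed (a typo; its own derivation yields $\leq$), whereas your version states the increment bound in the correct direction.
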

\begin{proof}
Denote $C_r \coloneqq \sqrt{A_r}$. 
From the definition of $A_r$ and $a_{r}$, it holds that
\begin{align*}
    C_{r+1}^2 
    = \lambda ( C_{r+1}^2 - C_r^2 )^2
    = \lambda ( C_{r+1} - C_r )^2 ( C_{r+1} + C_r )^2
    \geq \lambda ( C_{r+1} - C_r )^2 C_{r+1}^2.
\end{align*}
Thus, we have
\begin{align*}
    C_{r+1} - C_r \geq \sqrt{\frac{1}{\lambda}}.
\end{align*}
This concludes the statement.
\end{proof}

\begin{lemma}
\label{lemma:relationship_between_small_and_large_a_1}
It holds that for any $r \geq 1$,
\begin{align}
\label{eq:relationship_between_small_and_large_a_1}
    2 \left( 1 + \frac{\mu}{\lambda} \right)^{-1} \left( 1 + \sqrt{1 + \frac{4 \lambda}{\mu}} \right)^{-1} 
    \leq \frac{A_r}{a_{r+1}} \leq \frac{2 \lambda}{\mu}.
\end{align}
Furthermore, when $\mu = 0$, it holds that for any $r \geq 1$,
\begin{align}
\label{eq:relationship_between_small_and_large_a_1_with_mu_zero}
    \frac{r}{5} \leq \frac{A_r}{a_{r+1}} \leq r.
\end{align}
Note that the inequalities on the right-hand side also hold when $r=0$ since $\tfrac{A_0}{a_1} = 0$.
\end{lemma}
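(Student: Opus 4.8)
The plan is to work entirely with the scalar sequences $\{A_r\}$, $\{a_r\}$, $\{B_r\}$ and to extract a closed form for $a_{r+1}$. First I would record two elementary consequences of the recursion. Since $B_0=1$ and $B_{r+1}=B_r+\mu a_{r+1}$, an immediate induction gives $B_r=1+\mu A_r$. Next, the defining relation $\lambda=(A_r+a_{r+1})B_r/a_{r+1}^2$ is the quadratic $\lambda a_{r+1}^2-B_r a_{r+1}-A_rB_r=0$, whose positive root is
\[
a_{r+1}=\frac{B_r+\sqrt{B_r^2+4\lambda A_r B_r}}{2\lambda}.
\]
Taking $r=0$ (where $A_0=0$, $B_0=1$) yields $a_1=1/\lambda$ and hence $A_1=1/\lambda$; since $\{A_r\}$ is increasing, $A_r\ge 1/\lambda$ for every $r\ge 1$. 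This lower bound on $A_r$ is what will pin down the constants.

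For the upper bound in \eqref{eq:relationship_between_small_and_large_a_1} I would avoid the closed form: from $\lambda a_{r+1}^2=A_{r+1}B_r\ge a_{r+1}B_r$ (using $A_{r+1}\ge a_{r+1}$) we get $\lambda a_{r+1}\ge B_r\ge \mu A_r$, so $A_r/a_{r+1}\le \lambda/\mu\le 2\lambda/\mu$, valid for all $r\ge 0$. The lower bound is the delicate part. Writing $c:=B_r/A_r=\mu+1/A_r$, the closed form gives $a_{r+1}/A_r=\phi(c)$ with $\phi(c)=(c+\sqrt{c^2+4\lambda c})/(2\lambda)$, which is increasing in $c$. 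Because $A_r\ge 1/\lambda$ for $r\ge 1$, we have $c\le \mu+\lambda$, whence $a_{r+1}/A_r\le \phi(\lambda+\mu)$. Setting $p:=\mu/\lambda$, a short computation gives $\phi(\lambda+\mu)=\tfrac12\big((1+p)+\sqrt{(1+p)(5+p)}\big)=\tfrac{1+p}{2}\big(1+\sqrt{(5+p)/(1+p)}\big)$, and the elementary inequality $p(5+p)\le (1+p)(4+p)$ (which reduces to $0\le 4$) upgrades $\sqrt{(5+p)/(1+p)}$ to $\sqrt{1+4/p}=\sqrt{1+4\lambda/\mu}$. Taking reciprocals produces exactly the claimed lower bound.

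For the $\mu=0$ statement \eqref{eq:relationship_between_small_and_large_a_1_with_mu_zero}, we have $B_r\equiv 1$ and $a_{r+1}=(1+\sqrt{1+4\lambda A_r})/(2\lambda)$. Here I would invoke the two-sided estimate $r^2/(4\lambda)\le A_r\le r^2/\lambda$ from Lemmas \ref{lemma:A_R} and \ref{lemma:upper_bound_of_A_R}. The upper bound follows from $a_{r+1}\ge \sqrt{A_r/\lambda}$, giving $A_r/a_{r+1}\le \sqrt{\lambda A_r}\le r$. For the lower bound, $\sqrt{1+4\lambda A_r}\le 1+2\sqrt{\lambda A_r}$ yields $a_{r+1}/A_r\le 1/(\lambda A_r)+1/\sqrt{\lambda A_r}\le 4/r^2+2/r$, which is $\le 5/r$ as soon as $r\ge 2$; the case $r=1$ (where $A_1=1/\lambda$ and $a_2=(1+\sqrt5)/(2\lambda)$, so $A_1/a_2=2/(1+\sqrt5)\approx 0.62\ge 1/5$) I would check directly. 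Finally, the right-hand (upper) bounds at $r=0$ are trivial since $A_0=0$.

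I expect the main obstacle to be the constant in the strongly-convex lower bound. Applying $\sqrt{x+y}\le\sqrt{x}+\sqrt{y}$ naively to the closed form is too lossy: it produces the bound $(1+p)+\sqrt{1+p}$ on $a_{r+1}/A_r$, which overshoots the target (e.g.\ at $p=1$ it exceeds $\phi(\lambda+\mu)$), so such a shortcut cannot recover the stated constant. The argument must instead retain the exact square root, parametrize by $c=B_r/A_r$, and exploit both the monotonicity of $\phi$ and the sharp lower bound $A_r\ge 1/\lambda$; this is where essentially all the care is concentrated.
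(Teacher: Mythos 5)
Your proof is correct and follows essentially the same route as the paper's: solve the quadratic $\lambda a_{r+1}^2 - B_r a_{r+1} - A_r B_r = 0$ for its positive root, then combine the closed form with $A_r \geq 1/\lambda$ (for $r\geq 1$) in the strongly convex case and with the two-sided bounds $r^2/(4\lambda) \leq A_r \leq r^2/\lambda$ from Lemmas \ref{lemma:A_R} and \ref{lemma:upper_bound_of_A_R} when $\mu=0$. Your only substantive additions are cosmetic repackagings (the monotone function $\phi(c)$ with $c = B_r/A_r$, and the inequality $p(5+p)\leq(1+p)(4+p)$) plus the explicit statement of the step $\lambda A_r \geq 1$, which the paper uses implicitly when converting its bound on $a_{r+1}$ into the stated lower bound on $A_r/a_{r+1}$.
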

\begin{proof}
From the definition of $a_{r+1}$, we have
\begin{align*}
    a_{r+1} 
    &= \frac{1 + \mu A_{r} + \sqrt{(1 + \mu A_r)^2 + 4 \lambda A_r (1 + \mu A_r)}}{2 \lambda}
    \geq \frac{\mu A_r}{2 \lambda}, \\
    a_{r+1} 
    &= \frac{1 + \mu A_{r} + \sqrt{(1 + \mu A_r)^2 + 4 \lambda A_r (1 + \mu A_r)}}{2 \lambda}
    \leq \frac{1 + \mu A_{r}}{2 \lambda} \left( 1 + \sqrt{1 + \frac{4 \lambda}{\mu}} \right).    
\end{align*}
When $\mu = 0$, we have
\begin{align*}
    a_{r+1} 
    &= \frac{1 + \sqrt{1 + 4 \lambda A_r}}{2 \lambda}
    \geq \frac{A_r}{\sqrt{\lambda A_r}}
    \geq \frac{A_r}{r}, \\
    a_{r+1} 
    &= \frac{1 + \sqrt{1 + 4 \lambda A_r}}{2 \lambda}
    \leq A_r \sqrt{\frac{5}{\lambda A_r}}
    \leq \frac{5 A_r}{r},
\end{align*}
where we use Lemma \ref{lemma:A_R} and \ref{lemma:upper_bound_of_A_R}.
This concludes the statement.
\end{proof}

\begin{lemma}
\label{lemma:slow_increasing_a}
For all $r \geq 0$, it holds that
\begin{align}
    \frac{a_{r+2}}{a_{r+1}} \leq \frac{1}{2} \left( 1 + \frac{\mu}{\lambda} \right) \left( 1 + \frac{2 \lambda}{\mu} \right) \left( 1 + \sqrt{1 + \frac{4 \lambda}{\mu}} \right).
\end{align}
Furthermore, when $\mu = 0$, it holds that for all $r \geq 0$,
\begin{align}
\label{eq:slow_increasing_a}
    1 \leq \frac{a_{r+2}}{a_{r+1}} \leq 4.
\end{align}
\end{lemma}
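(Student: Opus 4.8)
The plan is to reduce the claim to the two-sided estimate on $A_r/a_{r+1}$ already established in Lemma~\ref{lemma:relationship_between_small_and_large_a_1}, together with the algebraic identities baked into the algorithm. First I would record that, with $A_0=0$ and $B_0=1$, an easy induction gives $B_r=1+\mu A_r$, and the defining relation $\lambda=(A_r+a_{r+1})B_r/a_{r+1}^2$ rearranges to $\lambda a_{r+1}^2=A_{r+1}B_r$. These identities let me pass freely between $a_{r+1}$, $A_{r+1}$, and $B_r$.

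For the general-$\mu$ bound, the one idea is to apply the \emph{lower} bound of~\eqref{eq:relationship_between_small_and_large_a_1} at the shifted index $r+1$ (valid because $r+1\ge1$) and invert it, which yields $\frac{a_{r+2}}{A_{r+1}}\le\frac12\bigl(1+\frac{\mu}{\lambda}\bigr)\bigl(1+\sqrt{1+\frac{4\lambda}{\mu}}\bigr)$. Multiplying through by $A_{r+1}/a_{r+1}$ and using $A_{r+1}=A_r+a_{r+1}$ gives
\[
\frac{a_{r+2}}{a_{r+1}}\le\frac12\Bigl(1+\tfrac{\mu}{\lambda}\Bigr)\Bigl(1+\sqrt{1+\tfrac{4\lambda}{\mu}}\Bigr)\Bigl(\tfrac{A_r}{a_{r+1}}+1\Bigr).
\]
I then invoke the \emph{upper} bound $A_r/a_{r+1}\le 2\lambda/\mu$ from the same lemma, so that $\frac{A_r}{a_{r+1}}+1\le 1+\frac{2\lambda}{\mu}$, and the three factors combine into exactly the claimed right-hand side.

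For $\mu=0$ I would argue directly from the closed form, since here $B_r\equiv1$ and hence $\lambda a_{r+1}^2=A_{r+1}$. The lower bound $\frac{a_{r+2}}{a_{r+1}}\ge1$ is immediate: $A_{r+2}=A_{r+1}+a_{r+2}>A_{r+1}$ and $\lambda a^2$ is increasing in $a$, so $a_{r+2}>a_{r+1}$. For the upper bound, solving the quadratic gives $a_{r+2}=\bigl(1+\sqrt{1+4\lambda A_{r+1}}\bigr)/(2\lambda)$ with $4\lambda A_{r+1}=4\lambda^2a_{r+1}^2$; writing $u:=1/(2\lambda a_{r+1})$ this becomes $\frac{a_{r+2}}{a_{r+1}}=u+\sqrt{u^2+1}$. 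Since $\lambda a_{r+1}=A_r/a_{r+1}+1\ge1$ we get $u\le\frac12$, hence $\frac{a_{r+2}}{a_{r+1}}\le\frac{1+\sqrt5}{2}\le4$ (comfortably inside the claimed constant).

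The computations are short; the main subtlety is purely structural — recognizing that one must feed the \emph{lower} bound of Lemma~\ref{lemma:relationship_between_small_and_large_a_1} at index $r+1$ and the \emph{upper} bound at index $r$, bridging them through $A_{r+1}=A_r+a_{r+1}$. The only boundary case, $r=0$, needs no separate treatment: $A_0=0$ makes the upper bound $A_0/a_1\le2\lambda/\mu$ trivially true (as noted in Lemma~\ref{lemma:relationship_between_small_and_large_a_1}), while the shifted lower bound is already valid at index $1$.
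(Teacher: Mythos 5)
Your proposal is correct. The general-$\mu$ bound is proved exactly as in the paper: invert the lower bound of Lemma~\ref{lemma:relationship_between_small_and_large_a_1} at the shifted index $r+1$ to control $a_{r+2}/A_{r+1}$, split $A_{r+1}=A_r+a_{r+1}$, and close with the upper bound $A_r/a_{r+1}\le 2\lambda/\mu$ (with the $r=0$ boundary case handled by $A_0=0$, just as the paper's lemma notes).

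Where you genuinely diverge is the $\mu=0$ case. The paper writes $\frac{a_{r+2}}{a_{r+1}}=\sqrt{A_{r+2}/A_{r+1}}$ and then invokes the two-sided growth estimates $\frac{r^2}{4\lambda}\le A_r\le\frac{r^2}{\lambda}$ (Lemmas~\ref{lemma:A_R} and~\ref{lemma:upper_bound_of_A_R}) to conclude $\sqrt{A_{r+2}/A_{r+1}}\le \frac{2(r+2)}{r+1}\le 4$. You instead solve the defining quadratic $\lambda a_{r+2}^2=A_{r+1}+a_{r+2}$ exactly, substitute $A_{r+1}=\lambda a_{r+1}^2$, and obtain the closed form $\frac{a_{r+2}}{a_{r+1}}=u+\sqrt{u^2+1}$ with $u=\frac{1}{2\lambda a_{r+1}}\le\frac12$ (since $\lambda a_{r+1}=A_r/a_{r+1}+1\ge 1$), giving the bound $\frac{1+\sqrt5}{2}$. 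This buys two things: it is self-contained (no dependence on the $A_r$ growth lemmas, only on the recursion itself) and it yields a strictly sharper constant than the paper's $4$ — which is harmless here since only the existence of a uniform constant is used downstream, but is a cleaner statement. Your lower bound argument ($\lambda a_{r+2}^2=A_{r+2}>A_{r+1}=\lambda a_{r+1}^2$ plus positivity) is the same monotonicity observation as the paper's, just phrased without the square root. Both parts are sound.
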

\begin{proof}
From Lemma \ref{lemma:relationship_between_small_and_large_a_1}, we get
\begin{align*}
    a_{r+2}
    &\stackrel{(\ref{eq:relationship_between_small_and_large_a_1})}{\leq} \frac{A_{r+1}}{2} \left( 1 + \frac{\mu}{\lambda} \right) \left( 1 + \sqrt{1 + \frac{4 \lambda}{\mu}} \right) \\
    &= \frac{A_r + a_{r+1}}{2} \left( 1 + \frac{\mu}{\lambda} \right) \left( 1 + \sqrt{1 + \frac{4 \lambda}{\mu}} \right)
    \stackrel{(\ref{eq:relationship_between_small_and_large_a_1})}{\leq} \frac{a_{r+1}}{2} \left( 1 + \frac{2 \lambda}{\mu} \right) \left( 1 + \frac{\mu}{\lambda} \right) \left( 1 + \sqrt{1 + \frac{4 \lambda}{\mu}} \right).
\end{align*}
When $\mu = 0$, we get
\begin{align*}
    \frac{a_{r+1}}{a_r} &= \sqrt{\frac{A_{r + 1}}{A_r}}
    \leq \sqrt{\frac{(r + 1)^2}{\frac{r^2}{4}}}
    = \frac{2 (r + 1)}{r}
    \leq 4, \\
    \frac{a_{r+1}}{a_r} &= \sqrt{\frac{A_{r+1}}{A_r}} \geq 1,
\end{align*}
where we use $A_{r+1} = A_r + a_{r+1} \geq A_r$.
This concludes the statement.
\end{proof}

\subsection{Main Proof}

\begin{lemma}
Suppose that Assumptions \ref{assumption:strongly_convex} and \ref{assumption:spectral_gap} hold and the following inequality is satisfied:
\begin{align*}
    \sum_{i=1}^n \left\| \nabla F_{i,r} (\vx_i^{(r+\frac{1}{2}}) \right\|^2 \leq \frac{\lambda^2}{352} \sum_{i=1}^n \left\| \vx_i^{(r+\frac{1}{2})} - \vy_i^{(r)} \right\|^2.
\end{align*}
Then, when $M=1$, it holds that
\begin{align}
\label{eq:recursion_of_loss_acc}
    &A_r f (\bar{\vx}^{(r)}) + a_{r+1} f (\vx^\star) + \frac{B_r}{2 n} \sum_{i=1}^n \left\| \vv_i^{(r)} - \vx^\star \right\|^2 + \frac{A_{r+1}}{\delta} \mathcal{E}^{(r)} 
    + A_r \left( \frac{33 \lambda}{16} + \delta \right) \Xi_{\vx}^{(r)}
    + \delta a_{r+1} \Xi_{\vv}^{(r)} \\
    &\geq A_{r+1} f (\bar{\vx}^{(r+1)}) 
    + \frac{B_{r+1}}{2 n} \sum_{i=1}^n \left\| \vv_i^{(r+1)} - \vx^\star \right\|^2 
    + \frac{\mu A_{r+1}}{2 \rho^2} \Xi_{\vx}^{(r+1)} 
    + \frac{B_{r+1}}{2} \left( \frac{1}{\rho^2} - 1 \right) \Xi_{\vv}^{(r+1)} \nonumber \\
    &\quad + \frac{A_{r+1}}{n} \left( \frac{\lambda}{32} - 2 \delta \right) \sum_{i=1}^n \left\| \vx_i^{(r+\frac{1}{2})} - \vy_i^{(r)} \right\|^2 
    + \frac{B_r}{12 n} \sum_{i=1}^n \left\| \vv_i^{(r+\frac{1}{2})} - \vv_i^{(r)} \right\|^2
    + \frac{\mu a_{r+1}}{2 n} \sum_{i=1}^n \left\| \vx_i^{(r+\frac{1}{2})} - \vv_i^{(r+\frac{1}{2})} \right\|^2 \nonumber,
\end{align}
where $\Xi^{(r)}_\vx \coloneqq \tfrac{1}{n} \sum_{i=1}^n \left\| \vx_i^{(r)} - \bar{\vx}^{(r)} \right\|^2$, $\Xi^{(r)}_\vv \coloneqq \tfrac{1}{n} \sum_{i=1}^n \left\| \vv_i^{(r)} - \bar{\vv}^{(r)} \right\|^2$, and $\mathcal{E}^{(r)} \coloneqq \tfrac{1}{n} \sum_{i=1}^n \left\| \vh_i^{(r)} - \nabla h_i (\bar{\vy}^{(r)}) \right\|^2$.
\end{lemma}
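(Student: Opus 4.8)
The statement is the single-round descent inequality underpinning a Monteiro--Svaiter estimate-sequence argument, so the plan is to reproduce the structure of the non-accelerated stabilized recursion~\eqref{eq:recursion_of_loss} while carrying the acceleration weights $A_r$, $a_{r+1}$, $B_r$ through every estimate. With $M=1$ each averaging step preserves the mean (Lemma~\ref{lemma:average_conservation}) and contracts the consensus error by $\rho$ (Eq.~\eqref{eq:spectral_gap}), and in particular $\bar\vx^{(r+1)}$ is the average of $\{\vx_i^{(r+\frac12)}\}$. First I would apply $\mu$-strong convexity of each $f_i$ (Eq.~\eqref{eq:strongly_convex}) at the inexact prox point $\vx_i^{(r+\frac12)}$, lower-bounding $f_i(\vx^\star)$ with weight $a_{r+1}$ (keeping the $\frac{\mu a_{r+1}}{2}$ gain) and $f_i(\bar\vx^{(r)})$ with weight $A_r$; a further convexity bound of $f_i(\vx_i^{(r+\frac12)})$ around $\bar\vx^{(r+1)}$, weighted by $A_{r+1}=A_r+a_{r+1}$, yields the target term $A_{r+1}f(\bar\vx^{(r+1)})$ after summation. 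Since $\sum_i(\vx_i^{(r+\frac12)}-\bar\vx^{(r+1)})=0$, the $\nabla f(\bar\vx^{(r+1)})$ part of the resulting cross term cancels and only $-\nabla h_i(\bar\vx^{(r+1)})$ inner products survive, exactly as in the stabilized proof.

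The second step is the linear-coupling identity that is special to acceleration. I would substitute $\nabla f_i(\vx_i^{(r+\frac12)}) = \nabla F_{i,r}(\vx_i^{(r+\frac12)}) - \vh_i^{(r)} - \lambda(\vx_i^{(r+\frac12)}-\vy_i^{(r)})$ into the aggregated gradient term and use the definition of $\vy_i^{(r)}$ on line~8, i.e. $A_{r+1}\vy_i^{(r)} = A_r\vx_i^{(r)} + a_{r+1}\vv_i^{(r)}$, to decompose the weighted combination $a_{r+1}\vx^\star + A_r\bar\vx^{(r)} - A_{r+1}\vx_i^{(r+\frac12)}$ into three pieces: a mirror-coupling term $a_{r+1}\langle\,\cdot\,,\vx^\star-\vv_i^{(r)}\rangle$, a consensus term $A_r\langle\,\cdot\,,\bar\vx^{(r)}-\vx_i^{(r)}\rangle$ that will feed $\Xi_\vx^{(r)}$, and a prox-displacement term $A_{r+1}\langle\,\cdot\,,\vy_i^{(r)}-\vx_i^{(r+\frac12)}\rangle$.

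For the mirror-coupling term I would invoke the three-point inequality for the $B_{r+1}$-strongly convex subproblem $G_{i,r}(\vv)\coloneqq a_{r+1}(\langle\nabla f_i(\vx_i^{(r+\frac12)})+\vh_i^{(r)},\vv\rangle+\frac{\mu}{2}\|\vv-\vx_i^{(r+\frac12)}\|^2)+\frac{B_r}{2}\|\vv-\vv_i^{(r)}\|^2$ minimized at $\vv_i^{(r+\frac12)}$ (line~12), turning $\frac{B_r}{2}\|\vv_i^{(r)}-\vx^\star\|^2$ on the left into $\frac{B_{r+1}}{2}\|\vv_i^{(r+\frac12)}-\vx^\star\|^2$ plus the displacement $\frac{B_r}{2}\|\vv_i^{(r+\frac12)}-\vv_i^{(r)}\|^2$ and the strong-convexity gain $\frac{\mu a_{r+1}}{2}\|\vx_i^{(r+\frac12)}-\vv_i^{(r+\frac12)}\|^2$. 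Because $\vv_i^{(r+1)}$ and $\vx_i^{(r+1)}$ are mean-preserving gossip averages, the spectral gap (Eq.~\eqref{eq:spectral_gap}) then turns the quadratic consensus gains into the $\frac{B_{r+1}}{2}(\frac{1}{\rho^2}-1)\Xi_\vv^{(r+1)}$ and $\frac{\mu A_{r+1}}{2\rho^2}\Xi_\vx^{(r+1)}$ contributions on the right.

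Finally I would dispose of the error terms. The residual inner products carrying $\vh_i^{(r)}-\nabla h_i(\bar\vx^{(r+1)})$ would be split as $(\vh_i^{(r)}-\nabla h_i(\bar\vy^{(r)}))+(\nabla h_i(\bar\vy^{(r)})-\nabla h_i(\bar\vx^{(r+1)}))$ and controlled by Young's inequality (Eq.~\eqref{eq:inner_product}) at parameter $\delta$ together with the similarity bound (Eq.~\eqref{eq:similarity}), which produces the $\frac{A_{r+1}}{\delta}\mathcal{E}^{(r)}$ term on the left and a similarity-induced $\|\bar\vy^{(r)}-\bar\vx^{(r+1)}\|^2$ gap that is in turn dominated by $\frac1n\sum_i\|\vx_i^{(r+\frac12)}-\vy_i^{(r)}\|^2$ via mean-preservation; the remaining consensus discrepancies are handled by the spectral gap. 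The inexactness of the prox step enters only through $\|\nabla F_{i,r}(\vx_i^{(r+\frac12)})\|^2$, which I would absorb into the $\|\vx_i^{(r+\frac12)}-\vy_i^{(r)}\|^2$ budget using the hypothesis $\sum_i\|\nabla F_{i,r}(\vx_i^{(r+\frac12)})\|^2\le\frac{\lambda^2}{352}\sum_i\|\vx_i^{(r+\frac12)}-\vy_i^{(r)}\|^2$, precisely the role the $\frac{\lambda^2}{10}$ bound plays in the non-accelerated stabilized proof. The \textbf{main obstacle} is the bookkeeping: each Young split must be tuned so that, after summing over $i$ and collecting all quadratics, the surviving coefficients land exactly on $A_r(\frac{33\lambda}{16}+\delta)$, $A_{r+1}(\frac{\lambda}{32}-2\delta)$, and $\frac{B_r}{12}$ with the correct signs, which is what pins down the constant $352$ and the specific Young parameters.
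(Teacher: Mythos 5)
Your plan follows essentially the same route as the paper's proof: strong convexity of $f_i$ at $\vx_i^{(r+\frac12)}$ with weights $A_r$ and $a_{r+1}$, re-expansion around $\bar{\vx}^{(r+1)}$ with the mean-zero cancellation of $\nabla f(\bar{\vx}^{(r+1)})$, the substitution $\nabla f_i + \vh_i^{(r)} = \nabla F_{i,r} + \lambda(\vy_i^{(r)} - \vx_i^{(r+\frac12)})$ combined with the identity $A_{r+1}\vy_i^{(r)} = A_r\vx_i^{(r)} + a_{r+1}\vv_i^{(r)}$, the three-point inequality for the $B_{r+1}$-strongly convex subproblem $G_{i,r}$, mean-preservation plus the spectral gap for the $\Xi^{(r+1)}$ terms, and the split of the tracking error at $\bar{\vy}^{(r)}$ via Young's inequality at parameter $\delta$ and the similarity bound. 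The only cosmetic difference is that you decompose the coupling term before invoking the $G_{i,r}$ optimality (working with $\vx^\star - \vv_i^{(r)}$) whereas the paper applies the three-point inequality first and then decomposes with $\vv_i^{(r+\frac12)} - \vv_i^{(r)}$; these are equivalent, and your identification of the bookkeeping of Young parameters as the remaining work matches exactly what fixes the paper's constants.
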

\begin{proof}
We have
\begin{align*}
    &A_r f (\bar{\vx}^{(r)}) + a_{r+1} f (\vx^\star) + \frac{B_r}{2 n} \sum_{i=1}^n \left\| \vv_i^{(r)} - \vx^\star \right\|^2 \\
    &\stackrel{(\ref{eq:strongly_convex})}{\geq} \frac{A_{r+1}}{n} \sum_{i=1}^n f_i (\vx_i^{(r+\frac{1}{2})}) 
    + \frac{1}{n} \sum_{i=1}^n \left\langle \nabla f_i (\vx_i^{(r)}), A_r \bar{\vx}^{(r)} - A_{r+1} \vx_i^{(r+\frac{1}{2})} + a_{r+1} \vx^\star \right\rangle \\
    &\quad + \frac{B_r}{2 n} \sum_{i=1}^n \left\| \vv_i^{(r)} - \vx^\star \right\|^2
    + \frac{\mu A_r}{2 n} \sum_{i=1}^n \left\| \vx_i^{(r+\frac{1}{2})} - \bar{\vx}^{(r)} \right\|^2
    + \frac{\mu a_{r+1}}{2 n} \sum_{i=1}^n \left\| \vx_i^{(r+\frac{1}{2})} - \vx^\star \right\|^2 \\
    &= \frac{A_{r+1}}{n} \sum_{i=1}^n f_i (\vx_i^{(r+\frac{1}{2})}) 
    + \frac{1}{n} \sum_{i=1}^n \left\langle \nabla f_i (\vx_i^{(r)}), A_r \bar{\vx}^{(r)} - A_{r+1} \vx_i^{(r+\frac{1}{2})} \right\rangle
    + \frac{\mu A_r}{2 n} \sum_{i=1}^n \left\| \vx_i^{(r+\frac{1}{2})} - \bar{\vx}^{(r)} \right\|^2 \\
    &\quad + \frac{a_{r+1}}{n} \sum_{i=1}^n \left( \left\langle \nabla f_i (\vx_i^{(r)}) + \vh_i^{(r)}, \vx^\star \right\rangle 
    + \frac{\mu}{2} \left\| \vx_i^{(r+\frac{1}{2})} - \vx^\star \right\|^2 \right)
    + \frac{B_r}{2 n} \sum_{i=1}^n \left\| \vv_i^{(r)} - \vx^\star \right\|^2 \\
    &\stackrel{(a)}{\geq}  \frac{A_{r+1}}{n} \sum_{i=1}^n f_i (\vx_i^{(r+\frac{1}{2})}) 
    + \frac{1}{n} \sum_{i=1}^n \left\langle \nabla f_i (\vx_i^{(r)}), A_r \bar{\vx}^{(r)} - A_{r+1} \vx_i^{(r+\frac{1}{2})} \right\rangle
    + \frac{\mu A_r}{2 n} \sum_{i=1}^n \left\| \vx_i^{(r+\frac{1}{2})} - \bar{\vx}^{(r)} \right\|^2 \\
    &\quad + \frac{a_{r+1}}{n} \sum_{i=1}^n \left( \left\langle \nabla f_i (\vx_i^{(r)}) + \vh_i^{(r)}, \vv_i^{(r+\frac{1}{2})} \right\rangle 
    + \frac{\mu}{2} \left\| \vx_i^{(r+\frac{1}{2})} - \vv_i^{(r+\frac{1}{2})} \right\|^2 \right)
    + \frac{B_r}{2 n} \sum_{i=1}^n \left\| \vv_i^{(r)} - \vv_i^{(r+\frac{1}{2})} \right\|^2 \\
    &\quad + \frac{B_{r+1}}{2 n} \sum_{i=1}^n \left\| \vv_i^{(r+\frac{1}{2})} - \vx^\star \right\|^2,
\end{align*}
where we use the fact that $a_{r+1} ( \langle \nabla f_i (\vx_i^{(r+\frac{1}{2})}) + \vh_i^{(r)}, \vv \rangle + \frac{\mu}{2} \| \vv - \vx_i^{(r+\frac{1}{2})} \|^2 ) + \frac{B_r}{2} \| \vv - \vv_i^{(r)} \|^2$ is $B_{r+1}$-strongly convex in (a).
By using the above inequalities,
\begin{align*}
    \frac{B_{r+1}}{2 n} \sum_{i=1}^n \left\| \vv_i^{(r+\frac{1}{2})} - \vx^\star \right\|^2 
    &\stackrel{(\ref{eq:average_shift})}{=} \frac{B_{r+1}}{2} \left\| \bar{\vv}^{(r+1)} - \vx^\star \right\|^2 
    + \frac{B_{r+1}}{2 n} \sum_{i=1}^n \left\| \vv_i^{(r+\frac{1}{2})} - \bar{\vv}^{(r+1)} \right\|^2 \\
    &\stackrel{(\ref{eq:average_shift})}{=} \frac{B_{r+1}}{2 n} \sum_{i=1}^n \left\| \vv_i^{(r+1)} - \vx^\star \right\|^2 
    - \frac{B_{r+1}}{2 n} \sum_{i=1}^n \left\| \vv_i^{(r+1)} - \bar{\vv}^{(r+1)} \right\|^2 
    + \frac{B_{r+1}}{2 n} \sum_{i=1}^n \left\| \vv_i^{(r+\frac{1}{2})} - \bar{\vv}^{(r+1)} \right\|^2 \\
    &\stackrel{(\ref{eq:spectral_gap})}{\geq} \frac{B_{r+1}}{2 n} \sum_{i=1}^n \left\| \vv_i^{(r+1)} - \vx^\star \right\|^2 
    + \frac{B_{r+1}}{2 n} \left( \frac{1}{\rho^2} - 1 \right) \sum_{i=1}^n \left\| \vv_i^{(r+1)} - \bar{\vv}^{(r+1)} \right\|^2,
\end{align*}
we get
\begin{align*}
    &A_r f (\bar{\vx}^{(r)}) + a_{r+1} f (\vx^\star) + \frac{B_r}{2 n} \sum_{i=1}^n \left\| \vv_i^{(r)} - \vx^\star \right\|^2 \\
    &\geq \frac{A_{r+1}}{n} \sum_{i=1}^n f_i (\vx_i^{(r+\frac{1}{2})}) 
    + \frac{A_{r+1}}{n} \sum_{i=1}^n \left\langle \vh_i^{(r)}, \vx_i^{(r+\frac{1}{2})} \right\rangle
    + \frac{B_{r+1}}{2 n} \sum_{i=1}^n \left\| \vv_i^{(r+1)} - \vx^\star \right\|^2 \\
    &\quad + \underbrace{\frac{1}{n} \sum_{i=1}^n \left\langle \nabla f_i (\vx_i^{(r)}) + \vh_i^{(r)}, A_r \bar{\vx}^{(r)} - A_{r+1} \vx_i^{(r+\frac{1}{2})} + a_{r+1} \vv_i^{(r+\frac{1}{2})} \right\rangle}_{\mathcal{T}^\star}
    + \frac{\mu A_r}{2 n} \sum_{i=1}^n \left\| \vx_i^{(r+\frac{1}{2})} - \bar{\vx}^{(r)} \right\|^2 \\
    &\quad + \frac{\mu a_{r+1}}{2 n} \sum_{i=1}^n \left\| \vx_i^{(r+\frac{1}{2})} - \vv_i^{(r+\frac{1}{2})} \right\|^2
    + \frac{B_r}{2 n} \sum_{i=1}^n \left\| \vv_i^{(r)} - \vv_i^{(r+\frac{1}{2})} \right\|^2 \\
    &\quad + \frac{B_{r+1}}{2 n} \left( \frac{1}{\rho^2} - 1 \right) \sum_{i=1}^n \left\| \vv_i^{(r+1)} - \bar{\vv}^{(r+1)} \right\|^2.
\end{align*}
From the definition of $F_{i,r}$, we have
\begin{align*}
    \nabla F_{i,r} (\vx) = \nabla f_i (\vx) + \vh_i^{(r)} + \lambda (\vx - \vy_i^{(r)}).
\end{align*}
Using the above equality, we get
\begin{align*}
    \mathcal{T}^\star
    &= \frac{1}{n} \sum_{i=1}^n \left\langle \nabla F_{i,r} (\vx_i^{(r+\frac{1}{2})}) + \lambda \left( \vy_i^{(r)} - \vx_i^{(r+\frac{1}{2})} \right), A_r \bar{\vx}^{(r)} - A_{r+1} \vx_i^{(r+\frac{1}{2})} + a_{r+1} \vv_i^{(r+\frac{1}{2})} \right\rangle \\
    &= \frac{1}{n} \sum_{i=1}^n \left\langle \nabla F_{i,r} (\vx_i^{(r+\frac{1}{2})}) + \lambda \left( \vy_i^{(r)} - \vx_i^{(r+\frac{1}{2})} \right), A_r \left( \bar{\vx}^{(r)} - \vx_i^{(r)} \right) + A_{r+1} \left( \vy_i^{(r)} - \vx_i^{(r+\frac{1}{2})} \right) + a_{r+1} \left( \vv_i^{(r+\frac{1}{2})} - \vv_i^{(r)} \right) \right\rangle \\
    &= \frac{1}{n} \sum_{i=1}^n \left\langle \nabla F_{i,r} (\vx_i^{(r+\frac{1}{2})}), A_r \left( \bar{\vx}^{(r)} - \vx_i^{(r)} \right) + A_{r+1} \left( \vy_i^{(r)} - \vx_i^{(r+\frac{1}{2})} \right) + a_{r+1} \left( \vv_i^{(r+\frac{1}{2})} - \vv_i^{(r)} \right) \right\rangle \\
    &\quad + \frac{\lambda}{n} \sum_{i=1}^n \left\langle \vy_i^{(r)} - \vx_i^{(r+\frac{1}{2})}, A_r \left( \bar{\vx}^{(r)} - \vx_i^{(r)} \right) + a_{r+1} \left( \vv_i^{(r+\frac{1}{2})} - \vv_i^{(r)} \right) \right\rangle
    + \frac{\lambda A_{r+1}}{n} \sum_{i=1}^n \left\| \vx_i^{(r+\frac{1}{2})} - \vy_i^{(r)} \right\|^2 \\
    &\stackrel{(\ref{eq:inner_product})}{\geq} \frac{1}{n} \sum_{i=1}^n \left\langle \nabla F_{i,r} (\vx_i^{(r+\frac{1}{2})}), A_r \left( \bar{\vx}^{(r)} - \vx_i^{(r)} \right) + A_{r+1} \left( \vy_i^{(r)} - \vx_i^{(r+\frac{1}{2})} \right) + a_{r+1} \left( \vv_i^{(r+\frac{1}{2})} - \vv_i^{(r)} \right) \right\rangle \\
    &\quad - \frac{2 \lambda A_r}{n} \sum_{i=1}^n \left\| \bar{\vx}^{(r)} - \vx_i^{(r)} \right\|^2
    - \frac{B_r}{3 n} \sum_{i=1}^n \left\| \vv_i^{(r+\frac{1}{2})} - \vv_i^{(r)} \right\|^2
    + \frac{\lambda A_{r+1}}{8 n} \sum_{i=1}^n \left\| \vx_i^{(r+\frac{1}{2})} - \vy_i^{(r)} \right\|^2.
\end{align*}
We have
\begin{align*}
    A_r \left\langle \nabla F_{i,r} (\vx_i^{(r+\frac{1}{2})}), \bar{\vx}^{(r)} - \vx_i^{(r)} \right\rangle 
    &\stackrel{(\ref{eq:inner_product})}{\geq} - \frac{4 A_{r}}{\lambda} \left\| \nabla F_{i,r} (\vx_i^{(r+\frac{1}{2})}) \right\|^2 
    - \frac{\lambda A_{r}}{16} \left\| \bar{\vx}^{(r)} - \vx_i^{(r)} \right\|^2, \\
    A_{r+1} \left\langle \nabla F_{i,r} (\vx_i^{(r+\frac{1}{2})}), \vy_i^{(r)} - \vx_i^{(r+\frac{1}{2})} \right\rangle
    &\stackrel{(\ref{eq:inner_product})}{\geq} - \frac{4 A_{r+1}}{\lambda} \left\| \nabla F_{i,r} (\vx_i^{(r+\frac{1}{2})}) \right\|^2 
    - \frac{\lambda A_{r+1}}{16} \left\| \vx_i^{(r+\frac{1}{2})} - \vy_i^{(r)} \right\|^2, \\
    a_{r+1} \left\langle \nabla F_{i,r} (\vx_i^{(r+\frac{1}{2})}), \vv_i^{(r+\frac{1}{2})} - \vv_i^{(r)} \right\rangle
    &\stackrel{(\ref{eq:inner_product})}{\geq} - \frac{3 A_{r+1}}{\lambda} \left\| \nabla F_{i,r} (\vx_i^{(r+\frac{1}{2})}) \right\|^2 
    - \frac{B_r}{12} \left\| \vv_i^{(r+\frac{1}{2})} - \vv_i^{(r)} \right\|^2,
\end{align*}
where we use $\lambda = A_{r+1} B_r a_{r+1}^{-2}$ in the last inequality. 
By using the above inequalities, we get
\begin{align*}
    \mathcal{T}^\star
    &\geq - \frac{11 A_{r+1}}{\lambda} \left\| \nabla F_{i,r} (\vx_i^{(r+\frac{1}{2}}) \right\|^2 \\
    &\quad - \frac{33 \lambda A_r}{16 n} \sum_{i=1}^n \left\| \bar{\vx}^{(r)} - \vx_i^{(r)} \right\|^2
    - \frac{5 B_r}{12 n} \sum_{i=1}^n \left\| \vv_i^{(r+\frac{1}{2})} - \vv_i^{(r)} \right\|^2
    + \frac{\lambda A_{r+1}}{16 n} \sum_{i=1}^n \left\| \vx_i^{(r+\frac{1}{2})} - \vy_i^{(r)} \right\|^2.
\end{align*}
Then, using $\frac{1}{n} \sum_{i=1}^n \left\| \nabla F_{i,r} (\vx_i^{(r+\frac{1}{2}}) \right\|^2 \leq \frac{\lambda^2}{352 n} \sum_{i=1}^n \left\| \vx_i^{(r+\frac{1}{2})} - \vy_i^{(r)} \right\|^2$, we obtain
\begin{align*}
    \mathcal{T}^\star
    &\geq \frac{\lambda A_{r+1}}{32 n} \sum_{i=1}^n \left\| \vx_i^{(r+\frac{1}{2})} - \vy_i^{(r)} \right\|^2
    - \frac{33 \lambda A_r}{16 n} \sum_{i=1}^n \left\| \bar{\vx}^{(r)} - \vx_i^{(r)} \right\|^2
    - \frac{5 B_r}{12 n} \sum_{i=1}^n \left\| \vv_i^{(r+\frac{1}{2})} - \vv_i^{(r)} \right\|^2.
\end{align*}
We have
\begin{align*}
    &A_r f (\bar{\vx}^{(r)}) + a_{r+1} f (\vx^\star) + \frac{B_r}{2 n} \sum_{i=1}^n \left\| \vv_i^{(r)} - \vx^\star \right\|^2 \\
    &\geq \frac{A_{r+1}}{n} \sum_{i=1}^n f_i (\vx_i^{(r+\frac{1}{2})}) 
    + \frac{A_{r+1}}{n} \sum_{i=1}^n \left\langle \vh_i^{(r)}, \vx_i^{(r+\frac{1}{2})} \right\rangle
    + \frac{B_{r+1}}{2 n} \sum_{i=1}^n \left\| \vv_i^{(r+1)} - \vx^\star \right\|^2 \\
    &\quad + \frac{\lambda A_{r+1}}{32 n} \sum_{i=1}^n \left\| \vx_i^{(r+\frac{1}{2})} - \vy_i^{(r)} \right\|^2
    - \frac{33 \lambda A_r}{16 n} \sum_{i=1}^n \left\| \bar{\vx}^{(r)} - \vx_i^{(r)} \right\|^2
    + \frac{\mu A_r}{2 n} \sum_{i=1}^n \left\| \vx_i^{(r+\frac{1}{2})} - \bar{\vx}^{(r)} \right\|^2 \\
    &\quad + \frac{\mu a_{r+1}}{2 n} \sum_{i=1}^n \left\| \vx_i^{(r+\frac{1}{2})} - \vv_i^{(r+\frac{1}{2})} \right\|^2
    + \frac{B_r}{12 n} \sum_{i=1}^n \left\| \vv_i^{(r)} - \vv_i^{(r+\frac{1}{2})} \right\|^2 \\
    &\quad + \frac{B_{r+1}}{2 n} \left( \frac{1}{\rho^2} - 1 \right) \sum_{i=1}^n \left\| \vv_i^{(r+1)} - \bar{\vv}^{(r+1)} \right\|^2.
\end{align*}
Using the following inequality, 
\begin{align*}
    &\frac{A_{r+1}}{n} \sum_{i=1}^n f_i (\vx_i^{(r+\frac{1}{2})}) \\
    &\geq A_{r+1} f (\bar{\vx}^{(r+1)}) 
    + \frac{A_{r+1}}{n} \sum_{i=1}^n \left\langle \nabla f_i (\bar{\vx}^{(r+1)}), \vx_i^{(r+\frac{1}{2})} - \bar{\vx}^{(r+1)} \right\rangle
    + \frac{\mu A_{r+1}}{2 n} \sum_{i=1}^n \left\| \vx_i^{(r+\frac{1}{2})} - \bar{\vx}^{(r+1)} \right\|^2.
\end{align*}
we get
\begin{align*}
    &A_r f (\bar{\vx}^{(r)}) + a_{r+1} f (\vx^\star) + \frac{B_r}{2 n} \sum_{i=1}^n \left\| \vv_i^{(r)} - \vx^\star \right\|^2 \\
    &\geq A_{r+1} f (\bar{\vx}^{(r+1)}) 
    + \underbrace{\frac{A_{r+1}}{n} \sum_{i=1}^n \left\langle \nabla f_i (\bar{\vx}^{(r+1)}) + \vh_i^{(r)}, \vx_i^{(r+\frac{1}{2})} - \bar{\vx}^{(r+1)} \right\rangle}_{\mathcal{T}^{\star\star}} \\
    &\quad + \frac{\mu A_{r+1}}{2 n} \sum_{i=1}^n \left\| \vx_i^{(r+\frac{1}{2})} - \bar{\vx}^{(r+1)} \right\|^2
    + \frac{B_{r+1}}{2 n} \sum_{i=1}^n \left\| \vv_i^{(r+1)} - \vx^\star \right\|^2 \\
    &\quad + \frac{\lambda A_{r+1}}{32 n} \sum_{i=1}^n \left\| \vx_i^{(r+\frac{1}{2})} - \vy_i^{(r)} \right\|^2
    - \frac{33 \lambda A_r}{16 n} \sum_{i=1}^n \left\| \bar{\vx}^{(r)} - \vx_i^{(r)} \right\|^2
    + \frac{\mu A_r}{2 n} \sum_{i=1}^n \left\| \vx_i^{(r+\frac{1}{2})} - \bar{\vx}^{(r)} \right\|^2 \\
    &\quad + \frac{\mu a_{r+1}}{2 n} \sum_{i=1}^n \left\| \vx_i^{(r+\frac{1}{2})} - \vv_i^{(r+\frac{1}{2})} \right\|^2
    + \frac{B_r}{12 n} \sum_{i=1}^n \left\| \vv_i^{(r)} - \vv_i^{(r+\frac{1}{2})} \right\|^2 \\
    &\quad + \frac{B_{r+1}}{2 n} \left( \frac{1}{\rho^2} - 1 \right) \sum_{i=1}^n \left\| \vv_i^{(r+1)} - \bar{\vv}^{(r+1)} \right\|^2.
\end{align*}
Using the following inequality, we obtain
\begin{align*}
    \mathcal{T}^{\star\star}
    &= \frac{A_{r+1}}{n} \sum_{i=1}^n \left\langle \nabla f_i (\bar{\vx}^{(r+1)}) + \vh_i^{(r)} - \nabla f (\bar{\vx}^{(r+1)}), \vx_i^{(r+\frac{1}{2})} - \bar{\vy}^{(r)} \right\rangle \\
    &\stackrel{(\ref{eq:inner_product})}{\geq} 
    - \frac{A_{r+1}}{2 \delta n} \sum_{i=1}^n \left\| \nabla f_i (\bar{\vx}^{(r+1)}) + \vh_i^{(r)} - \nabla f (\bar{\vx}^{(r+1)}) \right\|^2
    - \frac{\delta A_{r+1}}{2 n} \sum_{i=1}^n \left\| \vx_i^{(r+\frac{1}{2})} - \bar{\vy}^{(r)} \right\|^2 \\
    &\stackrel{(\ref{eq:norm}), (\ref{eq:similarity})}{\geq} - \delta A_{r+1} \left\| \bar{\vx}^{(r+1)} - \bar{\vy}^{(r)} \right\|^2
    - \frac{A_{r+1}}{\delta n} \sum_{i=1}^n \left\| \nabla f_i (\bar{\vy}^{(r)}) + \vh_i^{(r)} - \nabla f (\bar{\vy}^{(r)}) \right\|^2
    - \frac{\delta A_{r+1}}{2 n} \sum_{i=1}^n \left\| \vx_i^{(r+\frac{1}{2})} - \bar{\vy}^{(r)} \right\|^2 \\
    &\stackrel{(\ref{eq:sum})}{\geq} - \frac{2 \delta A_{r+1}}{n} \sum_{i=1}^n \left\| \vx_i^{(r+\frac{1}{2})} - \vy_i^{(r)} \right\|^2
    - \frac{A_{r+1}}{\delta} \mathcal{E}^{(r)}
    - \frac{\delta A_{r+1}}{n} \sum_{i=1}^n \left\| \vy_i^{(r)} - \bar{\vy}^{(r)} \right\|^2 \\
    &\geq - \frac{2 \delta A_{r+1}}{n} \sum_{i=1}^n \left\| \vx_i^{(r+\frac{1}{2})} - \vy_i^{(r)} \right\|^2
    - \frac{A_{r+1}}{\delta} \mathcal{E}^{(r)}
    - \frac{\delta A_{r}}{n} \sum_{i=1}^n \left\| \vx_i^{(r)} - \bar{\vx}^{(r)} \right\|^2
    - \frac{\delta a_{r+1}}{n} \sum_{i=1}^n \left\| \vv_i^{(r)} - \bar{\vv}^{(r)} \right\|^2,
\end{align*}
where we use Jensen's inequality in the last inequality.
Using the above inequality, we get
\begin{align*}
    &A_r f (\bar{\vx}^{(r)}) + a_{r+1} f (\vx^\star) + \frac{B_r}{2 n} \sum_{i=1}^n \left\| \vv_i^{(r)} - \vx^\star \right\|^2 + \frac{A_{r+1}}{\delta} \mathcal{E}^{(r)} \\
    &\geq A_{r+1} f (\bar{\vx}^{(r+1)}) 
    + \frac{B_{r+1}}{2 n} \sum_{i=1}^n \left\| \vv_i^{(r+1)} - \vx^\star \right\|^2 \\
    &\quad - \frac{\delta a_{r+1}}{n} \sum_{i=1}^n \left\| \vv_i^{(r)} - \bar{\vv}^{(r)} \right\|^2
    + \frac{\mu A_{r+1}}{2 n} \sum_{i=1}^n \left\| \vx_i^{(r+\frac{1}{2})} - \bar{\vx}^{(r+1)} \right\|^2
     \\
    &\quad + \frac{A_{r+1}}{n} \left( \frac{\lambda}{32} - 2 \delta \right) \sum_{i=1}^n \left\| \vx_i^{(r+\frac{1}{2})} - \vy_i^{(r)} \right\|^2
    - \left( \frac{33 \lambda A_r}{16 n} + \frac{\delta A_{r}}{n} \right) \sum_{i=1}^n \left\| \vx_i^{(r)} - \bar{\vx}^{(r)} \right\|^2
    + \frac{\mu A_r}{2 n} \sum_{i=1}^n \left\| \vx_i^{(r+\frac{1}{2})} - \bar{\vx}^{(r)} \right\|^2 \\
    &\quad + \frac{\mu a_{r+1}}{2 n} \sum_{i=1}^n \left\| \vx_i^{(r+\frac{1}{2})} - \vv_i^{(r+\frac{1}{2})} \right\|^2
    + \frac{B_r}{12 n} \sum_{i=1}^n \left\| \vv_i^{(r+\frac{1}{2})} - \vv_i^{(r)} \right\|^2 \\
    &\quad + \frac{B_{r+1}}{2 n} \left( \frac{1}{\rho^2} - 1 \right) \sum_{i=1}^n \left\| \vv_i^{(r+1)} - \bar{\vv}^{(r+1)} \right\|^2 \\
    &\geq A_{r+1} f (\bar{\vx}^{(r+1)}) 
    + \frac{B_{r+1}}{2 n} \sum_{i=1}^n \left\| \vv_i^{(r+1)} - \vx^\star \right\|^2 \\
    &\quad - \frac{\delta a_{r+1}}{n} \sum_{i=1}^n \left\| \vv_i^{(r)} - \bar{\vv}^{(r)} \right\|^2
    + \frac{\mu A_{r+1}}{2 \rho^2 n} \sum_{i=1}^n \left\| \vx_i^{(r+1)} - \bar{\vx}^{(r+1)} \right\|^2
     \\
    &\quad + \frac{A_{r+1}}{n} \left( \frac{\lambda}{32} - 2 \delta \right) \sum_{i=1}^n \left\| \vx_i^{(r+\frac{1}{2})} - \vy_i^{(r)} \right\|^2
    - \left( \frac{33 \lambda A_r}{16 n} + \frac{\delta A_{r}}{n} \right) \sum_{i=1}^n \left\| \vx_i^{(r)} - \bar{\vx}^{(r)} \right\|^2
    + \frac{\mu A_r}{2 n} \sum_{i=1}^n \left\| \vx_i^{(r+\frac{1}{2})} - \bar{\vx}^{(r)} \right\|^2 \\
    &\quad + \frac{\mu a_{r+1}}{2 n} \sum_{i=1}^n \left\| \vx_i^{(r+\frac{1}{2})} - \vv_i^{(r+\frac{1}{2})} \right\|^2
    + \frac{B_r}{12 n} \sum_{i=1}^n \left\| \vv_i^{(r+\frac{1}{2})} - \vv_i^{(r)} \right\|^2 \\
    &\quad + \frac{B_{r+1}}{2 n} \left( \frac{1}{\rho^2} - 1 \right) \sum_{i=1}^n \left\| \vv_i^{(r+1)} - \bar{\vv}^{(r+1)} \right\|^2 
\end{align*}
By summarizing the above inequality, we obtain
\begin{align*}
    &A_r f (\bar{\vx}^{(r)}) + a_{r+1} f (\vx^\star) + \frac{B_r}{2 n} \sum_{i=1}^n \left\| \vv_i^{(r)} - \vx^\star \right\|^2 + \frac{A_{r+1}}{\delta} \mathcal{E}^{(r)} \\
    &\geq A_{r+1} f (\bar{\vx}^{(r+1)}) 
    + \frac{B_{r+1}}{2 n} \sum_{i=1}^n \left\| \vv_i^{(r+1)} - \vx^\star \right\|^2 \\
    &\quad + \frac{A_{r+1}}{n} \left( \frac{\lambda}{32} - 2 \delta \right) \sum_{i=1}^n \left\| \vx_i^{(r+\frac{1}{2})} - \vy_i^{(r)} \right\|^2 
    + \frac{B_r}{12 n} \sum_{i=1}^n \left\| \vv_i^{(r+\frac{1}{2})} - \vv_i^{(r)} \right\|^2
    + \frac{B_{r+1}}{2 n} \left( \frac{1}{\rho^2} - 1 \right) \sum_{i=1}^n \left\| \vv_i^{(r+1)} - \bar{\vv}^{(r+1)} \right\|^2 \\
    &\quad  - \left( \frac{33 \lambda A_r}{16 n} + \frac{\delta A_{r}}{n} \right) \sum_{i=1}^n \left\| \vx_i^{(r)} - \bar{\vx}^{(r)} \right\|^2
    - \frac{\delta a_{r+1}}{n} \sum_{i=1}^n \left\| \vv_i^{(r)} - \bar{\vv}^{(r)} \right\|^2
    + \frac{\mu A_{r+1}}{2 \rho^2 n} \sum_{i=1}^n \left\| \vx_i^{(r+1)} - \bar{\vx}^{(r+1)} \right\|^2
     \\
    &\quad + \frac{\mu a_{r+1}}{2 n} \sum_{i=1}^n \left\| \vx_i^{(r+\frac{1}{2})} - \vv_i^{(r+\frac{1}{2})} \right\|^2.
\end{align*}
This concludes the statement.
\end{proof}

\begin{lemma}
\label{lemma:recursion_of_error_acc}
Suppose that Assumptions \ref{assumption:smooth} and \ref{eq:spectral_gap} hold.
Then, when $M=1$, it holds that
\begin{align}
    4 \rho^2 \mathcal{E}^{(r)}
    &\geq  \mathcal{E}^{(r+1)}
    -  8 \rho^2 \delta^2 \frac{a_{r+2}}{A_{r+2}} \left\| \bar{\vv}^{(r+1)} - \bar{\vx}^{(r+1)} \right\|^2 \nonumber \\
    &\quad - 8 \rho^2 \delta^2 \frac{1}{n} \sum_{i=1}^n \left\| \vx_i^{(r+\frac{1}{2})} - \vy_i^{(r)} \right\|^2
    - 32 L^2 \frac{A_{r+1}}{A_{r+2}} \Xi^{(r+1)}_\vx
    - 32 L^2 \frac{a_{r+2}}{A_{r+2}} \Xi^{(r+1)}_\vv,
\end{align}
where $\Xi^{(r)}_\vx \coloneqq \tfrac{1}{n} \sum_{i=1}^n \left\| \vx_i^{(r)} - \bar{\vx}^{(r)} \right\|^2$, $\Xi^{(r)}_\vv \coloneqq \tfrac{1}{n} \sum_{i=1}^n \left\| \vv_i^{(r)} - \bar{\vv}^{(r)} \right\|^2$, and $\mathcal{E}^{(r)} \coloneqq \tfrac{1}{n} \sum_{i=1}^n \left\| \vh_i^{(r)} - \nabla h_i (\bar{\vy}^{(r)}) \right\|^2$.
\end{lemma}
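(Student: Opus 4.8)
The plan is to reduce this to the single--gossip gradient--tracking recursion already established in Lemma~\ref{lemma:general_lemma_for_e}, and then to translate the two ``driving'' terms of that recursion (the mean--shift term and the consensus term) into the quantities appearing on the right--hand side using the acceleration bookkeeping relations. First I would apply Lemma~\ref{lemma:general_lemma_for_e} with the extrapolation points $\vy_i^{(r)}$ playing the role of the iterates $\vx_i^{(r)}$ there. This is legitimate because in Alg.~\ref{alg:accelerated_stabilized_sonata} the tracking variable $\vh^{(r+1)}$ is obtained by gossiping $\vh_i^{(r)}+\nabla f_i(\vy_i^{(r+1)})$ and subtracting $\nabla f_i(\vy_i^{(r+1)})$, which is exactly the update analyzed in that lemma (and $\sum_i\vh_i^{(r)}=\mathbf{0}$ for all $r$ by the initialization in Theorem~\ref{theorem:acc_stabilized_sonata}, cf.\ Eq.~\eqref{eq:average_y}). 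With $M=1$ this yields
\begin{align*}
    \mathcal{E}^{(r+1)} \leq 4\rho^2 \mathcal{E}^{(r)} + 4\rho^2\delta^2 \left\| \bar{\vy}^{(r+1)} - \bar{\vy}^{(r)} \right\|^2 + 32 L^2 \Xi_{\vy}^{(r+1)},
\end{align*}
where $\Xi_\vy^{(r+1)} \coloneqq \tfrac{1}{n}\sum_{i=1}^n \| \vy_i^{(r+1)} - \bar{\vy}^{(r+1)} \|^2$. It then remains only to replace $\Xi_\vy^{(r+1)}$ and $\|\bar{\vy}^{(r+1)}-\bar{\vy}^{(r)}\|^2$ by the target quantities.

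For the consensus term I would exploit that $\vy_i^{(r+1)} = \tfrac{A_{r+1}\vx_i^{(r+1)} + a_{r+2}\vv_i^{(r+1)}}{A_{r+2}}$ is a convex combination, the weights summing to one since $A_{r+2}=A_{r+1}+a_{r+2}$. Subtracting the analogous identity for $\bar{\vy}^{(r+1)}$, applying Jensen's inequality (convexity of $\|\cdot\|^2$) termwise, and averaging over $i$ gives
\begin{align*}
    \Xi_\vy^{(r+1)} \leq \frac{A_{r+1}}{A_{r+2}} \Xi_\vx^{(r+1)} + \frac{a_{r+2}}{A_{r+2}} \Xi_\vv^{(r+1)},
\end{align*}
which produces the last two terms with the correct prefactor $32L^2$.

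For the mean--shift term, the key step is to expand $\bar{\vy}^{(r+1)}-\bar{\vy}^{(r)}$. Writing $\bar{\vy}^{(r+1)} = \bar{\vx}^{(r+1)} + \tfrac{a_{r+2}}{A_{r+2}}(\bar{\vv}^{(r+1)} - \bar{\vx}^{(r+1)})$ and using that (fast) gossip preserves the mean (Lemma~\ref{lemma:average_conservation}), so that $\bar{\vx}^{(r+1)} = \tfrac{1}{n}\sum_i\vx_i^{(r+\frac{1}{2})}$ while $\tfrac{1}{n}\sum_i\vy_i^{(r)}=\bar{\vy}^{(r)}$ by definition, I obtain
\begin{align*}
    \bar{\vy}^{(r+1)} - \bar{\vy}^{(r)} = \frac{a_{r+2}}{A_{r+2}}\left(\bar{\vv}^{(r+1)} - \bar{\vx}^{(r+1)}\right) + \frac{1}{n}\sum_{i=1}^n \left(\vx_i^{(r+\frac{1}{2})} - \vy_i^{(r)}\right).
\end{align*}
Applying Eq.~\eqref{eq:norm} with $\gamma=1$, then Eq.~\eqref{eq:sum} to the second group, and finally the trivial bound $\tfrac{a_{r+2}^2}{A_{r+2}^2}\leq\tfrac{a_{r+2}}{A_{r+2}}$ (which holds since $A_{r+2}\geq a_{r+2}$) gives
\begin{align*}
    \left\| \bar{\vy}^{(r+1)}-\bar{\vy}^{(r)} \right\|^2 \leq 2\frac{a_{r+2}}{A_{r+2}}\left\| \bar{\vv}^{(r+1)}-\bar{\vx}^{(r+1)} \right\|^2 + \frac{2}{n}\sum_{i=1}^n\left\| \vx_i^{(r+\frac{1}{2})}-\vy_i^{(r)} \right\|^2.
\end{align*}
Multiplying by $4\rho^2\delta^2$ and substituting both displayed bounds into the recursion, then rearranging, yields the claim. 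I expect the only genuine subtlety to be the bookkeeping for this mean--shift decomposition: correctly combining mean--preservation of gossip with the convex--combination structure of $\vy$ so that the $\bar{\vv}-\bar{\vx}$ gap enters with weight $a_{r+2}/A_{r+2}$ (rather than its square, which would be the naive outcome of Young's inequality); the remaining steps are direct invocations of the earlier basic and tracking lemmas.
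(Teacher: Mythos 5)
Your proposal is correct and follows essentially the same route as the paper's proof: apply Lemma~\ref{lemma:general_lemma_for_e} with the $\vy_i$ iterates, split the mean-shift via $\bar{\vy}^{(r+1)}-\bar{\vy}^{(r)}=\tfrac{a_{r+2}}{A_{r+2}}\bigl(\bar{\vv}^{(r+1)}-\bar{\vx}^{(r+1)}\bigr)+\bigl(\bar{\vx}^{(r+1)}-\bar{\vy}^{(r)}\bigr)$ together with mean preservation of gossip, use Young's inequality with $\gamma=1$, Jensen, and $\bigl(\tfrac{a_{r+2}}{A_{r+2}}\bigr)^2\le\tfrac{a_{r+2}}{A_{r+2}}$, and handle the consensus term by convexity of the combination defining $\vy_i^{(r+1)}$. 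All constants ($8\rho^2\delta^2$, $32L^2$) come out exactly as in the paper, so there is nothing to correct.
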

\begin{proof}
From Lemma \ref{lemma:general_lemma_for_e}, we have
\begin{align}
    \mathcal{E}^{(r+1)} 
    \leq 4 \rho^2 \mathcal{E}^{(r)}
    + 4 \rho^2 \delta^2 \left\| \bar{\vy}^{(r+1)} - \bar{\vy}^{(r)} \right\|^2
    + 32 L^2 \Xi_{\vy}^{(r+1)},
\end{align}
Using $\bar{\vy}^{(r+1)} = \tfrac{A_{r+1} \bar{\vx}^{(r+1)} + a_{r+2} \bar{\vv}^{(r+1)}}{A_{r+2}}$, we have
\begin{align*}
    \mathcal{E}^{(r+1)}
    &\leq 4 \rho^2 \mathcal{E}^{(r)}
    + 8 \rho^2 \delta^2 \left( \frac{a_{r+2}}{A_{r+2}} \right)^2 \left\| \bar{\vv}^{(r+1)} - \bar{\vx}^{(r+1)} \right\|^2 \nonumber \\
    &\quad + 8 \rho^2 \delta^2\left\| \bar{\vx}^{(r+1)} - \bar{\vy}^{(r)} \right\|^2
    + 32 L^2 \frac{A_{r+1}}{A_{r+2}} \Xi^{(r+1)}_\vx
    + 32 L^2 \frac{a_{r+2}}{A_{r+2}} \Xi^{(r+1)}_\vv \nonumber \\
    &\leq 4 \rho^2 \mathcal{E}^{(r)}
    + 8 \rho^2 \delta^2 \frac{a_{r+2}}{A_{r+2}} \left\| \bar{\vv}^{(r+1)} - \bar{\vx}^{(r+1)} \right\|^2 \nonumber \\
    &\quad + 8 \rho^2 \delta^2 \left\| \bar{\vx}^{(r+1)} - \bar{\vy}^{(r)} \right\|^2
    + 32 L^2 \frac{A_{r+1}}{A_{r+2}} \Xi^{(r+1)}_\vx
    + 32 L^2 \frac{a_{r+2}}{A_{r+2}} \Xi^{(r+1)}_\vv \nonumber \\
    &\leq 4 \rho^2 \mathcal{E}^{(r)} 
    + 8 \rho^2 \delta^2 \frac{a_{r+2}}{A_{r+2}} \left\| \bar{\vv}^{(r+1)} - \bar{\vx}^{(r+1)} \right\|^2 \nonumber \\
    &\quad + 8 \rho^2 \delta^2 \frac{1}{n} \sum_{i=1}^n \left\| \vx_i^{(r+\frac{1}{2})} - \vy_i^{(r)} \right\|^2
    + 32 L^2 \frac{A_{r+1}}{A_{r+2}} \Xi^{(r+1)}_\vx
    + 32 L^2 \frac{a_{r+2}}{A_{r+2}} \Xi^{(r+1)}_\vv.
\end{align*}
This concludes the statement.
\end{proof}

\begin{lemma}
\label{lemma:diff_between_v_and_x}
Suppose that Assumptions \ref{assumption:strongly_convex} and \ref{assumption:spectral_gap} hold.
Then, when $M=1$, it holds that
\begin{align}
    &\left\| \bar{\vv}^{(r+1)} - \bar{\vx}^{(r+1)} \right\|^2 \\
    &\leq \frac{A_r}{A_{r+1}} \left\| \bar{\vv}^{(r)} - \bar{\vx}^{(r)} \right\|^2
    + \frac{4 A_{r+1}}{a_{r+1}} \frac{1}{n} \sum_{i=1}^n \left\| \vx_i^{(r+\frac{1}{2})} - \vy_i^{(r)} \right\|^2
    + \frac{2 A_{r+1}}{a_{r+1}} \frac{1}{n} \sum_{i=1}^n \left\| \vv_i^{(r+\frac{1}{2})} - \vv_i^{(r)} \right\|^2. \nonumber
\end{align}
\end{lemma}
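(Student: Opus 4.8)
The plan is to reduce the claim to a single exact identity for $\bar{\vv}^{(r+1)} - \bar{\vx}^{(r+1)}$ and then bound its squared norm by a convexity (Jensen) argument that preserves a \emph{linear} coefficient on the first term. First I would record that the gossip step conserves node-averages: since $\mW$ has unit column sums, a direct computation (as in Lemma~\ref{lemma:average_conservation}) shows that \textsc{FastGossip} with $M=1$ satisfies $\sum_{i=1}^n \va_i^{(1)} = \sum_{i=1}^n \va_i$, because the column-sum condition cancels the $(1+\gamma)$ factor against the $-\gamma$ correction term. Hence $\bar{\vx}^{(r+1)} = \tfrac{1}{n}\sum_{i=1}^n \vx_i^{(r+\frac12)}$ and $\bar{\vv}^{(r+1)} = \tfrac{1}{n}\sum_{i=1}^n \vv_i^{(r+\frac12)}$.

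Next I would derive the key identity. Averaging the definition $\vy_i^{(r)} = (A_r \vx_i^{(r)} + a_{r+1}\vv_i^{(r)})/A_{r+1}$ and using $A_{r+1} = A_r + a_{r+1}$ gives
\[
    \bar{\vv}^{(r)} - \bar{\vy}^{(r)} = \frac{(A_{r+1}-a_{r+1})\bar{\vv}^{(r)} - A_r \bar{\vx}^{(r)}}{A_{r+1}} = \frac{A_r}{A_{r+1}}\left(\bar{\vv}^{(r)} - \bar{\vx}^{(r)}\right).
\]
Writing $\bar{\vv}^{(r+1)} = \bar{\vv}^{(r)} + \tfrac{1}{n}\sum_i(\vv_i^{(r+\frac12)} - \vv_i^{(r)})$ and $\bar{\vx}^{(r+1)} = \bar{\vy}^{(r)} + \tfrac{1}{n}\sum_i(\vx_i^{(r+\frac12)} - \vy_i^{(r)})$ (the latter by inserting $\pm \vy_i^{(r)}$), and subtracting, I obtain
\[
    \bar{\vv}^{(r+1)} - \bar{\vx}^{(r+1)} = \frac{A_r}{A_{r+1}}\left(\bar{\vv}^{(r)} - \bar{\vx}^{(r)}\right) + \frac{1}{n}\sum_{i=1}^n\left(\vv_i^{(r+\frac12)} - \vv_i^{(r)}\right) - \frac{1}{n}\sum_{i=1}^n\left(\vx_i^{(r+\frac12)} - \vy_i^{(r)}\right).
\]

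The crucial step, which I expect to be the only nontrivial part, is to square this identity while keeping the coefficient $\tfrac{A_r}{A_{r+1}}$ (rather than its square) on the first term. Denoting the aggregate of the last two sums by $\vg$, I would regroup the right-hand side as the convex combination $\tfrac{A_r}{A_{r+1}}\cdot(\bar{\vv}^{(r)}-\bar{\vx}^{(r)}) + \tfrac{a_{r+1}}{A_{r+1}}\cdot\big(\tfrac{A_{r+1}}{a_{r+1}}\vg\big)$, whose weights lie in $[0,1]$ and sum to one since $A_{r+1} = A_r + a_{r+1}$. Applying convexity of $\|\cdot\|^2$ yields
\[
    \left\|\bar{\vv}^{(r+1)} - \bar{\vx}^{(r+1)}\right\|^2 \leq \frac{A_r}{A_{r+1}}\left\|\bar{\vv}^{(r)} - \bar{\vx}^{(r)}\right\|^2 + \frac{A_{r+1}}{a_{r+1}}\left\|\vg\right\|^2.
\]
Then I would bound $\|\vg\|^2 \leq 2\|\tfrac{1}{n}\sum_i(\vv_i^{(r+\frac12)}-\vv_i^{(r)})\|^2 + 2\|\tfrac{1}{n}\sum_i(\vx_i^{(r+\frac12)}-\vy_i^{(r)})\|^2$ by Eq.~\eqref{eq:norm} with $\gamma=1$, and convert each averaged sum into an average of squared norms via Jensen's inequality Eq.~\eqref{eq:sum}. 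This produces coefficient $\tfrac{2A_{r+1}}{a_{r+1}}$ on both the $\vx$--$\vy$ and $\vv$--$\vv$ terms, which is stronger than (and hence implies) the stated bound with its coefficient $\tfrac{4A_{r+1}}{a_{r+1}}$ on the first term. The main care needed is simply in identifying the correct convex-combination grouping that gives the linear $A_r/A_{r+1}$ factor, since the slack in the statement makes the subsequent estimates routine.
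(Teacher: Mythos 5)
Your proof is correct, and it takes a genuinely different route from the paper's. The paper never forms an exact identity for $\bar{\vv}^{(r+1)} - \bar{\vx}^{(r+1)}$; instead it applies the inequality \eqref{eq:norm} twice, inserting first $\bar{\vv}^{(r)}$ and then $\bar{\vy}^{(r)}$, which produces the \emph{squared} factor $\bigl(\tfrac{A_r}{A_{r+1}}\bigr)^2$ on $\|\bar{\vv}^{(r)} - \bar{\vx}^{(r)}\|^2$ via the same identity $\bar{\vv}^{(r)} - \bar{\vy}^{(r)} = \tfrac{A_r}{A_{r+1}}(\bar{\vv}^{(r)} - \bar{\vx}^{(r)})$ that you use; the linear factor is then recovered by the delicate choice $\gamma_1 = \gamma_2 = \min\{1, \sqrt{A_{r+1}/A_r} - 1\}$, with a case analysis showing $(1+\gamma_1)(1+\gamma_2) \leq \tfrac{A_{r+1}}{A_r}$, $(1 + \tfrac{1}{\gamma_1}) \leq \tfrac{2A_{r+1}}{a_{r+1}}$, and $(1+\gamma_1)(1+\tfrac{1}{\gamma_2}) \leq \tfrac{4A_{r+1}}{a_{r+1}}$, which is where the constants $4$ and $2$ in the statement come from. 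Your argument replaces this parameter tuning with the observation that the exact recursion $\bar{\vv}^{(r+1)} - \bar{\vx}^{(r+1)} = \tfrac{A_r}{A_{r+1}}(\bar{\vv}^{(r)} - \bar{\vx}^{(r)}) + \vg$ can be read as a convex combination with weights $\tfrac{A_r}{A_{r+1}} + \tfrac{a_{r+1}}{A_{r+1}} = 1$, so convexity of $\|\cdot\|^2$ delivers the linear coefficient in one step; both proofs then finish identically, using average conservation of \textsc{FastGossip} plus Jensen \eqref{eq:sum} to pass from averaged vectors to averaged squared norms (a step the paper leaves implicit in ``we obtain the statement''). What your route buys: a slightly tighter bound (coefficient $\tfrac{2A_{r+1}}{a_{r+1}}$ on the $\vx$--$\vy$ term rather than $\tfrac{4A_{r+1}}{a_{r+1}}$, which of course implies the stated inequality), no case analysis, and a cleaner treatment of the boundary round $r=0$, where $A_0 = 0$ makes the paper's choice of $\gamma_1$ formally undefined (harmlessly, since that term's coefficient vanishes) while your convex combination degenerates gracefully to weights $(0,1)$.
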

\begin{proof}
For all $\gamma_1, \gamma_2 > 0$, it holds that
\begin{align*}
    &\left\| \bar{\vv}^{(r+1)} - \bar{\vx}^{(r+1)} \right\|^2 \\
    &\stackrel{(\ref{eq:norm})}{\leq} (1 + \gamma_1) \left\| \bar{\vv}^{(r)} - \bar{\vx}^{(r+1)} \right\|^2 + \left( 1 + \frac{1}{\gamma_1} \right) \left\| \bar{\vv}^{(r+1)} - \bar{\vv}^{(r)} \right\|^2 \\
    &\stackrel{(\ref{eq:norm})}{\leq} (1 + \gamma_1) (1 + \gamma_2) \left\| \bar{\vv}^{(r)} - \bar{\vy}^{(r)} \right\|^2 
    + (1 + \gamma_1) \left(1 + \frac{1}{\gamma_2} \right) \left\| \bar{\vx}^{(r+1)} - \bar{\vy}^{(r)} \right\|^2 
    + \left( 1 + \frac{1}{\gamma_1} \right) \left\| \bar{\vv}^{(r+1)} - \bar{\vv}^{(r)} \right\|^2 \\
    &\leq (1 + \gamma_1) (1 + \gamma_2) \left( \frac{A_r}{A_{r+1}} \right)^2 \left\| \bar{\vv}^{(r)} - \bar{\vx}^{(r)} \right\|^2 
    + (1 + \gamma_1) \left(1 + \frac{1}{\gamma_2} \right) \left\| \bar{\vx}^{(r+1)} - \bar{\vy}^{(r)} \right\|^2 
    + \left( 1 + \frac{1}{\gamma_1} \right) \left\| \bar{\vv}^{(r+1)} - \bar{\vv}^{(r)} \right\|^2.
\end{align*}
Substituting $\gamma_1 = \gamma_2 = \min\{ 1, \sqrt{\frac{A_{r+1}}{A_{r}}} - 1\}$, we obtain
\begin{align*}
    (1 + \gamma_1) (1 + \gamma_2) &\leq \frac{A_{r+1}}{A_r}, \\
    \left( 1 + \frac{1}{\gamma_1} \right) &= \max \left\{ 2, \frac{ \sqrt{A_{r+1}} \left( \sqrt{A_{r+1}} + \sqrt{A_r} \right)}{A_{r+1} - A_r} \right\} \leq \max\left\{ 2, \frac{2 A_{r+1}}{a_{r+1}} \right\} = \frac{2 A_{r+1}}{a_{r+1}}, \\
        (1 + \gamma_1) \left(1 + \frac{1}{\gamma_2} \right) &\leq \max\left\{ 4,  \frac{4 A_{r+1}}{a_{r+1}} \right\} = \frac{4 A_{r+1}}{a_{r+1}}, 
\end{align*}
where we use $A_{r+1} \geq A_{r}$ and $A_{r+1} \geq a_{r+1}$. Using the above inequalities, we obtain the statement.
\end{proof}

\begin{lemma}
\label{lemma:xi_x}
Suppose that Assumption \ref{assumption:spectral_gap} holds. 
Then, when $M=1$, it holds that
\begin{align}
    A_{r+1} \Xi_{\vx}^{(r+1)}
    \leq \rho A_{r} \Xi_{\vx}^{(r)}
    + \frac{\rho^2 A_{r+1}}{1 - \rho} \frac{1}{n} \sum_{i=1}^n \left\| \vx_i^{(r+\frac{1}{2})} - \vy_i^{(r)} \right\|^2
    + \frac{\rho^2 a_{r+1}}{1 - \rho} \Xi_{\vv}^{(r)},
\end{align}
where $\Xi_{\vx}^{(r)} \coloneqq \tfrac{1}{n} \sum_{i=1}^n \left\| \vx_i^{(r)} - \bar{\vx}^{(r)} \right\|^2$ and $\Xi_{\vv}^{(r)} \coloneqq \tfrac{1}{n} \sum_{i=1}^n \left\| \vv_i^{(r)} - \bar{\vv}^{(r)} \right\|^2$.
\end{lemma}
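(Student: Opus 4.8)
The plan is to track how the $\vx$-consensus error evolves across the single gossip step and then to expand the pre-gossip iterate $\vx_i^{(r+\frac12)}$ into the three quantities that feed into it: the old $\vx$-consensus error (carried over from round $r$), the current $\vv$-consensus error, and the displacement produced by the inexact subproblem solve. First I would pass through the averaging step. Since $M=1$, line 13 of Alg.~\ref{alg:accelerated_stabilized_sonata} reads $\vx_i^{(r+1)} = \sum_{j} W_{ij}\vx_j^{(r+\frac12)}$, so by Lemma~\ref{lemma:average_conservation} the mean is preserved, $\bar{\vx}^{(r+1)} = \bar{\vx}^{(r+\frac12)}$, and the spectral-gap inequality~\eqref{eq:spectral_gap} yields $n\,\Xi_{\vx}^{(r+1)} \le \rho^2 \sum_{i} \| \vx_i^{(r+\frac12)} - \bar{\vx}^{(r+\frac12)} \|^2$.

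Next I would decompose the right-hand side. Writing $\vd_i \coloneqq \vx_i^{(r+\frac12)} - \vy_i^{(r)}$ and substituting the definition $\vy_i^{(r)} = \tfrac{A_r \vx_i^{(r)} + a_{r+1}\vv_i^{(r)}}{A_{r+1}}$ gives
\[
\vx_i^{(r+\frac12)} - \bar{\vx}^{(r+\frac12)} = \tfrac{A_r}{A_{r+1}}\bigl(\vx_i^{(r)} - \bar{\vx}^{(r)}\bigr) + \tfrac{a_{r+1}}{A_{r+1}}\bigl(\vv_i^{(r)} - \bar{\vv}^{(r)}\bigr) + \bigl(\vd_i - \bar{\vd}\bigr).
\]
I would then apply the Young-type inequality~\eqref{eq:norm} to peel off the $\vx$-block, using a parameter tuned to $\rho$ (so as to generate the factor $\tfrac{1}{1-\rho}$): the key point is that the weight $\tfrac{A_r}{A_{r+1}}<1$ multiplying that block is exactly what lets the contraction factor $\rho^2$ collapse to the stated $\rho$ while retaining the old weight $A_r$ in the term $\rho A_r \Xi_{\vx}^{(r)}$. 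A second split of the remaining $\vv$- and displacement-blocks, the convexity identity $A_r + a_{r+1} = A_{r+1}$, and the variance bound $\sum_i \| \vd_i - \bar{\vd}\|^2 \le \sum_i \|\vd_i\|^2$ (a consequence of~\eqref{eq:sum}) then isolate the two $\tfrac{\rho^2}{1-\rho}$ coefficients. Multiplying through by $A_{r+1}/n$ and collecting terms produces the three summands $\rho A_r \Xi_{\vx}^{(r)}$, $\tfrac{\rho^2 A_{r+1}}{1-\rho}\tfrac1n\sum_i\|\vd_i\|^2$, and $\tfrac{\rho^2 a_{r+1}}{1-\rho}\Xi_{\vv}^{(r)}$. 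Structurally this is the same decomposition-then-Young pattern already used for Lemma~\ref{lemma:diff_between_v_and_x}, the only change being that here the Young parameter is keyed to $\rho$ rather than to the weight ratio $A_{r+1}/A_r$.

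The main obstacle is the simultaneous calibration of the Young parameter(s): all three target coefficients must emerge at once, which couples them tightly. The leading block must shrink from $\rho^2\tfrac{A_r^2}{A_{r+1}}$ down to exactly $\rho A_r$ (exploiting $A_r/A_{r+1}\le 1$ together with the Young factor), while the $\vv$-block and the displacement block must each land precisely on $\tfrac{\rho^2}{1-\rho}$; these constraints essentially force the parameter to scale like $\tfrac{\rho}{1-\rho}$, and the accounting of the acceleration weights $A_r$, $a_{r+1}$, and $A_{r+1}$ (which do not cancel cleanly) has to be carried through every step. I expect the bulk of the work, and the place where the right constant/parameter must be guessed, to be in this balancing rather than in any single algebraic manipulation.
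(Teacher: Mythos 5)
You follow the same skeleton as the paper's proof: gossip preserves the mean, the spectral gap gives the $\rho^2$ contraction onto the pre-gossip deviations, those deviations are decomposed through $\vy_i^{(r)}$ into an $\vx$-block, a $\vv$-block, and the displacement, and a Young parameter of order $\tfrac{\rho}{1-\rho}$ is used. The one structural deviation is how the $\vy$-consensus term is split: the paper applies Jensen's inequality to the convex combination $\vy_i^{(r)}=\tfrac{A_r}{A_{r+1}}\vx_i^{(r)}+\tfrac{a_{r+1}}{A_{r+1}}\vv_i^{(r)}$, which keeps the weights \emph{linear} and introduces no further parameter, whereas you propose a second, nested Young split, which squares the weights and brings in a second parameter $\gamma_2$ that must also be calibrated. (The paper also recenters around $\bar{\vy}^{(r)}$ via \eqref{eq:average_shift} instead of subtracting $\bar{\vd}$; that difference is immaterial.)

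The genuine gap is the calibration you defer to the end: it cannot be completed so as to produce the three stated coefficients once $\rho<\tfrac{1}{2}$. Any argument of this shape---your nested Young splits, or the paper's Young-plus-Jensen---is an instance of weighted Cauchy--Schwarz, $\|u_1+u_2+u_3\|^2\le(t_1+t_2+t_3)\sum_k\|u_k\|^2/t_k$, applied to $u_1=\tfrac{A_r}{A_{r+1}}(\vx_i^{(r)}-\bar{\vx}^{(r)})$, $u_2=\tfrac{a_{r+1}}{A_{r+1}}(\vv_i^{(r)}-\bar{\vv}^{(r)})$, $u_3=\vd_i-\bar{\vd}$ with your $\vd_i=\vx_i^{(r+\frac{1}{2})}-\vy_i^{(r)}$. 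Writing $\alpha=\tfrac{A_r}{A_{r+1}}$, $\beta=\tfrac{a_{r+1}}{A_{r+1}}$, $S=t_1+t_2+t_3$: matching the $\vx$-target $\rho A_r$ forces $t_1\ge\rho\alpha S$, matching the displacement target $\tfrac{\rho^2 A_{r+1}}{1-\rho}$ forces $t_3\ge(1-\rho)S$, and matching the claimed $\vv$-target $\tfrac{\rho^2 a_{r+1}}{1-\rho}$ forces $t_2\ge(1-\rho)\beta S$; summing and using $\alpha+\beta=1$, $\beta>0$ gives $\rho\ge\tfrac{1}{2}$. (Aligning all three blocks with an eigenvector of $\mW$ with eigenvalue $\rho$ shows the stated inequality itself can fail for $\rho<\tfrac{1}{2}$, so no cleverer technique rescues it.) What \emph{is} provable is the version whose last term is $\rho\,a_{r+1}\Xi_{\vv}^{(r)}$, and that is what the paper's own proof actually delivers: substituting $\gamma=\tfrac{\rho}{1-\rho}$ into its final display and multiplying by $A_{r+1}$ gives the $\vv$-coefficient $(1+\tfrac{1}{\gamma})\rho^2 a_{r+1}=\rho a_{r+1}$, not the constant in the lemma statement---a mismatch internal to the paper, not a defect of your plan. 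If you aim at this provable version, your nested scheme does close, but only with the $r$-dependent choices $\gamma=\tfrac{\rho\alpha}{1-\rho\alpha}$ and $\gamma_2=\tfrac{\rho\beta}{1-\rho}$, which exploit $\alpha+\beta=1$; the paper's Jensen step reaches it with the single fixed $\gamma=\tfrac{\rho}{1-\rho}$. That step---splitting a convex combination with linear rather than squared weights---is the idea your plan is missing.
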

\begin{proof}
It holds that for all $\gamma > 0$,
\begin{align*}
    &\frac{1}{n} \sum_{i=1}^n \left\| \vx_i^{(r+1)} - \bar{\vx}^{(r+1)} \right\|^2 \\ 
    &\stackrel{(\ref{eq:spectral_gap})}{\leq} \frac{\rho^2}{n} \sum_{i=1}^n \left\| \vx_i^{(r+\frac{1}{2})} - \bar{\vx}^{(r+1)} \right\|^2 \\
    &\stackrel{(\ref{eq:average_shift})}{=} \frac{\rho^2}{n} \sum_{i=1}^n \left\| \vx_i^{(r+\frac{1}{2})} - \bar{\vy}^{(r)} \right\|^2 - \rho^2 \left\| \bar{\vx}^{(r+1)} - \bar{\vy}^{(r)} \right\|^2 \\
    &\stackrel{(\ref{eq:norm})}{\leq} \frac{( 1 + \gamma ) \rho^2}{n} \sum_{i=1}^n \left\| \vx_i^{(r+\frac{1}{2})} - \vy_i^{(r)} \right\|^2 
    + \left( 1 + \frac{1}{\gamma} \right) \frac{\rho^2}{n} \sum_{i=1}^n \left\| \vy_i^{(r)} - \bar{\vy}^{(r)} \right\|^2 - \rho^2 \left\| \bar{\vx}^{(r+1)} - \bar{\vy}^{(r)} \right\|^2 \\
    &\leq \frac{( 1 + \gamma ) \rho^2}{n} \sum_{i=1}^n \left\| \vx_i^{(r+\frac{1}{2})} - \vy_i^{(r)} \right\|^2 
    + \left( 1 + \frac{1}{\gamma} \right) \frac{\rho^2 A_{r}}{n A_{r+1}} \sum_{i=1}^n \left\| \vx_i^{(r)} - \bar{\vx}^{(r)} \right\|^2
    + \left( 1 + \frac{1}{\gamma} \right) \frac{\rho^2 a_{r+1}}{n A_{r+1}} \sum_{i=1}^n \left\| \vv_i^{(r)} - \bar{\vv}^{(r)} \right\|^2 \\
    &\qquad - \rho^2 \left\| \bar{\vx}^{(r+1)} - \bar{\vy}^{(r)} \right\|^2,
\end{align*}
where we use Jensen's inequality in the last inequality. Substituting $\gamma = \tfrac{\rho}{1 - \rho}$, we obtain the statement.
\end{proof}

\subsection{Convex Case}
\begin{lemma}
\label{lemma:simple_recursion_convex}
Suppose that Assumptions \ref{assumption:strongly_convex}, \ref{assumption:smooth}, and \ref{assumption:spectral_gap} hold with $\mu=0$, and the following inequality is satisfied:
\begin{align*}
    \sum_{i=1}^n \left\| \nabla F_{i,r} (\vx_i^{(r+\frac{1}{2}}) \right\|^2 
    &\leq \frac{\lambda^2}{352} \sum_{i=1}^n \left\| \vx_i^{(r+\frac{1}{2})} - \vy_i^{(r)} \right\|^2.
\end{align*}
Then, when $\lambda = 208 \delta$, $M=1$, and 
\begin{align*}
    \rho &\leq \min\left\{ 
        \frac{9}{1600},
        \frac{\delta}{6 L},
        \frac{5}{48 (R+1)^3}
    \right\},
\end{align*}
it holds that
\begin{align*}
    &A_r f (\bar{\vx}^{(r)}) + a_{r+1} f (\vx^\star) + \frac{1}{2 n} \sum_{i=1}^n \left\| \vv_i^{(r)} - \vx^\star \right\|^2 \\
    &\quad + \frac{2 A_{r+1}}{\delta} \mathcal{E}^{(r)}
    + 990 \delta A_r \Xi_{\vx}^{(r)}
    + \delta a_{r+1} \Xi_{\vv}^{(r)}
    + \frac{R \rho}{24 (R+1)^2} \left\| \bar{\vv}^{(r)} - \bar{\vx}^{(r)} \right\|^2 \\
    &\geq A_{r+1} f (\bar{\vx}^{(r+1)}) 
    + \frac{1}{2 n} \sum_{i=1}^n \left\| \vv_i^{(r+1)} - \vx^\star \right\|^2 \\
    &\quad + \frac{2 A_{r+2}}{\delta} \mathcal{E}^{(r+1)} 
    + 990 \delta A_{r+1} \Xi^{(r+1)}_\vx
    + \delta a_{r+2} \Xi_{\vv}^{(r+1)}
    + \frac{R \rho}{24 (R+1)^2} \left\| \bar{\vv}^{(r+1)} - \bar{\vx}^{(r+1)} \right\|^2,
\end{align*}
where $\Xi^{(r)}_\vx \coloneqq \tfrac{1}{n} \sum_{i=1}^n \left\| \vx_i^{(r)} - \bar{\vx}^{(r)} \right\|^2$, $\Xi^{(r)}_\vv \coloneqq \tfrac{1}{n} \sum_{i=1}^n \left\| \vv_i^{(r)} - \bar{\vv}^{(r)} \right\|^2$, and $\mathcal{E}^{(r)} \coloneqq \tfrac{1}{n} \sum_{i=1}^n \left\| \vh_i^{(r)} - \nabla h_i (\bar{\vy}^{(r)}) \right\|^2$.
\end{lemma}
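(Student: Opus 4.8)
The statement is a one-step descent for the Lyapunov function
\[
\Phi^{(r)} := A_r f(\bar{\vx}^{(r)}) + \frac{1}{2n}\sum_{i=1}^n\|\vv_i^{(r)}-\vx^\star\|^2 + \frac{2A_{r+1}}{\delta}\mathcal{E}^{(r)} + 990\delta A_r\Xi_\vx^{(r)} + \delta a_{r+1}\Xi_\vv^{(r)} + \frac{R\rho}{24(R+1)^2}\|\bar{\vv}^{(r)}-\bar{\vx}^{(r)}\|^2,
\]
so that the claim reads $\Phi^{(r)}+a_{r+1}f(\vx^\star)\ge\Phi^{(r+1)}$. The plan is to derive it as a nonnegative linear combination of the master one-step inequality \eqref{eq:recursion_of_loss_acc} and the three auxiliary recursions. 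First I would specialize \eqref{eq:recursion_of_loss_acc} to the convex regime: set $\mu=0$ and $B_r=B_{r+1}=1$ (valid since $B_0=1$ and $B_{r+1}=B_r+\mu a_{r+1}$), and insert $\lambda=208\delta$, so the coefficient $\tfrac{33\lambda}{16}+\delta$ collapses to $430\delta$ and $\tfrac{\lambda}{32}-2\delta$ to $\tfrac92\delta>0$. This already reproduces every level-$r$ state term of $\Phi^{(r)}$ up to the deficits $+\tfrac{A_{r+1}}{\delta}\mathcal{E}^{(r)}$, $+560\delta A_r\Xi_\vx^{(r)}$, and the drift term, while leaving on the right three positive ``reservoirs'' to be spent: the large $\tfrac12(\tfrac1{\rho^2}-1)\Xi_\vv^{(r+1)}$, the half-step reservoir $\tfrac92\delta A_{r+1}\tfrac1n\sum_i\|\vx_i^{(r+\frac12)}-\vy_i^{(r)}\|^2$, and $\tfrac1{12}\tfrac1n\sum_i\|\vv_i^{(r+\frac12)}-\vv_i^{(r)}\|^2$.

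Next I would manufacture the level-$(r+1)$ state terms of $\Phi^{(r+1)}$. I would add Lemma~\ref{lemma:recursion_of_error_acc} with weight $\tfrac{2A_{r+2}}{\delta}$ to create $\tfrac{2A_{r+2}}{\delta}\mathcal{E}^{(r+1)}$; this simultaneously injects a gradient-tracking consensus term $\tfrac{64L^2A_{r+1}}{\delta}\Xi_\vx^{(r+1)}$ and a drift term $16\rho^2\delta a_{r+2}\|\bar{\vv}^{(r+1)}-\bar{\vx}^{(r+1)}\|^2$. I would then use Lemma~\ref{lemma:xi_x} to convert the full $\Xi_\vx^{(r+1)}$ budget (the target $990\delta A_{r+1}$ together with the injected $\tfrac{64L^2A_{r+1}}{\delta}$) back into level-$r$ quantities, and Lemma~\ref{lemma:diff_between_v_and_x} to convert the full $\|\bar{\vv}^{(r+1)}-\bar{\vx}^{(r+1)}\|^2$ budget back to $\|\bar{\vv}^{(r)}-\bar{\vx}^{(r)}\|^2$ plus half-step terms. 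The injected $\Xi_\vv^{(r+1)}$ from the error recursion, together with the target coefficient $\delta a_{r+2}$, is paid out of the $\tfrac12(\tfrac1{\rho^2}-1)\Xi_\vv^{(r+1)}$ reservoir, which is precisely where the spectral-gap factor $\tfrac1{\rho^2}$ carries the $L^2/\delta$-sized gradient-tracking leakage.

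It then remains to collect coefficients of $\mathcal{E}^{(r)}$, $\Xi_\vx^{(r)}$, $\Xi_\vv^{(r)}$, $\Xi_\vv^{(r+1)}$, the two half-step differences, and the drift, and to verify each has the correct sign. The level-$r$ checks reduce to $8\rho^2A_{r+2}\le A_{r+1}$ for $\mathcal{E}^{(r)}$, $(990\delta+\tfrac{64L^2}{\delta})\rho\le 560\delta$ for $\Xi_\vx^{(r)}$, and $\tfrac{A_r}{A_{r+1}}\le1$ for the drift; the reservoir checks are $\tfrac12(\tfrac1{\rho^2}-1)\ge(\delta+\tfrac{64L^2}{\delta})a_{r+2}$ for $\Xi_\vv^{(r+1)}$, and two inequalities bounding the total consumption of the half-step reservoirs. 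I would discharge these scalar inequalities using $\lambda=208\delta$, the sequence estimates $A_r\le r^2/\lambda$, $\tfrac{r}{5}\le\tfrac{A_r}{a_{r+1}}\le r$, $1\le\tfrac{a_{r+2}}{a_{r+1}}\le4$ from Lemmas~\ref{lemma:upper_bound_of_A_R}, \ref{lemma:relationship_between_small_and_large_a_1}, \ref{lemma:slow_increasing_a}, and the three hypotheses on $\rho$: the constant $\tfrac{9}{1600}$ for the purely numerical comparisons, $\tfrac{\delta}{6L}$ for the ones in which a $\rho^2 L^2/\delta^2$ factor must stay below a constant (notably the $\|\vx^{(r+\frac12)}-\vy^{(r)}\|^2$ reservoir, where the $\Xi_\vx$-recursion contributes with a quadratic $\rho^2$), and the horizon bound $\tfrac{5}{48(R+1)^3}$ to keep $16\rho^2\delta a_{r+2}\le \tfrac{R\rho}{24(R+1)^2}$ uniformly in $r\le R$ (using $\delta a_{r+2}=O(R)$).

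The hard part will be the $\vx$-consensus bookkeeping. Unlike the strongly convex case, with $\mu=0$ the master inequality supplies \emph{no} positive $\tfrac{\mu A_{r+1}}{2\rho^2}\Xi_\vx^{(r+1)}$ reservoir, so the gradient-tracking error term $\tfrac{64L^2A_{r+1}}{\delta}\Xi_\vx^{(r+1)}$ emerging from Lemma~\ref{lemma:recursion_of_error_acc}---which carries the \emph{non-contractive} factor $32L^2$ inherited from $\|\mW-\mI\|\le2$ rather than a $\rho$---must be routed entirely through Lemma~\ref{lemma:xi_x}, whose level-$r$ contraction is only linear in $\rho$, and whose small $\rho^2$-order leakage back onto $\Xi_\vv^{(r)}$ and the half-step terms must still be absorbed by the remaining budgets. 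Verifying that the inflated potential constant $990\delta$ (versus the $430\delta$ of \eqref{eq:recursion_of_loss_acc}) leaves exactly enough slack, and that the stated triple bound on $\rho$ simultaneously settles the $\Xi_\vx^{(r)}$, $\Xi_\vv^{(r+1)}$, half-step, and drift constraints, is the only genuinely delicate step; the remainder is routine coefficient arithmetic.
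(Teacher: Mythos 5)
Your proposal is correct and takes essentially the same route as the paper's proof: the paper forms exactly the combination Eq.~\eqref{eq:recursion_of_loss_acc} $+\ \tfrac{2A_{r+2}}{\delta}\times$ Lemma~\ref{lemma:recursion_of_error_acc} $+\ \tfrac{560\delta}{\rho}\times$ Lemma~\ref{lemma:xi_x} $+\ \tfrac{\rho}{24(R+1)}\times$ Lemma~\ref{lemma:diff_between_v_and_x} (with $\mu=0$, $B_r\equiv 1$, $\lambda=208\delta$), and your weighting--fixing the $\Xi_{\vx}^{(r+1)}$ and drift budgets rather than the level-$r$ slack--is algebraically equivalent to the paper's, producing the same collection of scalar sign checks discharged by the same sequence estimates (Lemmas~\ref{lemma:upper_bound_of_A_R}, \ref{lemma:relationship_between_small_and_large_a_1}, \ref{lemma:slow_increasing_a}) and the same three hypotheses on $\rho$. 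If anything, your bookkeeping is more careful than the paper's: the error recursion injects $\tfrac{64L^2}{\delta}A_{r+1}\Xi_{\vx}^{(r+1)}$ where the paper writes $\tfrac{32L^2}{\delta}$, and the target coefficient is $990\delta$ where the paper verifies only $900\delta$, so your form of the delicate $\Xi_{\vx}$ check, $\bigl(990\delta+\tfrac{64L^2}{\delta}\bigr)\rho\le 560\delta$, is the honest version of the inequality the paper asserts at the corresponding step.
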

\begin{proof}
Using $\rho \leq \tfrac{1}{4}$, we can simplify Lemmas \ref{lemma:recursion_of_error_acc} and \ref{lemma:xi_x} as follows:
\begin{align}
    \label{eq:recursion_of_error_acc}
    \rho \mathcal{E}^{(r)}
    &\geq  \mathcal{E}^{(r+1)}
    -  8 \rho^2 \delta^2 \frac{a_{r+2}}{A_{r+2}} \left\| \bar{\vv}^{(r+1)} - \bar{\vx}^{(r+1)} \right\|^2 \nonumber \\
    &\quad - 8 \rho^2 \delta^2 \frac{1}{n} \sum_{i=1}^n \left\| \vx_i^{(r+\frac{1}{2})} - \vy_i^{(r)} \right\|^2
    - 32 L^2 \frac{A_{r+1}}{A_{r+2}} \Xi^{(r+1)}_\vx
    - 32 L^2 \frac{a_{r+2}}{A_{r+2}} \Xi^{(r+1)}_\vv, \\
    \label{eq:xi_x}
    \rho A_{r} \Xi_{\vx}^{(r)}
    &\geq A_{r+1} \Xi_{\vx}^{(r+1)}
    - \frac{4}{3} \rho^2 A_{r+1} \frac{1}{n} \sum_{i=1}^n \left\| \vx_i^{(r+\frac{1}{2})} - \vy_i^{(r)} \right\|^2
    - \frac{4}{3} \rho^2 a_{r+1} \Xi_{\vv}^{(r)}.
\end{align}
Using Lemma \ref{lemma:relationship_between_small_and_large_a_1}, we can simplify Lemma \ref{lemma:diff_between_v_and_x} as follows:
\begin{align}
    \label{eq:diff_between_v_and_x}
    &\frac{R}{R+1} \left\| \bar{\vv}^{(r)} - \bar{\vx}^{(r)} \right\|^2 \nonumber \\
    &\geq \left\| \bar{\vv}^{(r+1)} - \bar{\vx}^{(r+1)} \right\|^2
    - \frac{4 A_{r+1}}{a_{r+1}} \frac{1}{n} \sum_{i=1}^n \left\| \vx_i^{(r+\frac{1}{2})} - \vy_i^{(r)} \right\|^2
    - \frac{2 A_{r+1}}{a_{r+1}} \frac{1}{n} \sum_{i=1}^n \left\| \vv_i^{(r+\frac{1}{2})} - \vv_i^{(r)} \right\|^2. 
\end{align}

From Eq.~(\ref{eq:recursion_of_loss_acc}) $+ \tfrac{2 A_{r+2}}{\delta} \times$ Eq.~(\ref{eq:recursion_of_error_acc}) $ + \frac{\rho}{24 (R+1)} \times$ Eq.~(\ref{eq:diff_between_v_and_x}) $+ \tfrac{560 \delta}{\rho} \times$ Eq.~(\ref{eq:xi_x}) and $\lambda = 208 \delta$, we obtain
\begin{align*}
    &A_r f (\bar{\vx}^{(r)}) + a_{r+1} f (\vx^\star) + \frac{1}{2 n} \sum_{i=1}^n \left\| \vv_i^{(r)} - \vx^\star \right\|^2
    + \left( 1 + \frac{2 A_{r+2}}{A_{r+1}} \rho \right) \frac{A_{r+1}}{\delta} \mathcal{E}^{(r)} \\
    &\quad + 990 \delta A_r \Xi_{\vx}^{(r)}
    + \delta a_{r+1} \Xi_{\vv}^{(r)}
    + \frac{R \rho}{24 (R+1)^2} \left\| \bar{\vv}^{(r)} - \bar{\vx}^{(r)} \right\|^2 \\
    &\geq A_{r+1} f (\bar{\vx}^{(r+1)}) 
    + \frac{1}{2 n} \sum_{i=1}^n \left\| \vv_i^{(r+1)} - \vx^\star \right\|^2
    + \frac{2 A_{r+2}}{\delta} \mathcal{E}^{(r+1)}  \\
    &\quad + \left( \frac{560 \delta}{\rho} - \frac{32 L^2}{\delta} \right) A_{r+1} \Xi^{(r+1)}_\vx
    + \left[ \frac{1}{2 a_{r+2}} \left( \frac{1}{\rho^2} - 1 \right) - \frac{32 L^2}{\delta} - \frac{2240 a_{r+1}}{3 a_{r+2}} \delta \rho \right] a_{r+2} \Xi_{\vv}^{(r+1)}  \\
    &\quad + \left( \frac{\rho}{24 (R+1)} - 16 a_{r+2} \rho^2 \delta \right) \left\| \bar{\vv}^{(r+1)} - \bar{\vx}^{(r+1)} \right\|^2 \\
    &\quad + \left[ \frac{9}{2} \delta - \frac{16 A_{r+2}}{A_{r+1}} \rho^2 \delta - \frac{\rho}{6 a_{r+1} (R+1)} - \frac{2240}{3} \delta \rho \right] \frac{A_{r+1}}{n} \sum_{i=1}^n \left\| \vx_i^{(r+\frac{1}{2})} - \vy_i^{(r)} \right\|^2 \\
    &\quad + \left( \frac{1}{12} - \frac{A_{r+1}}{12 a_{r+1} (R+1)} \rho \right) \frac{1}{n}\sum_{i=1}^n \left\| \vv_i^{(r+\frac{1}{2})} - \vv_i^{(r)} \right\|^2,
\end{align*}
where we use that fact that $B_r = 1$ for all $r$ when $\mu = 0$.
From Lemmas \ref{lemma:upper_bound_of_A_R}, \ref{lemma:relationship_between_small_and_large_a_1}, \ref{lemma:relationship_between_small_and_large_a_1}, and \ref{lemma:slow_increasing_a}, we have
\begin{align*}
    \frac{A_{r+1}}{a_{r+1}} \stackrel{(\ref{eq:relationship_between_small_and_large_a_1_with_mu_zero})}{\leq} 1 + R,
    \quad \frac{A_{r+2}}{A_{r+1}} \stackrel{(\ref{eq:relationship_between_small_and_large_a_1_with_mu_zero})}{\leq} 4,
    \quad \frac{a_{r+1}}{a_{r+2}} \stackrel{(\ref{eq:slow_increasing_a})}{\leq} 1,
    \quad \frac{1}{a_{r+2}} \stackrel{(\ref{eq:slow_increasing_a})}{\leq} \frac{1}{a_1} = 208 \delta,
    \quad \frac{1}{a_{r+1}} \stackrel{(\ref{eq:slow_increasing_a})}{\leq} \frac{1}{a_1} = 208 \delta,
    \quad a_{r+2} \stackrel{(\ref{eq:relationship_between_small_and_large_a_1_with_mu_zero}),(\ref{eq:upper_bound_of_A_R})}{\leq} \frac{R+1}{40 \delta}.
\end{align*}
Using the above inequalities, we obtain
\begin{align*}
    &A_r f (\bar{\vx}^{(r)}) + a_{r+1} f (\vx^\star) + \frac{1}{2 n} \sum_{i=1}^n \left\| \vv_i^{(r)} - \vx^\star \right\|^2
    + \left( 1 + 8 \rho \right) \frac{A_{r+1}}{\delta} \mathcal{E}^{(r)} \\
    &\quad + 990 \delta A_r \Xi_{\vx}^{(r)}
    + \delta a_{r+1} \Xi_{\vv}^{(r)}
    + \frac{R \rho}{24 (R+1)^2} \left\| \bar{\vv}^{(r)} - \bar{\vx}^{(r)} \right\|^2 \\
    &\geq A_{r+1} f (\bar{\vx}^{(r+1)}) 
    + \frac{1}{2 n} \sum_{i=1}^n \left\| \vv_i^{(r+1)} - \vx^\star \right\|^2
    + \frac{2 A_{r+2}}{\delta} \mathcal{E}^{(r+1)}  \\
    &\quad + \left( \frac{560 \delta}{\rho} - \frac{32 L^2}{\delta} \right) A_{r+1} \Xi^{(r+1)}_\vx
    + \left[ 104 \delta \left( \frac{1}{\rho^2} - 1 \right) - \frac{32 L^2}{\delta} - 750 \delta \rho \right] a_{r+2} \Xi_{\vv}^{(r+1)}  \\
    &\quad + \left( \frac{\rho}{24 (R+1)} -\frac{2 (R+1)}{5} \rho^2 \right) \left\| \bar{\vv}^{(r+1)} - \bar{\vx}^{(r+1)} \right\|^2 \\
    &\quad + \left[ \frac{9}{2} \delta - 16 \rho \delta - \frac{35}{(R+1)} \delta \rho - \frac{2240}{3} \delta \rho \right] \frac{A_{r+1}}{n} \sum_{i=1}^n \left\| \vx_i^{(r+\frac{1}{2})} - \vy_i^{(r)} \right\|^2 \\
    &\quad + \left( \frac{1}{12} - \frac{1}{12} \rho \right) \frac{1}{n}\sum_{i=1}^n \left\| \vv_i^{(r+\frac{1}{2})} - \vv_i^{(r)} \right\|^2,
\end{align*}
Then, using the assumption on $\rho$, we have
\begin{align*}
    1 + 8 \rho &\leq 2, \\
    \frac{560 \delta}{\rho} - \frac{32 L^2}{\delta} &\geq 900 \delta, \\
    104 \delta \left( \frac{1}{\rho^2} - 1 \right) - \frac{32 L^2}{\delta} - 750 \delta \rho &\geq \delta, \\
    \frac{\rho}{24 (R+1)} - \frac{2 (R+1)}{5} \rho^2 &\geq \frac{R \rho}{24 (R+1)^2}, \\
    \frac{9}{2} \delta - 16 \delta \rho - \frac{35}{R+1} \delta \rho - \frac{2240}{3} \delta \rho &\geq 0.
\end{align*}
Thus, we can simplify the above inequality as follows:
\begin{align*}
    &A_r f (\bar{\vx}^{(r)}) + a_{r+1} f (\vx^\star) + \frac{1}{2 n} \sum_{i=1}^n \left\| \vv_i^{(r)} - \vx^\star \right\|^2 \\
    &\quad + \frac{2 A_{r+1}}{\delta} \mathcal{E}^{(r)}
    + 990 \delta A_r \Xi_{\vx}^{(r)}
    + \delta a_{r+1} \Xi_{\vv}^{(r)}
    + \frac{R \rho}{24 (R+1)^2} \left\| \bar{\vv}^{(r)} - \bar{\vx}^{(r)} \right\|^2 \\
    &\geq A_{r+1} f (\bar{\vx}^{(r+1)}) 
    + \frac{1}{2 n} \sum_{i=1}^n \left\| \vv_i^{(r+1)} - \vx^\star \right\|^2 \\
    &\quad + \frac{2 A_{r+2}}{\delta} \mathcal{E}^{(r+1)} 
    + 990 \delta A_{r+1} \Xi^{(r+1)}_\vx
    + \delta a_{r+2} \Xi_{\vv}^{(r+1)}
    + \frac{R \rho}{24 (R+1)^2} \left\| \bar{\vv}^{(r+1)} - \bar{\vx}^{(r+1)} \right\|^2.
\end{align*}
Thus, we obtain the desired result.
\end{proof}

\begin{lemma}
\label{lemma:communication_complexity_of_acc_in_convex}
Suppose that Assumptions \ref{assumption:strongly_convex}, \ref{assumption:smooth}, and \ref{assumption:spectral_gap} hold with $\mu=0$, and the following inequality is satisfied:
\begin{align*}
    \sum_{i=1}^n \left\| \nabla F_{i,r} (\vx_i^{(r+\frac{1}{2}}) \right\|^2 
    &\leq \frac{\lambda^2}{352} \sum_{i=1}^n \left\| \vx_i^{(r+\frac{1}{2})} - \vy_i^{(r)} \right\|^2.
\end{align*}
Then, when $\lambda = 208 \delta$, $M=1$, and 
\begin{align*}
    \rho &\leq \min\left\{ 
        \frac{9}{1600},
        \frac{\delta}{6 L},
        \frac{5}{48 (R+1)^3}
    \right\},
\end{align*}
it holds that $f (\bar{\vx}^{(R)}) - f(\vx^\star) \leq \epsilon$ after
\begin{align*}
    R \geq \sqrt{\frac{416 \delta \left\| \bar{\vx}^{(0)} - \vx^\star \right\|^2}{\epsilon}} 
\end{align*}
rounds.
\end{lemma}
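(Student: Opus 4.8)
The plan is to read Lemma~\ref{lemma:simple_recursion_convex} as a one-step inequality on a single Lyapunov potential and then telescope over $r = 0, \dots, R-1$. Concretely, I would define
\begin{align*}
    \Psi^{(r)} &\coloneqq A_r f(\bar{\vx}^{(r)}) + \frac{1}{2 n} \sum_{i=1}^n \left\| \vv_i^{(r)} - \vx^\star \right\|^2 + \frac{2 A_{r+1}}{\delta} \mathcal{E}^{(r)} \\
    &\quad + 990 \delta A_r \Xi_{\vx}^{(r)} + \delta a_{r+1} \Xi_{\vv}^{(r)} + \frac{R \rho}{24 (R+1)^2} \left\| \bar{\vv}^{(r)} - \bar{\vx}^{(r)} \right\|^2.
\end{align*}
The crucial observation is that the indices of the $\mathcal{E}$- and $a_{r+1}\Xi_{\vv}$-terms on the left of Lemma~\ref{lemma:simple_recursion_convex} are exactly the $r+1$ shifts of those on the right, so that the lemma reads precisely $\Psi^{(r+1)} \le \Psi^{(r)} + a_{r+1} f(\vx^\star)$.

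Summing this from $r=0$ to $r=R-1$ telescopes $\Psi$ and gives $\Psi^{(R)} \le \Psi^{(0)} + \bigl(\sum_{r=1}^R a_r\bigr) f(\vx^\star)$. Since $A_0 = 0$ and $A_{r+1} = A_r + a_{r+1}$, the coefficient collapses to $\sum_{r=1}^R a_r = A_R$, so $\Psi^{(R)} \le \Psi^{(0)} + A_R f(\vx^\star)$. On the left I would discard every nonnegative quadratic term and keep only $A_R f(\bar{\vx}^{(R)})$, yielding $A_R\bigl(f(\bar{\vx}^{(R)}) - f(\vx^\star)\bigr) \le \Psi^{(0)}$.

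The next step is to evaluate $\Psi^{(0)}$ under the prescribed initialization $\vx_i^{(0)} = \vv_i^{(0)} = \bar{\vv}^{(0)}$ and $\vh_i^{(0)} = \nabla f(\bar{\vv}^{(0)}) - \nabla f_i(\bar{\vv}^{(0)}) = \nabla h_i(\bar{\vv}^{(0)})$. Because $A_0 = 0$, and because at $r=0$ one has $\bar{\vy}^{(0)} = \bar{\vv}^{(0)}$ (as $A_1 = a_1$), each of $\mathcal{E}^{(0)}$, $\Xi_{\vx}^{(0)}$, $\Xi_{\vv}^{(0)}$, and $\|\bar{\vv}^{(0)} - \bar{\vx}^{(0)}\|^2$ vanishes. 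Using $\sum_i \nabla h_i(\bar{\vv}^{(0)}) = \mathbf{0}$ to justify applicability of the recursion, this leaves $\Psi^{(0)} = \tfrac{1}{2}\|\bar{\vv}^{(0)} - \vx^\star\|^2 = \tfrac{1}{2}\|\bar{\vx}^{(0)} - \vx^\star\|^2$, and hence $f(\bar{\vx}^{(R)}) - f(\vx^\star) \le \tfrac{\|\bar{\vx}^{(0)} - \vx^\star\|^2}{2 A_R}$.

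Finally I would lower-bound $A_R$ via Lemma~\ref{lemma:A_R}: with $\mu = 0 \le 4\lambda$ it gives $A_R \ge \tfrac{R^2}{4\lambda} = \tfrac{R^2}{832\delta}$ for $\lambda = 208\delta$, so that $f(\bar{\vx}^{(R)}) - f(\vx^\star) \le \tfrac{416 \delta \|\bar{\vx}^{(0)} - \vx^\star\|^2}{R^2}$. Requiring this to be at most $\epsilon$ and solving for $R$ yields exactly $R \ge \sqrt{416\delta\|\bar{\vx}^{(0)}-\vx^\star\|^2/\epsilon}$, as claimed. The only care-points, all routine once Lemma~\ref{lemma:simple_recursion_convex} is granted, are the index bookkeeping that makes the telescoping exact and the verification that the chosen initialization annihilates the error, consensus, and $\|\bar{\vv}-\bar{\vx}\|$ terms in $\Psi^{(0)}$; the substantive analytic work has already been absorbed into establishing the one-step recursion.
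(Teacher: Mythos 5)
Your proposal is correct and follows essentially the same route as the paper's proof: the paper likewise rewrites Lemma~\ref{lemma:simple_recursion_convex} as a telescoping one-step inequality (using $a_{r+1} = A_{r+1} - A_r$), sums from $r=0$ to $R-1$, kills the initial error/consensus terms via the initialization, and applies Lemma~\ref{lemma:A_R} with $\lambda = 208\delta$ to get the $416\delta/R^2$ bound. Your explicit Lyapunov-potential bookkeeping, including the observation that $\bar{\vy}^{(0)} = \bar{\vv}^{(0)}$ because $A_1 = a_1$, matches the paper's argument step for step.
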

\begin{proof}
From Lemma \ref{lemma:simple_recursion_convex}, we have
\begin{align*}
    &A_{r+1} \left( f (\bar{\vx}^{(r+1)}) - f (\vx^\star) \right) \\
    &\leq A_r \left( f (\bar{\vx}^{(r)}) - f (\vx^\star) \right)
    + \frac{1}{2 n} \sum_{i=1}^n \left\| \vv_i^{(r)} - \vx^\star \right\|^2 - \frac{1}{2 n} \sum_{i=1}^n \left\| \vv_i^{(r+1)} - \vx^\star \right\|^2 \\
    &\quad + \frac{2 A_{r+1}}{\delta} \mathcal{E}^{(r)} 
    - \frac{2 A_{r+2}}{\delta} \mathcal{E}^{(r+1)} 
    + 990 \delta \left( A_{r} \Xi^{(r)}_\vx - A_{r+1} \Xi^{(r+1)}_\vx \right)
    + \delta \left(a_{r+1} \Xi_{\vv}^{(r)} - a_{r+2} \Xi_{\vv}^{(r+1)} \right) \\
    &\quad + \frac{R (1 - \rho)}{24 (R+1)^2} \left( \left\| \bar{\vv}^{(r)} - \bar{\vx}^{(r)} \right\|^2 - \left\| \bar{\vv}^{(r+1)} - \bar{\vx}^{(r+1)} \right\|^2 \right).
\end{align*}
Telescoping the sum from $r=0$ to $r=R-1$ and using $\Xi_\vx^{(0)} = \Xi_\vv^{(0)} = \mathcal{E}^{(0)} = 0$ and $\vv_i^{(0)} = \vx_i^{(0)}$, we have
\begin{align*}
    f (\bar{\vx}^{(R)}) - f (\vx^\star)
    \leq \frac{1}{2 A_R} \left\| \bar{\vv}^{(0)} - \vx^\star \right\|^2.
\end{align*}
Using Lemma \ref{lemma:A_R}, we obtain the desired result.
\end{proof}

\begin{lemma}
Suppose that Assumptions \ref{assumption:strongly_convex}, \ref{assumption:smooth}, and \ref{assumption:spectral_gap} hold with $\mu=0$, and the following inequality is satisfied:
\begin{align*}
    \sum_{i=1}^n \left\| \nabla F_{i,r} (\vx_i^{(r+\frac{1}{2}}) \right\|^2 
    &\leq \frac{\lambda^2}{352} \sum_{i=1}^n \left\| \vx_i^{(r+\frac{1}{2})} - \vy_i^{(r)} \right\|^2.
\end{align*}
Then, when $\lambda = 208 \delta$, $\gamma = \tfrac{1 - \sqrt{1 - \rho^2}}{1 + \sqrt{1 + \rho^2}}$, and $M \geq \tfrac{3}{2 \sqrt{1 - \rho}} \log (\max\{ \frac{12 L}{\delta}, \tfrac{20384 \delta \| \bar{\vx}^{(0)} - \vx^\star \|^2}{\epsilon}
 \})$, it holds that $f (\bar{\vx}^{(R)}) - f(\vx^\star) \leq \epsilon$ after
\begin{align*}
    R = \mathcal{O} \left(\sqrt{\frac{\delta \left\| \bar{\vx}^{(0)} - \vx^\star \right\|^2}{\epsilon}} \right)
\end{align*}
rounds.
\end{lemma}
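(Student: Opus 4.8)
The plan is to reduce this statement to the already-proved single-step result, Lemma~\ref{lemma:communication_complexity_of_acc_in_convex}, by showing that $M$ iterations of \textsc{FastGossip} behave exactly like one plain gossip step but with a drastically smaller \emph{effective} mixing parameter $\rho_{\mathrm{eff}}$, and then choosing $M$ so that $\rho_{\mathrm{eff}}$ falls below the threshold $\min\{\tfrac{9}{1600},\tfrac{\delta}{6L},\tfrac{5}{48(R+1)^3}\}$ demanded there.

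First I would establish the two structural properties of \textsc{FastGossip} on which the whole argument rests. (i) \emph{Mean preservation}: each update $\va_i^{(m+1)} = (1+\gamma)\sum_j W_{ij}\va_j^{(m)} - \gamma\,\va_i^{(m-1)}$ is an affine recursion, so the output equals $p_M(\mW)$ applied to the input for a fixed polynomial $p_M$ with $p_M(1)=1$; since $\mathbf 1_n$ is the top eigenvector of $\mW$, this yields $\tfrac1n\sum_i\tilde\va_i=\bar\va$, the analogue of Lemma~\ref{lemma:average_conservation} and of Eq.~\eqref{eq:average_y} used throughout. (ii) \emph{Accelerated contraction}: with the Chebyshev choice of $\gamma$, the standard accelerated-gossip analysis \citep{liu2011accelerated} gives $\sum_i\|\tilde\va_i-\bar\va\|^2\le\rho_{\mathrm{eff}}^2\sum_i\|\va_i-\bar\va\|^2$ with $\rho_{\mathrm{eff}}\le 2\big(1-\sqrt{1-\rho}\big)^{M}$ (using $\tfrac{1-\sqrt{1-\rho^2}}{1+\sqrt{1-\rho^2}}\le 1-\sqrt{1-\rho^2}\le 1-\sqrt{1-\rho}$), i.e.\ the plain rate $\rho^M$ is replaced by one governed by $\sqrt{1-\rho}$. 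This is precisely inequality \eqref{eq:spectral_gap} with $\rho$ replaced by $\rho_{\mathrm{eff}}$.

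Next I would observe that every intermediate lemma feeding into Lemma~\ref{lemma:communication_complexity_of_acc_in_convex} --- namely Lemmas~\ref{lemma:recursion_of_error_acc}, \ref{lemma:diff_between_v_and_x}, \ref{lemma:xi_x}, and \ref{lemma:simple_recursion_convex} --- uses the averaging operator \emph{only} through properties (i) and (ii). Hence the entire chain goes through verbatim with $\rho$ replaced by $\rho_{\mathrm{eff}}$, and we recover the conclusion of Lemma~\ref{lemma:communication_complexity_of_acc_in_convex}: $f(\bar\vx^{(R)})-f(\vx^\star)\le\epsilon$ after $R\ge\sqrt{416\,\delta\|\bar\vx^{(0)}-\vx^\star\|^2/\epsilon}$ rounds, \emph{provided} $\rho_{\mathrm{eff}}\le\min\{\tfrac{9}{1600},\tfrac{\delta}{6L},\tfrac{5}{48(R+1)^3}\}$. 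Fixing $R=\mathcal{O}(\sqrt{\delta\|\bar\vx^{(0)}-\vx^\star\|^2/\epsilon})$ as in that lemma makes the bound $\le\epsilon$, giving the claimed round count.

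It then remains to verify that the prescribed $M\ge\tfrac{3}{2\sqrt{1-\rho}}\log\!\big(\max\{\tfrac{12L}{\delta},\tfrac{20384\,\delta\|\bar\vx^{(0)}-\vx^\star\|^2}{\epsilon}\}\big)$ forces $\rho_{\mathrm{eff}}$ below all three terms of the minimum. Writing $\theta$ for that minimum, the contraction bound gives $\rho_{\mathrm{eff}}\le\theta$ as soon as $M\ge\tfrac{1}{\sqrt{1-\rho}}\log\tfrac{2}{\theta}$, using the elementary estimate $\log\tfrac{1}{1-\sqrt{1-\rho}}\ge\sqrt{1-\rho}$ (the accelerated analogue of $\log x\ge 1-\tfrac1x$). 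Since $R+1\asymp\sqrt{\delta\|\bar\vx^{(0)}-\vx^\star\|^2/\epsilon}$, the most restrictive term $\tfrac{5}{48(R+1)^3}$ scales like $(\delta\|\bar\vx^{(0)}-\vx^\star\|^2/\epsilon)^{-3/2}$, so $\log\tfrac{2}{\theta}\le\tfrac{3}{2}\log(\tfrac{20384\,\delta\|\bar\vx^{(0)}-\vx^\star\|^2}{\epsilon})$ after absorbing constants; the $\tfrac{12L}{\delta}$ branch covers $\tfrac{\delta}{6L}$ and the absolute constant $\tfrac{9}{1600}$ is dominated once $M$ is this large. I expect the main obstacle to be exactly this final bookkeeping: fixing the effective-rate constant from \citet{liu2011accelerated} and checking that one value of $M$ simultaneously dominates all three differently-scaled terms (in particular the $R$-dependent one), so that $M$ stays $\mathcal{O}(\tfrac{1}{\sqrt{1-\rho}}\log(\cdot))$ and the total communication $\mathcal{O}(RM)$ matches the stated rate.
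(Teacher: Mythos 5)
Your proposal is correct and follows essentially the same route as the paper: the paper likewise treats the $M$ iterations of \textsc{FastGossip} as a single averaging operator with effective contraction factor $\sqrt{2}\,(1-\sqrt{1-\rho})^{M}$ (citing Proposition~1 of \citet{yuan2022revisiting} rather than \citet{liu2011accelerated}), feeds this into the $M=1$ result of Lemma~\ref{lemma:communication_complexity_of_acc_in_convex}, and chooses $M \geq \tfrac{1}{\sqrt{1-\rho}}\log\bigl(\max\{\tfrac{6\sqrt{2}L}{\delta},\tfrac{1600\sqrt{2}R^3}{9}\}\bigr)$ so that the effective mixing parameter clears all three thresholds, with the $R^3$ term producing the same $\tfrac{3}{2}$ factor and $\epsilon$-dependent logarithm you identify. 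Your explicit verification of mean preservation and of the fact that the intermediate lemmas use the averaging step only through the spectral-gap inequality is left implicit in the paper, but it is a faithful account of why the reduction is valid rather than a different argument.
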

\begin{proof}
Define $\tilde{\mW}^{(m)}$ as follows:
\begin{align*}
    \tilde{\mW}^{(-1)} &= \mI \\
    \tilde{\mW}^{(0)} &= \mI \\
    \tilde{\mW}^{(m+1)} &= (1 + \gamma) \tilde{\mW}^{(m)} \mW - \gamma \tilde{\mW}^{(m-1)}.
\end{align*}
Then, the output of Alg.~\ref{algorithm:fast_gossip} is equivalent to
\begin{align*}
    \sum_{j=1}^n \tilde{\mW}^{(M)}_{ij} \va_j.
\end{align*}
From Proposition 1 in \citet{yuan2022revisiting}, it holds that for any $\va_1, \va_2, \ldots, \va_n \in \mathbb{R}^d$
\begin{align*}
    \sum_{i=1}^n \left\| \sum_{j=1}^n \tilde{W}^{(M)}_{ij} \va_j - \bar{\va} \right\| \leq \sqrt{2} \left( 1 - \sqrt{1 - \rho} \right)^M \sum_{i=1}^n \left\| \va_i - \bar{\va} \right\|,
\end{align*}
where $\bar{\va} \coloneqq \tfrac{1}{n} \sum_{i=1}^n \va_i$.
Thus, When
\begin{align*}
    M \geq \frac{1}{\sqrt{1 - \rho}} \log \left( \max \left\{ \frac{6 \sqrt{2} L}{\delta}, \frac{1600 \sqrt{2} R^3}{9} \right\} \right),
\end{align*}
we can satisfy the condition on $\rho$ in Lemma \ref{lemma:communication_complexity_of_acc_in_convex}.
We used $R \geq 1$ to simplify the condition on $\rho$.
From Lemma \ref{lemma:communication_complexity_of_acc_in_convex}, we obtain the desired result.
\end{proof}

\subsection{Strongly Convex Case}
\begin{lemma}
Suppose that Assumptions \ref{assumption:strongly_convex} and \ref{assumption:spectral_gap} hold and the following inequality is satisfied:
\begin{align*}
    \sum_{i=1}^n \left\| \nabla F_{i,r} (\vx_i^{(r+\frac{1}{2}}) \right\|^2 \leq \frac{\lambda^2}{352} \sum_{i=1}^n \left\| \vx_i^{(r+\frac{1}{2})} - \vy_i^{(r)} \right\|^2.
\end{align*}
Then, when $\lambda = 96 \delta$, $M=1$ and
\begin{align*}
    \rho 
    &\leq \min \Biggl[ 
    \sqrt{\frac{\mu \delta}{1728 L^2}}, 
    \sqrt{\frac{48 \delta^2 \mu}{260 L^2 (192 \delta + \mu)}}, \\
    &\quad\quad\qquad \left( 4 + 2 \left( 1 + \frac{\mu}{96 \delta} \right) \left( 1 + \sqrt{\frac{384 \delta}{\mu}} \right)\right)^{-1},
    \frac{\mu}{4 \delta} \left( \left( 1 + \frac{96 \delta}{\mu} \right) \left( 1 + \frac{192 \delta}{\mu} \right) \left( 1 + \sqrt{1 + \frac{384 \delta}{\mu}} \right) \right)^{-1}
    \Biggr],
\end{align*}
it holds that
\begin{align}
    &A_r f (\bar{\vx}^{(r)}) 
    + a_{r+1} f (\vx^\star) 
    + \frac{B_r}{2 n} \sum_{i=1}^n \left\| \vv_i^{(r)} - \vx^\star \right\|^2 
    + \frac{2 A_{r+1}}{\delta} \mathcal{E}^{(r)}
    + 200 \delta A_r \Xi_{\vx}^{(r)}
    + \delta a_{r+1} \Xi_{\vv}^{(r)} \\
    &\geq A_{r+1} f (\bar{\vx}^{(r+1)}) 
    + \frac{B_{r+1}}{2 n} \sum_{i=1}^n \left\| \vv_i^{(r+1)} - \vx^\star \right\|^2 
    + \frac{2 A_{r+2}}{\delta} \mathcal{E}^{(r+1)}
    + 200 \delta  A_{r+1} \Xi_{\vx}^{(r+1)} 
    + \delta a_{r+2} \Xi_{\vv}^{(r+1)} \nonumber,
\end{align}
where $\Xi^{(r)}_\vx \coloneqq \tfrac{1}{n} \sum_{i=1}^n \left\| \vx_i^{(r)} - \bar{\vx}^{(r)} \right\|^2$, $\Xi^{(r)}_\vv \coloneqq \tfrac{1}{n} \sum_{i=1}^n \left\| \vv_i^{(r)} - \bar{\vv}^{(r)} \right\|^2$, and $\mathcal{E}^{(r)} \coloneqq \tfrac{1}{n} \sum_{i=1}^n \left\| \vh_i^{(r)} - \nabla h_i (\bar{\vy}^{(r)}) \right\|^2$.
\end{lemma}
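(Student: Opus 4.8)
The plan is to prove this per-round descent inequality by the same three-ingredient combination used for the convex case in Lemma~\ref{lemma:simple_recursion_convex}, but keeping the strong-convexity ($\mu>0$) terms alive throughout. The backbone is the main loss recursion Eq.~(\ref{eq:recursion_of_loss_acc}), which already holds for arbitrary $\mu$ and whose right-hand side supplies the target potential pieces $A_{r+1}f(\bar\vx^{(r+1)})$, $\tfrac{B_{r+1}}{2n}\sum_i\|\vv_i^{(r+1)}-\vx^\star\|^2$, the large consensus bonus $\tfrac{\mu A_{r+1}}{2\rho^2}\Xi_\vx^{(r+1)}$, and the nonnegative movement terms in $\|\vx_i^{(r+\frac12)}-\vy_i^{(r)}\|^2$, $\|\vv_i^{(r+\frac12)}-\vv_i^{(r)}\|^2$, and $\|\vx_i^{(r+\frac12)}-\vv_i^{(r+\frac12)}\|^2$. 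First I would insert $\lambda=96\delta$ and note that the third stated bound on $\rho$ already forces $\rho\le\tfrac16\le\tfrac14$ (both factors in its denominator exceed one), which lets me simplify the error recursion of Lemma~\ref{lemma:recursion_of_error_acc} (replacing $4\rho^2$ by $\rho$) and the consensus recursion of Lemma~\ref{lemma:xi_x}.

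Next I would form the linear combination Eq.~(\ref{eq:recursion_of_loss_acc}) $+\,\tfrac{2A_{r+2}}{\delta}\times$(error recursion) $+\,c\times$(Lemma~\ref{lemma:xi_x}) with a suitable $c=\Theta(\delta/\rho)$, chosen so that the $\mathcal{E}^{(r+1)}$ and $\Xi_\vx^{(r+1)}$ pieces assemble into the target coefficients $\tfrac{2A_{r+2}}{\delta}$ and $200\delta A_{r+1}$, while the $\mathcal{E}^{(r)}$ and $\Xi_\vx^{(r)}$ pieces collapse to $\tfrac{2A_{r+1}}{\delta}$ and $200\delta A_r$. The multiplier $\tfrac{2A_{r+2}}{\delta}$ is what makes the error term telescope, and the $\tfrac{\mu A_{r+1}}{2\rho^2}$ gain from the main recursion supplies most of the $\Xi_\vx^{(r+1)}$ budget, so that $c$ can be smaller than in the convex analysis (where no $\mu/\rho^2$ term exists). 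The key structural simplification relative to Lemma~\ref{lemma:simple_recursion_convex} is that I would \emph{not} carry a separate $\|\bar\vv-\bar\vx\|^2$ Lyapunov term: the single cross term $-16\rho^2\delta a_{r+2}\|\bar\vv^{(r+1)}-\bar\vx^{(r+1)}\|^2$ produced by the error recursion can be bounded, using gossip average-preservation ($\bar\vv^{(r+1)}=\bar\vv^{(r+\frac12)}$, $\bar\vx^{(r+1)}=\bar\vx^{(r+\frac12)}$) and Jensen's inequality, by $\tfrac{1}{n}\sum_i\|\vv_i^{(r+\frac12)}-\vx_i^{(r+\frac12)}\|^2$, and then absorbed into the positive curvature term $\tfrac{\mu a_{r+1}}{2n}\sum_i\|\vx_i^{(r+\frac12)}-\vv_i^{(r+\frac12)}\|^2$ already present in Eq.~(\ref{eq:recursion_of_loss_acc}). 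This is precisely why all four thresholds on $\rho$ are independent of $R$.

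To convert the assembled inequality into the clean telescoped form I would then control the sequence ratios $\tfrac{A_{r+2}}{A_{r+1}}$, $\tfrac{a_{r+2}}{a_{r+1}}$, $\tfrac{A_r}{a_{r+1}}$, and $\tfrac{B_{r+1}}{B_r}$ via the $\mu>0$ branches of Lemmas~\ref{lemma:A_R}, \ref{lemma:relationship_between_small_and_large_a_1}, and \ref{lemma:slow_increasing_a}. The final and hardest step is the sign check: after all substitutions, every residual coefficient on $\Xi_\vx^{(r+1)}$, $\Xi_\vv^{(r+1)}$, $\|\vx_i^{(r+\frac12)}-\vy_i^{(r)}\|^2$, and $\|\vv_i^{(r+\frac12)}-\vv_i^{(r)}\|^2$ must be nonnegative, and the $\mathcal{E}^{(r)}$ coefficient must not exceed $\tfrac{2A_{r+1}}{\delta}$. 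Each of the four stated bounds on $\rho$ is tailored to one of these: $\rho\le\sqrt{\mu\delta/(1728L^2)}$ lets the $\mu/\rho^2$-driven $\Xi_\vx^{(r+1)}$ gain dominate the $32L^2/\delta$ loss; $\rho\le\sqrt{48\delta^2\mu/(260L^2(192\delta+\mu))}$ makes the $\tfrac{B_{r+1}}{2}(\tfrac1{\rho^2}-1)$ gain beat the $32L^2/\delta$ and $B_{r+1}\sim\mu a_{r+1}$ losses on $\Xi_\vv^{(r+1)}$ (the factor $192\delta+\mu=2\lambda+\mu$ coming from the $a$- and $B$-ratios); and the remaining two ratio-type bounds, which carry the signature factors $(1+\tfrac{\mu}{\lambda})(1+\sqrt{1+4\lambda/\mu})$ from Lemmas~\ref{lemma:relationship_between_small_and_large_a_1} and \ref{lemma:slow_increasing_a}, keep the $\|\vx_i^{(r+\frac12)}-\vy_i^{(r)}\|^2$ coefficient and the $\|\bar\vv-\bar\vx\|^2$ absorption valid once the $a_{r+2}/a_{r+1}$ and $A_r/a_{r+1}$ ratios are inserted. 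The main obstacle throughout is this simultaneous balancing: the multipliers $\tfrac{2A_{r+2}}{\delta}$, $c$, $200\delta$ and the four $\rho$ thresholds are coupled, and the delicate point is to verify that the strong-convexity gains ($B_{r+1}>B_r$ and the $\mu/\rho^2$ consensus bonus) are large enough to swallow every negative contribution without ever tracking $\|\bar\vv-\bar\vx\|^2$.
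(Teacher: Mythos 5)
Your plan coincides with the paper's proof in every essential step: the paper likewise combines Eq.~(\ref{eq:recursion_of_loss_acc}) with $\tfrac{2A_{r+2}}{\delta}$ times the error recursion of Lemma~\ref{lemma:recursion_of_error_acc} (simplified using $\rho\leq\tfrac{1}{4}$ and $\bar{\vy}^{(r+1)}=\tfrac{A_{r+1}\bar{\vx}^{(r+1)}+a_{r+2}\bar{\vv}^{(r+1)}}{A_{r+2}}$), absorbs the resulting $\|\bar{\vv}^{(r+1)}-\bar{\vx}^{(r+1)}\|^2$ cross term into $\tfrac{\mu a_{r+1}}{2n}\sum_i\|\vx_i^{(r+\frac{1}{2})}-\vv_i^{(r+\frac{1}{2})}\|^2$ exactly as you describe (this becomes the check $\tfrac{\mu a_{r+1}}{2}-4\rho\delta a_{r+2}\geq 0$), and then verifies the residual sign conditions via Lemmas~\ref{lemma:relationship_between_small_and_large_a_1} and~\ref{lemma:slow_increasing_a}; your assignment of the four $\rho$-thresholds to the individual sign checks matches the paper's.

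The one genuine deviation is the term $c\times$Lemma~\ref{lemma:xi_x} with $c=\Theta(\delta/\rho)$, and this step would fail as stated. It is unnecessary: with $\lambda=96\delta$ the left-hand $\Xi_{\vx}^{(r)}$ coefficient in Eq.~(\ref{eq:recursion_of_loss_acc}) is $(\tfrac{33\lambda}{16}+\delta)A_r=199\delta A_r\leq 200\delta A_r$, so no topping-up is required (one may always enlarge the left-hand side of a valid inequality), and under the first threshold $\tfrac{\mu}{2\rho^2}\geq\tfrac{864L^2}{\delta}\geq\tfrac{64L^2}{\delta}+200\delta$ (using $\delta\leq L$), so the $\tfrac{\mu A_{r+1}}{2\rho^2}\Xi_{\vx}^{(r+1)}$ bonus alone pays both the error-recursion loss $\tfrac{64L^2A_{r+1}}{\delta}$ and the full $200\delta A_{r+1}$ target. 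It is also harmful: Lemma~\ref{lemma:xi_x} carries $\tfrac{\rho^2 a_{r+1}}{1-\rho}\Xi_{\vv}^{(r)}$ and $\tfrac{\rho^2 A_{r+1}}{1-\rho}\cdot\tfrac{1}{n}\sum_i\|\vx_i^{(r+\frac{1}{2})}-\vy_i^{(r)}\|^2$ on the side that lands in your left-hand side, so any $c>0$ inflates the $\Xi_{\vv}^{(r)}$ coefficient strictly above the budget $\delta a_{r+1}$ that the stated potential allows --- a budget which Eq.~(\ref{eq:recursion_of_loss_acc}) already saturates exactly --- and simultaneously erodes the $\|\vx_i^{(r+\frac{1}{2})}-\vy_i^{(r)}\|^2$ sign check $A_{r+1}-4\rho A_{r+2}\geq 0$, for which the third threshold guarantees no slack. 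With $c>0$ you would therefore prove a variant with inflated $\Xi_{\vv}$ coefficients rather than the stated inequality. The fix is simply $c=0$, i.e., do not invoke Lemma~\ref{lemma:xi_x} at all; that is precisely what the paper does in the strongly convex case, in contrast to the convex case (Lemma~\ref{lemma:simple_recursion_convex}), where the $\mu/\rho^2$ bonus is unavailable and both Lemma~\ref{lemma:xi_x} and the $\|\bar{\vv}-\bar{\vx}\|^2$ Lyapunov term are genuinely needed.
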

\begin{proof}
From Lemma \ref{lemma:general_lemma_for_e}, $\rho \leq \tfrac{1}{4}$ and $\bar{\vy}^{(r+1)} = \tfrac{A_{r+1} \bar{\vx}^{(r+1)} + a_{r+2} \bar{\vv}^{(r+1)}}{A_{r+2}}$, we have
\begin{align}
\label{eq:modified_recursion_of_error_acc}
    \rho \mathcal{E}^{(r)}
    &\geq  \mathcal{E}^{(r+1)}
    -  2 \rho \delta^2 \frac{a_{r+2}}{A_{r+2}} \left\| \bar{\vv}^{(r+1)} - \bar{\vx}^{(r+1)} \right\|^2 \nonumber \\
    &\quad - 2 \rho \delta^2 \frac{1}{n} \sum_{i=1}^n \left\| \vx_i^{(r+\frac{1}{2})} - \vy_i^{(r)} \right\|^2
    - 32 L^2 \frac{A_{r+1}}{A_{r+2}} \Xi^{(r+1)}_\vx
    - 32 L^2 \frac{a_{r+2}}{A_{r+2}} \Xi^{(r+1)}_\vv,
\end{align}
From Eq.~(\ref{eq:recursion_of_loss_acc}) $+ \frac{2 A_{r+2}}{\delta} \times$ Eq.~(\ref{eq:modified_recursion_of_error_acc}) and $\lambda = 96 \delta$, we have
\begin{align*}
    &A_r f (\bar{\vx}^{(r)}) 
    + a_{r+1} f (\vx^\star) 
    + \frac{B_r}{2 n} \sum_{i=1}^n \left\| \vv_i^{(r)} - \vx^\star \right\|^2 
    + \frac{A_{r+1}}{\delta} \left( 1 + \frac{2 A_{r+2} \rho}{A_{r+1}} \right) \mathcal{E}^{(r)}
    + 200 \delta A_r \Xi_{\vx}^{(r)}
    + \delta a_{r+1} \Xi_{\vv}^{(r)} \\
    &\geq A_{r+1} f (\bar{\vx}^{(r+1)}) 
    + \frac{B_{r+1}}{2 n} \sum_{i=1}^n \left\| \vv_i^{(r+1)} - \vx^\star \right\|^2 
    + \frac{2 A_{r+2}}{\delta} \mathcal{E}^{(r+1)} \\
    &\qquad + \delta  A_{r+1} \left[ \frac{\mu}{2 \delta \rho^2} - \frac{64 L^2}{\delta^2} \right] \Xi_{\vx}^{(r+1)} 
    + \delta a_{r+2} \left[ \frac{B_{r+1}}{2 \delta a_{r+2}} \left( \frac{1}{\rho^2} - 1 \right) - \frac{64 L^2}{\delta^2} \right] \Xi_{\vv}^{(r+1)}
    + \frac{B_r}{12 n} \sum_{i=1}^n \left\| \vv_i^{(r+\frac{1}{2})} - \vv_i^{(r)} \right\|^2 \\
    &\qquad + \frac{\delta}{n} \Biggl[ A_{r+1} - 4 \rho A_{r+2} \Biggr] \sum_{i=1}^n \left\| \vx_i^{(r+\frac{1}{2})} - \vy_i^{(r)} \right\|^2 
    + \frac{1}{n} \left[ \frac{\mu a_{r+1}}{2} - 4 \rho \delta a_{r+2} \right] \sum_{i=1}^n \left\| \vx_i^{(r+\frac{1}{2})} - \vv_i^{(r+\frac{1}{2})} \right\|^2.
\end{align*}
Using Lemmas \ref{lemma:relationship_between_small_and_large_a_1}, and \ref{lemma:slow_increasing_a}, $\delta \leq L$, and the condition of $\rho$, we have
\begin{align*}
    1 + \frac{2 A_{r+2}}{A_{r+1}} \rho 
    &= 1 + 2 \left( 1 + \frac{a_{r+2}}{A_{r+1}} \right) \rho \leq 2, \\
    \frac{\mu}{2 \delta \rho^2} - \frac{64 L^2}{\delta^2}
    &\geq 200, \\
    \frac{B_{r+1}}{2 \delta a_{r+2}} \left( \frac{1}{\rho^2} - 1 \right) - \frac{64 L^2}{\delta^2} 
    &= \frac{48 a_{r+2}}{A_{r+2}} \left( \frac{1}{\rho^2} - 1 \right) - \frac{64 L^2}{\delta^2}
    \geq 1, \\
    A_{r+1} - 4 \rho A_{r+2} 
    &\geq 0, \\
    \frac{\mu a_{r+1}}{2} - 4 \rho \delta a_{r+2} 
    &\geq 0.
\end{align*}
Using the above inequalities, we obtain the desired result.
\end{proof}

\begin{lemma}
\label{lemma:communication_complexity_of_acc_in_strongly_convex}
Suppose that Assumptions \ref{assumption:strongly_convex}, \ref{assumption:smooth}, and \ref{assumption:spectral_gap} hold and the following inequality is satisfied:
\begin{align*}
    \sum_{i=1}^n \left\| \nabla F_{i,r} (\vx_i^{(r+\frac{1}{2}}) \right\|^2 \leq \frac{\lambda^2}{352} \sum_{i=1}^n \left\| \vx_i^{(r+\frac{1}{2})} - \vy_i^{(r)} \right\|^2.
\end{align*}
Then, when $\lambda = 96 \delta$, $M=1$, and
\begin{align*}
    \rho 
    &\leq \min \Biggl[ 
    \sqrt{\frac{\mu \delta}{1728 L^2}}, 
    \sqrt{\frac{48 \delta^2 \mu}{260 L^2 (192 \delta + \mu)}}, \\
    &\quad\quad\qquad \left( 4 + 2 \left( 1 + \frac{\mu}{96 \delta} \right) \left( 1 + \sqrt{\frac{384 \delta}{\mu}} \right)\right)^{-1},
    \frac{\mu}{4 \delta} \left( \left( 1 + \frac{96 \delta}{\mu} \right) \left( 1 + \frac{192 \delta}{\mu} \right) \left( 1 + \sqrt{1 + \frac{384 \delta}{\mu}} \right) \right)^{-1}
    \Biggr],
\end{align*}
it holds that $f (\bar{\vx}^{(R)}) - f(\vx^\star) \leq \epsilon$ after
\begin{align*}
    R = \mathcal{O} \left( \sqrt{\frac{\delta + \mu}{\mu}} \log \left( 1 + \sqrt{\frac{\min\{ \mu, \delta \} \| \bar{\vx}^{(0)} - \vx^\star \|^2 }{\epsilon}} \right) \right)
\end{align*}
rounds.
\end{lemma}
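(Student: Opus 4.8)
The plan is to read the inequality proved in the preceding lemma as a one-step decrease of a Lyapunov (potential) function and then telescope. Concretely, I would set
\[
\Phi_r \coloneqq A_r\bigl(f(\bar{\vx}^{(r)}) - f(\vx^\star)\bigr) + \frac{B_r}{2n}\sum_{i=1}^n \left\| \vv_i^{(r)} - \vx^\star \right\|^2 + \frac{2A_{r+1}}{\delta}\mathcal{E}^{(r)} + 200\delta A_r \Xi_{\vx}^{(r)} + \delta a_{r+1}\Xi_{\vv}^{(r)}.
\]
Using $A_{r+1} = A_r + a_{r+1}$ to combine the $a_{r+1}f(\vx^\star)$ term on the left with the $A_{r+1}f(\vx^\star)$ implicit on the right, the preceding lemma is precisely the statement $\Phi_r \geq \Phi_{r+1}$, so $\Phi_R \leq \Phi_0$ for every $R$. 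Note the index bookkeeping matches exactly: the $\mathcal{E}$, $\Xi_{\vx}$, $\Xi_{\vv}$ coefficients shift from $(2A_{r+1}/\delta,\,200\delta A_r,\,\delta a_{r+1})$ to $(2A_{r+2}/\delta,\,200\delta A_{r+1},\,\delta a_{r+2})$, which is exactly the $r\mapsto r+1$ image.

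Next I would evaluate $\Phi_0$ under the stated initializations $\vx_i^{(0)} = \vv_i^{(0)} = \bar{\vv}^{(0)}$ and $\vh_i^{(0)} = \nabla h_i(\vv_i^{(0)})$, recalling $A_0 = 0$ and $B_0 = 1$. These force $\Xi_{\vx}^{(0)} = \Xi_{\vv}^{(0)} = 0$, and since $\bar{\vy}^{(0)} = \bar{\vv}^{(0)}$ and $\vv_i^{(0)}=\bar{\vv}^{(0)}$ they also give $\mathcal{E}^{(0)} = 0$; every term collapses except the consensus-distance term, leaving $\Phi_0 = \tfrac12\|\bar{\vv}^{(0)} - \vx^\star\|^2 = \tfrac12\|\bar{\vx}^{(0)} - \vx^\star\|^2$. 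Dropping the four nonnegative terms of $\Phi_R$ other than the suboptimality gap yields $A_R(f(\bar{\vx}^{(R)}) - f(\vx^\star)) \leq \Phi_R \leq \Phi_0$, i.e.
\[
f(\bar{\vx}^{(R)}) - f(\vx^\star) \leq \frac{\|\bar{\vx}^{(0)} - \vx^\star\|^2}{2A_R}.
\]
It then remains to convert the growth rate of $A_R$ into a round count. With $\lambda = 96\delta$ (so $\mu \leq 4\lambda$ in the main regime), Lemma \ref{lemma:A_R} gives $A_R \geq \tfrac{1}{4\mu}\bigl[(1+q)^R - (1-q)^R\bigr]^2$ with $q = \sqrt{\mu/(384\delta)}$. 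Requiring $A_R \geq \|\bar{\vx}^{(0)} - \vx^\star\|^2/(2\epsilon)$ and inverting: once $R \gtrsim q^{-1}$ one has $(1-q)^R \leq \tfrac12(1+q)^R$, hence $A_R \gtrsim (1+q)^{2R}/\mu$, and solving $(1+q)^{2R} \gtrsim \mu\|\bar{\vx}^{(0)}-\vx^\star\|^2/\epsilon$ with $\log(1+q) \asymp q$ gives $R = \mathcal{O}\bigl(q^{-1}\log(1 + \sqrt{\mu\|\bar{\vx}^{(0)}-\vx^\star\|^2/\epsilon})\bigr)$, where $q^{-1} \asymp \sqrt{(\delta+\mu)/\mu}$.

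The main obstacle is making this final inversion produce exactly the advertised $\min\{\mu,\delta\}$ inside the logarithm and the $\sqrt{(\delta+\mu)/\mu}$ prefactor uniformly in the relationship between $\mu$ and $\delta$. I would handle it by a case split. When $\mu \leq \delta$ the display above already gives $\min\{\mu,\delta\}=\mu$ and $q^{-1} \asymp \sqrt{\delta/\mu} \asymp \sqrt{(\delta+\mu)/\mu}$. When $\delta < \mu \leq 384\delta = 4\lambda$ both $q^{-1}$ and $\sqrt{(\delta+\mu)/\mu}$ are $\Theta(1)$, so only the logarithmic factor survives, now with $\min\{\mu,\delta\}=\delta$. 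When $\mu > 4\lambda$ I would instead invoke the second branch of Lemma \ref{lemma:A_R}, $A_R \geq \tfrac{1}{4\lambda}(1+\sqrt{\mu/(4\lambda)})^{2(R-1)}$, whose base exceeds $2$, again yielding a purely logarithmic $R = \mathcal{O}(\log(1+\sqrt{\delta\|\bar{\vx}^{(0)}-\vx^\star\|^2/\epsilon}))$. Finally, in the regime where the logarithm is $\Theta(1)$, I would note that taking $R = \Theta(q^{-1})$ already forces $A_R = \Omega(\mu^{-1}) \geq \|\bar{\vx}^{(0)}-\vx^\star\|^2/(2\epsilon)$, so the prefactor alone suffices; assembling these cases gives the stated bound.
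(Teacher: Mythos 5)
Your proposal matches the paper's proof essentially step for step: the paper also rewrites the preceding lemma as a one-step decrease of the same potential (using $A_{r+1}=A_r+a_{r+1}$ to absorb the $f(\vx^\star)$ terms), telescopes with $\Xi_\vx^{(0)}=\Xi_\vv^{(0)}=\mathcal{E}^{(0)}=0$ and $B_0=1$ to get $A_R\bigl(f(\bar{\vx}^{(R)})-f(\vx^\star)\bigr)\le\tfrac12\|\bar{\vx}^{(0)}-\vx^\star\|^2$, and then inverts Lemma~\ref{lemma:A_R} with the case split $\mu\le 384\delta$ versus $\mu>384\delta$. Your write-up is actually somewhat more explicit than the paper's in the final inversion (the paper stops at the two-case bound and says the result follows), but the approach is the same and the argument is correct.
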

\begin{proof}
We have
\begin{align*}
    &A_r \left( f (\bar{\vx}^{(r)}) - f (\vx^\star) \right) 
    + \frac{B_r}{2 n} \sum_{i=1}^n \left\| \vv_i^{(r)} - \vx^\star \right\|^2 
    + \frac{2 A_{r+1}}{\delta} \mathcal{E}^{(r)}
    + 200 \delta A_r \Xi_{\vx}^{(r)}
    + \delta a_{r+1} \Xi_{\vv}^{(r)} \\
    &\geq A_{r+1} \left( f (\bar{\vx}^{(r+1)}) - f ( \vx^\star) \right)
    + \frac{B_{r+1}}{2 n} \sum_{i=1}^n \left\| \vv_i^{(r+1)} - \vx^\star \right\|^2 
    + \frac{2 A_{r+2}}{\delta} \mathcal{E}^{(r+1)}
    + 200 \delta  A_{r+1} \Xi_{\vx}^{(r+1)} 
    + \delta a_{r+2} \Xi_{\vv}^{(r+1)}.
\end{align*}
Telescoping the sum from $r=0$ to $r=R-1$ and using $\Xi_\vv^{(0)} = \Xi_\vx^{(0)} = \mathcal{E}^{(0)} = 0$, we have
\begin{align*}
    A_{R} \left( f (\bar{\vx}^{(R)}) - f (\vx^\star) \right)
    &\leq \frac{B_0}{2 n} \sum_{i=1}^n \left\| \vv_i^{(0)} - \vx^\star \right\|^2
    = \frac{1}{2} \left\| \bar{\vx}^{(0)} - \vx^\star \right\|^2.
\end{align*}
From Lemma \ref{lemma:A_R}, we obtain
\begin{align*}
    f(\bar{\vx}^{(R)}) - f(\vx^\star) \leq
    \begin{cases}
        &\frac{2 \mu}{\left[ \left( 1 + \sqrt{\frac{\mu}{384 \delta}} \right)^R - 1 \right]^2} \left\| \bar{\vx}^{(0)} - \vx^\star \right\|^2 \quad \text{if} \;\; \mu \leq 384 \delta, \\
        &\frac{96 \delta}{4^{R-2}} \left\| \bar{\vx}^{(0)} - \vx^\star \right\|^2  \quad \text{if} \;\; \mu > 384 \delta
    \end{cases},
\end{align*}
for any $R \geq 1$. Then, we obtain the desired result.
\end{proof}

\begin{lemma}
Suppose that Assumptions \ref{assumption:strongly_convex}, \ref{assumption:smooth}, and \ref{assumption:spectral_gap} hold and the following inequality is satisfied:
\begin{align*}
    \sum_{i=1}^n \left\| \nabla F_{i,r} (\vx_i^{(r+\frac{1}{2}}) \right\|^2 \leq \frac{\lambda^2}{352} \sum_{i=1}^n \left\| \vx_i^{(r+\frac{1}{2})} - \vy_i^{(r)} \right\|^2.
\end{align*}
Then, when $\lambda = 96 \delta$, $\gamma = \tfrac{1 - \sqrt{1 - \rho^2}}{1 + \sqrt{1 + \rho^2}}$, and $M \geq \tfrac{4}{\sqrt{1 - \rho}} \log (\tfrac{8 L^2 (432 \delta + \mu)}{\mu \delta^2})$, 
it holds that $f (\bar{\vx}^{(R)}) - f(\vx^\star) \leq \epsilon$ after
\begin{align*}
    R = \mathcal{O} \left( \sqrt{\frac{\delta + \mu}{\mu}} \log \left( 1 + \sqrt{\frac{\min\{ \mu, \delta \} \| \bar{\vx}^{(0)} - \vx^\star \|^2 }{\epsilon}} \right) \right)
\end{align*}
rounds.
\end{lemma}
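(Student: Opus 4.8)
The plan is to reduce this statement to the single-step result already established in Lemma~\ref{lemma:communication_complexity_of_acc_in_strongly_convex}, by treating each call of \textsc{FastGossip} as one averaging step performed with an \emph{effective} averaging matrix whose spectral-gap parameter is far smaller than $\rho$. First I would reproduce the device used in the convex-case proof just above: define the polynomial averaging matrix $\tilde{\mW}^{(M)}$ through the recursion $\tilde{\mW}^{(-1)} = \tilde{\mW}^{(0)} = \mI$ and $\tilde{\mW}^{(m+1)} = (1+\gamma)\tilde{\mW}^{(m)}\mW - \gamma \tilde{\mW}^{(m-1)}$, and observe that the output of Algorithm~\ref{algorithm:fast_gossip} with $M$ inner iterations is exactly $\sum_{j=1}^n \tilde{W}^{(M)}_{ij} \va_j$. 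Since each $\tilde{\mW}^{(m)}$ is a polynomial in $\mW$ fixing the all-ones vector, $\tilde{\mW}^{(M)}$ is symmetric, has row sums equal to one, and preserves averages; hence Lemma~\ref{lemma:average_conservation} and Lemma~\ref{lemma:general_lemma_for_e} continue to hold verbatim with $\mW$ replaced by $\tilde{\mW}^{(M)}$.

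Next I would invoke Proposition~1 of \citet{yuan2022revisiting} to quantify the contraction of $\tilde{\mW}^{(M)}$: with $\gamma = \tfrac{1 - \sqrt{1-\rho^2}}{1 + \sqrt{1+\rho^2}}$ it obeys a spectral-gap inequality of the type in Assumption~\ref{assumption:spectral_gap} with effective parameter $\tilde{\rho} \coloneqq \sqrt{2}\,(1 - \sqrt{1-\rho})^M$. Consequently, running Accelerated-SPDO with \textsc{FastGossip} is identical to running the $M=1$ instance analyzed in Lemma~\ref{lemma:communication_complexity_of_acc_in_strongly_convex}, but with averaging matrix $\tilde{\mW}^{(M)}$ and spectral-gap parameter $\tilde{\rho}$ in place of $\rho$. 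It therefore suffices to verify that the stated choice of $M$ forces $\tilde{\rho}$ below the four-part threshold demanded by that lemma.

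For this verification I would use $1 - \sqrt{1-\rho} \le e^{-\sqrt{1-\rho}}$, so that $M \ge \tfrac{4}{\sqrt{1-\rho}} \log\bigl(\tfrac{8 L^2 (432\delta + \mu)}{\mu \delta^2}\bigr)$ gives $\tilde{\rho} \le \sqrt{2}\, X^{-4}$ with $X \coloneqq \tfrac{8 L^2 (432\delta + \mu)}{\mu \delta^2}$, and then compare $\tilde{\rho}$ term by term against the four quantities in the $\min$. The \emph{logarithmic argument} $X$ is dictated by the second threshold $\sqrt{\tfrac{48\delta^2\mu}{260 L^2 (192\delta+\mu)}}$, whose square matches $X^{-1}$ up to constants; using the standing relations $\mu \le L$, $\delta \le L$ and (in this regime) $\mu \le \delta$, one checks $X \ge \tfrac{3456\delta}{\mu} \ge 1$, so $\tilde{\rho}^2 \le 2 X^{-8} \ll X^{-1}$ clears this condition with room to spare. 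The remaining three thresholds are larger in the relevant regime, so once the second is met the whole $\min$ is satisfied. Finally, Lemma~\ref{lemma:communication_complexity_of_acc_in_strongly_convex} delivers exactly the claimed round count $R = \mathcal{O}\bigl(\sqrt{(\delta+\mu)/\mu}\,\log(1 + \sqrt{\min\{\mu,\delta\}\|\bar{\vx}^{(0)} - \vx^\star\|^2/\epsilon})\bigr)$.

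The main obstacle I anticipate is the constant-tracking in this last step, specifically pinning down why the \emph{exponent} factor $4$ is the right one rather than a smaller constant. The fourth threshold $\tfrac{\mu}{4\delta}\bigl((1+\tfrac{96\delta}{\mu})(1+\tfrac{192\delta}{\mu})(1+\sqrt{1+\tfrac{384\delta}{\mu}})\bigr)^{-1}$ has an inverse that grows like $(\delta/\mu)^{7/2}$ when $\delta \gg \mu$; since $X$ can be as small as $\Theta(\delta/\mu)$ (attained when $L \asymp \delta$), only $X^{4} \gtrsim (\delta/\mu)^{4}$ is guaranteed to dominate $(\delta/\mu)^{7/2}$, whereas a smaller exponent would fail. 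Thus the delicate part is confirming that a single $M$ with the displayed logarithmic factor and exponent simultaneously beats all four thresholds across the regimes $\delta \asymp \mu$ and $\delta \gg \mu$, which reduces to carefully simplifying the third and fourth thresholds (behaving like $\sqrt{\mu/\delta}$ and $(\mu/\delta)^{7/2}$ respectively) under $\mu \le \delta \le L$.
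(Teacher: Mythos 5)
Your proposal is correct and follows essentially the same route as the paper's proof: define the Chebyshev-type matrix $\tilde{\mW}^{(M)}$ realizing \textsc{FastGossip}, bound its effective contraction factor by $\sqrt{2}\,(1-\sqrt{1-\rho})^{M}$ via Proposition~1 of \citet{yuan2022revisiting}, choose $M$ so that this factor clears the four-part threshold on $\rho$ in Lemma~\ref{lemma:communication_complexity_of_acc_in_strongly_convex}, and then invoke that lemma. One caveat: your mid-proof claim that the second threshold alone is binding is inaccurate (for $\delta \gg \mu$ the fourth threshold, of order $(\mu/\delta)^{7/2}$, is the smallest), but your final paragraph corrects this, and the exponent-$4$ verification you outline does go through in all regimes --- indeed more carefully than the paper, which asserts the threshold check without detail.
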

\begin{proof}
Define $\tilde{\mW}^{(m)}$ as follows:
\begin{align*}
    \tilde{\mW}^{(-1)} &= \mI \\
    \tilde{\mW}^{(0)} &= \mI \\
    \tilde{\mW}^{(m+1)} &= (1 + \gamma) \tilde{\mW}^{(m)} \mW - \gamma \tilde{\mW}^{(m-1)}.
\end{align*}
Then, the output of Alg.~\ref{algorithm:fast_gossip} is equivalent to
\begin{align*}
    \sum_{j=1}^n \tilde{\mW}^{(M)}_{ij} \va_j.
\end{align*}
From Proposition 1 in \citet{yuan2022revisiting}, it holds that for any $\va_1, \va_2, \ldots, \va_n \in \mathbb{R}^d$
\begin{align*}
    \sum_{i=1}^n \left\| \sum_{j=1}^n \tilde{W}^{(M)}_{ij} \va_j - \bar{\va} \right\|^2 \leq 2 \left( 1 - \sqrt{1 - \rho} \right)^{2 M} \sum_{i=1}^n \left\| \va_i - \bar{\va} \right\|^2,
\end{align*}
where $\bar{\va} \coloneqq \tfrac{1}{n} \sum_{i=1}^n \va_i$.

When $M \geq \tfrac{1}{\sqrt{1 - \rho}} \log (\tfrac{\sqrt{2}}{c})$, we have $\sqrt{2} (1 - \sqrt{1 - \rho})^{M} \leq c$.
Thus, when
\begin{align*}
    M \geq \frac{4}{\sqrt{1 - \rho}} \log \left( \frac{18 L^2 (192 \delta + \mu)}{\mu \delta^2} \right),
\end{align*}
we can satisfy the condition of $\rho$ in Lemma \ref{lemma:communication_complexity_of_acc_in_strongly_convex} where we use $L \geq \delta$ and $L \geq \mu$.
Thus, from Lemma \ref{lemma:communication_complexity_of_acc_in_strongly_convex}, we obtain the desired result.

\end{proof}

\newpage
\section{Analysis of Computational Complexity}

\subsection{Useful Lemmas}

\begin{lemma}
\label{lemma:gradient_norm}
Let $F : \mathbb{R}^d \rightarrow \mathbb{R}$ be a $\mu$-strongly convex function and $\vx^\star$ be a minimizer of $F$.
If the following is satisfied for some $\alpha \leq \mu$ and $\vx, \vy \in \mathbb{R}^d$
\begin{align*}
    \left\| \nabla F (\vx) \right\| \leq \alpha \left\| \vy - \vx^\star \right\|,
\end{align*}
it holds that
\begin{align*}
    \left\| \nabla F (\vx) \right\| \leq \frac{\alpha \mu}{\mu - \alpha} \left\| \vx - \vy \right\|.
\end{align*}
\end{lemma}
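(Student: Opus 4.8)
The plan is to first convert the strong-convexity hypothesis into a lower bound on $\| \nabla F(\vx) \|$ in terms of the distance to the minimizer, and then combine this with a triangle inequality to absorb the unknown quantity $\| \vy - \vx^\star \|$ appearing in the hypothesis. Since $\vx^\star$ minimizes $F$, we have $\nabla F(\vx^\star) = \vzero$, so the monotonicity inequality coming from $\mu$-strong convexity gives $\langle \nabla F(\vx), \vx - \vx^\star \rangle \geq \mu \| \vx - \vx^\star \|^2$. Applying Cauchy--Schwarz to the left-hand side and cancelling one factor of $\| \vx - \vx^\star \|$ yields
\begin{align*}
    \left\| \nabla F (\vx) \right\| \geq \mu \left\| \vx - \vx^\star \right\|,
\end{align*}
equivalently $\| \vx - \vx^\star \| \leq \tfrac{1}{\mu} \| \nabla F(\vx) \|$.

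Next I would split the term $\| \vy - \vx^\star \|$ from the hypothesis using the triangle inequality, $\| \vy - \vx^\star \| \leq \| \vy - \vx \| + \| \vx - \vx^\star \|$. Substituting this into the assumed bound $\| \nabla F(\vx) \| \leq \alpha \| \vy - \vx^\star \|$ and then using the displayed inequality above to replace $\| \vx - \vx^\star \|$ gives
\begin{align*}
    \left\| \nabla F (\vx) \right\|
    \leq \alpha \left\| \vy - \vx \right\| + \frac{\alpha}{\mu} \left\| \nabla F (\vx) \right\|.
\end{align*}
Collecting the $\| \nabla F(\vx) \|$ terms and dividing by $1 - \tfrac{\alpha}{\mu} = \tfrac{\mu - \alpha}{\mu}$ produces exactly $\| \nabla F(\vx) \| \leq \tfrac{\alpha \mu}{\mu - \alpha} \| \vy - \vx \|$, which is the claim since $\| \vy - \vx \| = \| \vx - \vy \|$.

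The only subtlety, and the one point worth flagging, is the division step: it requires $\mu - \alpha > 0$. The statement assumes $\alpha \leq \mu$, so I would note that the case $\alpha = \mu$ makes the right-hand side infinite and the inequality trivially holds (or is understood to require the strict inequality $\alpha < \mu$ for a meaningful bound). Everything else is a two-line manipulation, so I do not anticipate any real obstacle; the proof is essentially a strong-convexity lower bound fed into a triangle inequality, with the rearrangement being the only algebra.
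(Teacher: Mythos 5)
Your proof is correct and is essentially the paper's own argument: both combine the strong-convexity bound $\|\vx - \vx^\star\| \leq \tfrac{1}{\mu}\|\nabla F(\vx)\|$ with the triangle inequality linking $\|\vx-\vy\|$, $\|\vy-\vx^\star\|$, and $\|\vx-\vx^\star\|$, then rearrange (the paper just writes the triangle inequality in the reversed direction). Your added remark that the division requires $\alpha < \mu$ is a fair observation the paper leaves implicit, but it does not change the substance.
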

\begin{proof}
We have
\begin{align*}
    \left\| \vx - \vy \right\| 
    &\geq \left\| \vy - \vx^\star \right\| - \left\| \vx - \vx^\star \right\| \\
    &\geq \left\| \vy - \vx^\star \right\| - \frac{1}{\mu} \left\| \nabla F (\vx) \right\|,
\end{align*}
where we use the strongly-convexity of $F$ in the last inequality.
Using $ \left\| \nabla F (\vx) \right\| \leq \alpha \left\| \vy - \vx^\star \right\|$, we obtain the desired result.
\end{proof}

\begin{lemma}
\label{lemma:general_computational_complexity_lemma_with_fast_gradient_descent}
Let $F : \mathbb{R}^d \rightarrow \mathbb{R}$ be a $\mu$-strongly convex and $L$-smooth function and $\vx^\star$ be a minimizer of $F$.
If we use Nesterov's Accelerated Gradient Descent with initial parameter $\vx^{(0)} \in \mathbb{R}^d$, it holds that for any $\beta > 0$
\begin{align*}
    \left\| \nabla F (\vx^{(T)}) \right\| \leq \beta \left \| \vx^{(0)} - \vx^{(T)} \right\|
\end{align*}
after 
\begin{align*}
    T = \mathcal{O} \left( \sqrt{\frac{L}{\mu}} \log \left( \frac{L (\mu + L) (\mu + \beta)^2}{\mu^2 \beta^2} \right) \right)
\end{align*}
iterations where $\vx^{(T)}$ is the output of Nesterov's Accelerated Gradient Descent.
\end{lemma}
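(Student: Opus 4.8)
The plan is to reduce the relative stopping criterion $\|\nabla F(\vx^{(T)})\| \leq \beta \|\vx^{(0)} - \vx^{(T)}\|$ to an absolute gradient bound, drive that absolute quantity down with the standard accelerated-gradient rate, and then convert back using Lemma~\ref{lemma:gradient_norm}.

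First I would invoke the classical guarantee for Nesterov's Accelerated Gradient Descent on a $\mu$-strongly convex, $L$-smooth $F$: after $T$ iterations started from $\vx^{(0)}$,
\begin{align*}
    F(\vx^{(T)}) - F(\vx^\star) \leq \left( 1 - \sqrt{\tfrac{\mu}{L}} \right)^T \frac{\mu + L}{2} \left\| \vx^{(0)} - \vx^\star \right\|^2,
\end{align*}
where I bound the initial suboptimality by $F(\vx^{(0)}) - F(\vx^\star) \leq \tfrac{L}{2} \|\vx^{(0)} - \vx^\star\|^2$ via smoothness. Combining this with the smoothness inequality $\|\nabla F(\vx^{(T)})\|^2 \leq 2L (F(\vx^{(T)}) - F(\vx^\star))$ gives
\begin{align*}
    \left\| \nabla F(\vx^{(T)}) \right\| \leq \sqrt{L (\mu + L)} \left( 1 - \sqrt{\tfrac{\mu}{L}} \right)^{T/2} \left\| \vx^{(0)} - \vx^\star \right\|.
\end{align*}

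The key step is the conversion. I would set $\alpha \coloneqq \tfrac{\mu \beta}{\mu + \beta}$, which satisfies $\alpha \leq \mu$, and note the identity $\tfrac{\alpha \mu}{\mu - \alpha} = \beta$. Thus, as soon as $T$ is large enough that the displayed gradient bound is at most $\alpha \|\vx^{(0)} - \vx^\star\|$, the hypothesis of Lemma~\ref{lemma:gradient_norm} holds with $\vx = \vx^{(T)}$ and $\vy = \vx^{(0)}$, and the lemma returns exactly $\|\nabla F(\vx^{(T)})\| \leq \beta \|\vx^{(T)} - \vx^{(0)}\|$, which is the desired criterion.

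It then remains to solve $\sqrt{L(\mu+L)} (1 - \sqrt{\mu/L})^{T/2} \leq \alpha$ for $T$. Taking logarithms and using $\log(1 - x) \leq -x$ reduces this to
\begin{align*}
    T \geq \sqrt{\tfrac{L}{\mu}} \log \left( \frac{L (\mu + L) (\mu + \beta)^2}{\mu^2 \beta^2} \right),
\end{align*}
after substituting $\alpha = \mu\beta/(\mu+\beta)$ and collecting terms (the factor $2$ from the exponent $T/2$ cancels against the $\tfrac{1}{2}$ from the square root inside the logarithm), which is exactly the asserted $\mathcal{O}(\cdot)$ bound. The only genuine subtlety is the calibration of the intermediate tolerance $\alpha$: it must be chosen so that Lemma~\ref{lemma:gradient_norm} returns the precise constant $\beta$, and it is this calibration that introduces the $(\mu + \beta)^2 / \beta^2$ factor inside the logarithm; the remaining manipulations of the AGD rate are routine.
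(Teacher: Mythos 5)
Your proposal is correct and follows essentially the same route as the paper's proof: invoke the classical accelerated-gradient rate (Bubeck, Theorem 3.18), convert function suboptimality to a gradient bound via $L$-smoothness, and then apply Lemma~\ref{lemma:gradient_norm} with $\vy = \vx^{(0)}$. Your explicit calibration $\alpha = \mu\beta/(\mu+\beta)$, which makes $\tfrac{\alpha\mu}{\mu-\alpha} = \beta$ and produces the $(\mu+\beta)^2/(\mu^2\beta^2)$ factor in the logarithm, is exactly the step the paper leaves implicit, so your write-up is if anything slightly more complete.
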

\begin{proof}
From Theorem 3.18 in \citet{bubeck2015convex}, it holds that
\begin{align*}
    F (\vx^{(T)}) - F (\vx^\star) \leq \frac{\mu + L}{2} \left\| \vx^{(0)} - \vx^\star \right\|^2 \exp \left( - T \sqrt{\frac{\mu}{L}} \right).
\end{align*}
Using the $L$-smoothness, we have
\begin{align*}
    \left\| \nabla F (\vx^{(T)}) \right\|^2 \leq L (\mu + L) \left\| \vx^{(0)} - \vx^\star \right\|^2 \exp \left( - T \sqrt{\frac{\mu}{L}} \right).
\end{align*}
Thus, when $T \geq \sqrt{\tfrac{L}{\mu}} \log (\frac{L (\mu + L)}{\alpha^2})$, it holds that
\begin{align*}
    \left\| \nabla F (\vx^{(T)}) \right\| \leq \alpha \left\| \vx^{(0)} - \vx^\star \right\|.
\end{align*}
Thus, using Lemma \ref{lemma:gradient_norm}, we obtain the desired result.
\end{proof}

\begin{lemma}
\label{lemma:general_computational_complexity_lemma}
Let $F : \mathbb{R}^d \rightarrow \mathbb{R}$ be a $\mu$-strongly convex and $L$-smooth function and $\vx^\star$ be a minimizer of $F$.
If we use the algorithm proposed in Remark 1 in \citet{nesterov2018primal} with initial parameter $\vx^{(0)} \in \mathbb{R}^d$, it holds that for any $\beta > 0$
\begin{align*}
    \left\| \nabla F (\vx^{(T)}) \right\| \leq \beta \left \| \vx^{(0)} - \vx^{(T)} \right\|
\end{align*}
after 
\begin{align*}
    T = \mathcal{O} \left( \sqrt{\frac{L (\mu + \beta)}{\beta \mu}} \right)
\end{align*}
iterations where $\vx^{(T)}$ is the output.
\end{lemma}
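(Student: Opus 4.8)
The plan is to mirror the structure of the proof of Lemma~\ref{lemma:general_computational_complexity_lemma_with_fast_gradient_descent}, but to exploit the fact that the algorithm of Remark~1 in \citet{nesterov2018primal} is tailored to drive the \emph{gradient norm} to zero at an accelerated sublinear rate, rather than driving the function value to zero at a linear rate. Concretely, the starting point is the guarantee that, after $T$ iterations from $\vx^{(0)}$, the output $\vx^{(T)}$ satisfies $\| \nabla F (\vx^{(T)}) \| \leq \tfrac{c L}{T^2} \| \vx^{(0)} - \vx^\star \|$ for some absolute constant $c$. This is the crucial difference from the previous lemma: there the gradient bound was obtained from an exponentially decaying function-value bound composed with $L$-smoothness, which forced a logarithmic number of iterations; here the gradient norm itself decays only polynomially, so no logarithm will appear.

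First I would fix $\alpha \coloneqq \tfrac{\beta \mu}{\mu + \beta}$ and observe that $\alpha \leq \mu$ (since $\beta \leq \mu + \beta$) and that $\tfrac{\alpha \mu}{\mu - \alpha} = \beta$ by a direct computation. This is precisely the value of $\alpha$ that makes the conclusion of Lemma~\ref{lemma:gradient_norm} read $\| \nabla F (\vx^{(T)}) \| \leq \beta \| \vx^{(0)} - \vx^{(T)} \|$, which is the target inequality. Note that this is the same choice of $\alpha$ implicitly used in the proof of the previous lemma.

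Next, using the rate above, I would require $\tfrac{c L}{T^2} \leq \alpha$, i.e. $T \geq \sqrt{c L / \alpha} = \sqrt{c L (\mu + \beta) / (\beta \mu)}$, which immediately gives $\| \nabla F (\vx^{(T)}) \| \leq \alpha \| \vx^{(0)} - \vx^\star \|$. Applying Lemma~\ref{lemma:gradient_norm} with $\vx = \vx^{(T)}$ and $\vy = \vx^{(0)}$ then yields $\| \nabla F (\vx^{(T)}) \| \leq \tfrac{\alpha \mu}{\mu - \alpha} \| \vx^{(0)} - \vx^{(T)} \| = \beta \| \vx^{(0)} - \vx^{(T)} \|$, so $T = \mathcal{O}(\sqrt{L(\mu + \beta)/(\beta\mu)})$ iterations suffice, as claimed.

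The only nontrivial ingredient, and the step I expect to be the main obstacle, is establishing the accelerated gradient-norm rate $\| \nabla F (\vx^{(T)}) \| = \mathcal{O}(L \| \vx^{(0)} - \vx^\star \| / T^2)$ for the specific scheme of Remark~1 in \citet{nesterov2018primal}; once this external guarantee is in hand, the remaining algebra (the choice of $\alpha$ and the invocation of Lemma~\ref{lemma:gradient_norm}) is routine and identical in spirit to the previous proof. I would verify that the cited rate indeed holds for merely convex (and a fortiori strongly convex) $L$-smooth functions, so that strong convexity enters only through Lemma~\ref{lemma:gradient_norm} and is never used to obtain a linear---hence logarithmic---rate.
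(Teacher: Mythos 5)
Your proof is correct and follows essentially the same route as the paper: invoke the $\mathcal{O}\left( L \| \vx^{(0)} - \vx^\star \| / T^2 \right)$ gradient-norm rate from Remark 1 of \citet{nesterov2018primal}, take $T \geq \sqrt{c L / \alpha}$ with $\alpha = \tfrac{\beta \mu}{\mu + \beta} \leq \mu$, and convert to the target inequality via Lemma~\ref{lemma:gradient_norm} with $\vy = \vx^{(0)}$. You even make explicit the choice of $\alpha$ that the paper leaves implicit, and you correctly anchor the cited rate at $\| \vx^{(0)} - \vx^\star \|$ (the paper's displayed $\| \vx^{(T)} - \vx^\star \|$ appears to be a typo, since Lemma~\ref{lemma:gradient_norm} needs the distance from the \emph{initial} point), so your write-up is, if anything, slightly more careful.
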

\begin{proof}
From Remark 1 in \citet{nesterov2018primal}, it holds that
\begin{align*}
    \| \nabla F (\vx^{(T)}) \| \leq \mathcal{O} \left( \frac{L \| \vx^{(T)} - \vx^\star \|}{T^2} \right).
\end{align*}
Thus, when $T \geq \sqrt{\tfrac{L}{\alpha}}$, it holds that
\begin{align*}
    \| \nabla F (\vx^{(T)}) \| \leq \mathcal{O} \left( \alpha \| \vx^{(T)} - \vx^\star \| \right).
\end{align*}
Thus, using Lemma \ref{lemma:gradient_norm}, we obtain the desired result.
\end{proof}

\subsection{Proof of Theorem \ref{theorem:compucational_complexity_of_sonata}}
\label{sec:proof_of_computational_complexity_of_sonata}

\begin{lemma}
Consider Alg.~\ref{algorithm:sonata}.
Suppose that Assumptions \ref{assumption:strongly_convex}, \ref{assumption:smooth} and \ref{assumption:spectral_gap} hold, and $\vh_i^{(0)} = \nabla f (\vx_i^{(0)}) - \nabla f_i (\vx_i^{(0)})$, $\vx_i^{(0)} = \bar{\vx}^{(0)}$, and we use the same $\lambda$ and $M$ as in Lemma \ref{lemma:final_results_of_sonata}.
Then, if we use Nesterov's Accelerated Gradient Descent with initial parameter $\vx_i^{(r)}$ to approximately solve the subproblem in line 4, each node requires at most 
\begin{align*}
    \mathcal{O} \left( \sqrt{\frac{L}{\mu + \delta}} \log \left( \frac{L^2 (r+2)^2}{\delta (\delta + \mu)} \right) \right)
\end{align*}
iterations to satisfy Eq.~(\ref{eq:condition_of_subproblem_for_sonata}). 
\end{lemma}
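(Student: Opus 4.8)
The plan is to reduce the per-round subproblem accuracy requirement of Eq.~(\ref{eq:condition_of_subproblem_for_sonata}) to a per-node relative-gradient condition of the form handled by Lemma~\ref{lemma:general_computational_complexity_lemma_with_fast_gradient_descent}, and then to simplify the resulting iteration count. First I would observe that, since $f_i$ is $\mu$-strongly convex (Assumption~\ref{assumption:strongly_convex}) and $L$-smooth (Assumption~\ref{assumption:smooth}) and $\lambda = 4\delta$, the subproblem objective $F_{i,r}(\vx) = f_i(\vx) + \langle \vh_i^{(r)}, \vx\rangle + \frac{\lambda}{2}\|\vx - \vx_i^{(r)}\|^2$ is $(\mu + 4\delta)$-strongly convex and $(L + 4\delta)$-smooth. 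Because $\delta \le L$ (Lemma~\ref{remark:relationship_between_delta_and_l}) and $\mu \le L$, its condition number is $\Theta\bigl(L/(\mu+\delta)\bigr)$, which already explains the prefactor in the claimed bound.

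Next I would note that a sufficient per-node condition guaranteeing Eq.~(\ref{eq:condition_of_subproblem_for_sonata}) is
\[
\left\| \nabla F_{i,r}(\vx_i^{(r+\frac{1}{2})}) \right\| \le \beta \left\| \vx_i^{(r+\frac{1}{2})} - \vx_i^{(r)} \right\|, \qquad \beta := \frac{1}{2}\sqrt{\frac{\delta(4\delta+\mu)}{(r+1)(r+2)}},
\]
since squaring and summing over $i$ recovers exactly the required inequality with coefficient $\beta^2 = \frac{\delta(4\delta+\mu)}{4(r+1)(r+2)}$. This is precisely the stopping criterion handled by Lemma~\ref{lemma:general_computational_complexity_lemma_with_fast_gradient_descent}, which internally invokes Lemma~\ref{lemma:gradient_norm} to convert an absolute gradient bound into a relative one; I would apply it to $F = F_{i,r}$ with initial point $\vx^{(0)} = \vx_i^{(r)}$ and output $\vx^{(T)} = \vx_i^{(r+\frac{1}{2})}$.

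Invoking that lemma with $\mu_F = \mu + 4\delta$ and $L_F = L + 4\delta$ gives
\[
T = \mathcal{O}\!\left( \sqrt{\tfrac{L_F}{\mu_F}}\, \log\!\left( \frac{L_F(\mu_F + L_F)(\mu_F + \beta)^2}{\mu_F^2 \beta^2} \right) \right).
\]
I would then simplify each factor. The prefactor is $\sqrt{(L+4\delta)/(\mu+4\delta)} = \mathcal{O}\bigl(\sqrt{L/(\mu+\delta)}\bigr)$. For the logarithm, the key observation is that $\beta \le \mu_F$: indeed $\sqrt{\delta(4\delta+\mu)} \le 4\delta+\mu$ because $\delta \le 4\delta+\mu$, so $\beta \le \frac{1}{2}(4\delta+\mu) \le \mu_F$, whence $(\mu_F+\beta)^2 = \Theta(\mu_F^2)$ and those factors cancel. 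Using $\delta,\mu \le L$ gives $L_F(\mu_F+L_F) = \mathcal{O}(L^2)$, while $1/\beta^2 = \mathcal{O}\bigl((r+2)^2/(\delta(\delta+\mu))\bigr)$, so the argument of the log is $\mathcal{O}\bigl(L^2(r+2)^2/(\delta(\delta+\mu))\bigr)$, yielding the claimed iteration count.

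The steps are essentially mechanical once the correct lemma is identified; the only point requiring care is the cancellation $(\mu_F+\beta)^2 = \Theta(\mu_F^2)$, i.e.\ verifying $\beta \le \mu_F$ so that the $(r+1)(r+2)$-dependence enters only through $\log(1/\beta^2)$ and never degrades the $\sqrt{L/(\mu+\delta)}$ prefactor. I expect no genuine obstacle beyond this bookkeeping and the elementary bounds $\delta,\mu \le L$.
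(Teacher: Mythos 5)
Your proposal is correct and follows essentially the same route as the paper: the paper's (very terse) proof likewise notes that $F_{i,r}$ is $\Omega(\mu+\delta)$-strongly convex and $\mathcal{O}(L+\delta)$-smooth and then invokes Lemma~\ref{lemma:general_computational_complexity_lemma_with_fast_gradient_descent} together with $\mu \leq L$, $\delta \leq L$. Your write-up simply makes explicit the bookkeeping the paper leaves implicit (the per-node reduction of Eq.~(\ref{eq:condition_of_subproblem_for_sonata}) with $\beta^2 = \tfrac{\delta(4\delta+\mu)}{4(r+1)(r+2)}$, and the cancellation $(\mu_F+\beta)^2 = \Theta(\mu_F^2)$ via $\beta \leq \mu_F$), all of which checks out.
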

\begin{proof}
$F_{i,r}$ is $\Omega (\mu + \delta)$-strongly convex and $ \mathcal{O} (L + \delta)$-smooth.
Thus, using Lemma \ref{lemma:general_computational_complexity_lemma_with_fast_gradient_descent}, $\mu \leq L$, and $\delta \leq L$, we obtain the desired result.
\end{proof}

\subsection{Proof of Theorem \ref{theorem:compucational_complexity_of_stabilized_sonata}}
\label{sec:proof_of_computational_complexity_of_stabilized_sonata}

\begin{lemma}
Consider Alg.~\ref{algorithm:stabilized_sonata}.
Suppose that Assumptions \ref{assumption:strongly_convex}, \ref{assumption:smooth}, and \ref{assumption:spectral_gap} hold, and $\vh_i^{(0)} = \nabla f (\vv_i^{(0)}) - \nabla f_i (\vv_i^{(0)})$, $\vv_i^{(0)} = \bar{\vv}^{(0)}$, and we use the same $\lambda$ and $M$ as in Lemma \ref{lemma:final_results_of_stabilized_sonata}.
Then, if we use the algorithm proposed in Remark 1 in \citet{nesterov2018primal} with initial parameter $\vv_i^{(r)}$ to approximately solve the subproblem in line 4, each node requires at most 
\begin{align*}
    \mathcal{O} \left( \sqrt{\frac{L}{\delta}} \right)
\end{align*}
iterations to satisfy Eq.~(\ref{eq:condition_of_subproblem}).
\end{lemma}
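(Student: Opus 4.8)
The plan is to apply the generic inexact-solver guarantee of Lemma~\ref{lemma:general_computational_complexity_lemma} to each local subproblem $F_{i,r}$ and then simply read off and simplify the resulting iteration count.

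First I would record the curvature of the subproblem. Since $f_i$ is $\mu$-strongly convex and $L$-smooth and the term $\langle \vh_i^{(r)}, \cdot\rangle$ is affine, the function $F_{i,r}(\vx) = f_i(\vx) + \langle \vh_i^{(r)}, \vx\rangle + \frac{\lambda}{2}\|\vx - \vv_i^{(r)}\|^2$ has Hessian lying between $(\mu+\lambda)\mI$ and $(L+\lambda)\mI$; hence it is $(\mu+\lambda)$-strongly convex and $(L+\lambda)$-smooth. Next I would invoke Lemma~\ref{lemma:general_computational_complexity_lemma} with $F = F_{i,r}$, initial point $\vx^{(0)} = \vv_i^{(r)}$ (exactly the initialization prescribed in line 4), and target accuracy $\beta = \lambda/\sqrt{10}$. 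The lemma then guarantees that after $T_i = \mathcal{O}\bigl(\sqrt{(L+\lambda)(\mu+\lambda+\beta)/(\beta(\mu+\lambda))}\bigr)$ iterations the output, which we name $\vx_i^{(r+1)}$, satisfies the per-node bound $\|\nabla F_{i,r}(\vx_i^{(r+1)})\| \le \beta\,\|\vv_i^{(r)} - \vx_i^{(r+1)}\|$.

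From here the conclusion follows by squaring this inequality and summing over $i$: since $\beta^2 = \lambda^2/10$, we obtain $\sum_{i=1}^n \|\nabla F_{i,r}(\vx_i^{(r+1)})\|^2 \le \frac{\lambda^2}{10}\sum_{i=1}^n\|\vv_i^{(r)} - \vx_i^{(r+1)}\|^2$, which is precisely Eq.~\eqref{eq:condition_of_subproblem}. It then remains to simplify $T_i$ under $\lambda = 20\delta$, so that $\beta = 20\delta/\sqrt{10} = \Theta(\delta)$. Using $\delta \le L$ from Lemma~\ref{remark:relationship_between_delta_and_l} gives $L+\lambda = \Theta(L)$; moreover $(\mu+\lambda+\beta)/(\mu+\lambda) \le 1 + \beta/\lambda = 1 + 1/\sqrt{10} = \Theta(1)$ and $\beta = \Theta(\delta)$. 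Combining these, $(L+\lambda)(\mu+\lambda+\beta)/(\beta(\mu+\lambda)) = \mathcal{O}(L/\delta)$, and therefore $T_i = \mathcal{O}(\sqrt{L/\delta})$ uniformly in $i$ and $r$, which is the claimed bound.

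The argument is a direct substitution into an already-proved lemma, so I do not expect a genuine obstacle. The one point requiring care is confirming that the shape of the guarantee in Lemma~\ref{lemma:general_computational_complexity_lemma} — which controls $\|\nabla F(\vx^{(T)})\|$ by the distance between the \emph{initial} iterate and the output — matches the right-hand side of Eq.~\eqref{eq:condition_of_subproblem}, whose distance term $\|\vv_i^{(r)} - \vx_i^{(r+1)}\|$ uses exactly $\vv_i^{(r)}$ as the warm start of line 4. Verifying this alignment, together with the compatibility of the constant $\beta$ with the $\lambda^2/10$ factor and the uniform-in-$r$ nature of the bound (in contrast to the round-dependent accuracy required by Inexact-PDO in Theorem~\ref{theorem:compucational_complexity_of_sonata}), is the substantive content of the proof.
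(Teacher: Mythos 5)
Your proposal is correct and follows essentially the same route as the paper's proof: bound the curvature of $F_{i,r}$ (strong convexity $\mu+\lambda = \Omega(\mu+\delta)$, smoothness $L+\lambda = \mathcal{O}(L+\delta)$), apply Lemma~\ref{lemma:general_computational_complexity_lemma} with warm start $\vv_i^{(r)}$ and $\beta = \Theta(\lambda)$, and simplify via $\delta \leq L$ and $\mu \leq L$. The paper states this in two lines; you merely spell out the substitution, the choice $\beta = \lambda/\sqrt{10}$, and the summation over nodes that yields Eq.~(\ref{eq:condition_of_subproblem}), all of which are implicit in the paper's argument.
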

\begin{proof}
$F_{i,r}$ is $\Omega (\mu + \delta)$-strongly convex and $ \mathcal{O} (L + \delta)$-smooth.
Thus, using Lemma \ref{lemma:general_computational_complexity_lemma}, $\mu \leq L$, and $\delta \leq L$, we obtain the desired result.
\end{proof}

\subsection{Proof of Remark \ref{remark:compucational_complexity_of_stabilized_sonata_with_fast_gradient_descent}}

\begin{lemma}
Consider Alg.~\ref{algorithm:stabilized_sonata}.
Suppose that Assumptions \ref{assumption:strongly_convex}, \ref{assumption:smooth}, and \ref{assumption:spectral_gap} hold, and $\vh_i^{(0)} = \nabla f (\vv_i^{(0)}) - \nabla f_i (\vv_i^{(0)})$, $\vv_i^{(0)} = \bar{\vv}^{(0)}$, and we use the same $\lambda$ and $M$ as in Lemma \ref{lemma:final_results_of_stabilized_sonata}.
Then, if we use Nesterov's Accelerated Gradient Descent with initial parameter $\vv_i^{(r)}$ to approximately solve the subproblem in line 4, each node requires at most 
\begin{align*}
    \mathcal{O} \left( \sqrt{\frac{L}{\mu + \delta}} \log \left( \frac{L}{\delta} \right)\right)
\end{align*}
iterations to satisfy Eq.~(\ref{eq:condition_of_subproblem}).
\end{lemma}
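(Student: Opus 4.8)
The plan is to reduce the claim to the abstract subproblem-solver guarantee of Lemma~\ref{lemma:general_computational_complexity_lemma_with_fast_gradient_descent}, following the same template as the proof of Theorem~\ref{theorem:compucational_complexity_of_sonata}; the only (but crucial) difference is the much weaker, \emph{$r$-independent}, accuracy demanded by Eq.~(\ref{eq:condition_of_subproblem}).

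First I would record the curvature of the subproblem objective. Since $F_{i,r}(\vx) = f_i(\vx) + \langle \vh_i^{(r)}, \vx \rangle + \frac{\lambda}{2}\|\vx - \vv_i^{(r)}\|^2$ with $\lambda = 20\delta$, Assumptions~\ref{assumption:strongly_convex} and~\ref{assumption:smooth} imply that $F_{i,r}$ is $(\mu+\lambda)$-strongly convex and $(L+\lambda)$-smooth; using $\delta \leq L$, this is $\Omega(\mu+\delta)$-strongly convex and $\mathcal{O}(L)$-smooth. Next I would reduce the aggregate stopping criterion Eq.~(\ref{eq:condition_of_subproblem}) to a per-node one: it suffices that each node produce an iterate satisfying $\|\nabla F_{i,r}(\vx_i^{(r+1)})\| \leq \frac{\lambda}{\sqrt{10}}\|\vv_i^{(r)} - \vx_i^{(r+1)}\|$, since squaring and summing over $i$ recovers Eq.~(\ref{eq:condition_of_subproblem}). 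This is exactly a bound of the form handled by Lemma~\ref{lemma:general_computational_complexity_lemma_with_fast_gradient_descent}, with output $\vx^{(T)} = \vx_i^{(r+1)}$, initial iterate $\vx^{(0)} = \vv_i^{(r)}$, and accuracy parameter $\beta = \frac{\lambda}{\sqrt{10}} = \Theta(\delta)$.

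Then I would invoke Lemma~\ref{lemma:general_computational_complexity_lemma_with_fast_gradient_descent} with strong-convexity constant $\Omega(\mu+\delta)$, smoothness constant $\mathcal{O}(L)$, and $\beta = \Theta(\delta)$, so that $\mu+\beta = \Theta(\mu+\delta)$, giving
\[ T = \mathcal{O}\!\left(\sqrt{\frac{L}{\mu+\delta}}\,\log\!\left(\frac{L(\mu+\delta+L)(\mu+2\delta)^2}{(\mu+\delta)^2\delta^2}\right)\right). \]
Finally, simplifying the logarithm with $\mu \leq L$ and $\delta \leq L$ collapses its argument to $\mathcal{O}(L^2/\delta^2)$, yielding $T = \mathcal{O}(\sqrt{L/(\mu+\delta)}\,\log(L/\delta))$, as claimed.

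There is no genuine obstacle here beyond bookkeeping the constants; the one point worth emphasizing is that $\beta$ does not grow with $r$. This is precisely what separates the stabilized method from Inexact-PDO, whose accuracy requirement Eq.~(\ref{eq:condition_of_subproblem_for_sonata}) carries the factor $\frac{1}{(r+1)(r+2)}$ and hence forces $\beta = \Theta(\delta/(r+2))$, producing the extra $\log(r+2)$ appearing in Theorem~\ref{theorem:compucational_complexity_of_sonata}. The conceptual work was already done in designing Eq.~(\ref{eq:condition_of_subproblem}), so the present argument is a direct instantiation of the technical lemma.
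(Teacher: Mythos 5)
Your proposal is correct and follows essentially the same route as the paper's proof: identify that $F_{i,r}$ is $\Omega(\mu+\delta)$-strongly convex and $\mathcal{O}(L+\delta)$-smooth, apply Lemma~\ref{lemma:general_computational_complexity_lemma_with_fast_gradient_descent} with an $r$-independent accuracy $\beta = \Theta(\delta)$, and simplify the logarithm via $\mu \leq L$, $\delta \leq L$. The paper's proof is a one-liner that leaves implicit the per-node reduction of Eq.~(\ref{eq:condition_of_subproblem}) and the logarithm bookkeeping, both of which you spell out correctly.
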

\begin{proof}
$F_{i,r}$ is $\Omega (\mu + \delta)$-strongly convex and $ \mathcal{O} (L + \delta)$-smooth.
Thus, using Lemma \ref{lemma:general_computational_complexity_lemma_with_fast_gradient_descent}, $\mu \leq L$, and $\delta \leq L$, we obtain the desired result.
\end{proof}

\subsection{Proof of Theorem \ref{theorem:computational_complexity_of_acc_stabilized_sonata}}
\label{sec:proof_of_computational_complexity_of_acc_stabilized_sonata}

\begin{lemma}
Consider Alg.~\ref{alg:accelerated_stabilized_sonata}.
Suppose that Assumptions \ref{assumption:strongly_convex}, \ref{assumption:smooth}, and \ref{assumption:spectral_gap} hold, and $\vh_i^{(0)} = \nabla f (\vv_i^{(0)}) - \nabla f_i (\vv_i^{(0)})$, $\vv_i^{(0)} = \bar{\vv}^{(0)}$, and we use the same $\gamma$, $M$, and $\lambda$ as Theorem \ref{theorem:acc_stabilized_sonata}.
Then, if we use the algorithm proposed in Remark 1 in \citet{nesterov2018primal} with initial parameter $\vy_i^{(r)}$ to approximately solve the subproblem in line 12, each node requires at most 
\begin{align*}
    \mathcal{O} \left( \sqrt{\frac{L}{\delta}} \right)
\end{align*}
iterations to satisfy Eq.~(\ref{eq:condition_of_subproblem_for_acc}).
\end{lemma}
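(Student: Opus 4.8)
The plan is to reduce the claim to the generic subproblem-solver guarantee already established in Lemma~\ref{lemma:general_computational_complexity_lemma}, exactly mirroring the proof of Theorem~\ref{theorem:compucational_complexity_of_stabilized_sonata}. The only ingredients I would need are (i) sharp strong-convexity and smoothness constants for the subproblem objective $F_{i,r}$ and (ii) a matching of the target accuracy Eq.~(\ref{eq:condition_of_subproblem_for_acc}) with the output guarantee of the solver.

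First I would read off the regularity of $F_{i,r}$. Since $F_{i,r}(\vx) = f_i(\vx) + \langle \vh_i^{(r)}, \vx\rangle + \tfrac{\lambda}{2}\|\vx - \vy_i^{(r)}\|^2$, the linear term changes neither convexity nor smoothness, while the quadratic proximal term shifts both moduli by $\lambda$. Using Assumptions~\ref{assumption:strongly_convex} and \ref{assumption:smooth}, $F_{i,r}$ is therefore $(\mu+\lambda)$-strongly convex and $(L+\lambda)$-smooth. With $\lambda = \Theta(\delta)$ (namely $\lambda = 96\delta$ in the strongly-convex case and $\lambda = 208\delta$ in the convex case) and the standing bound $\delta \le L$ from Lemma~\ref{remark:relationship_between_delta_and_l}, these constants simplify to $\Omega(\mu+\delta)$ and $\mathcal{O}(L)$.

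Next I would observe that Eq.~(\ref{eq:condition_of_subproblem_for_acc}) holds as soon as each node attains the pointwise bound $\|\nabla F_{i,r}(\vx_i^{(r+\frac{1}{2})})\| \le \beta \|\vy_i^{(r)} - \vx_i^{(r+\frac{1}{2})}\|$ with $\beta \coloneqq \lambda/\sqrt{352}$, since summing the squared inequalities over $i$ recovers the stated condition. This is precisely the form delivered by Lemma~\ref{lemma:general_computational_complexity_lemma} when the method of Remark~1 in \citet{nesterov2018primal} is initialized at $\vx^{(0)} = \vy_i^{(r)}$ and outputs $\vx^{(T)} = \vx_i^{(r+\frac{1}{2})}$. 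Applying that lemma with $L \leftarrow L+\lambda$, $\mu \leftarrow \mu+\lambda$, and $\beta$ as above yields the required iteration count $T = \mathcal{O}\bigl(\sqrt{(L+\lambda)(\mu+\lambda+\beta)/(\beta(\mu+\lambda))}\bigr)$.

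Finally I would simplify this bound. Because $\beta = \lambda/\sqrt{352} < \lambda \le \mu+\lambda$, the factor $(\mu+\lambda+\beta)/(\mu+\lambda) = 1 + \beta/(\mu+\lambda)$ is $\mathcal{O}(1)$, so $T = \mathcal{O}(\sqrt{(L+\lambda)/\beta}) = \mathcal{O}(\sqrt{L/\delta})$ after substituting $\lambda, \beta = \Theta(\delta)$ and $L+\lambda = \mathcal{O}(L)$. I do not anticipate a genuine obstacle here: the argument is routine given Lemma~\ref{lemma:general_computational_complexity_lemma}. The only points requiring care are the bookkeeping that (a) the accuracy level $\beta$ stays below the strong-convexity modulus $\mu+\lambda$, so that the hypothesis $\alpha \le \mu$ of the underlying Lemma~\ref{lemma:gradient_norm} is met, and (b) the $\Theta(\delta)$ choice of $\lambda$ is used uniformly across both the convex and strongly-convex regimes, so that a single $\mathcal{O}(\sqrt{L/\delta})$ bound covers both.
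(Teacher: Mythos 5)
Your proposal is correct and follows essentially the same route as the paper's proof: the paper likewise notes that $F_{i,r}$ is $\Omega(\mu+\delta)$-strongly convex and $\mathcal{O}(L+\delta)$-smooth and then invokes Lemma~\ref{lemma:general_computational_complexity_lemma} together with $\mu \leq L$ and $\delta \leq L$. You simply fill in details the paper leaves implicit (the per-node pointwise bound with $\beta = \lambda/\sqrt{352}$ summing to Eq.~(\ref{eq:condition_of_subproblem_for_acc}), and the check that $\beta \leq \mu+\lambda$ so Lemma~\ref{lemma:gradient_norm}'s hypothesis holds), which is sound.
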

\begin{proof}
$F_{i,r}$ is $\Omega (\mu + \delta)$-strongly convex and $ \mathcal{O} (L + \delta)$-smooth.
Thus, using Lemma \ref{lemma:general_computational_complexity_lemma}, $\mu \leq L$, and $\delta \leq L$, we obtain the desired result.
\end{proof}

\subsection{Proof of Remark \ref{remark:computational_complexity_of_acc_stabilized_sonata_with_fast_gradient_descent}}

\begin{lemma}
Consider Alg.~\ref{alg:accelerated_stabilized_sonata}.
Suppose that Assumptions \ref{assumption:strongly_convex}, \ref{assumption:smooth}, and \ref{assumption:spectral_gap} hold, and $\vh_i^{(0)} = \nabla f (\vv_i^{(0)}) - \nabla f_i (\vv_i^{(0)})$, $\vv_i^{(0)} = \bar{\vv}^{(0)}$, and we use the same $\gamma$, $M$, and $\lambda$ as Theorem \ref{theorem:acc_stabilized_sonata}.
Then, if we use Nesterov's Accelerated Gradient Descent with initial parameter $\vy_i^{(r)}$ to approximately solve the subproblem in line 12, each node requires at most 
\begin{align*}
    \mathcal{O} \left( \sqrt{\frac{L}{\mu + \delta}} \log \left( \frac{L}{\delta} \right) \right)
\end{align*}
iterations to satisfy Eq.~(\ref{eq:condition_of_subproblem_for_acc}).
\end{lemma}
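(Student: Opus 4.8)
The plan is to reduce the statement to a single application of Lemma~\ref{lemma:general_computational_complexity_lemma_with_fast_gradient_descent}, mirroring the proof of Theorem~\ref{theorem:computational_complexity_of_acc_stabilized_sonata} but replacing the primal method of \citet{nesterov2018primal} by Nesterov's Accelerated Gradient Descent. First I would record the regularity of the subproblem objective $F_{i,r}(\vx) = f_i(\vx) + \langle \vh_i^{(r)}, \vx\rangle + \frac{\lambda}{2}\|\vx - \vy_i^{(r)}\|^2$ minimized in line~12. Since $f_i$ is $\mu$-strongly convex and $L$-smooth by Assumptions~\ref{assumption:strongly_convex} and~\ref{assumption:smooth}, the linear term leaves both constants unchanged while the quadratic shifts each by $\lambda$; hence $F_{i,r}$ is $(\mu + \lambda)$-strongly convex and $(L + \lambda)$-smooth. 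With $\lambda = 96\delta$ (and $\delta \le L$ by Lemma~\ref{remark:relationship_between_delta_and_l}), these are $\Omega(\mu + \delta)$ and $\mathcal{O}(L)$ respectively.

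Next I would reduce the aggregate accuracy requirement~\eqref{eq:condition_of_subproblem_for_acc} to a per-node condition: it suffices that each node drive $\|\nabla F_{i,r}(\vx_i^{(r+\frac{1}{2})})\| \le \beta \|\vy_i^{(r)} - \vx_i^{(r+\frac{1}{2})}\|$ with $\beta \coloneqq \lambda/\sqrt{352} = \Theta(\delta)$, because squaring and summing over $i$ recovers~\eqref{eq:condition_of_subproblem_for_acc} exactly. This is precisely the form of guarantee produced by Lemma~\ref{lemma:general_computational_complexity_lemma_with_fast_gradient_descent}, whose conclusion bounds $\|\nabla F(\vx^{(T)})\|$ by a multiple of $\|\vx^{(0)} - \vx^{(T)}\|$. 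Identifying the initial iterate $\vx^{(0)} = \vy_i^{(r)}$ (the prescribed initialization) with the AGD output $\vx^{(T)} = \vx_i^{(r+\frac{1}{2})}$, the two expressions coincide, so the lemma applies with effective strong-convexity parameter $\Omega(\mu + \delta)$, smoothness $\mathcal{O}(L)$, and tolerance $\beta = \Theta(\delta)$.

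Finally I would substitute these parameters into the iteration count of Lemma~\ref{lemma:general_computational_complexity_lemma_with_fast_gradient_descent}, obtaining
\begin{align*}
    T = \mathcal{O}\left( \sqrt{\frac{L}{\mu + \delta}}\, \log\left( \frac{L\,(L + \mu + \delta)\,(\mu + \delta)^2}{(\mu + \delta)^2\, \delta^2} \right) \right),
\end{align*}
and simplify the logarithm. The two factors of $(\mu + \delta)^2$ cancel, and $L + \mu + \delta = \mathcal{O}(L)$ by $\mu, \delta \le L$, so the argument collapses to $\mathcal{O}(L^2/\delta^2)$ and $\log(\cdot) = \mathcal{O}(\log(L/\delta))$, yielding the claimed $\mathcal{O}(\sqrt{L/(\mu + \delta)}\,\log(L/\delta))$. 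I expect the only delicate point to be exactly this bookkeeping inside the logarithm: one must retain the effective parameter $\mu + \delta$ in the leading $\sqrt{L/(\mu + \delta)}$ factor while verifying that the $(\mu + \delta)$-dependence inside the logarithm fully cancels, so that only the $L/\delta$ ratio survives; everything else is a routine constant-chasing exercise.
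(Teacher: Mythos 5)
Your proposal is correct and follows essentially the same route as the paper's own (very terse) proof: establish that $F_{i,r}$ is $\Omega(\mu+\delta)$-strongly convex and $\mathcal{O}(L+\delta)$-smooth, then invoke Lemma~\ref{lemma:general_computational_complexity_lemma_with_fast_gradient_descent} with $\beta = \Theta(\delta)$ and use $\mu \leq L$, $\delta \leq L$ to simplify the logarithm. The only difference is that you spell out the steps the paper leaves implicit (the per-node reduction of Eq.~(\ref{eq:condition_of_subproblem_for_acc}) by squaring and summing, and the cancellation of the $(\mu+\delta)^2$ factors inside the logarithm), all of which is accurate.
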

\begin{proof}
$F_{i,r}$ is $\Omega (\mu + \delta)$-strongly convex and $ \mathcal{O} (L + \delta)$-smooth.
Thus, using Lemma \ref{lemma:general_computational_complexity_lemma_with_fast_gradient_descent}, $\mu \leq L$, and $\delta \leq L$, we obtain the desired result.
\end{proof}

\newpage
\section{Experimental Setup}
\label{sec:experimental_setup}

\subsection{Setup for Adaptive Number of Local Steps}
In Fig.~\ref{fig:adaptive_number_of_local_steps}, we ran gradient descent until the following condition was satisfied.
\begin{itemize}
    \item \textbf{Inexact-PDO:} $\| \nabla F_{i,r} (\vx_i^{(r+\frac{1}{2})}) \|^2 \leq \tfrac{\lambda (\lambda + \mu)}{(r+1) (r+2)} \| \vx_i^{(r+\frac{1}{2})} - \vx_i^{(r)} \|^2$.
    \item \textbf{SPDO:} $\| \nabla F_{i,r} (\vx_i^{(r+1)}) \| \leq \lambda \| \vx_i^{(r+1)} - \vv_i^{(r)} \|$.
    \item \textbf{Inexact Accelerated SONATA:} $\| \nabla F_{i,r} (\vx_i^{(r+1)}) \| \leq (1 - \sqrt{\tfrac{\mu}{\beta}})^{r} (\tfrac{16}{17})^t \| \vx_i^{(r+1)} - \vx_i^{(r)} \|$ where $t$ is the number of inner loop.
    \item \textbf{Accelerated-SPDO:}  $\| \nabla F_{i,r} (\vx_i^{(r+\frac{1}{2})}) \| \leq \lambda \| \vx_i^{(r+\frac{1}{2})} - \vy_i^{(r)} \|$.
\end{itemize}

\citet{tian2022acceleration} showed that Accelerated SONATA can achieve low communication complexity shown in Table \ref{tab:comparison} if $F_{i,r} (\vx_i^{(r+1)}) - F_{i,r} (\vx_{i,r}^\star)$ is sufficiently small, where $\vx_{i,r}^\star$ is the minimizer of $F_{i,r}$. However, unlike the condition for our proposed methods, e.g., Eq.~(\ref{eq:condition_of_subproblem}), this condition cannot be computed in practice. Therefore, we slightly modified this condition in our experiments to make it actually computable.

\subsection{Hyperparameter Setting}
In Fig.~\ref{fig:logistic}, we tuned the hyperparameters by grid search.
Specifically, we ran grid search over the hyperparameters listed in Table \ref{table:hyperparameter} and chose the hyperparameter that minimizes the norm of the last gradient.

\begin{table}[h]
    \caption{Experimental setups. Note that for Inexact Accelerated SONATA, $\delta$ is the coefficient of the additional L2 regularization in the subproblem, and it is different from Definition \ref{assumption:similarity}.}
    \label{table:hyperparameter}
    \begin{center}
    \resizebox{0.95 \linewidth}{!}{
    \centering
    \begin{tabular}{lcc}
    \toprule
      \multirow{2}{*}{\textbf{Gradient Tracking}} & $\eta$    & $\{ 0.05, 0.01, 0.005, 0.001 \}$ \\
      & $M$ & $\{ 10, 20, 30 \}$ \\
      \midrule
      \multirow{3}{*}{\textbf{Inexact-PDO}} & $\eta$    & $\{ 0.05, 0.01, 0.005, 0.001 \}$ \\
       & $\lambda$ & $\{ 1, 10 \}$\\
       & $M$ & $\{ 10, 20, 30 \}$ \\ 
       \midrule 
       \multirow{4}{*}{\textbf{(Accelerated)-\proposed{}}} &       $\mu$     & We set the same value as the coefficient of L2 regularization. \\
       & $\eta$    & $\{ 0.05, 0.01, 0.005, 0.001 \}$ \\
       & $\lambda$ & $\{ 1, 10 \}$\\
       & $M$ & $\{ 10, 20, 30\}$ \\
       \midrule
       \multirow{5}{*}{\textbf{Inexact Accelerated SONATA}} & $\eta$    & $\{ 0.05, 0.01, 0.005, 0.001 \}$ \\
       & $\beta$ & $\{ 1, 10 \}$\\
       & $M$ & $\{ 10, 20, 30 \}$ \\
       & The number of inner loop $T$ & $\{ 1, 2, 3, 4, 5 \}$ \\
       & The coefficient of the regularizer $\delta$ & $\{ 0.1, 1, 10 \}$ \\
    \bottomrule
    \end{tabular}}
    \end{center}
    \vskip - 0.1 in
\end{table}
\newpage

\section{Additional Experimental Results}
\subsection{Relationship between $\alpha$ and $\delta$}

In our experiments, we used the Dirichlet distribution with hyperparameter $\alpha$ to control $\delta$.
In this section, we numerically verified that increasing $\alpha$ can decrease $\delta$.
We randomly sampled $100$ points from $\mathcal{N} (\mathbf{0}, \tfrac{1}{\sqrt{2d}} \mI)$ and reported the approximation of $\delta$. Table~\ref{tab:correlation_between_delta_and_alpha} indicates that $\delta$ decreases as $\alpha$ increases. Thus, examining the performance of methods with various $\alpha$ is a proper way to evaluate the effect of $\delta$.

\begin{table}[h]
    \centering
    \caption{Relationship between $\alpha$ and $\delta$.}
    \label{tab:correlation_between_delta_and_alpha}
    \begin{tabular}{ccccc}
    \toprule
        $\alpha$ &  $0.01$ & $0.1$ & $1.0$ & $10.0$ \\
        Approximation of $\delta$ & $1.6 \times 10^{-2}$ & $9.2 \times 10^{-3}$ & $2.1 \times 10^{-3}$ & $5.0 \times 10^{-4}$ \\ 
    \bottomrule
    \end{tabular}
\end{table}

\subsection{Sensitivity to Hyperparameters}

In this section, we evaluated the sensitivity of SPDO to hyperparameters.
In Fig.~\ref{fig:sensitivity}, we fixed one hyperparameter, such as $M$, and tuned the other hyperparameters as in Fig.~\ref{fig:adaptive_number_of_local_steps}.

Figure~\ref{fig:sensitivity_to_M} indicates that setting $M$ to $3$ is optimal. Comparing the results with $M = 1$, increasing the number of gossip averaging decreased the gradient norm. This implies that performing gossip averaging multiple times is important. Furthermore, increasing the number of gossip averaging too much increases the gradient norm since the total number of communications is fixed. These observations are consistent with Theorem \ref{theorem:stabilized_sonata}.

In Fig.~\ref{fig:sensitivity_to_lambda}, we evaluated how the choice of $\lambda$ affects the convergence behavior of SPDO.
The results indicate that setting $\lambda$ to $1$ is optimal.
If we use a very large $\lambda$, SPDO requires more communication since the parameters are almost the same even after solving the subproblem.
We can see consistent observations in Fig.~\ref{fig:sensitivity_to_lambda}.

\begin{figure}[h]
    \subfigure[$M$]{
    \centering
        \includegraphics[width=0.48\linewidth]{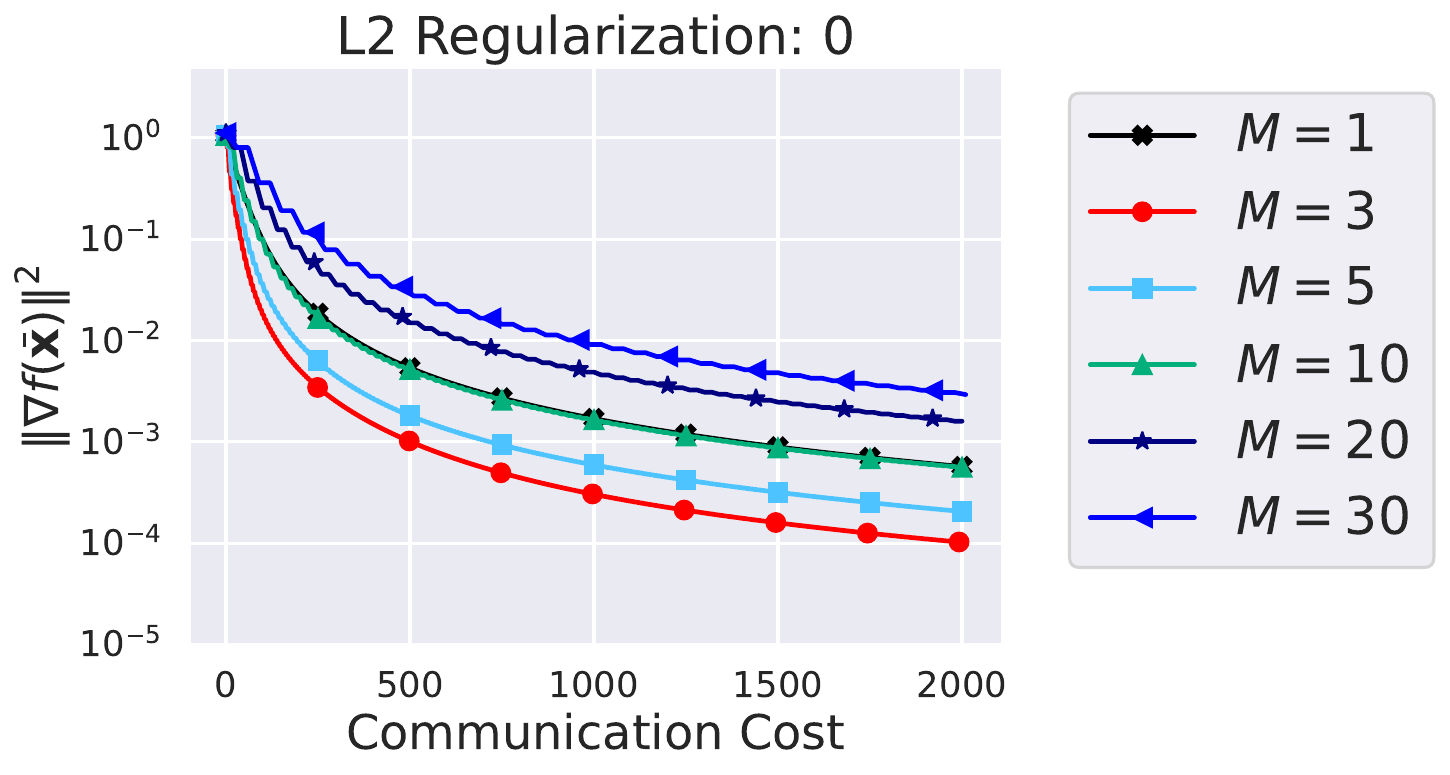}
        \label{fig:sensitivity_to_M}
    }
    \hfill
    \subfigure[$\lambda$]{
    \centering
        \includegraphics[width=0.48\linewidth]{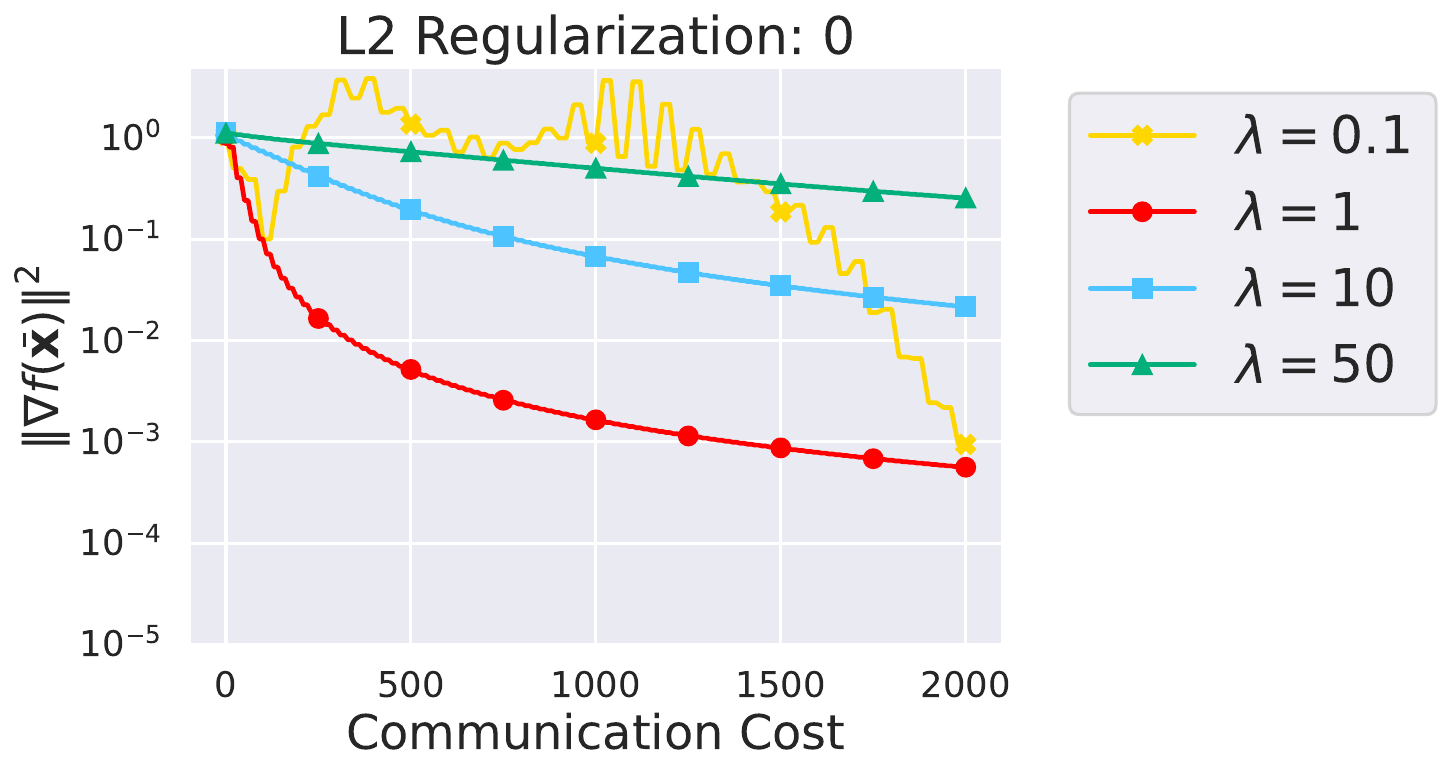}
        \label{fig:sensitivity_to_lambda}
    }
    \caption{Convergence of the gradient norm of SPDO with various $M$ and $\lambda$. The experimental setting was same as Fig.~\ref{fig:adaptive_number_of_local_steps}.}
    \label{fig:sensitivity}
\end{figure}

\newpage
\subsection{Experiments with Different Seed Value}

Figure~\ref{fig:logistic_with_different_seed} shows the results when using a different seed value for the experiments shown in Fig.~\ref{fig:logistic}.

\begin{figure*}[h]
    \centering
    \subfigure[Adaptive number of local steps]{
        \includegraphics[width=\linewidth]{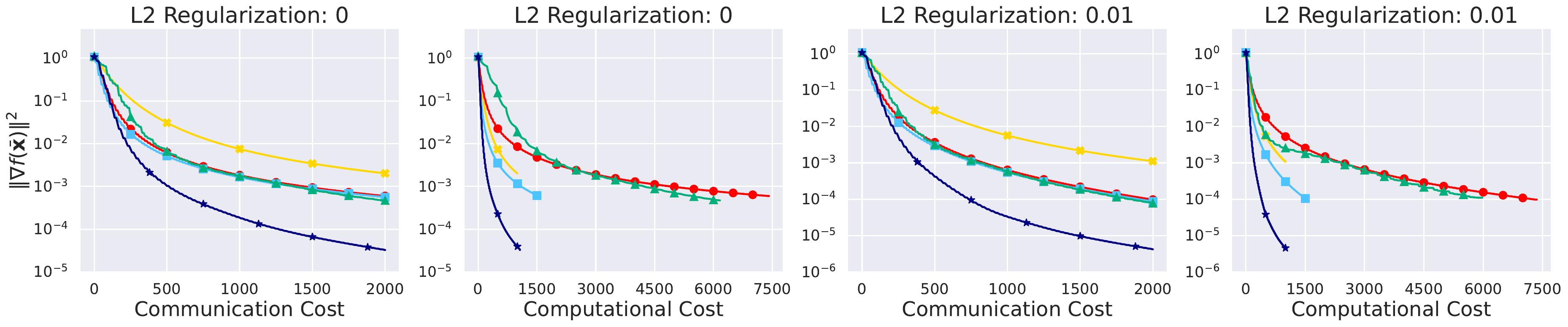}
    }
    \subfigure[Fixed number of local steps]{
        \includegraphics[width=\linewidth]{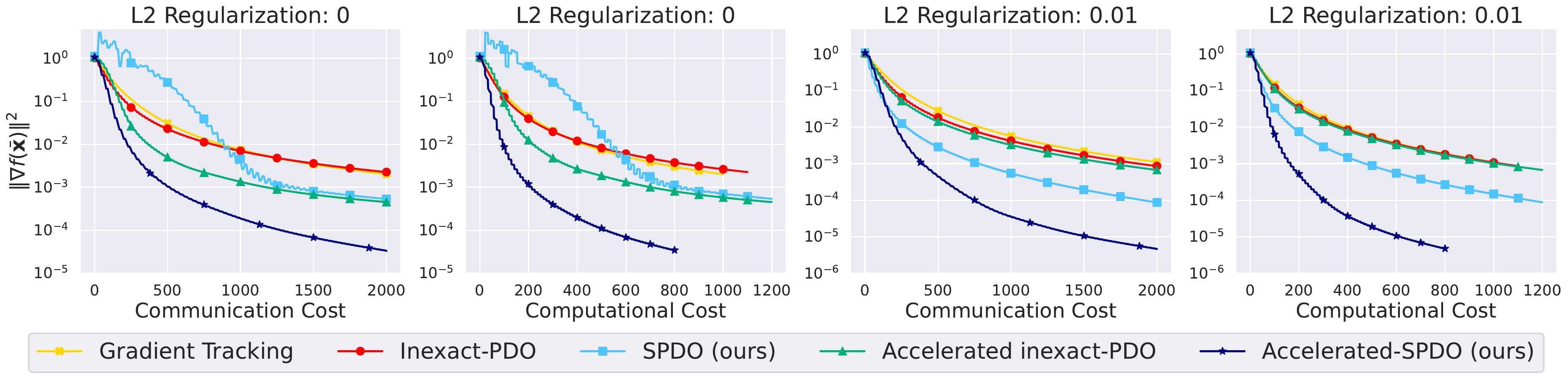}
    }
    \caption{Convergence of the gradient norm with $\alpha=0.1$ when a different seed value was used for the experiments shown in Fig.~\ref{fig:logistic}. In (a), we ran gradient descent until the condition for an approximate subproblem solution was satisfied for all methods except for Gradient Tracking. For Gradient Tracking, we ran gradient descent for $10$ times. In (b), we ran gradient descent for $10$ times to approximately solve the subproblem. See Sec.~\ref{sec:experimental_setup} for a more detailed setting.}
    \label{fig:logistic_with_different_seed}
    \vskip 0.01 in
\end{figure*}

\end{document}